\begin{document}

\twocolumn[


\aistatstitle{Expressivity of Neural Networks via Chaotic Itineraries\\beyond Sharkovsky's Theorem}

\aistatsauthor{ Clayton Sanford \And Vaggos Chatziafratis }

\aistatsaddress{ Columbia University \And  \vaggos{which one?} } ]

\begin{abstract}


Given a target function $f$, how large must a neural network be in order to approximate $f$?\\ 
Recent works examine this basic question on neural network \textit{expressivity} from the lens of dynamical systems and provide novel ``depth-vs-width'' tradeoffs for a large family of functions $f$. 
They suggest that such tradeoffs are governed by the existence of  \textit{periodic} points or \emph{cycles} in $f$. 
Our work, by further deploying dynamical systems concepts, illuminates a more subtle connection between periodicity and expressivity: we prove that periodic points alone lead to suboptimal depth-width tradeoffs and we improve upon them by demonstrating that certain ``chaotic itineraries'' give stronger exponential tradeoffs, even in regimes where previous analyses only imply polynomial gaps. 
Contrary to prior works, our bounds are nearly-optimal, tighten as the period increases, and handle strong notions of inapproximability (e.g., constant $L_1$ error). 
More broadly, we identify a phase transition to the \textit{chaotic regime} that exactly coincides with an abrupt shift in other notions of function complexity, including VC-dimension and topological entropy. 






  \end{abstract}

\section{Introduction}



Whether a neural network (NN) succeeds or fails at a given task crucially depends on whether or not its architecture (depth, width, types of activation units etc.) is suitable for the task at hand. 
For example, a ``size-inflation'' phenomenon has occurred in recent years, in which NNs tend to be deeper and/or larger. 
Recall that in 2012, AlexNet had 8 layers.
In 2015, ResNet won the ImageNet competition with 152 layers~\citep{krizhevsky2012imagenet,he2016deep},
This trend still continues to date, with modern models using billions of parameters~\citep{brown2020language}. 
The empirical success of deep neural networks motivates researchers to ask: What are the theoretical benefits of depth, and what are the depth-vs-width tradeoffs?

This question gives rise to the study of neural network \textit{expressivity}, which characterizes the class of functions that are representable (or approximately representable) by a NN of certain depth, width, and activation. 
For instance, \cite{eldan2016COLT} propose a family of ``radial'' functions in $\mathbb{R}^d$ that are easily expressible with 3-layered feedforward neural nets of small width, but require any approximating 2-layer network to have exponentially (in $d$) many neurons. In other words, they formally show that depth---even if increased by 1---can be exponentially more valuable than width. 

Not surprisingly, understanding the expressivity of NNs was an early question asked in 1969, when Minsky and Papert showed that the Perceptron can only learn linearly separable data and fails on simple XOR functions~\citep{minsky2017perceptrons}. 
The natural question of which functions can multiple such Perceptrons (i.e., multilayer feedforward NN) express was addressed later by \cite{cybenko1989approximation,hornik1989multilayer} proving the so-called \textit{universal approximation} theorem. 
This states, roughly, that just one hidden layer of standard activation units (e.g., sigmoids, ReLUs etc.) suffices to approximate any continuous function arbitrarily well. 
Taken at face value, any continuous function is a 2-layer (i.e., 1-hidden-layer) network in disguise, and hence, there is no reason to consider deeper networks. 
However, the width required can grow arbitrarily, and many works in the following decades quantify those depth-vs-width tradeoffs.


Towards this direction, one typically identifies a function together with a ``measure of complexity'' to demonstrate benefits of depth. 
For example, the seminal work by \cite{telgarsky15,telgarsky16} relies on the number of oscillations of a simple triangular wave function. 
Other relevant notions of complexity to the expressivity of NNs include the VC dimension~\citep{warren1968lower,anthony1999neural,schmitt2000lower}, the number of linear regions~\citep{montufar2014number,arora2016understanding} or activation patterns~\citep{hanin2019deep}, the dimension of algebraic varieties~\citep{kileel2019expressive}, the Fourier spectrum~\citep{barron1993universal,eldan2016COLT,daniely2017depth,lee2017ability, bresler2020sharp}, fractals~\citep{malach2019deeper}, topological entropy~\citep{bzl20}, Lipschitzness~\citep{safran2019depth,hsu2021approximation}, global curvature and trajectory length~\citep{poole2016NIPS,raghu2017ICML} just to name a few.

This work builds upon recent papers~\citep{cnpw19,cnp20}, which study expressivity from the lens of discrete-time dynamical systems and extend Telgarsky's results beyond triangle (tent) maps. 
At a high-level, their idea is the following: if the initial layers of a NN output a real-valued 
function $f$, then concatenating the \textit{same} layers $k$ times one after the other outputs $ f^k  \vcentcolon = f\circ f\circ \ldots\circ f$, i.e., the composition of $f$ with itself $k$ times. 
By associating each discrete timestep $k$ to the output of the corresponding layer in the network, one can study expressivity via the underlying properties of $f$'s trajectories. 
Indeed, if $f$ contains higher-order fixed points, called \textit{periodic} points, then deeper NNs can efficiently approximate $f^k$, but shallower nets would require exponential width, governed by $f$'s periodicity.

Inspired by these novel connections to discrete dynamical systems, we pose the following natural question:
\begin{center}
\textit{Apart from periodicity, are there other properties of $f$'s trajectories governing the expressivity tradeoffs?}
\end{center}

We indeed prove that $f$'s periodicity alone is not the end of the story, and we improve on the known depth-width tradeoffs from several perspectives. 
We exhibit functions of the same period with very different behaviors (see Sec.~\ref{ssec:warmup}) that can be distinguished by the concept of ``chaotic itineraries.'' 
We analyze these here in order to achieve nearly-optimal tradeoffs for NNs. 
Our work highlights why previous works that examine periodicity alone only obtain loose bounds. 
More specifically:

\begin{itemize} 
\item We accurately quantify the oscillatory behavior of a large family of functions $f$. This leads to sharper and nearly-optimal lower bounds for the width of NNs that approximate $f^k$.
\item Our lower bounds cover a stronger notion of approximation error, i.e., \textit{constant} separations between NNs, instead of bounds that become small depending heavily on $f$ and its periodicity.
\item At a conceptual level, we introduce and study certain chaotic itineraries, which supersede Sharkovsky's theorem~(see Sec.~\ref{subsec:dds}). 
\item We elucidate connections between periodicity and other function complexity measures like the VC-dimension and the topological entropy~\citep{alseda2000}. 
We show that all of these measures undergo a phase transition that exactly coincides with the emergence of the chaotic regime based on periods.
\end{itemize}

To the best of our knowledge, we are the first to incorporate the notion of chaotic itineraries from discrete dynamical systems into the study of NN expressivity. Before stating and interpreting our results, we provide some basic definitions.

\subsection{Function Approximation and NNs}
This paper employs three notions of approximation to compare functions $f, g: [0,1] \to [0,1]$.
\begin{itemize}
    \item $L_1(f,g)= \norm[1]{f - g} = \int_{0}^1 \abs{f(x) - g(x)} dx.$
    \item $L_\infty(f,g)=\norm[\infty]{f - g} = \sup_{x \in [0, 1]} \abs{f(x) - g(x)}.$
    \item Classification error $\mathcal{R}_{S,t}$: For $t \in [0,1]$, let $\thres{t}{x} = \indicator{x \geq t}$. Let $S = \{x_1, \dots, x_n\} \subseteq [0,1]$. Then,
    $\errcls{S,t}{f,g}= \frac{1}{n} \sum_{i=1}^n \indicator{\thres{t}{f(x_i)} \neq \thres{t}{g(x_i)}}.$ 
\end{itemize}
For what follows, let $\mathcal{N}(u,\ell)$ be the family of feedforward NNs of depth $\ell$ and width at most $u$ per layer with ReLU activation functions.\footnote{Recall ReLU$(x)=\max(x,0)$.} All our results also hold for the more general family of semialgebraic activations~\citep{telgarsky16}.








\subsection{Discrete Dynamical Systems}
\label{subsec:dds}
To construct families of functions that yield depth-separation results, we rely on a standard notion of \textit{unimodal} functions from dynamical systems~\citep{mss73}.

\begin{definition}
     Let $f: [0,1]\to[0,1]$ be a continuous and piece-wise differentiable function. We say $f$ is a \emph{unimodal mapping} if:
    \begin{enumerate}
        \item $f(0) = f(1) = 0$, and $f(x) > 0$ for all $x \in (0,1)$.
        \item There exists a unique maximizer $x'\in (0,1)$ of $f$, i.e., $f$ is strictly increasing on the interval $[0, x')$ and strictly decreasing on $(x',1]$.
    \end{enumerate}
\end{definition}

Our constructions rely on unimodal functions that are concave and also symmetric (i.e., $f(x) = f(1-x)$ for all $x \in [0, 1]$). 
We note that the resulting function family is fairly general, already capturing the triangle waves of \cite{telgarsky15,telgarsky16} and the logistic map used in previous depth-separation results~\citep{schmitt2000lower}.
Moreover, the study of one-dimensional discrete dynamical systems by applied mathematicians explicitly identifies unimodal mappings as important objects of study~\citep{mss73, alseda2000}.

Recall that a fixed point $x^*$ of $f$ is a point where $f(x^*)=x^*$. A more general notion of higher-order fixed points is that of \textit{periodicity}.

 
\begin{definition}
For some $p \in \N$, we say that $x_1,\dots, x_p\in [0,1]$ is a \emph{$p$-cycle} if $f(x_j) = x_{j+1}$ for all $j \in [p-1]$ and $f(x_p) = x_1$.
We say that $f$ has \emph{periodicity} $p$ if such a cycle exists and that $x_1$ is a \emph{point of period $p$} if $x_1$ belongs to a $p$-cycle. 
Equivalently, $x_1$ is a point of period $p$ if $f^p(x_1) = x_1$ and $f^{k}(x_1) \neq x_1$\footnote{Throughout the paper, $f^{k}$ means composition of $f$ with itself $k$ times.} for all $k \in [p-1]$.\footnote{As is common, $[m] = \{1, 2, \dots, m\}$.}
\end{definition}

Does the existence of some $p$-cycle in $f$ have any implications about the existence of other cycles?
These relations between the periods of $f$ are of fundamental importance to the study of dynamical systems. 
In particular, \cite{li1975period} proved in 1975 that ``period 3 implies chaos'' in their celebrated work, which also introduced the term ``chaos'' to mathematics and later spurred the development of chaos theory. 
Interestingly, an even more general result was already obtained a decade earlier in Eastern Europe, by~\cite{sharkovsky1964coexistence,sharkovsky1965cycles}:

\begin{theorem}[Sharkovsky's Theorem]
\label{thm:sharkovsky}
Let $f: [0,1]\to[0,1]$ be continuous. If $f$ contains period $p$ and $p \triangleright p'$, then $f$ also contains period $p'$, where the symbol `` $\triangleright$'' is defined based on the following (decreasing) ordering: 
\[
3 \triangleright 5 \triangleright 7\triangleright \ldots \triangleright2\cdot3 \triangleright 2\cdot5 \triangleright 2\cdot7\triangleright\ldots
\]
\[
\ldots \triangleright 2^2\cdot3 \triangleright 2^2\cdot5 \triangleright 2^2\cdot7\triangleright\ldots\triangleright2^3\triangleright2^2\triangleright2\triangleright1.
\]
\end{theorem}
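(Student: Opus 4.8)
The plan is to prove Sharkovsky's theorem by the classical \emph{interval-covering} method, whose sole analytic ingredient is the Intermediate Value Theorem. For closed subintervals $I, J \subseteq [0,1]$ write $I \to J$ (``$I$ covers $J$'') if $f(I) \supseteq J$. The two facts that power everything are: (i) a \emph{pullback} property --- if $I \to J$ then some closed $K \subseteq I$ has $f(K) = J$ --- and (ii) a \emph{loop lemma} --- if $I_0 \to I_1 \to \dots \to I_{m-1} \to I_0$ is a cycle in the covering digraph, then repeatedly applying (i) produces a point $x$ with $f^m(x) = x$ and $f^j(x) \in I_j$ for every $j$; moreover, when the $I_j$ are unions of the gaps between consecutive points of a fixed periodic orbit and the loop is not merely a concatenation of copies of a shorter loop, one can arrange that $x$ has period \emph{exactly} $m$ rather than a proper divisor. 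Establishing (i) and (ii), including the divisor bookkeeping in the last clause, is the first step.

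The heart of the argument is the case in which $f$ possesses a periodic point of \emph{odd} period; everything else reduces to it. So assume $f$ has a periodic orbit of odd period and let $n \ge 3$ be the least such value, so that $f$ has no odd period in $\{3, 5, \dots, n-2\}$. Writing the orbit in increasing order as $p_1 < p_2 < \dots < p_n$, the next step is to show that its \emph{combinatorial type} --- the bijection $p_i \mapsto f(p_i)$ --- is forced to be a \v{S}tefan cycle: after relabelling, there is a distinguished adjacent pair whose closure $I_1$ satisfies $I_1 \to I_1$, and the remaining gaps can be grouped into closed intervals $I_2, \dots, I_{n-1}$ so that the covering digraph on $I_1, \dots, I_{n-1}$ contains the self-loop $I_1 \to I_1$, the ``long'' cycle $I_1 \to I_2 \to \dots \to I_{n-1} \to I_1$, a cycle of every even length $\le n-1$, and \emph{no} cycle of odd length in $\{3, \dots, n-2\}$. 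The proof is a finite case analysis driven by minimality: any other combinatorial type would, via the loop lemma, exhibit a periodic point of smaller odd period, a contradiction.

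With the \v{S}tefan digraph in hand the required periods are read off directly. Traversing the self-loop $r \ge 0$ times and then the long cycle once gives a loop of length $r + (n-1)$, so the loop lemma (with the divisor check) yields periodic points of \emph{every} period $\ge n-1$; the self-loop alone gives a fixed point; and the short even cycles give a periodic point of every even period $< n-1$. Since the $\triangleright$-predecessors of an odd $n \ge 3$ are precisely $\{1\}$, all even integers, and all odd integers exceeding $n$, this settles the theorem whenever $f$'s period is odd and at least $3$. The remaining cases --- $f$'s period is $2^a q$ with $q$ odd, covering in particular pure powers of two --- are reduced to the odd case by induction on the $2$-adic valuation $a$: passing to $f^2$ lowers $a$, and periods are shuttled between $f$ and $f^2$ via the elementary relation that a point of $f$-period $s$ has $f^2$-period $s$ or $s/2$ according as $s$ is odd or even, together with the elementary fixed-point lemma (a periodic orbit of period $\ge 2$ forces a point with $f(x) = x$, since the least orbit point moves right and the greatest moves left) --- all organized to respect the Sharkovsky order.

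The main obstacle is the rigidity step of the second paragraph: proving that a minimal-odd-period orbit must realize the \v{S}tefan combinatorial type and extracting its covering digraph. This is the genuinely combinatorial core of Sharkovsky's theorem; it requires carefully tracking how $f$ permutes the orbit points, repeatedly invoking minimality to eliminate competing types, and it is also where the delicate ``period exactly $m$'' bookkeeping of the loop lemma must be made to mesh with the interval grouping. By contrast, the analytic content never exceeds the Intermediate Value Theorem, and the $2$-adic induction of the third paragraph, while fiddly, is entirely elementary.
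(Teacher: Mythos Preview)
The paper does not prove Sharkovsky's Theorem at all: it is stated as background (Theorem~\ref{thm:sharkovsky}) with citations to \cite{sharkovsky1964coexistence,sharkovsky1965cycles} and is used as a black box throughout. So there is no ``paper's own proof'' to compare your proposal against.

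That said, your outline is the standard interval-covering/\v{S}tefan-cycle approach and is essentially correct as a plan. One point to watch: your phrasing of the minimal-odd-period step suggests the orbit's combinatorial type is \emph{uniquely} forced to be the \v{S}tefan cycle. Strictly speaking, what minimality forces is only enough structure in the covering digraph to extract the needed loops (self-loop at $I_1$, long $(n{-}1)$-cycle, short even cycles); the full \v{S}tefan pattern is the canonical witness but you do not need to prove the orbit literally realizes it. Also, in the reduction step be careful with the direction of inference between $f$ and $f^2$: a point of $f^2$-period $s$ has $f$-period either $s$ or $2s$, and you need both directions of this shuttle to complete the $2$-adic induction cleanly.
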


This ordering, called \textit{Sharkovsky's ordering}, is a total ordering on the natural numbers, where $l\triangleright r$ whenever $l$ is to the left of $r$. The maximum number in this ordering is 3; if $f$ contains period 3, then it also has all other periods, which is also known as \emph{Li-Yorke chaos}. 
\cite{cnpw19,cnp20} apply this theorem to obtain depth-width tradeoffs based on periods and obtain their most powerful results when $p = 3$. 
We go beyond Sharkovsky's theorem and prove that tradeoffs are determined by the ``itineraries'' of periods.\footnote{These are called ``patterns'' in~\cite{alseda2000}.}


\begin{definition}[Itineraries]
    For a $p$-cycle $x_1, \dots, x_p$, suppose that $x_{a_1} < \dots < x_{a_p}$ for $a_j \in [p]$.
    The \emph{itinerary} of the cycle is the cyclic permutation of $x_{a_1}, \dots, x_{a_p}$ induced by $f$, which we represent by the string $\ba= a_1\dots a_p$.
    Because cyclic permutations are invariant to rotation, we assume (wlog) that $a_1 = 1$.

    
\end{definition}  

\begin{definition} [Chaotic Itineraries]
    A $p$-cycle is a \emph{chaotic itinerary} or an \emph{increasing cycle} if its itinerary is $12\dots p$. That is, $x_1 < \dots < x_p$. 
\end{definition}
Examining chaotic itineraries circumvents the limitations of prior works based on periods and yields sharper exponential depth-width tradeoffs. 
Unlike other function complexity properties, the existence of a chaotic itinerary is easily verifiable (see App.~\ref{assec:identifying}).
To grasp clean examples of itineraries, see App.~\ref{assec:it-exs}.




\subsection{Our Main Contributions}

Our principal goal is to use knowledge about $f$'s itineraries to more accurately quantify the ``oscillations'' of $f^k$ as a measure of complexity and draw connections to other complexity measures. 
Section~\ref{sec:lb} produces sharper and more robust NN approximability tradeoffs than prior works by leveraging chaotic itineraries and unimodality.
Section~\ref{sec:phase} shows how a phase transition in VC-dimension and topological entropy of $f$ occurs exactly when the growth rate of oscillations shifts from polynomial to exponential.

While previous works count oscillations too, they either construct too narrow a range of functions\footnote{e.g.,~\cite{telgarsky15,telgarsky16} only analyzes triangles.}, obtain loose depth-width tradeoffs\footnote{e.g.,~\cite{cnpw19,cnp20} have a suboptimal dependence on $p$ under stringent Lipschitz assumptions.}, or have unsatisfactory approximation error. \footnote{e.g.,~\cite{cnpw19,cnp20,bzl20} do not obtain constant error rates.} 
In Section~\ref{sec:lb}, we improve along these three directions by taking advantage of the unimodality and itineraries of $f$. 
The \textit{unimodality} of $f$ allows us to quantify both the number of piecewise monotone pieces of $f^k$ (i.e., oscillations) and the corresponding height between the highest and lowest values of $f^k$'s oscillations. 
This improvement on the height enables stronger notions of function approximation (e.g., constant error rates with no dependence on $f$ or its period $p$). 
\textit{Chaotic itineraries} allow an improved analysis of the number of oscillations in $f^k$ and grant sharper exponential lower bounds on the width of any shallow net $g$ approximating $f^k$. 

We say that our results are \textit{nearly-optimal} because we exhibit a broad family of functions $f$ that are inapproximable by shallow networks of width $O(\rho^k)$ for $\rho$ arbitrarily close to 2.
Because no unimodal function $f$ can induce more than $2^k$ oscillations in $f^k$, we cannot aspire to tighter exponent bases in this setting.\footnote{Our results also transfer to non-unimodal functions via the observation that for bimodal $g$, there is some unimodal $f$ such that the number of oscillations of $g$ is at most twice those of $f$.}
On the other hand, none of the bounds from previous works (except the narrow bounds of Telgarsky) produce width bounds of more than $\Omega(\phi^k)$, where $\phi \approx 1.618$ is the Golden Ratio.
To demonstrate our sharper tradeoffs, we state a special case of our results for the $L_\infty$ error.

\begin{theorem}\label{thm:motivation-lb-inc} 
For $p \geq 3$ and $k \in \N$, consider any symmetric, concave unimodal mapping $f$ with an increasing $p$-cycle and any $g \in \mathcal{N}(u, \ell)$ with width \[u \leq \frac18 \left(\max\paren{2-\frac4{2^p}, \phi}\right)^{k/\ell}\]
Then, $L_\infty(f^k, g) = \Omega(1)$, independent of $f,p,k$.
\end{theorem}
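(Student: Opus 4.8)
The plan is to prove this as a corollary of a more general ``oscillation-counting'' lemma. The high-level strategy has three stages: (1) show that $f^k$ has many oscillations of large amplitude; (2) show that any shallow network $g \in \mathcal{N}(u,\ell)$ has few oscillations; (3) conclude by a counting/pigeonhole argument that if $g$ has too few oscillations to track $f^k$, it must incur constant $L_\infty$ error.

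\textbf{Step 1: Counting oscillations of $f^k$.} First I would use the increasing $p$-cycle $x_1 < \dots < x_p$ to build a combinatorial lower bound on the number of ``full oscillations'' (monotone laps from near $0$ to near the top and back) of $f^k$. The key point is that an increasing cycle forces $f$ to map the interval $[x_1, x_p]$ over itself in an expansive way: since $f(x_j) = x_{j+1}$ and the cycle is increasing, $f$ carries $[x_1,x_2]$ across $[x_2,x_p]$, then wraps, and because $f$ is unimodal this covering can be iterated. Counting the number of monotone branches of $f^k$ restricted to a suitable subinterval should give a bound like $c \cdot \max(2 - 4/2^p, \phi)^k$ on the number of oscillations, each reaching height $\Omega(1)$ (this is where symmetry and concavity of $f$ enter — they guarantee that once an interval is mapped across a large subinterval, the image under further iteration still spans a constant fraction of $[0,1]$, so oscillation amplitudes don't decay). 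The $2 - 4/2^p$ term comes from analyzing how a length-$p$ increasing itinerary forces a transition matrix on subintervals whose Perron eigenvalue approaches $2$ as $p \to \infty$; the $\phi$ term is the baseline guaranteed by period $\geq 3$ via Sharkovsky (period 3 is on Sharkovsky's maximal end, and the golden ratio is the classical growth rate of laps for the period-3 ``staircase'').

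\textbf{Step 2: Oscillations of shallow nets.} This is standard (Telgarsky-style): a ReLU network in $\mathcal{N}(u,\ell)$ computes a piecewise-linear function with at most $(2u)^\ell$ (or similar) linear pieces, hence at most $O(u^\ell)$ oscillations. So if $u \leq \frac18 \max(2 - 4/2^p, \phi)^{k/\ell}$, then $g$ has $O(u^\ell) = O(\max(\cdots)^k)$ oscillations with a small enough constant that it is dwarfed by the count from Step 1.

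\textbf{Step 3: Crossing argument.} If $f^k$ has $N$ disjoint oscillations of amplitude $\geq \delta = \Omega(1)$ but $g$ has fewer than $N/c$ linear pieces, then on a constant fraction of these oscillations $g$ is monotone (in fact close to linear) over the whole oscillation, and a monotone function can stay within $\delta/4$ of both a ``high'' value and a ``low'' value only on a small sub-portion; integrating (or taking sup) over these regions yields $L_\infty(f^k,g) = \Omega(\delta) = \Omega(1)$, with the constant independent of $f,p,k$ because $\delta$ is. The symmetric-concave assumption is again used to ensure the $\Omega(1)$ amplitude is a fixed absolute constant (e.g., it only depends on the fact that the cycle spans a nondegenerate interval, which one can normalize).

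\textbf{Main obstacle.} The hard part will be Step 1 — specifically, extracting the exponent base $\max(2 - 4/2^p, \phi)$ tightly from the increasing itinerary while simultaneously controlling oscillation \emph{amplitude}. Merely counting laps of $f^k$ is classical kneading theory, but previous works lose the amplitude (getting $L_1$ or $L_\infty$ errors that shrink with $p$) precisely because iterated images of small intervals can get squeezed. The novelty is to use concavity and symmetry to show that the relevant intervals, once they become ``large'' (span a constant fraction of $[0,1]$), stay large under further iteration, so that a constant fraction of the $\max(\cdots)^k$ laps have amplitude bounded below by an absolute constant. Pinning down that the transition matrix induced by an increasing $p$-cycle has spectral radius exactly $2 - \Theta(2^{-p})$, and that this is realized by amplitude-$\Omega(1)$ oscillations rather than vanishing ones, is the crux; everything else is assembling known pieces.
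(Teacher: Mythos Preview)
Your three-stage outline matches the paper's approach exactly: Lemma~\ref{lemma:increasing-constant-osc} builds the transition matrix $A_p$ on the fixed cycle intervals $I_j=[x_j,x_{j+1}]$, finds its characteristic polynomial $\lambda^p-2\lambda^{p-1}+1$ with largest root $\rhoinc{p}\in[\max(2-4/2^p,\phi),2)$, and Corollary~\ref{cor:general-lb-linf} converts crossings into an $L_\infty$ bound via the standard ReLU piece count. Two small corrections to your sketch: first, the $\phi$ in the $\max$ is not a separate Sharkovsky baseline but simply $\rhoinc{3}$ itself (the same polynomial at $p=3$ factors as $(\lambda-1)(\lambda^2-\lambda-1)$), so only one spectral computation is needed. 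Second, the amplitude control is not that intervals ``stay large under iteration''---the $I_j$ are fixed throughout and concavity alone does not prevent images from shrinking. Instead the paper shows (i) all $I_j$ are crossed within a factor~$2$ of one another, so it suffices that \emph{one} $I_j$ be wide, and (ii) concavity forces an embedded $3$-cycle inside $[x_{p-3},x_p]$ to span width $\geq 1/6$, hence some $I_j$ has width $\geq 1/18$. Your phrasing suggests a dynamical invariance of interval size, which would not follow; the actual mechanism is a static geometric constraint on the cycle points.
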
  
    
\begin{remark} When $g$ is shallow with depth $\ell=O(k^{1-\eps})$ (e.g., $\ell=k^{0.99}$), then its width must be exponentially large in order to well-approximate $f^k$.
This exponential separation is sharper than prior works~\citep{cnpw19,cnp20}, and quickly becomes even sharper (tending to 2) with larger values of $p$. 
This is counterintuitive as Sharkovsky's ordering implies that period 3 is the most chaotic and prior works recover a suboptimal rate of at most $\phi\approx 1.618$ (see Table~\ref{table:rho_inc}). 
\end{remark}
\begin{remark}
Our approximation error is constant independent of all other parameters $f, k, p$. 
Previous results~\citep{cnpw19,cnp20,bzl20} obtain a gap that depends on $f,p$ and may be arbitrarily small.
Moreover, we have required nothing of the Lipschitz constant of $f$, unlike the strict assumptions on the Lipschitz constant $L$ of $f$ by~\cite{cnp20} (e.g., they require $L=\phi$ for period $p=3$).
Indeed, Propositions~\ref{prop:need-symmetry} and \ref{prop:need-concavity} in the Appendix~\ref{assec:sym-conc} illustrate how their lower bounds break down for large $L$ and how their $L_\infty$ bounds can shrink, becoming arbitrarily weak for certain 3-periodic $f$.
\end{remark}

We also present analogous results for the classification error and the $L_1$ errors. Please see the full statements in Theorems~\ref{thm:main-linf} and \ref{thm:main-l1}. Furthermore, Theorems~\ref{thm:odd-linf} and \ref{thm:odd-l1} offer an improvement on the results of \cite{cnp20} by giving constant-accuracy $L_\infty$ lower bounds without needing a chaotic itinerary.

In addition, Section~\ref{sec:phase} relates our chaotic itineraries to standard notions of function complexity like the VC dimension and the topological entropy (for precise definitions, see Sec.~\ref{sec:phase}). The types of periodic itineraries of $f$ give rise to two regimes: the \textit{doubling} regime and the \textit{chaotic} regime. In the former, we have a polynomial number of oscillations, while the latter is characterized by an exponential number of oscillations. Here we show the following correspondence:

\begin{theorem}[Informal]\label{thm:informal-phase}
The transition between these two regimes exactly coincides with a sharp transition in the VC-dimension of the iterated mappings $f^k$ for fixed $f$ (from bounded to infinite) and in the topological entropy (from zero to positive).
\end{theorem}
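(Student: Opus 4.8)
\emph{Proof strategy.} Fix a symmetric concave unimodal $f$. The plan is to show that the doubling/chaotic dichotomy, the finiteness or infinitude of the VC-dimension of
$\mathcal{C}_f := \{\, x \mapsto \thres{t}{f^k(x)} \;:\; k \in \N,\ t \in [0,1] \,\}$,
and the vanishing or positivity of $h_{\mathrm{top}}(f)$ all coincide, by routing each of them through the growth rate of the number $\mathrm{lap}(f^k)$ of maximal monotone pieces (``laps'') of $f^k$, which is exactly the paper's ``oscillation'' count. Concretely I will prove the equivalence of: (a) $f$ has no chaotic itinerary of period $\ge 3$ — equivalently, by Theorem~\ref{thm:sharkovsky} and the itinerary analysis, every period of $f$ is a power of $2$; (b) $\sup_k \mathrm{lap}(f^k) < \infty$; (c) $\mathrm{VC}(\mathcal{C}_f) < \infty$; (d) $h_{\mathrm{top}}(f) = 0$ — and symmetrically that the negations are equivalent with ``$\mathrm{lap}(f^k)$ grows exponentially in $k$'' in place of (b).

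Two of the legs are classical. The Misiurewicz--Szlenk formula gives $h_{\mathrm{top}}(f) = \lim_{k \to \infty} \frac1k \log \mathrm{lap}(f^k)$ for piecewise-monotone $f$, so subexponential lap growth $\iff h_{\mathrm{top}}(f) = 0$ and exponential lap growth $\iff h_{\mathrm{top}}(f) > 0$ (the bounded case (b) being the extreme of the former); and the Bowen--Franks/Misiurewicz theorem gives $h_{\mathrm{top}}(f) = 0 \iff$ every periodic orbit of $f$ has period a power of $2$ \citep{alseda2000}. On the chaotic side, if $f$ has a chaotic itinerary of period $p \ge 3$, Section~\ref{sec:lb} already shows $\mathrm{lap}(f^k) \ge \rho^k$ for some $\rho > 1$, so the lap growth is exponential, $h_{\mathrm{top}}(f) > 0$, and by Bowen--Franks $f$ has a period that is not a power of $2$; this settles $\neg(a) \Rightarrow (\mathrm{exp}) \Rightarrow \neg(d)$. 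The only non-classical ingredient on the doubling side is the auxiliary claim $(a) \Rightarrow (b)$, that $B := \sup_k \mathrm{lap}(f^k)$ is actually \emph{finite} (not merely subexponential): writing $\mathrm{lap}(f^k) = 1 + \bigl|\bigcup_{0 \le j < k} f^{-j}(c)\bigr|$ for the critical point $c$, one argues that symmetry and concavity confine the total backward orbit $\bigcup_{j \ge 0} f^{-j}(c)$ to a finite set whenever $f$ carries no chaotic itinerary, because the critical value and its forward iterates are trapped in a subinterval whose preimage structure does not keep growing.

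It remains to tie in the VC-dimension. For $\neg(a)$, $h_{\mathrm{top}}(f) > 0$ yields, by Misiurewicz's horseshoe theorem \citep{alseda2000}, an $m \in \N$ and disjoint closed subintervals $J_0, J_1 \subseteq [0,1]$ with $f^m(J_0) \cap f^m(J_1) \supseteq J_0 \cup J_1$; by the standard nested-interval argument, for every $\omega \in \{0,1\}^{\N}$ there is $x_\omega$ with $(f^m)^r(x_\omega) \in J_{\omega_r}$ for all $r \ge 0$. Relabel so that $J_0$ lies entirely to the left of $J_1$ and fix $t$ with $\sup J_0 < t \le \inf J_1$. Given $n \in \N$, enumerate the subsets of $[n]$ as $S_1, \dots, S_{2^n}$; for $i \in [n]$ let $b^{(i)} \in \{0,1\}^{2^n}$ have $j$-th coordinate $\indicator{i \in S_j}$, and set $x_i := x_{\omega^{(i)}}$ for any $\omega^{(i)}$ with $\omega^{(i)}_{j-1} = b^{(i)}_j$ for $1 \le j \le 2^n$. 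The $x_i$ are distinct (take $S_j = \{i\}$), and for $k_j := m(j-1)$ we get $\thres{t}{f^{k_j}(x_i)} = \indicator{(f^m)^{j-1}(x_i) \in J_1} = b^{(i)}_j = \indicator{i \in S_j}$, so $\mathcal{C}_f$ shatters $\{x_1, \dots, x_n\}$; as $n$ was arbitrary, $\mathrm{VC}(\mathcal{C}_f) = \infty$. For (a), use $(a) \Rightarrow (b)$: each $x \mapsto \thres{t}{f^k(x)}$ switches value at most $\mathrm{lap}(f^k) \le B$ times as $x$ sweeps $[0,1]$ (at most one crossing of the level $t$ per monotone lap), so on any increasing list of $B+2$ points it cannot realize the alternating labeling; hence $\mathrm{VC}(\mathcal{C}_f) \le B+1 < \infty$. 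Combining, $(a) \Leftrightarrow (b) \Leftrightarrow (c) \Leftrightarrow (d)$ and their negations are equivalent with exponential lap growth, which is exactly the claimed coincidence of transitions.

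The main obstacle is not the shattering construction — once a horseshoe is available this is routine — but the two structural facts underpinning the doubling side: (i) that the purely combinatorial condition ``$f$ has no chaotic itinerary of period $\ge 3$'' really forces every period of $f$ to be a power of $2$ (this is where Sharkovsky's ordering must be combined with an analysis of exactly which cyclic patterns a symmetric concave unimodal map can carry, ruling out ``increasing'' patterns of period $\ge 3$), and (ii) the uniform lap bound $\sup_k \mathrm{lap}(f^k) < \infty$, which genuinely uses concavity and symmetry to trap the backward orbit of the critical point and is precisely what upgrades ``subexponential oscillations'' to ``bounded VC-dimension.'' I expect (ii) to be the most delicate step; it is also the place where the special geometry of the paper's function family, rather than just Sharkovsky-type period information, is essential.
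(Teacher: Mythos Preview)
Your argument has a fatal gap on the doubling side. The claim $B := \sup_k \mathrm{lap}(f^k) < \infty$ in the doubling regime is false: the paper proves $\mathrm{lap}(f^k) = O((4k)^{q+1})$ when the maximal cycle has length $2^q$, and this polynomial growth is genuinely unbounded even in the simplest nontrivial case. For a symmetric concave unimodal $f$ with only fixed points (no $2$-cycle) but $f(\tfrac12) > \tfrac12$---for instance the logistic map $x \mapsto 4rx(1-x)$ with $r \in (\tfrac12, \tfrac34)$---one has $\mathrm{lap}(f^k) = 2k$ exactly. Your heuristic that concavity and symmetry confine the backward orbit $\bigcup_{j \ge 0} f^{-j}(c)$ to a finite set is simply wrong; that backward orbit is infinite for any such $f$. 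Consequently your VC bound, which rests entirely on the finiteness of $B$, collapses. The paper's proof that $\vc(\hyp{f,t}) \le 18 \cdot 4^q$ does \emph{not} pass through lap counts at all: it encodes, for each $x$, the binary trajectory $(\thres{t}{f^k(x)})_{k \ge 1}$ as an infinite string, shows that the set of all such strings is contained in a constrained regular-expression language built recursively from the renormalized map $h = \phi^{-1} \circ f^2 \circ \phi$ (which lies one level lower in the doubling hierarchy), and bounds a ``weak VC-dimension'' of regexes by structural induction on their grammar. This is substantially more work than a lap-count bound and is where the recursive period-doubling structure actually enters.

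A secondary issue: your condition (a), ``$f$ has no chaotic itinerary of period $\ge 3$,'' is not equivalent to the paper's doubling regime. A unimodal map whose maximal cycle is the primary $6$-cycle with itinerary $143526$ lies in the chaotic regime (period $6$ is not a power of two, entropy is positive) yet carries no increasing cycle of any period $\ge 3$, since in the MSS ordering all increasing itineraries $123, 1234, 12345, \dots$ are forced only \emph{after} $143526$. The correct dichotomy is ``every period of $f$ is a power of two'' versus ``some period is not.''

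On the chaotic side, your horseshoe-plus-symbolic-dynamics construction for $\vc = \infty$ is correct and is a more standard route than the paper's: the paper instead uses Sharkovsky's theorem to produce cycles of $d$ distinct odd prime lengths $p_1 < \dots < p_d$, extracts from each a point $x^{(i)}$ straddling $\tfrac12$ in a controlled way, and then invokes the Chinese Remainder Theorem to exhibit, for every labeling of $\{x^{(1)}, \dots, x^{(d)}\}$, an iterate $k$ realizing it.
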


\paragraph{Our Techniques} To quantify the oscillations of $f^k$, we use its chaotic itineraries to decompose the $[0,1]$ interval into several subintervals $\{I_j\}_{j=1}^{j=p-1}$. We count the number of times $f^k$ ``visits'' each $I_j$, by identifying a suitable matrix $A$ whose spectral radius is a lower bound on the growth rate of oscillations. The associated characteristic polynomial of $A$ is $\lambda^{p} - 2\lambda^{p-1} + 1$ and has larger spectral radius that that of prior works for \textit{all} periods. 
Moreover, the corresponding oscillations of at least one of the subintervals $I_j$ do not shrink in size, giving a bound on the total number of oscillations of a \textit{sufficient} size. 
This provides a lower bound on the height between the peak and the bottom of these oscillations that later provides \textit{constant} approximation errors for small shallow NNs.

More broadly, our work builds on the efforts to characterize large families of functions that give depth separations and addresses questions raised by~\cite{eldan2016COLT,telgarsky16,poole2016NIPS, malach2019deeper} about the properties of hard-to-represent functions. Similar to periods, the concept of chaotic itineraries can serve as a certificate of complexity, which is also easy to verify for unimodal $f$ (see Proposition~\ref{prop:half-inc-cycle} in Appendix).

\section{Warm-up Examples}\label{ssec:warmup}

This section presents illustrative examples and instantiates our results for some simple cases. 
These highlight the limitations of exclusively considering periodicity of cycles alone---and not itineraries---when developing accurate oscillation/crossing bounds~(see also Def.~\ref{def:osc},~\ref{def:cross}) and sharp expressivity tradeoffs.


Consider the three unimodal mappings in Figure~\ref{fig:toy-ex-compare-map}, $f_{\ba}$ with itineraries $\ba \in \{1324, 1234, 123\}$. 
Observe that $f_{1234}$ has the cycle $(\frac{1}{5}, \frac{2}{5}, \frac{3}{5}, \frac{4}{5})$, $f_{1324}$ has $(\frac{1}{5}, \frac{3}{5}, \frac{2}{5}, \frac{4}{5})$, and $f_{123}$ has $(\frac{1}{4}, \frac{1}{2}, \frac{3}{4})$. Despite their similarities, they give rise to significantly different behaviours in $f_{\ba}^k$.

What do prior works based on NN approximation with respect to periods and Sharkovsky's theorem alone tell us?  
\cite{cnpw19,cnp20} show that the 3-cycle of $f_{123}$ ensures that $f^k$ has $\Omega(\phi^k)$ oscillations, where $\phi \approx 1.618$ is the golden ratio. However, their theorems do not imply anything for $f_{1324}$ and $f_{1234}$, since 4 is a power of 2, and they require odd periods.  

\begin{figure}
    \centering
    \includegraphics[width=0.45\textwidth]{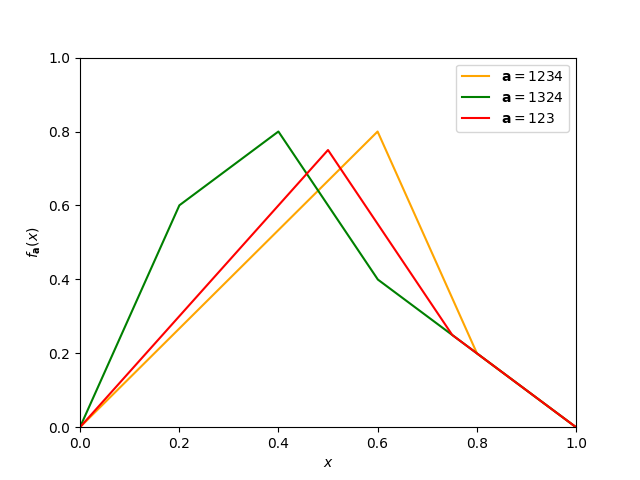}
    \caption{Plots of unimodal mappings with different itineraries $f_{1234}$, $f_{1324}$, and $f_{123}$. Despite their similarities, $f_{1234}$ leads to the most oscillations and sharpest depth-width tradeoffs (see Fig.~\ref{fig:toy-intro}).}
    \label{fig:toy-ex-compare-map}
\end{figure}
As it turns out, $f_{1234}$ leads to exponential oscillations and $f_{1324}$ leads only to polynomial oscillations:
\begin{itemize}
    \item A mapping with a 1324-itinerary is guaranteed no other cycles except the 2-cycle and a fixed point~\citep{mss73}. 
    Sharkovsky's theorem and \cite{cnpw19} predict this outcome, since 4 is the third-right-most element of the Sharkovsky ordering, and its existence alone promises nothing more.
    The ordering of itineraries introduced by~\citep{mss73} (see Table~\ref{table:itineraries} in Appendix) indicates that the particular 1324-itinerary only implies the periods 2 and 1, and confirms this intuition.
    We classify this itinerary as part of the \emph{doubling regime} and prove in Theorem~\ref{thm:properties-doubling} that any $f^k$ with a \textit{maximal} 1324-itinerary (that is, there is no 8-cycle) cannot exhibit sharp depth-width tradeoffs: for any $\eps>0$, there exists a 2-layer ReLU neural network $g$ of width $O(\frac{k^3}{\eps})$ such that 
    $L_\infty(f_{1324}^k, g) \leq \eps.$

    \item Going beyond Sharkovsky's theorem, a mapping with a 1234-itinerary---even though it is of period 4---it is guaranteed to contain a 3-cycle as well (see Table~\ref{table:itineraries} in Appendix). Hence, ``itinerary-1234 implies period-3, implies chaos,'' and $f_{1234}^k$ has at least $\Omega(\phi^k)$ oscillations and is hard to approximate by small shallow NNs. Moreover, Theorem~\ref{thm:main-linf} and Table~\ref{table:rho_inc} show that $f_{1234}^k$ actually has $\Omega(\rho^k)$ oscillations for $\rho \approx 1.839 > \phi$. A corollary is that any NN $g$ of depth $\sqrt{k}$ and width $O(1.839^{\sqrt{k}})$ has $L_{\infty}(f_{1234}^k, g) = \Omega(1)$, which is a stronger separation (\textit{constant} error) than the ones given by~\cite{cnpw19, cnp20}.
\end{itemize}
\begin{figure}
    \centering
    \includegraphics[width=0.45\textwidth]{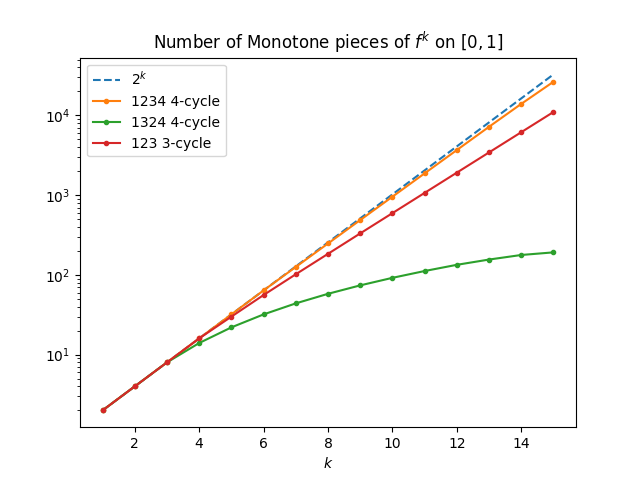}
    \caption{The chaotic itinerary $f_{1234}$ has more oscillations than $f_{123}$ even though $3 \triangleright 4$ by Sharkovsky's Theorem. Itineraries $f_{1234}$ and $f_{1324}$ (both of period 4) differ dramatically in oscillation count, showing why periodicity alone fails to capture the optimal tradeoffs.}
    \label{fig:toy-intro}
\end{figure}
The reverse is not true: Sharkovsky's Theorem guarantees that period-3 implies period-4, but the only 4-cycle guaranteed by the theorem is actually the non-chaotic 1324-itinerary, already shown to lead to minimal function complexity.

Furthermore, as $p$ increases, the existence of a chaotic itinerary $12\dots p$ on $f$ ensures that $f^k$ has $\Omega(\rho^k)$ oscillations for $\rho \to 2$.\footnote{Similarly to~\cite{telgarsky16}, the optimal achievable rate is $\rho\le 2$ if we start with a unimodal $f$ (e.g., tent map). If one used multimodal functions as a building block (e.g., starting with $f'=f^2$ or $f'=f^3$), we could achieve larger rates (e.g., 4 or 8 respectively).} Figure~\ref{fig:toy-intro} demonstrates these differences in oscillations (by counting the number of monotone pieces of functions $f^k_{\mathbf{a}}$ with a maximal itinerary-$\mathbf{a}$). As indicated theoretically, the number of oscillations of $f_{1324}$ is polynomially-bounded, while the others grow exponentially fast, with $f_{1234}$ being closer to $2^k$. Please see Appendix~\ref{asec:warmup} for more such examples. 

Generally, prior constructions where the oscillation count of $f^k$ increase at a rate faster than $\phi^k$ were too narrow (including only the triangle map). 
Because $f_{1234}$ breaks the barrier, we abstract away the details and point to chaotic itineraries as the main source of complexity, leading to sharper depth-width tradeoffs. 

While periodicity tells a compelling story about why $f_{123}^k$ is difficult to approximate, it fails to explain why $f_{1234}^k$ is even more complex.
The exponential-vs-polynomial gap in the function complexity of $f_{1234}$ and $f_{1324}$ depends solely on the order of the elements of the cycle and distinguishes functions that NNs can easily approximate from those they cannot.

The remainder of the paper addresses the question introduced here---when does the itinerary tell us much more than the length of the period---in a general context that explores a ``hierarchy'' of such chaotic itineraries, strengthens a host of NN inapproximability bounds (Sec.~\ref{sec:lb}), and reveals tight connections with other complexity notions, like the VC-dimension and topological entropy~(Sec.~\ref{sec:phase}).

\section{Depth-Width Tradeoffs via Chaotic Itineraries}\label{sec:lb}


We give our main hardness results on the inapproximability of functions generated by repeated compositions of $f$ to itself when $f$ has certain cyclic behavior.
Section~\ref{ssec:lb-increasing} applies insights about chaotic itineraries to prove constant $L_\infty$ and $L_1$ lower bounds on the accuracy of approximating $f^k$ when $f$ has an increasing cycle.
Section~\ref{ssec:lb-odd} strengthens previous bounds on the number of oscillations when $f$ has an odd cycle, which is not necessarily increasing.
Appendix~\ref{asec:compare} presents Table~\ref{table:compare} that illustrates the key differences between results.

\subsection{Notation}
To measure the function complexity of $f^k$, we count the number of times $f^k$ oscillates. 
We employ two notions of oscillation counts.
The first is relatively weak and counts every interval on which $f$ is either increasing or decreasing, regardless of its size.

\begin{definition}\label{def:osc}
    Let $f: [0, 1] \to [0, 1]$. $\oscm(f)$ represents the \emph{number of monotone pieces} of $f$. That is, it is the minimum $m$ such that there exists $x_0 = 0 < x_1 < \dots < x_{m-1} < x_m = 1$ where $f$ is monotone on $[x_{j-1}, x_{j}]$ for all $j\in [m]$.
\end{definition}

The second instead counts the number of times a fixed interval of size $b-a$ is crossed:

\begin{definition}\label{def:cross}
    Let $f: [0, 1] \to [0, 1]$ and $[a, b] \in [0,1]$. 
    $\osci{a,b}(f)$ represents the \emph{number of crossings} of $f$ on the interval $[a,b]$. 
    That is, it is the maximum $c$ such that there exist \[0 \leq x_1 < x'_1 \leq x_2 < x'_2 \leq \dots \leq x_c < x_c' \leq 1\] where for all $j \in [c]$, $f([x_j, x'_j]) \subset [a,b]$ and either $f(x_j) = a$ and $f(x'_j) = b$ or vice versa.
\end{definition}

\newcommand{\rhoinc}[1]{\rho_{\mathrm{inc}, #1}}
\newcommand{\rhoodd}[1]{\rho_{\mathrm{odd}, #1}}

\paragraph{Characteristic Polynomials} The base of the exponent of our width bounds is shown to equal the largest root of one of two polynomials:
\begin{align*}
    P_{\mathrm{inc}, p}(\lambda) &= \lambda^p - 2\lambda^{p-1} + 1, \\
    P_{\mathrm{odd}, p}(\lambda) &= \lambda^p - 2\lambda^{p-2} - 1.
\end{align*}
Let $\rhoinc{p}$ and $\rhoodd{p}$ be the largest roots of $P_{\mathrm{inc}, p}$ and $P_{\mathrm{odd}, p}$ respectively.
Table~\ref{table:rho_inc} illustrates that as $p$ grows, $\rhoinc{p}$ increases to 2, while $\rhoodd{p}$ drops to $\sqrt{2}$.
Note that $\rhoodd{p} \in (\sqrt{2}, \sqrt{2 + 2/2^{p/2}})$ \citep{alseda2000}.
We bound the growth rate of $\rhoinc{p}$ with the following: 

\begin{restatable}{fact}{factrootlb}\label{fact:root-lb}
    $\rhoinc{p} \in [\max(2 - \frac{4}{2^p}, \phi), 2)$, where $\phi = \frac{1+ \sqrt{5}}{2}$ is the Golden Ratio.
\end{restatable}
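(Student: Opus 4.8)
The plan is to work directly with the real roots of
\[
P_{\mathrm{inc},p}(\lambda)=\lambda^p-2\lambda^{p-1}+1=\lambda^{p-1}(\lambda-2)+1,
\]
treating $\rhoinc{p}$ as the largest real root; this is legitimate since, by Perron--Frobenius, $\rhoinc{p}$ equals the spectral radius of the nonnegative counting matrix $A$ whose characteristic polynomial is $P_{\mathrm{inc},p}$, hence a real nonnegative eigenvalue, and the explicit real root exhibited below certifies its existence. Throughout I would use $p\ge 3$ (the claim fails at $p=2$, where $P_{\mathrm{inc},2}=(\lambda-1)^2$). The whole argument reduces to three point evaluations of $P_{\mathrm{inc},p}$ together with the intermediate value theorem.

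First I would prove the upper bound $\rhoinc{p}<2$: from the form $P_{\mathrm{inc},p}(\lambda)=\lambda^{p-1}(\lambda-2)+1$ we get $P_{\mathrm{inc},p}(\lambda)\ge 1>0$ for every $\lambda\ge 2$ (in particular $P_{\mathrm{inc},p}(2)=1$), so $P_{\mathrm{inc},p}$ has no real root in $[2,\infty)$. Next, for the lower bound $\rhoinc{p}\ge\phi$, I would evaluate at $\phi$. From $\phi^2=\phi+1$ one gets $(\phi-2)(\phi+1)=\phi^2-\phi-2=-1$, i.e.\ $\phi-2=-1/\phi^2$, hence
\[
P_{\mathrm{inc},p}(\phi)=\phi^{p-1}(\phi-2)+1=1-\phi^{p-3}\le 0\qquad(p\ge 3),
\]
with equality iff $p=3$. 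Since $P_{\mathrm{inc},p}(2)=1>0$, the intermediate value theorem produces a root in $[\phi,2)$, so $\rhoinc{p}\ge\phi$.

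For the second lower bound $\rhoinc{p}\ge 2-\frac{4}{2^p}$, I would set $\lambda=2-\frac{4}{2^p}=2\bigl(1-2^{1-p}\bigr)$, so that $\lambda-2=-2^{2-p}$ and $P_{\mathrm{inc},p}(\lambda)=1-2^{2-p}\lambda^{p-1}$. It then suffices to show $\lambda^{p-1}\ge 2^{p-2}$, which after dividing by $2^{p-1}$ is exactly $(1-2^{1-p})^{p-1}\ge\frac12$. This follows from Bernoulli's inequality:
\[
(1-2^{1-p})^{p-1}\ \ge\ 1-(p-1)2^{1-p}\ \ge\ \frac12,
\]
using that $(p-1)2^{1-p}\le\frac12$ for all $p\ge 3$ (the sequence $(p-1)2^{1-p}$ equals $\frac12$ at $p=3$ and is strictly decreasing afterwards, since consecutive terms have ratio $\frac{p}{2(p-1)}<1$). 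Thus $P_{\mathrm{inc},p}\bigl(2-\frac{4}{2^p}\bigr)\le 0<P_{\mathrm{inc},p}(2)$, and IVT again yields a root in $[2-\frac{4}{2^p},2)$. Combining the three estimates gives $\rhoinc{p}\in[\max(2-\frac{4}{2^p},\phi),2)$.

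The only point requiring care is that $(p-1)2^{1-p}\le\frac12$ is tight at $p=3$, so one must verify the boundary case explicitly ($(1-\frac14)^2=\frac{9}{16}\ge\frac12$); everything else is elementary. For intuition one may also record the factorization $P_{\mathrm{inc},p}(\lambda)=(\lambda-1)(\lambda^{p-1}-\lambda^{p-2}-\cdots-\lambda-1)$, which identifies $\rhoinc{p}$ with the $(p-1)$-step ``multinacci'' constant; the two-sided bound above is then precisely the classical localization of that constant, and it reproduces $\rhoinc{3}=\phi$ and $\rhoinc{4}\approx 1.839$ (the tribonacci constant).
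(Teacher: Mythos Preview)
Your proof is correct and follows essentially the same approach as the paper: both argue via the Intermediate Value Theorem by checking $P_{\mathrm{inc},p}(2)>0$ and $P_{\mathrm{inc},p}(2-4/2^p)\le 0$, the latter via Bernoulli's inequality applied to $(1-2^{1-p})^{p-1}\ge 1-(p-1)2^{1-p}\ge \tfrac12$. The one notable difference is that the paper's proof actually omits the $\phi$ lower bound entirely, whereas you supply it cleanly via the identity $\phi-2=-1/\phi^2$, giving $P_{\mathrm{inc},p}(\phi)=1-\phi^{p-3}\le 0$ for $p\ge 3$; so your argument is in fact more complete than the paper's own.
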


We prove Fact~\ref{fact:root-lb} in Appendix~\ref{assec:fact}.

\begin{table}[]
    \centering
     \caption{
     Approximate values of $\rhoinc{p}$, the lower bound on $\rhoinc{p}$ in Fact~\ref{fact:root-lb}, and $\rhoodd{p}$ (for odd $p$).\\}
    \begin{tabular}{|l|l|l|l|}
    \hline
    $p$  & $\rhoinc{p}$                              & Fact~\ref{fact:root-lb}   & $\rhoodd{p}$     \\ \hline\hline
    $3$  & $1.618$ & $1.618$ &  $1.618$  \\ \hline
    $4$  & $1.839$                        & $1.75$  & n/a \\ \hline
    $5$  & $1.928$                        & $1.875$ & $1.513$  \\ \hline
    $6$  & $1.966$                        & $1.938$ & n/a\\ \hline
    $7$  & $1.984$                        & $1.969$ & $1.466$\\ \hline
    $8$  & $1.992$                        & $1.984$ & n/a\\ \hline
    $9$  & $1.996$                        & $1.992$ & $1.441$\\ \hline
    $10$ & $1.999$                        & $1.996$ & n/a                 \\ \hline
    \end{tabular}
    \label{table:rho_inc}
\end{table}

\subsection{Inapproximability of Iterated Functions with Increasing Cycles}\label{ssec:lb-increasing}

Our inapproximability results that govern the size of neural network $g$ necessary to adequately approximate $f^k$ when $f$ has an increasing cycle (like Theorem~\ref{thm:motivation-lb-inc}) rely on a key lemma that bounds the number of constant-size oscillations of $f^k$.

\begin{restatable}[Oscillation Bound for Increasing Cycles]{lemma}{lemmaincreasingconstantosc}\label{lemma:increasing-constant-osc}
    Suppose $f$ is a symmetric, concave unimodal mapping with an increasing $p$-cycle for some $p \geq 3$.
    Then, there exists $[a,b] \subset [0, 1]$ with $b - a \geq \frac{1}{18}$ such that $\osci{a,b}(f^k) \geq \tfrac12\rhoinc{p}^k$ for all $k \in \N$.
\end{restatable}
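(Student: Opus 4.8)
The plan is to set up a combinatorial/dynamical "transition graph" argument in which the interval $[0,1]$ is partitioned into subintervals determined by the increasing $p$-cycle of $f$, and then track how many times $f^k$ maps across a fixed "good" subinterval. Let $x_1 < x_2 < \dots < x_p$ be the increasing $p$-cycle, with $f(x_j) = x_{j+1}$ for $j < p$ and $f(x_p) = x_1$. Define $I_j = [x_j, x_{j+1}]$ for $j \in [p-1]$. The first step is to understand, using unimodality, symmetry, and concavity of $f$, how each $I_j$ is stretched by $f$: because the cycle is increasing, on the increasing branch $f$ sends each $I_j$ onto (at least) $I_{j+1}$ for small $j$, and near the top of the cycle the image of $I_{p-1}$ "wraps around" and covers $I_1 \cup I_2 \cup \dots \cup I_{p-1}$ (this is exactly where period-3 / chaos comes from — the wrap-around is the Li–Yorke-type horseshoe). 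I would make this precise as a covering relation $I_j \Rightarrow I_{j'}$ meaning $f(I_j) \supseteq I_{j'}$, and record it as the adjacency structure of a directed graph on vertices $\{I_1, \dots, I_{p-1}\}$.

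Second, I would translate counting crossings of $f^k$ over a fixed interval $[a,b]$ into counting admissible length-$k$ walks in this transition graph. The standard fact (as in the oscillation/crossing lemmas of Telgarsky and \cite{cnpw19,cnp20}) is that each admissible walk $I_{j_0} \Rightarrow I_{j_1} \Rightarrow \dots \Rightarrow I_{j_k}$ yields, by repeatedly pulling back subintervals, a subinterval of $I_{j_0}$ on which $f^k$ traverses $I_{j_k}$ monotonically; choosing all walks to terminate at one fixed target interval $I_{j^*}$ and picking $[a,b]$ to be a fixed sub-interval of that target gives the crossing count. The number of such walks is $\Theta(\mu^k)$ where $\mu$ is the spectral radius of the transition matrix $A$; here $A$ has a first-return/companion-type structure whose characteristic polynomial works out to $P_{\mathrm{inc},p}(\lambda) = \lambda^p - 2\lambda^{p-1} + 1$ (the "$2$" coming from the double covering of the wrap-around interval, reflecting that both the increasing and decreasing branches of $f$ contribute). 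So $\mu = \rhoinc{p}$, and with a little care about the additive lower-order loss in counting walks one gets at least $\tfrac12 \rhoinc{p}^k$ of them, hence $\osci{a,b}(f^k) \geq \tfrac12 \rhoinc{p}^k$.

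Third — and this is the part that distinguishes the claim from a pure oscillation-count and is the main obstacle — I must guarantee that the fixed interval $[a,b]$ has length at least $\tfrac1{18}$, i.e. that the target interval $I_{j^*}$ (or a definite fraction of it) does not shrink to zero, independent of $f$ and $p$. This is where concavity and symmetry of $f$ are essential: one uses them to show that some $I_{j}$ among the $p-1$ intervals has length bounded below by an absolute constant. Concretely, since $f(0)=f(1)=0$, the unique max is at $x' = \tfrac12$, $f$ is concave, and the cycle points interlace the fixed point, one can argue the cycle points cannot all be crammed into a tiny subinterval — e.g. consecutive iterates $x_j \mapsto x_{j+1}$ move by a controlled amount near the middle, and concavity plus symmetry forces the $x_j$ to spread out so that $\max_j (x_{j+1}-x_j) \geq \Omega(1)$; alternatively, one identifies the specific interval straddling or adjacent to $\tfrac12$ and bounds its length below using that $f$ is concave with fixed endpoint values. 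I would then take $[a,b]$ to be (a fixed-fraction sub-interval of) that large $I_{j^*}$; the constant $\tfrac1{18}$ is just whatever falls out of chasing the concavity/symmetry estimate and the fact that pulling the walk back to the target only requires the target itself (not any pullback) to be large. A secondary technical point to handle carefully: the transition graph must be strongly connected enough (there must be walks from the large target interval back around to itself) so that the spectral radius $\rhoinc{p}$ is actually realized by closed walks through $I_{j^*}$; this follows from the covering relations above, but I would verify it explicitly rather than invoke irreducibility for free.

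I expect steps one and two to be essentially bookkeeping modeled on prior work, with the genuinely new content being (a) correctly identifying the transition structure that yields $P_{\mathrm{inc},p}$ rather than the weaker Fibonacci-type polynomial of \cite{cnp20}, and (b) the absolute-constant lower bound $b-a \geq \tfrac1{18}$, which has no analogue in the earlier papers and is exactly what later upgrades the conclusion to a constant $L_\infty$ / $L_1$ separation independent of $f, p, k$.
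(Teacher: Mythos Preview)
Your proposal is essentially correct and matches the paper's architecture: partition by the cycle points into $I_j=[x_j,x_{j+1}]$, record the covering relations $I_j\Rightarrow I_{j+1}$ for $j<p-1$ and $I_{p-1}\Rightarrow I_j$ for all $j$, form the transition matrix, and read off the spectral radius $\rhoinc{p}$ as the largest root of $\lambda^p-2\lambda^{p-1}+1$. Two of your mechanisms, however, are not quite the ones that actually carry the argument through.

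First, the $\tfrac12$ in $\tfrac12\rhoinc{p}^k$ is not an ``additive lower-order loss in counting walks.'' The spectral-radius bound only guarantees that \emph{some} interval $I_{j^*}$ is crossed $\geq \rhoinc{p}^k$ times, and there is no reason this $I_{j^*}$ coincides with the one of width $\geq\tfrac1{18}$. The paper closes this gap by a short induction showing that the crossing-count vector $y^{(k)}=A_p^k\mathbf{1}$ has monotone entries with $y^{(k)}_{p-1}\leq 2y^{(k)}_1$; hence every interval is crossed at least half as often as the most-crossed one. Your ``strong connectivity'' remark gestures at this but would not by itself give the uniform factor-2 comparison you need.

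Second, for the $\tfrac1{18}$ lower bound, the heuristic ``concavity plus symmetry forces the $x_j$ to spread out'' is too vague and, as stated, not obviously true: most cycle points can in fact cluster. The paper's device is to first observe that the coverings force an embedded \emph{3-cycle} $y_1<y_2<y_3$ with $y_i$ lying in three consecutive $I_j$'s among $I_{p-3},I_{p-2},I_{p-1}$, and then to run a short concavity/slope case analysis on this 3-cycle alone to conclude $y_3-y_1\geq\tfrac16$; pigeonhole over the three consecutive $I_j$'s then gives one of width $\geq\tfrac1{18}$. Once you know to reduce to the 3-cycle, the concavity estimate is elementary, but identifying that reduction is the substantive step you are missing.
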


We prove Lemma~\ref{lemma:increasing-constant-osc} in Appendix~\ref{assec:inc-constant-osc-proof}.
For an increasing $p$-cycle $x_1, \dots, x_p$, we lower-bound $\oscm(f^k)$ (the total number of monotone pieces, regardless of size) by relating the number of times $f^k$ crosses each interval $[x_j, x_{j+1}]$ to the number of crossings of $f^{k-1}$.
Doing so entails analyzing the largest eigenvalues of a transition matrix, which gives rise to the polynomial $P_{\mathrm{inc}, p}$.
We prove that the intervals crossed must be sufficiently large due to the symmetry, concavity, and unimodality of $f$.

\begin{remark}\label{remark:inc-close}
    If one does not wish to assume that $f$ is unimodal, symmetric, or concave, then the proof can be modified to show that $\osci{a,b}(f^k) = \Omega(\rho^k)$ for the same $\rho$, but for $a$ and $b$ dependent on $f$.
    These results are similar in flavor to those of \cite{cnpw19, cnp20, bzl20}, and they suffer from the same drawback: potentially vacuous approximation bounds when $a$ and $b$ are close.
    Appendix~\ref{assec:sym-conc} shows natural functions that are either not symmetric or not concave, whose oscillations shrink in size arbitrarily.
\end{remark}



\subsubsection{$L_\infty$ Approximation and Classification}
Our first result is a restatement of Theorem~\ref{thm:motivation-lb-inc} that quantifies inapproximability in terms of both $L_\infty$ and classification error, which are comparable to the respective results of \cite{bzl20} and \cite{cnpw19}.

\begin{theorem}\label{thm:main-linf}
    Suppose $f$ is a symmetric concave unimodal mapping with an increasing $p$-cycle for some $p \geq 3$.
    Then, any $k \in \N$ and $g\in \mathcal{N}(u, \ell)$ with $u \leq \frac{1}{8} \rhoinc{p}^{k/\ell}$ have
    $\norml[\infty]{f^k - g} = \Omega(1)$.
    
    Moreover, there exists $S$ with $\abs{S} = \frac{1}{2} \floorl{\rhoinc{p}^{k/\ell}}$ and $t \in (0,1)$ such that $\errcls{S,t}{f^k, g} \geq \frac{1}{4}.$
\end{theorem}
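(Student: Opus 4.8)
The plan is to reduce Theorem~\ref{thm:main-linf} to Lemma~\ref{lemma:increasing-constant-osc} together with the classical fact that bounded ReLU networks compute piecewise-linear functions with few monotone pieces. First I would invoke Lemma~\ref{lemma:increasing-constant-osc} to fix an interval $[a,b]\subset[0,1]$ with $b-a\ge\tfrac1{18}$ and $\osci{a,b}(f^k)\ge\tfrac12\rhoinc{p}^k$, and take crossing intervals $[x_1,x_1']\le\dots\le[x_c,x_c']$ as in Definition~\ref{def:cross} with $c=\lceil\tfrac12\rhoinc{p}^k\rceil$, so on each $[x_j,x_j']$ the map $f^k$ is monotone with $\{f^k(x_j),f^k(x_j')\}=\{a,b\}$. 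On the network side I would use the standard bound $\oscm(g)\le(2u)^\ell$ for $g\in\mathcal N(u,\ell)$ \citep{telgarsky16} (valid up to a constant in the base for semialgebraic activations as well): under $u\le\tfrac18\rhoinc{p}^{k/\ell}$ this gives $\oscm(g)\le(\tfrac14\rhoinc{p}^{k/\ell})^\ell=4^{-\ell}\rhoinc{p}^k\le\tfrac14\rhoinc{p}^k<c$. So $g$ has strictly fewer monotone pieces than $f^k$ has crossings of $[a,b]$ — this gap is the crux of both bounds.

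For the $L_\infty$ bound, suppose toward contradiction that $\norml[\infty]{f^k-g}<\tfrac{b-a}2$ and set $t=\tfrac{a+b}2$. Then $g<t$ wherever $f^k=a$ and $g>t$ wherever $f^k=b$, so $g-t$ changes sign on each $[x_j,x_j']$; since the crossing intervals overlap only at endpoints, $g-t$ crosses $0$ at least $c$ times on $[0,1]$, whence $\oscm(g)\ge c>\tfrac14\rhoinc{p}^k$, contradicting the bound above. Hence $\norml[\infty]{f^k-g}\ge\tfrac{b-a}2\ge\tfrac1{36}=\Omega(1)$, a constant independent of $f,p,k$.

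For the classification bound I would fix a generic $t\in(a,b)$ at which $g$ is neither locally constant nor has a local extremum, so that $\thres t g$ changes value at most $\oscm(g)\le\tfrac14\rhoinc{p}^k$ times on $[0,1]$. On each $[x_j,x_j']$, $f^k$ sweeps monotonically across $t$, so I can pick interior points $y_j,y_j'\in(x_j,x_j')$ with $\thres t{f^k(y_j)}\neq\thres t{f^k(y_j')}$; the $2c$ points so obtained are distinct, the binary string $\thres t{f^k(\cdot)}$ has at least $c$ value-changes along them, and $\thres t{g(\cdot)}$ has at most $\oscm(g)$. Since flipping one coordinate of a binary string alters its number of value-changes by at most $2$, the two strings disagree in at least $\tfrac{c-\oscm(g)}2\ge\tfrac14 c\ge\tfrac18\rhoinc{p}^k$ coordinates, i.e.\ $g$ misclassifies at least $\tfrac18\rhoinc{p}^k$ of these points. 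Finally I would let $S$ consist of $\lceil\tfrac14\abs S\rceil$ of these misclassified points together with arbitrary further points of $[0,1]$ until $\abs S=\tfrac12\floorl{\rhoinc{p}^{k/\ell}}$; this is feasible because the number of misclassified points, $\ge\tfrac18\rhoinc{p}^k$, comfortably exceeds $\tfrac14\abs S\le\tfrac18\rhoinc{p}^{k/\ell}$. Then $\errcls{S,t}{f^k,g}\ge\tfrac14$.

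The real content of the theorem lives in Lemma~\ref{lemma:increasing-constant-osc} (and in Fact~\ref{fact:root-lb}, needed to turn $\rhoinc{p}$ into the explicit base of Theorem~\ref{thm:motivation-lb-inc}); given the lemma, the steps above are a routine reduction, and the main obstacle is bookkeeping rather than ideas: pinning down the exact constant in the piece-count bound so that the factor $\tfrac18$ is correct (a different universal constant in $\oscm(g)\le(Cu)^\ell$ just rescales it), verifying the construction is non-vacuous — the hypothesis forces $\rhoinc{p}^{k/\ell}\ge8$ so all floors and ceilings behave — and using the generic choice of $t$ to dispose of the degenerate case $g\equiv t$ near a sample point. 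I expect the genuinely delicate part, deferred to the Appendix, to be the combination of symmetry, concavity and unimodality in Lemma~\ref{lemma:increasing-constant-osc} that both bounds $\oscm(f^k)$ via the spectral radius of the transition matrix with characteristic polynomial $P_{\mathrm{inc},p}$ and guarantees the crossed interval $[a,b]$ has constant width.
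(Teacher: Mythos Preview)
Your proposal is correct and is essentially the paper's own proof spelled out in full: the paper simply cites Lemma~\ref{lemma:increasing-constant-osc} together with the black-box inapproximability results Theorem~\ref{thm:general-lb-cls} (from \cite{cnpw19}) and Corollary~\ref{cor:general-lb-linf}, which encode exactly the Telgarsky-style monotone-piece counting argument you sketch, with $t=\tfrac{a+b}{2}$. You are right that all the substance lives in Lemma~\ref{lemma:increasing-constant-osc}.
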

    The proof follows from our main Lemma~\ref{lemma:increasing-constant-osc} above and Theorem~\ref{thm:general-lb-cls}/Corollary~\ref{cor:general-lb-linf} in the Appendix (two previous inapproximability bounds based on oscillations).

Despite relying on unimodality assumptions and the existence of increasing cycles, Theorem~\ref{thm:main-linf} obtains much stronger bounds than its previous counterparts:
\begin{itemize}
    \item The assumption that $f$ has an increasing cycle causes a much larger exponent base for the width bound. 
    \cite{cnpw19, cnp20} only prove that the existence of 3-cycle mandates a width of $\Omega(\phi^{k/\ell})$.
    We exactly match that bound for $p=3$, and improve upon it when $p > 3$.
    As illustrated by Table~\ref{table:rho_inc}, increasing $p$ pushes the base $\rhoinc{p}$ rapidly to 2, which is the maximum exponent base for the increase of oscillations of any unimodal map. (And the maximal topological entropy of a unimodal map.)
    This also approximately matches the bases from \cite{bzl20}, which scale with the topological entropy of $f$.
    \item As illustrated in Appendix~\ref{assec:sym-conc}, the inaccuracy of neural networks with respect to the $L_\infty$ approximation in~\cite{cnpw19, cnp20,bzl20} may be arbitrarily small for certain choices of $f$. 
    Our unimodality assumptions ensure that the oscillations of $f^k$ are large and hence, that the inaccuracy of $g$ is constant.
\end{itemize}

\subsubsection{$L_1$ Approximation}
We also strengthen the bound on $L_1$-inapproximability given by \cite{cnp20} by again introducing a stronger exponent and applying unimodality to yield a constant-accuracy bound.

\begin{theorem}\label{thm:main-l1}
    Consider any $L$-Lipschitz $f: [0,1] \to [0,1]$ with an increasing $p$-cycle for some $p \geq 3$.
    If $L = \rhoinc{p}$, then for any $k \in \N$, any $g \in \mathcal{N}(u,\ell)$ with $u \leq \frac{1}{16} \rhoinc{p}^{k/\ell}$ has $\norml[1]{f^k - g} = \Omega(1).$
\end{theorem}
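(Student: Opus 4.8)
The plan is to feed the oscillation bound of Lemma~\ref{lemma:increasing-constant-osc} into a Telgarsky-style $L_1$ argument in the spirit of \cite{telgarsky16, cnp20}, the essential new features being that the exponent base is now $\rhoinc{p}$ rather than $\phi$ and that the crossed band $[a,b]$ has constant width, which is what upgrades the conclusion from an $f$- and $k$-dependent bound to $\Omega(1)$. First I would apply Lemma~\ref{lemma:increasing-constant-osc} to obtain $[a,b]\subset[0,1]$ with $b-a\ge\frac1{18}$ and $\osci{a,b}(f^k)\ge N:=\frac12\rhoinc{p}^k$, and fix a maximal family of $N$ disjoint subintervals on each of which $f^k$ sweeps monotonically between $a$ and $b$. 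Since $f$ is $\rhoinc{p}$-Lipschitz, $f^k$ is $\rhoinc{p}^k$-Lipschitz, so every such sweep has width at least $\frac{b-a}{\rhoinc{p}^k}$. This is the one place where the hypothesis $L=\rhoinc{p}$ is used in an essential way: the total $x$-length covered by the sweeps is at least $N\cdot\frac{b-a}{\rhoinc{p}^k}\ge\frac{b-a}{2}\ge\frac1{36}$, a positive constant independent of $k$; for $L>\rhoinc{p}$ the sweeps could be thinner and this total would decay, which is precisely the degeneracy behind the earlier (potentially vacuous) $L_1$ bounds.

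Second, I would pass from single sweeps to full up-down bumps. By maximality of the crossing count the sweeps alternate direction, so pairing them consecutively yields at least $N/2$ disjoint ``bump intervals'' $I$; on each of these one checks (again using maximality) that $f^k\ge a$ throughout, that $f^k$ equals $a$ at both endpoints of $I$, that $f^k$ attains a value $\ge b$ at an interior point, and that $I$ has width at least $\frac{b-a}{\rhoinc{p}^k}$. A ReLU network $g\in\mathcal{N}(u,\ell)$ is piecewise affine with at most $(2u)^\ell$ linear pieces (a standard bound, e.g.\ \cite{telgarsky16}), and $u\le\frac1{16}\rhoinc{p}^{k/\ell}$ gives $(2u)^\ell\le\frac18\rhoinc{p}^k$; since the bump intervals are disjoint, $g$ is affine on all but at most $\frac18\rhoinc{p}^k$ of them, hence affine on at least $N/2-\frac18\rhoinc{p}^k=\frac18\rhoinc{p}^k$ of them. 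For a bump interval $I$ on which $g$ is affine, a short geometric lemma gives $\int_I\abs{f^k-g}\ge c\,\frac{(b-a)^2}{\rhoinc{p}^k}$ for an absolute constant $c>0$: Lipschitzness of $f^k$ forces a central sub-region of $I$ of width at least $\frac{b-a}{2\rhoinc{p}^k}$ on which $f^k\ge a+\frac34(b-a)$, together with two sub-regions abutting the endpoints of $I$ on which $f^k\le a+\frac14(b-a)$, and an affine (hence monotone) function cannot be simultaneously close to $f^k$ on the central region and on both flanking regions. Summing over the $\ge\frac18\rhoinc{p}^k$ affine bump intervals, $\norml[1]{f^k-g}\ge\frac18\rhoinc{p}^k\cdot c\,\frac{(b-a)^2}{\rhoinc{p}^k}=\frac c8(b-a)^2=\Omega(1)$.

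The part I expect to be the real obstacle is the balancing in the first step: one must guarantee that the product (number of sweeps) $\times$ (minimum sweep width) stays bounded below by a constant as $k\to\infty$, and the whole point of stating the theorem with $L=\rhoinc{p}$ exactly is that Lemma~\ref{lemma:increasing-constant-osc} is what makes $\rhoinc{p}$ (not $\phi$) the relevant oscillation rate, while symmetry, concavity, and unimodality are what make $b-a$ a universal constant rather than something that shrinks with $f$ or $p$. The rest is routine: verifying that the constants line up so that the count of ``bad'' bump intervals is at most $N/4$, checking that ``sweeps alternate by maximality'' legitimately yields $\Omega(N)$ genuine up-down bumps, and carrying out the elementary monotone-versus-peak-and-two-valleys estimate behind the geometric lemma (which can instead be quoted from the $L_1$ toolkit of \cite{telgarsky16, cnp20}).
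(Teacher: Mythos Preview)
Your approach is exactly the paper's: feed Lemma~\ref{lemma:increasing-constant-osc} into the $L_1$ inapproximability machinery of \cite{cnp20} (packaged in the paper as Theorem~\ref{thm:general-lb-l1}), which you have simply unpacked by hand rather than cited as a black box. One caveat, shared with the paper's own one-line proof: Lemma~\ref{lemma:increasing-constant-osc} assumes $f$ is symmetric, concave, and unimodal, none of which appear in the theorem's hypotheses as stated, so strictly speaking you can only invoke the weaker version noted in Remark~\ref{remark:inc-close} and obtain $\norml[1]{f^k-g}=\Omega((b-a)^2)$ with $b-a$ depending on $f$.
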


    The proof follows again using our main Lemma~\ref{lemma:increasing-constant-osc} and using Theorem~\ref{thm:general-lb-l1} in the Appendix.

We make Theorem~\ref{thm:main-l1} more explicit by showing that many tent maps meet the Lipschitzness condition.
Let $\ftent{r} = 2r\min(x, 1-x)$ be the tent map, parameterized by $r \in (0,1)$
Our result improves upon~\cite{cnp20}, by obtaining constant approximation error and using the larger $\rhoinc{p}$ rather than $\rhoodd{p}$.

\begin{restatable}{corollary}{cormainl}\label{cor:main-l1}
    For any $p \geq 3$ and $k \in \N$, any $g \in \mathcal{N}(u,\ell)$ with $u \leq \frac{1}{16} \rhoinc{p}^{k/\ell}$ has $\norml[1]{\ftent{\rhoinc{p}}^k - g} = \Omega(1).$
\end{restatable}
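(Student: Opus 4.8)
Here is a plan for proving Corollary~\ref{cor:main-l1}.

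The plan is to obtain Corollary~\ref{cor:main-l1} as a direct instantiation of Theorem~\ref{thm:main-l1}, which asks only that the base map be $L$-Lipschitz and possess an increasing $p$-cycle, with $L = \rhoinc{p}$. Thus the entire task reduces to two verifications for the tent map $f$ with $f(x) = \rhoinc{p}\,x$ on $[0, \tfrac12]$ and $f(x) = \rhoinc{p}(1-x)$ on $[\tfrac12, 1]$ (this is $\ftent{\rhoinc{p}}$ in the paper's notation): (i) $f$ is $\rhoinc{p}$-Lipschitz, which is immediate since its two linear pieces have slope $\pm\rhoinc{p}$; and (ii) $f$ has an increasing $p$-cycle, which is the real content.

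For (ii) I would write $\lambda = \rhoinc{p}$ and seek a cycle whose first $p-1$ points lie on the increasing branch $[0, \tfrac12]$, where $f$ acts by $x \mapsto \lambda x$, and whose last point lies on the decreasing branch $[\tfrac12, 1]$, where $f$ acts by $x \mapsto \lambda(1 - x)$. Imposing $f(x_j) = x_{j+1}$ for $j = 1, \dots, p-1$ makes the cycle a geometric progression $x_j = \lambda^{j-1} x_1$; the closing relation $f(x_p) = \lambda(1 - x_p) = x_1$ combined with $x_p = \lambda^{p-1} x_1$ then pins down $x_1 = \lambda / (1 + \lambda^p)$, hence $x_j = \lambda^j / (1 + \lambda^p)$ for every $j$.

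It remains to certify that this candidate is a genuine increasing $p$-cycle. Strict monotonicity $x_1 < \dots < x_p$ --- hence distinctness, so the period is exactly $p$ and the itinerary is $12\cdots p$ --- follows from $\lambda > 1$; the value $x_p = \lambda^p / (1 + \lambda^p)$ lies in $(\tfrac12, 1)$, so $x_p$ is genuinely on the decreasing branch, because $\lambda^p > 1$; and $f$ maps $[0,1]$ into $[0,1]$ since its peak value $\lambda/2$ is below $1$ when $\lambda < 2$. The one substantive requirement is that $x_1, \dots, x_{p-1}$ actually lie on the increasing branch, i.e.\ $x_{p-1} = \lambda^{p-1}/(1 + \lambda^p) \le \tfrac12$, which rearranges exactly to $\lambda^p - 2\lambda^{p-1} + 1 \ge 0$, that is, $P_{\mathrm{inc}, p}(\lambda) \ge 0$. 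Since $\rhoinc{p}$ is by definition the largest root of $P_{\mathrm{inc}, p}$, and this polynomial is positive for large arguments (indeed $P_{\mathrm{inc}, p}(2) = 1$), we have $P_{\mathrm{inc}, p}(\rhoinc{p}) = 0 \ge 0$; so the condition holds, in fact with equality, which means the cycle passes exactly through the critical point $\tfrac12$. Supplying this cycle and $L = \rhoinc{p}$ to Theorem~\ref{thm:main-l1} gives $\norml[1]{\ftent{\rhoinc{p}}^k - g} = \Omega(1)$ whenever $g \in \mathcal{N}(u, \ell)$ with $u \le \tfrac{1}{16} \rhoinc{p}^{k/\ell}$, which is the claim.

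The step I expect to require the most care is the boundary behaviour at $\lambda = \rhoinc{p}$, where $x_{p-1} = \tfrac12$ exactly: one must confirm that this still yields a bona fide increasing $p$-cycle. It does --- $x_{p-1} = \tfrac12 < x_p$ strictly (again since $\lambda^p > 1$), so the orbit still visits $p$ distinct points, and $f$ is continuous and single-valued at $\tfrac12$, so $f(x_{p-1}) = \lambda/2 = x_p$ is unambiguous. The only remaining bookkeeping is to reconcile the scaling in the definition $\ftent{r} = 2r \min(x, 1-x)$ with the Lipschitz constant required by Theorem~\ref{thm:main-l1}, and to invoke Fact~\ref{fact:root-lb} to ensure $\rhoinc{p} \in (1, 2)$ so that the inequalities $\lambda > 1$ and $\lambda < 2$ used above are both strict.
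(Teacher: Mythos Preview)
Your proposal is correct and essentially mirrors the paper's proof: both verify the Lipschitz constant and exhibit the same increasing $p$-cycle for the tent map with slope $\rhoinc{p}$, then invoke Theorem~\ref{thm:main-l1}. The only cosmetic difference is that the paper tracks the forward orbit of $\tfrac12$ (showing $f^p(\tfrac12)=\tfrac12$ via the identity $\rhoinc{p}^{p-1}(1-\tfrac{\rhoinc{p}}{2})=\tfrac12$), whereas you solve for the cycle in closed form as $x_j=\lambda^j/(1+\lambda^p)$; since $x_{p-1}=\tfrac12$ exactly when $P_{\mathrm{inc},p}(\lambda)=0$, these are the same orbit, and your observation about the boundary case $x_{p-1}=\tfrac12$ and the scaling of $\ftent{r}$ is well taken.
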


We prove Corollary~\ref{cor:main-l1} in Appendix~\ref{assec:cor-main-proof}. The only non-trivial part of the proof involves proving the existence of an increasing $p$-cycle that causes $f^k$ to have $\Omega(\rhoinc{p}^k)$ oscillations.


\subsection{Improved Bounds for Odd Periods}\label{ssec:lb-odd}

While Theorems~\ref{thm:main-linf} and \ref{thm:main-l1} give stricter bounds on the width of neural networks needed to approximate iterated functions $f^k$ than \cite{cnpw19, cnp20}, they also require extra assumptions about the cycles---namely, that the cycles are increasing.
However, more powerful inapproximability results with constant error are still possible even without additional assumptions.
Specifically, we leverage unimodality to improve the desired inaccuracy to a constant without compromising width.

As before, the results hinge on a key technical lemma that bounds the number of interval crossings.

\begin{restatable}{lemma}{lemmaoddconstantosc}\label{lemma:odd-constant-osc}
    For some odd $p \geq 3$, suppose $f$  is a symmetric concave unimodal mapping with an odd $p$-cycle.
    Then, there exists $[a,b] \subset [0, 1]$ with $b - a \geq 0.07$ such that $\osci{a,b}(f^k) = \rhoodd{p}^{k-p}$ for any $k \in \N$.
\end{restatable}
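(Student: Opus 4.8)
The plan is to mirror the proof of Lemma~\ref{lemma:increasing-constant-osc}, replacing the increasing itinerary $12\dots p$ with the rigid combinatorics of a minimal odd cycle. In outline: (i) reduce to a cycle with the \v{S}tefan itinerary; (ii) read off from that itinerary a $0$/$1$ transition matrix whose spectral radius is $\rhoodd{p}$, and lower bound the monotone‑piece count $\oscm(f^k)$ by counting admissible walks in the associated Markov graph; and (iii) invoke symmetry, concavity, and unimodality to show that the interval those walks repeatedly traverse has length bounded below by a universal constant, which promotes the bound to a crossing count $\osci{a,b}(f^k)$ with $b-a \geq 0.07$.

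For step (i), let $q$ be the smallest odd integer with $3 \leq q \leq p$ for which $f$ has a $q$-cycle; such a $q$ exists because $f$ has a $p$-cycle. By \v{S}tefan's theorem (see also \citealp{alseda2000}), $f$ then has a $q$-cycle with the \emph{\v{S}tefan itinerary}, the characteristic ``spiral'' arrangement of cycle points. Since $\rhoodd{m} \geq \rhoodd{p}$ for every odd $m \leq p$ and $m \mapsto m \log \rhoodd{m}$ is non‑decreasing, a short computation gives $\rhoodd{q}^{k-q} \geq \rhoodd{p}^{k-p}$, so it suffices to establish $\osci{a,b}(f^k) \geq \rhoodd{q}^{k-q}$.

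For step (ii), write the \v{S}tefan cycle as $y_1 < \dots < y_q$ and set $I_j = [y_j, y_{j+1}]$ for $j \in [q-1]$. The \v{S}tefan itinerary prescribes exactly which $I_j$ are $f$-covered by which (we write $I \to J$ when $f(I) \supseteq J$ and $f$ is monotone on $I$, the latter holding whenever $I$ avoids the turning point $\tfrac12$). The resulting transition matrix $A$ has characteristic polynomial equal to $P_{\mathrm{odd}, q}$ up to a factor of $(\lambda + 1)$ (for instance $P_{\mathrm{odd}, 3}(\lambda) = (\lambda+1)(\lambda^2 - \lambda - 1)$, recovering the golden ratio), so the spectral radius of $A$ is $\rhoodd{q}$. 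The covering‑implies‑forcing lemma used in the proof of Lemma~\ref{lemma:increasing-constant-osc} converts each admissible length‑$k$ walk $J_0 \to \dots \to J_{k-1}$ into a subinterval of $J_0$ that $f^k$ maps monotonically onto $J_{k-1}$; counting closed walks through the central interval $I^\ast$ (the one straddling the unique interior fixed point of $f$) produces $\Theta(\rhoodd{q}^{k})$ such pieces, and absorbing constants and an $O(q)$-length priming prefix into the exponent gives at least $\rhoodd{q}^{k-q}$ monotone pieces of $f^k$, each mapped onto $I^\ast$.

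Step (iii) is the part I expect to be the main obstacle, since it is what makes the separation robust (constant error, not merely an oscillation count). The key input is that symmetry and concavity force the peak $h := f(\tfrac12)$ to be bounded below by a universal constant whenever $f$ admits an odd cycle of period at least $3$; the cleanest route is a comparison with the tent family, whose members below a fixed slope are simple enough to preclude such cycles (a concave symmetric $f$ with peak $h$ dominates the tent map of slope $2h$ pointwise). Consequently every cycle point, and every interval image appearing above, lies in the invariant core $[f(h), h]$, whose width $h - f(h)$ is itself bounded below since $f(h) = f(1-h)$ is small when $h$ is close to $1$. One then takes $[a,b]$ to be a fixed subinterval of this core onto which each counted monotone piece of $f^k$ is mapped, so each such piece is a genuine $[a,b]$-crossing; propagating the elementary tent‑map estimates yields the explicit $b - a \geq 0.07$. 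Finally, orienting the walks consistently so that the crossings concatenate into the nested pattern $x_1 < x'_1 \leq x_2 < \dots$ of Definition~\ref{def:cross} is handled exactly as in the proof of Lemma~\ref{lemma:increasing-constant-osc}, completing the argument.
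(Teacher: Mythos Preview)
Your steps (i) and (ii) are fine and track what the paper does (the paper simply cites \cite{alseda2000} for the \v{S}tefan cycle and \cite{cnp20} for the spectral-radius count $\osci{[x_1,x_2]}(f^k)\geq\rhoodd{p}^k$, rather than redoing the matrix computation, but the content is the same). The gap is entirely in step~(iii), and it has two parts.

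First, the tent-map comparison is backwards for what you need. You correctly observe that a concave symmetric $f$ with peak $h$ dominates $\ftent{h}$ pointwise, but the implication you want is ``$f$ has an odd cycle $\Rightarrow$ $\ftent{h}$ has an odd cycle $\Rightarrow$ $2h\geq\phi$''. Pointwise domination does not push periodic orbits \emph{down} to the smaller map; a large $f$ can easily have a $3$-cycle while the tent it dominates has none. So this route does not give the lower bound on $h$. The paper instead extracts a lower bound on the maximal slope directly from the \v{S}tefan combinatorics: writing $a$ for the Lipschitz constant of $f$ on the core $[1-r,r]$ (with $r=f(\tfrac12)$), two applications of $f$ along the cycle give $x_2-x_p\leq a^2(x_{p-2}-x_p)$ and $x_2-x_p\leq a^2(x_2-x_{p-2})$, which average to $a\geq\sqrt2$. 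Concavity at the boundary then forces $\tfrac{f(r)}{1-r}\geq a\geq\sqrt2$, hence $r>1-\tfrac{1}{2\sqrt2}$, and a secant bound near the peak yields $x_{p-1}-x_p>\tfrac14-\tfrac{1}{4\sqrt2}>0.07$.

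Second, and more structurally, you never connect ``the core is wide'' to ``an interval that is actually crossed $\rhoodd{p}^{k}$ times is wide''. Your step~(ii) produces crossings of the \emph{central} \v{S}tefan interval $I^\ast=[x_1,x_2]$, and there is no universal lower bound on $|I^\ast|$ even under concavity and symmetry; the cycle can crowd near the fixed point. The paper handles this by a propagation step you are missing: starting from $\osci{[x_1,x_2]}(f^k)\geq\rhoodd{p}^{k}$, it iterates $f$ an additional $p-1$ and $p$ times to obtain the same (shifted) lower bound for $\osci{[x_p,x_1]}$ and $\osci{[x_2,x_{p-1}]}$. Now \emph{all three} subintervals partitioning the full span $[x_p,x_{p-1}]$ have at least $\rhoodd{p}^{k-p}$ crossings, so one may take whichever has width at least a third of the span. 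This propagation is precisely why the exponent drops from $k$ to $k-p$ in the statement; your ``priming prefix'' remark gestures at this but does not supply the argument.
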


We prove Lemma~\ref{lemma:odd-constant-osc} in Appendix~\ref{assec:lemma-odd-proof}. The challenging part is to find a lower bound on the length of the intervals crossed.

Like before, we provide lower-bounds on approximation up to a constant degree.

\begin{theorem}\label{thm:odd-linf}
    For some odd $p \geq 3$, suppose $f$ is a symmetric, concave unimodal mapping with any $p$-cycle.
    Then, any $k \in \N$ and any $g \in \mathcal{N}(u,\ell)$ with $u \leq \frac{1}{8} \rhoodd{p}^{(k-p)/\ell}$ have
    $\norml[\infty]{f^k - g} = \Omega(1).$
    
    Moreover, there exists $S$ with $\abs{S} = \frac{1}{2} \floorl{\rhoodd{p}^k}$ and $t \in (0,1)$ such that 
    $\errcls{S,t}{f^k, g} \geq \frac{1}{4}.$
\end{theorem}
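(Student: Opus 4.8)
The plan is to run the same two-step template that produced Theorem~\ref{thm:main-linf} from Lemma~\ref{lemma:increasing-constant-osc}: first use the cyclic structure of $f$ to guarantee that $f^k$ crosses a fixed, constant-width interval $[a,b]$ an exponential number of times, and then feed this crossing count into the generic oscillation-to-inapproximability machinery (Corollary~\ref{cor:general-lb-linf} for the $L_\infty$ statement, Theorem~\ref{thm:general-lb-cls} for the classification statement), which says that a network in $\mathcal{N}(u,\ell)$ has at most $(2u)^\ell$ monotone pieces --- the standard ReLU/semialgebraic piece count --- and hence cannot shadow any function that crosses $[a,b]$ strictly more than $(2u)^\ell$ times.

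The first step is a reduction to the exact hypothesis of Lemma~\ref{lemma:odd-constant-osc}: we are only handed \emph{some} $p$-cycle with $p$ odd, whereas the lemma wants an odd cycle in the structural (itinerary) sense. Here I would invoke a structural strengthening of Sharkovsky's Theorem~\ref{thm:sharkovsky} (\v{S}tefan's theorem; equivalently, the pattern-forcing order of \citep{mss73}): a continuous map with a point of odd period at least $3$ has, for its least odd period $q\in\{3,5,\dots,p\}$, an odd $q$-cycle of the required form. Since $\rhoodd{\cdot}$ is decreasing (Table~\ref{table:rho_inc}) and $q\le p$, we get $\rhoodd{q}^{\,k-q}\ge \rhoodd{q}^{\,k-p}\ge\rhoodd{p}^{\,k-p}$ and $\rhoodd{q}^{\,k}\ge\rhoodd{p}^{\,k}$, so every bound proved with exponent base $\rhoodd{q}$ implies the stated one with $\rhoodd{p}$ (for the width threshold, for the $\Omega(1)$ error constant, and for the cardinality of $S$, by passing to a subset). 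Hence we may assume $f$ itself carries an odd $p$-cycle.

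Given that, apply Lemma~\ref{lemma:odd-constant-osc} to obtain $[a,b]\subset[0,1]$ with $b-a\ge 0.07$ and $\osci{a,b}(f^k)\ge\rhoodd{p}^{\,k-p}$ for every $k$. If $g\in\mathcal{N}(u,\ell)$ with $u\le\tfrac18\rhoodd{p}^{(k-p)/\ell}$, then $\oscm(g)\le(2u)^\ell\le(\tfrac14)^\ell\rhoodd{p}^{\,k-p}<\rhoodd{p}^{\,k-p}\le\osci{a,b}(f^k)$, so on at least one of these disjoint crossing subintervals $g$ is monotone while $f^k$ sweeps all of $[a,b]$; Corollary~\ref{cor:general-lb-linf} then forces $\norml[\infty]{f^k-g}=\Omega(b-a)=\Omega(1)$, with a constant independent of $f$, $p$, and $k$. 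For the classification claim, set $t=\tfrac12(a+b)$: each crossing subinterval forces one sign change of $\thres{t}{\cdot}\circ f^k$, so there are $\Omega(\osci{a,b}(f^k))$ of them, whereas $\thres{t}{\cdot}\circ g$ changes value at most once per monotone piece of $g$; Theorem~\ref{thm:general-lb-cls} then yields a threshold $t$ and a set $S$ of the stated size $\tfrac12\floorl{\rhoodd{p}^{\,k}}$ (after the Step-1 reduction, pass to a subset) with $\errcls{S,t}{f^k,g}\ge\tfrac14$.

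The substantive content is entirely inside Lemma~\ref{lemma:odd-constant-osc} --- in particular pinning the width of the crossed interval to a \emph{universal} constant ($0.07$) via symmetry, concavity, and unimodality, rather than letting it collapse as in Remark~\ref{remark:inc-close} --- which I take as given. Within the theorem itself, the one genuinely delicate point is the Step-1 reduction: verifying that an arbitrary odd-period cycle really does force the particular ``odd'' structural cycle that Lemma~\ref{lemma:odd-constant-osc} consumes, and that replacing the least odd period $q$ by the hypothesized $p$ through monotonicity of $\rhoodd{\cdot}$ is legitimate simultaneously for the width threshold, the error constant, and $|S|$. The rest is constant-bookkeeping ($\tfrac18$ in $u$, $\tfrac14$ in the error, $\tfrac12$ in $|S|$) so that the pigeonhole in Corollary~\ref{cor:general-lb-linf}/Theorem~\ref{thm:general-lb-cls} closes with room to spare.
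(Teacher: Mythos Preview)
Your proposal is correct and follows the paper's approach exactly: the paper's proof is literally the one-line ``immediate from Lemma~\ref{lemma:odd-constant-osc}, Theorem~\ref{thm:general-lb-cls}, and Corollary~\ref{cor:general-lb-linf}.'' Your Step~1 reduction is unnecessary, however, because you have misread the hypothesis of Lemma~\ref{lemma:odd-constant-osc}: ``an odd $p$-cycle'' there simply means a $p$-cycle with $p$ odd, identical to the theorem's hypothesis, and the extraction of a \emph{Stefan} cycle (the structural itinerary you are worried about) is already carried out inside the lemma's proof via the cited results from \cite{alseda2000}.
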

    The proof is immediate from Lemma~\ref{lemma:odd-constant-osc}, Theorem~\ref{thm:general-lb-cls}, and Corollary~\ref{cor:general-lb-linf} in the Appendix.

We also get the analogous result but for the $L_1$ error:

\begin{theorem}\label{thm:odd-l1}
    Consider any $L$-Lipschitz $f: [0,1] \to [0,1]$ with a $p$-cycle for some odd $p \geq 3$.
    If $L = \rhoodd{p}$, then, any $k \in \N$ and $g \in \mathcal{N}(u,\ell)$ with $u \leq \frac{1}{16} \rhoodd{p}^{(k-p)/\ell}$ have $\norml[1]{f^k - g} = \Omega(1).$
\end{theorem}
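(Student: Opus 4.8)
The plan is to follow the same two-step template used for Theorem~\ref{thm:main-l1} and Theorem~\ref{thm:odd-linf}: first lower-bound the number of times $f^k$ crosses a fixed interval, then feed that count into the generic oscillation-to-$L_1$ inapproximability bound (Theorem~\ref{thm:general-lb-l1}). The wrinkle is that Theorem~\ref{thm:odd-l1} assumes only $L$-Lipschitzness of $f$ and the existence of an odd $p$-cycle, so Lemma~\ref{lemma:odd-constant-osc} (which needs symmetry/concavity/unimodality) cannot be invoked as stated. Instead, as in Remark~\ref{remark:inc-close}, I would use the weaker ``$f$-dependent interval'' form of the odd-cycle crossing bound, which is precisely the portion of the proof of Lemma~\ref{lemma:odd-constant-osc} that does not use the unimodality hypotheses.

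\emph{Step 1 (crossing bound).} For any continuous $f$ with an odd $p$-cycle, sort the cycle points and split their convex hull into the $p-1$ consecutive subintervals $I_1,\dots,I_{p-1}$. Form the transition matrix $A$ with $A_{ij}=1$ iff $f(I_i)\supseteq I_j$. The covering (horseshoe) structure forced by an odd $p$-cycle---the combinatorial heart of the odd case of Sharkovsky's theorem---guarantees that $A$ entrywise dominates the minimal transition matrix whose characteristic polynomial is $P_{\mathrm{odd},p}(\lambda)=\lambda^p-2\lambda^{p-2}-1$, so $\rho(A)\ge\rhoodd{p}$. Counting length-$k$ walks in this graph that terminate with a covering of a distinguished interval $[a,b]=I_{j^\star}$ (the ``$-p$'' offset being the length of one traversal of the basic $p$-cycle before exponential growth takes over) yields $\osci{a,b}(f^k)\ge\rhoodd{p}^{\,k-p}$ for all $k$, where each such walk certifies a genuine crossing of $[a,b]$, not merely a monotone piece. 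Here $b-a>0$ depends only on $f$ (it is a gap between cycle points, possibly small) and not on $k$.

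\emph{Step 2 ($L_1$ conversion).} A ReLU network $g\in\mathcal{N}(u,\ell)$ has $\oscm(g)\le(2u)^\ell$ monotone pieces, so $u\le\tfrac1{16}\rhoodd{p}^{(k-p)/\ell}$ forces $\oscm(g)\le\tfrac18\rhoodd{p}^{\,k-p}$; hence $g$ can ``track'' at most a constant fraction of the crossings, leaving $\Omega(\rhoodd{p}^{\,k-p})$ of them over which $g$ is monotone. On the $x$-interval where $f^k$ sweeps $[a,b]$ during such a crossing, $L^k$-Lipschitzness of $f^k$ (from $L$-Lipschitzness of $f$) forces width $\ge(b-a)/L^k$, and on a constant fraction of that width $|f^k-g|\ge\tfrac14(b-a)$. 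Summing over all such crossings gives $\norml[1]{f^k-g}=\Omega\!\big(\rhoodd{p}^{\,k-p}(b-a)^2/L^k\big)$, and setting $L=\rhoodd{p}$ makes this $\Omega\!\big((b-a)^2\,\rhoodd{p}^{-p}\big)$. Since $b-a$ depends only on $f$ and $\rhoodd{p}^{-p}$ only on $p$, this is $\Omega(1)$; for $k<p$ the hypothesis is vacuous because it would require $u<1$.

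\emph{Main obstacle.} I expect the crux to be Step~1: verifying that the crossing-count argument behind Lemma~\ref{lemma:odd-constant-osc} really survives deleting the symmetry/concavity/unimodality assumptions---in particular that the minimal odd-period covering graph always embeds in the Markov graph of an arbitrary continuous $f$ with an odd $p$-cycle (so the bound $\rho(A)\ge\rhoodd{p}$ and the $\rhoodd{p}^{\,k-p}$ count persist), and that $[a,b]$ can be taken between cycle points instead of via the unimodal-specific construction that produced $b-a\ge 0.07$. A secondary, bookkeeping obstacle is confirming that Theorem~\ref{thm:general-lb-l1} is phrased to need only the crossing count and the Lipschitz constant---not a lower bound on $b-a$---so that it applies with an $f$-dependent interval, and that its constants reconcile with the stated threshold $u\le\tfrac1{16}\rhoodd{p}^{(k-p)/\ell}$ (the $\tfrac1{16}$ versus the $\tfrac18$ of the $L_\infty$ statement absorbing the extra factor-of-two loss intrinsic to the $L_1$ estimate).
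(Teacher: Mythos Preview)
Your approach matches the paper's template: an odd-cycle crossing count fed into Theorem~\ref{thm:general-lb-l1}. The paper's one-line proof cites ``Lemma~\ref{lemma:increasing-constant-osc},'' which is almost certainly a typo for Lemma~\ref{lemma:odd-constant-osc}. You went further by noticing that the theorem statement omits the symmetric/concave/unimodal hypotheses of Lemma~\ref{lemma:odd-constant-osc}, and so used only the assumption-free portion---the crossing bound on an $f$-dependent interval---before invoking Theorem~\ref{thm:general-lb-l1}. Your ``main obstacle'' is not actually one: the \v{S}tefan covering graph is forced by any odd $p$-cycle of a continuous interval map (this is exactly the content of the Alsed\`a--Llibre--Misiurewicz theorems cited at the start of the proof of Lemma~\ref{lemma:odd-constant-osc}, and of the analysis in \cite{cnp20} that the lemma quotes, all before unimodality enters), so the spectral bound $\rhoodd{p}$ and the crossing count survive verbatim.

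The only genuine difference is what $\Omega(1)$ means. Under the theorem as literally stated, your bound $\Omega\!\big((b-a)^2\rhoodd{p}^{-p}\big)$ has an $f$-dependent constant---what Table~\ref{table:compare} calls ``$\epsilon(f)$.'' That table (row~12) actually lists Theorem~\ref{thm:odd-l1} with unimodality, symmetry, and concavity assumed; under those extra hypotheses Lemma~\ref{lemma:odd-constant-osc} supplies $b-a\ge 0.07$ and the constant becomes universal. So the paper's statement is simply under-specified: your proof is the right one for the statement as written, and the paper's intended proof (via the full Lemma~\ref{lemma:odd-constant-osc}) is the right one for the statement as tabulated.
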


    The proof is immediate from Lemma~\ref{lemma:increasing-constant-osc} and Theorem~\ref{thm:general-lb-l1} in the Appendix.


\section{Periods, Phase Transitions and Function Complexity}\label{sec:phase}

We formalize the correspondence between different notions of function complexity in dynamical systems and learning theory: neural network approximation, oscillation count, cycle itinerary, topological entropy, and VC-dimension.
We make Theorem~\ref{thm:informal-phase} rigorous by presenting two regimes into which unimodal mappings can be classified---the \emph{doubling regime} and the \emph{chaotic regime}---and show that all of these measurements of complexity hinge on which regime a function belongs to.\footnote{These two regimes correspond to different settings of the parameters $r$ in the bifurcation diagram of Figure~\ref{fig:bifurcation} in the Appendix. The doubling regime is the left-hand-side, where the stable periods routinely split in two before the first chaos is encountered. The chaotic regime is to the right-hand-side, which is characterized by chaos punctuated by intermittent stability.}

Some components of the claims regarding the topological entropy are the immediate consequences of other results; however, we include them to give a complete picture of the gap between the two regimes.
To the best of our knowledge, we believe the bound on monotone pieces of $f^k$ in the doubling regime and both VC-dimension bounds below to be novel.

We define VC-dimension and introduce topological entropy in Appendix~\ref{asec:phase}, along with the proofs of both theorems.
For VC-dimension, we consider the hypothesis class $\hyp{f,t} := \{\thres{t}{f^k}: k \in \mathbb{N}\}$, which corresponds to the class of iterated fixed maps.

\begin{restatable}{theorem}{thmpropertiesdoubling}[Doubling Regime]\label{thm:properties-doubling}
    Suppose $f$ is a symmetric unimodal mapping whose maximal cycle is a primary cycle of length $p = 2^q$. That is, there exists a $p$-cycle but no $2p$-cycles (and thus, no cycles with lengths non-powers-of-two). Then, the following are true:
    \begin{enumerate}
        \item For any $k \in \N$, $\oscm(f^k) = O((4k)^{q+1})$.
        \item For any $k \in \N$, there exists $g \in \mathcal{N}(u,2)$ with $u= O((4k)^{q+1}/\eps)$ such that $\norm[\infty]{g - f^k} \leq \epsilon$. 
        Moreover, if $f = \ftent{r}$, then there exists $g \in \mathcal{N}(u,2)$ with $u = O((4k)^{q+1})$ and $g = f^k$.
        \item $\htop(f) = 0$.
        \item For any $t \in (0, 1)$, $\vc(\hyp{f, t}) \leq 18p^2$.
    \end{enumerate}
\end{restatable}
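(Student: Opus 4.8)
The plan is to prove the four items in a logical order where later parts build on earlier ones. Item~1 (the $O((4k)^{q+1})$ bound on $\oscm(f^k)$) is the crux, and everything else follows relatively quickly from it. My approach to Item~1 is an induction on $q$, the ``doubling depth.'' The key structural fact I would invoke from one-dimensional dynamics (e.g.,~\cite{mss73,alseda2000}): when $f$ is a symmetric unimodal mapping whose maximal period is exactly $2^q$, the dynamics decompose via renormalization. Concretely, there is a subinterval $J \subset [0,1]$ containing the critical point such that $f^2$ restricted to $J$ is (up to affine rescaling) again a unimodal map $\tilde f$ whose maximal period is $2^{q-1}$, and outside the orbit of $J$ the map $f$ is ``simple'' — essentially monotone on a bounded number of pieces with all orbits attracted into $J$ or onto the finitely many low-period points. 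I would make this quantitative: show that $[0,1]$ can be partitioned into $O(1)$ intervals on each of which $f^k$ is either monotone, or a rescaled copy of $\tilde f^{\,\lfloor k/2\rfloor}$-type behavior, so that $\oscm(f^k) \le c_1 + c_2 \, \oscm_{\text{induced}}(\tilde f^{\,k/2})$ with the induced map having doubling depth $q-1$. Unwinding the recursion $q$ times against the base case $q=0$ (no cycle beyond the fixed point, so $\oscm(f^k) = O(k)$ — this is essentially Telgarsky-style counting for a unimodal map with no nontrivial periodic behavior) yields the polynomial-in-$k$ bound of degree $q+1$. Tracking constants carefully gives the stated $O((4k)^{q+1})$.

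Item~2 is then mostly mechanical. Given $\oscm(f^k) = O((4k)^{q+1})$, I would invoke the standard fact (used throughout this literature, e.g.~\cite{telgarsky16}) that a continuous piecewise-linear function with $m$ monotone pieces on $[0,1]$ is exactly representable by a $2$-layer ReLU network of width $O(m)$, and that any continuous function with $m$ monotone pieces can be $\eps$-approximated in $L_\infty$ by such a piecewise-linear function with $O(m/\eps)$ pieces (refine each monotone piece into $O(1/\eps)$ linear segments; since $f$ is continuous on a compact interval the modulus of continuity gives the $O(1/\eps)$ rate, and for the tent map $f^k$ is already piecewise-linear so no refinement and no $\eps$ is needed, giving the exact-representation clause). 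Plugging in $m = O((4k)^{q+1})$ gives the claimed widths.

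Item~3 ($\htop(f) = 0$) follows from Item~1 together with the classical formula relating topological entropy of a piecewise-monotone interval map to the exponential growth rate of its lap/monotone-piece count: $\htop(f) = \lim_{k\to\infty}\frac{1}{k}\log \oscm(f^k)$ (Misiurewicz–Szlenk / Rothschild). Since $\oscm(f^k)$ grows only polynomially in $k$, this limit is $0$. (As the authors note, this part is essentially already known for the doubling regime, but it drops out of Item~1 for free.) Item~4 (VC-dimension $\le 18p^2$ with $p = 2^q$) I would prove by a shattering/counting argument: if a set of $n$ points in $[0,1]$ is shattered by $\hyp{f,t} = \{\thres{t}{f^k} : k\in\N\}$, then in particular the $n$ points must be separated into all $2^n$ sign patterns by the functions $\thres{t}{f^k}$; but $\thres{t}{f^k}$ can change sign at most $\oscm(f^k) \le O((4k)^{q+1})$ times, and moreover — this is the important point — the *distinct* partitions of $[0,1]$ into $\{f^k < t\}$ vs $\{f^k \ge t\}$ realizable over all $k$ is controlled not by $k$ but by the finite combinatorial structure of $f$'s periodic behavior; here one uses that for $f$ in the doubling regime the set of ``level patterns'' stabilizes, so the number of distinct dichotomies is $O(p^2)$ rather than unbounded. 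Hence $2^n = O(p^2 \cdot \text{something polynomial})$ can only hold for $n = O(\log p) \cdot \dots$; tightening the bookkeeping to get the clean constant $18p^2$ is the fiddly part. (If the cleaner route is to bound the number of monotone pieces of $f^k$ uniformly in $k$ by $O(p^2)$ rather than $O((4k)^{q+1})$ — which should also be true in the doubling regime once $k$ is large, since the oscillations that survive are confined to the $O(1)$-scale renormalization windows and there are $\le p$ of them each contributing $O(p)$ pieces — then the VC bound is immediate from the Sauer–Shelah-style dichotomy count.)

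The main obstacle I anticipate is making the renormalization step in Item~1 genuinely rigorous and quantitative: one needs to carefully identify the ``restrictive interval'' $J$ for a symmetric unimodal map at doubling depth $q$, verify that $f^2|_J$ is again unimodal with the maximal period halved, control the behavior of $f^k$ on the complement $[0,1]\setminus \bigcup_{i} f^i(J)$ (showing it contributes only $O(k)$ additional monotone pieces per level), and correctly count how the monotone pieces of $f^k$ split between the renormalization window and its complement — all while only using the hypotheses actually granted (symmetric unimodal, no $2p$-cycle), not smoothness or the full machinery of the Feigenbaum theory. The cleanest path is probably to prove a self-contained combinatorial lemma: \emph{if $f$ is symmetric unimodal with no $2^{q}\!\cdot\!(\text{odd}>1)$ cycle and no $2^{q+1}$-cycle, then $[0,1]$ splits into at most $4$ intervals on which $f^2$ is, respectively, monotone or affinely conjugate to a symmetric unimodal map of one lower doubling depth}, and then iterate.
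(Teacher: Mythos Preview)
Your plan for Items~2 and~3 is correct and matches the paper exactly: Item~2 is a direct piecewise-linear approximation of each monotone piece (with the tent map already piecewise linear), and Item~3 is the Misiurewicz--Szlenk lap-growth formula applied to the polynomial bound from Item~1.

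For Item~1 your high-level strategy---induction on $q$ via renormalization, with $\tilde f$ an affine conjugate of $f^2$ on a central interval, $\tilde f$ having doubling depth $q-1$---is exactly what the paper does. But your proposed recursion $\oscm(f^k) \le c_1 + c_2\,\oscm(\tilde f^{\,\lfloor k/2\rfloor})$ is too coarse and would give the wrong answer: iterating it $q$ times against a base case $O(k)$ yields a bound \emph{linear} in $k$, not of degree $q+1$. The paper's actual decomposition is finer: $f^k$ on $[0,1]$ breaks into $2k+3$ pieces, and the nontrivial pieces are (affinely) copies of $\tilde f^{\,i}$ for \emph{every} $i$ from $0$ up to $\lfloor k/2\rfloor$, not just $i=\lfloor k/2\rfloor$. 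So the correct recursion is a sum,
\[
\oscm(\category{q}^k) \;\lesssim\; \sum_{i=0}^{\lfloor k/2\rfloor} \oscm(\category{q-1}^{\,i}),
\]
and it is this summation that raises the polynomial degree by one at each level. (Concretely, for $q=1$ the renormalized map $\tilde f$ lies in $\category{0,1}$ with $\oscm(\tilde f^{\,i})\asymp i$, so $\oscm(f^k)\asymp\sum_{i\le k/2} i \asymp k^2$, genuinely quadratic.) The paper tracks this via a piece-classification scheme (``approach,'' ``$i$-left valley,'' ``$i$-left peak,'' etc.) that records which power of $\tilde f$ governs each subinterval; you would need something equivalent.

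For Item~4 there is a real gap. Your parenthetical suggestion---that $\oscm(f^k)$ might be bounded uniformly in $k$ by $O(p^2)$---is false (it grows like $k^{q+1}$, as just discussed), so that route is closed. Your other sketch (``the distinct dichotomies stabilize because of the finite combinatorial structure'') is the right intuition but is not a proof, and a straight Sauer--Shelah count against $\oscm(f^k)$ fails because $\oscm(f^k)$ is unbounded in $k$. The paper's argument is quite different from anything you outline: it studies, for each fixed $x$, the Boolean trajectory $(\thres{t}{f^k(x)})_{k\ge 1}$ and shows that the set of all such trajectories is contained in an explicitly described regular language. A ``weak VC-dimension'' is defined for subsets of $\{0,1\}^\N$ and bounded recursively under the regex operations (concatenation, Kleene star, union, interleaving). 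The same renormalization $h=\phi^{-1}\circ f^2\circ\phi$ then yields $\vcw(\hyp{f,t})\le 4\max_{g\in\category{q-1},\,t'}\vcw(\hyp{g,t'})+10$, and iterating gives the $18\cdot 4^q=18p^2$ bound. This trajectory/regex machinery is the missing idea.
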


\begin{restatable}{theorem}{thmpropertieschaotic}[Chaotic Regime]\label{thm:properties-chaotic}
    Suppose $f$ is a unimodal mapping that has a $p$-cycle where $p$ is not a power-of-two. Then, the following are true:
    \begin{enumerate}
        \item There exists some $\rho \in (1, 2]$ such that for any $k \in \N$, $\oscm(f^k) = \Omega(\rho^k)$.
        \item For any $k \in \N$ and any $g \in \mathcal{N}(u,\ell)$ with $\ell \leq k$ and $u \leq \frac{1}{8} \rho^{k/\ell}$, there exist samples $S$ with $\abs{S}= \frac12 \floor{\rho^k}$ such that $\errcls{S, 1/2}{f^k, g} \geq \frac14$.
        \item $\htop(f) \geq \rho > 0$.
        \item There exists a $t \in (0,1)$ such that $\vc(\hyp{f, t}) = \infty$.
    \end{enumerate}
\end{restatable}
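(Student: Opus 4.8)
The plan is to prove the four parts essentially in order, since each feeds the next. The starting point is a reduction: since $f$ has a $p$-cycle with $p$ not a power of two, Sharkovsky's Theorem (Theorem~\ref{thm:sharkovsky}) guarantees $f$ also has a $p'$-cycle for every $p' \triangleright p$, in particular an \emph{odd} cycle $q \geq 3$ (every non-power-of-two $n$ has some odd $q\ge 3$ to its right in the ordering — e.g.\ the odd part of $n$, which sits to the right of $n$ unless $n$ is already odd). This odd cycle, however, need not be increasing and $f$ need not be symmetric or concave, so I cannot directly invoke Lemma~\ref{lemma:odd-constant-osc}. Instead I would use the weaker, assumption-free version noted in Remark~\ref{remark:inc-close}: the transition-matrix / characteristic-polynomial argument underlying Lemma~\ref{lemma:odd-constant-osc} still yields $\oscm(f^k) = \Omega(\rho^k)$ for $\rho = \rhoodd{q} > \sqrt 2 > 1$ (counting monotone pieces regardless of size, which needs no metric control on the intervals). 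This proves part (1) with this $\rho \in (1,2]$; the upper bound $\rho \le 2$ is immediate since a unimodal map composed $k$ times has at most $2^k$ monotone pieces.

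For part (2), I would invoke the general oscillations-to-classification-error machinery cited in the excerpt — Theorem~\ref{thm:general-lb-cls} in the Appendix — which converts the bound $\oscm(f^k) = \Omega(\rho^k)$ into the statement that no $g \in \mathcal{N}(u,\ell)$ with $u \le \tfrac18 \rho^{k/\ell}$ (and $\ell \le k$) can match $f^k$ on a threshold test over a sample $S$ of size $\tfrac12\floor{\rho^k}$; the threshold $1/2$ is chosen using that the monotone pieces of $f^k$ swing across a fixed value. The key technical input here is the standard counting fact that a depth-$\ell$, width-$u$ ReLU network has at most $u^{\ell}$ (or $(2u)^\ell$) monotone pieces, so $u \le \tfrac18\rho^{k/\ell}$ forces strictly fewer oscillations than $f^k$, and at least a constant fraction of consecutive oscillation-midpoints must be misclassified — this is exactly the content of the cited appendix theorem, so I would simply apply it as a black box. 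Part (3) is then a known equivalence from one-dimensional dynamics: the topological entropy of a piecewise-monotone interval map equals $\lim_k \tfrac1k \log \oscm(f^k)$ (the Misiurewicz--Szlenk / Rothschild formula, available in \cite{alseda2000}); since $\oscm(f^k) = \Omega(\rho^k)$ and is at most $2^k$, this gives $\htop(f) \ge \log\rho > 0$ — I would state it as $\htop(f)\ge \rho>0$ matching the theorem's normalization (entropy measured in the appropriate base, or more precisely the exponential growth rate).

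For part (4), the VC-dimension claim, I would exhibit an arbitrarily large shattered set. Fix the threshold $t$ to be a value crossed by many monotone pieces of every $f^k$ (guaranteed by part (1) combined with the pigeonhole observation that some fixed sublevel structure recurs — here using that $\osci{a,b}(f^k)$ grows for some fixed $[a,b]$, via the assumption-free form of the argument, then picking $t \in (a,b)$). Given any $n$, choose $k$ large enough that $f^k$ has $\ge 2^n$ oscillations crossing level $t$; the sign pattern of $\thres{t}{f^k}$ on a well-chosen set of $n$ sample points realizes all $2^n$ labelings as $k$ ranges over a suitable arithmetic-like family of exponents, because different iterates $f^k$ produce essentially independent oscillation patterns at those points. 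Concretely, I would use an explicit construction: locate $n$ disjoint subintervals, and for each subset $T\subseteq[n]$ find an iterate whose graph is above $t$ exactly on the subintervals indexed by $T$ — this kind of construction is standard once one has an increasing (or merely odd) cycle, since compositions let one independently route each coordinate. I expect \textbf{this last step — actually shattering $n$ points with members of $\{\thres{t}{f^k}\}$ — to be the main obstacle}, because the hypothesis class is a single one-parameter family $\{f^k : k\in\N\}$ rather than a rich parametric class, so one must carefully exploit the self-similar branching structure of iterated unimodal maps (the fact that on each monotone piece $f$ maps onto a large subinterval, so deep compositions contain scaled copies of all shallower behavior) to get $2^n$ genuinely distinct label patterns; getting the \emph{independence} of the coordinates right, rather than just $2^n$ oscillations, is the delicate part, and I would lean on the cycle structure (and Sharkovsky-guaranteed presence of long cycles of varied itineraries) to decouple them.
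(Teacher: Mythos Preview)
Your treatment of parts (1)--(3) is correct and essentially matches the paper: the paper simply cites \cite{cnpw19} for (1) and (2) and invokes the growth-rate formula for topological entropy (Lemma~\ref{lemma:htop}) for (3), just as you propose.

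For part (4), however, there is a genuine gap. Counting oscillations of a single $f^k$ tells you it crosses the threshold many times, but shattering $d$ points requires $2^d$ \emph{distinct} labelings realized by \emph{different} iterates $k$, and you never supply a mechanism for this --- you correctly flag independence as the obstacle but then only gesture at ``self-similar branching'' and ``varied itineraries''. The paper's argument is concrete and not oscillation-based at all: after passing to $f^{2^q}$ (which has an odd cycle), the $d$ sample points are taken to be \emph{periodic points} lying on cycles of $d$ distinct odd \emph{prime} lengths $p_1 < \dots < p_d$, all guaranteed by Sharkovsky. Each $x^{(j)}$ is chosen with $x^{(j)} < \tfrac12$ but its image under one iterate $\ge \tfrac12$ (Lemma~\ref{lemma:odd-cycle-left}); consequently $\thres{1/2}{f^{k}(x^{(j)})}$ is $0$ whenever $k \equiv 0 \pmod{p_j}$ and $1$ whenever $k \equiv 1 \pmod{p_j}$. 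For any target labeling $\sigma \in \{0,1\}^d$, the Chinese Remainder Theorem then produces a single exponent $k$ with the required residue modulo each $p_j$. The independence you were looking for is number-theoretic (coprimality of the periods), not a consequence of oscillation growth; your sketch, as it stands, would not get there.
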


\begin{remark}\label{remark:non-primary}
    As discussed in Appendix~\ref{asec:itineraries}, any non-primary cycle implies the existence of a cycle whose length is not a power of two. Thus, these results also apply if there exists any non-primary power-of-two cycle, such as the 1234-itinerary 4-cycle.
\end{remark}


\section{Conclusion}

In this work, we build new connections between deep learning theory and dynamical systems by applying results from discrete-time dynamical systems 
to obtain novel depth-width tradeoffs for the expressivity of neural networks. While prior works relied on Sharkovsky's theorem and periodicity to provide families of functions that are hard-to-approximate with shallow neural networks, we go beyond periodicity. Studying the chaotic itineraries of unimodal mappings, we reveal subtle connections between expressivity and different types of periods, and we use them to shed new light on the benefits of depth in the form of enhanced width lower bounds and stronger approximation errors. More broadly, we believe that it is an exciting direction for future research to exploit similar tools and concepts from the literature of dynamical systems in order to improve our understanding of neural networks, e.g., their dynamics, optimization and robustness properties.
\bibliography{bib}

\onecolumn
\aistatstitle{Expressivity of Neural Networks via Chaotic Itineraries\\beyond Sharkovsky's Theorem: \\
Supplementary Materials}

\section{Supplement for Section~\ref{ssec:warmup}}\label{asec:warmup}

Figures~\ref{fig:toy-ex} and \ref{fig:toy-ex-logistic} demonstrate two emblematic cases where the differences in function complexity of $f_{123}$, $f_{1234}$, and $f_{1324}$ are most evident.
Both figures provide a function for each $f_{\mathbf{a}}$ that has a maximal itinerary of $\mathbf{a}$. (That is, there is no ``higher-ranked'' itinerary from Table~\ref{table:itineraries} present in $f_{\mathbf{a}}$; all other cycles are induced by the existence of a cycle with itinerary $\mathbf{a}$.)

Figures~\ref{fig:toy-ex} and \ref{fig:toy-ex-monotone} provide a simple case where the elements of the cycles are evenly spaced ($\frac14,\frac12, \frac34$ for $f_{123}$; $\frac15, \frac25, \frac35, \frac45$ for $f_{1234}, f_{1324}$).
Despite the fact that $f_{1234}$ and $f_{1324}$ have the same maximum value, they exhibit substantially different fractal-like patterns, which produce exponentially more oscillations for $f_{1234}$.

Figure~\ref{fig:toy-ex-logistic} and \ref{fig:toy-ex-logistic-monotone} instead considers logistic maps of the form $\flog{r}(x) = 4rx(1-x)$ for the values of $r$ where itinerary $\mathbf{a}$ is \emph{super-stable}, or when nearby iterates converge to the cycle exponentially fast.
These functions are concave, symmetric, and unimodal.
Here, complexity strictly increases with the maximum value of $\flog{r}$. Indeed, $f_{1234}, f_{123}$ and $f_{1324}$ ordered by height is the order by which they exhibit most to least chaotic behavior.

\begin{figure}[h]
    \centering
    \includegraphics[width=0.32\textwidth]{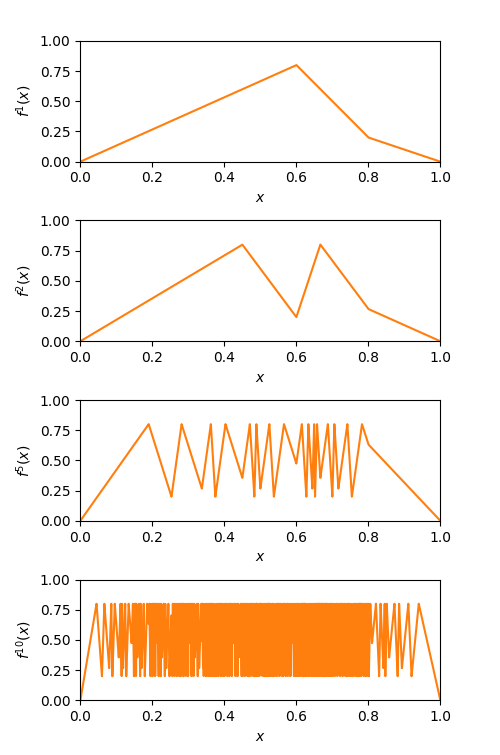}
    \includegraphics[width=0.32\textwidth]{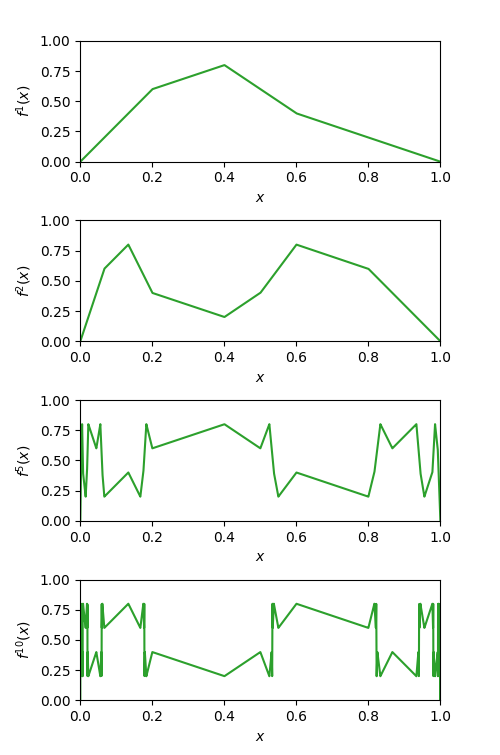}
    \includegraphics[width=0.32\textwidth]{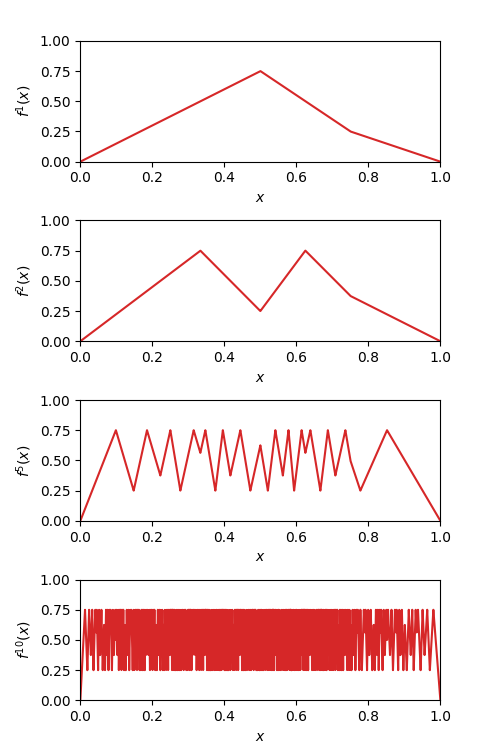}
    \caption{A comparison of the function complexity (as measured by the number of monotone pieces) of $f^k$ for unimodal mappings $f$ having cycles with different itineraries. The left shows $f$, $f^2$, $f^5$, and $f^{10}$ for a function with a 1234 4-cycle. The center has a 1324 4-cycle. The bottom has a 123 3-cycle. 
    Figure~\ref{fig:toy-ex-monotone} shows how the number of monotone pieces of $f^k$ increases with $k$ for each mapping.}
    \label{fig:toy-ex}
\end{figure}

\begin{figure}
    \centering
    \includegraphics[width=0.45\textwidth]{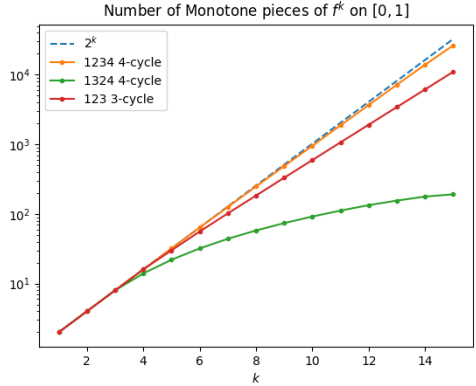}
    \caption{Visualizes the number of monotone pieces of $f^k$ which increases with $k$ for each mapping along with $2^k$ (the maximum number of monotone pieces of any unimodal $f$).
    Note that the 1234 itinerary produces a more ``complex'' function with more monotone pieces than 123, despite the Sharkovsky analysis from \cite{cnpw19} arguing that 3-cycles are the most powerful when determining iteration counts.
    Moreover, the number of monotone pieces of the  1234 and 123 itineraries increases exponentially, while that of the 1324 itineraries does not. (Identical to Figure~\ref{fig:toy-intro}.)}
    \label{fig:toy-ex-monotone}
\end{figure}

\begin{figure}
    \centering
    \includegraphics[width=0.32\textwidth]{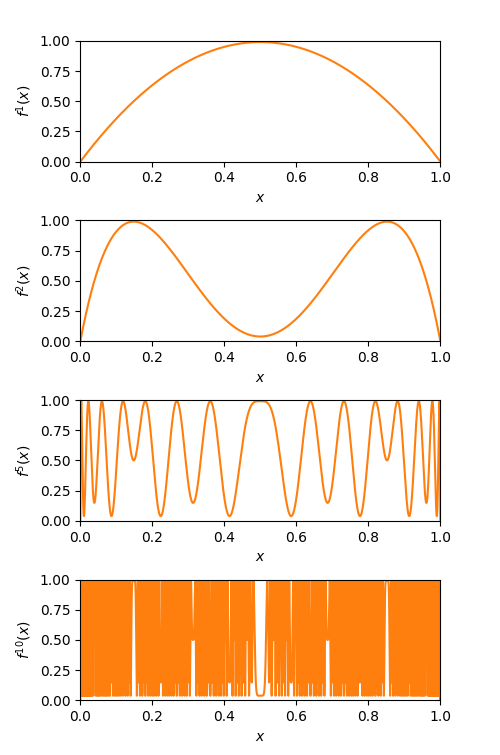}
    \includegraphics[width=0.32\textwidth]{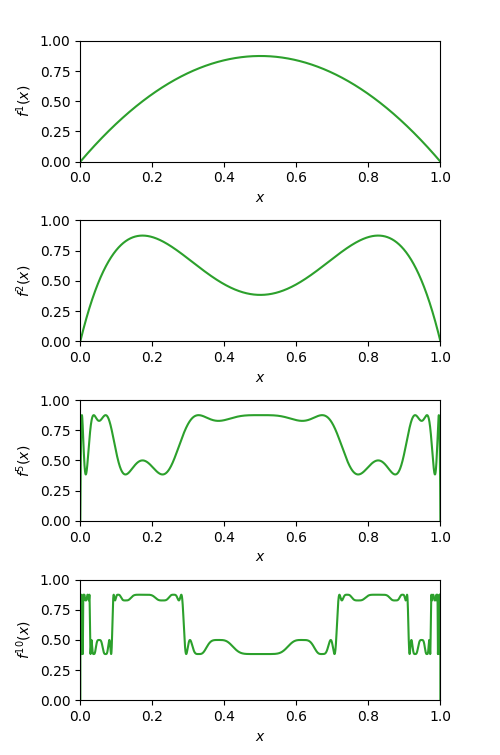}
    \includegraphics[width=0.32\textwidth]{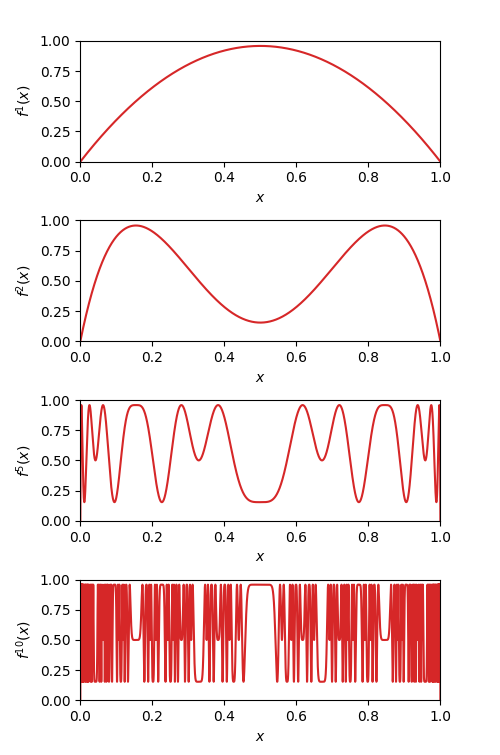}
    \caption{Demonstrates the same ideas as Figure~\ref{fig:toy-ex}, except instead of using asymmetric and non-concave piecewise functions, we use the scaled logistic map, $\flog{r}$.
    Using Table~1 of \cite{mss73}, we set the parameter $r$ to $3.96$, $3.50$, and $3.83$ respectively to ensure that a super-stable 1234, 1324, and 123 cycle exists.}
    \label{fig:toy-ex-logistic}
\end{figure}

\begin{figure}
    \centering
        \includegraphics[width=0.45\textwidth]{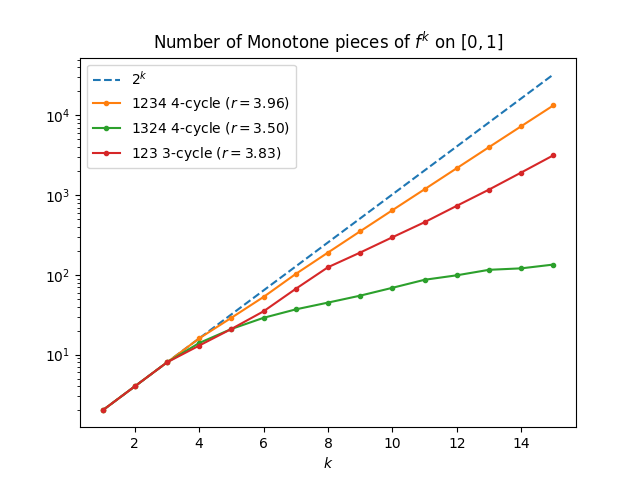}

    \caption{Like Figure~\ref{fig:toy-ex-monotone}, vizualizes the differences in number of monotone pieces for the logistic mappings described in Figure~\ref{fig:toy-ex-logistic}.}
    \label{fig:toy-ex-logistic-monotone}
\end{figure}
\section{More Examples for Itineraries}\label{asec:itineraries}

\subsection{Examples of Itineraries}\label{assec:it-exs}

Let the tent map and logistic map be defined by $\ftent{r}(x) = 2r\max(x, 1-x)$ and $\flog{r}(x) = 4rx(1-x)$ respectively, for parameter $r \in (0,1)$.

\begin{example}\label{ex:two-cycle}
    For all $r \in (\frac{1}{2}, 1]$, there is a two-cycle $C$ of itinerary 12 (which is the only itinerary for a 2-cycle) in $\ftent{r}$ with
    \[C = \paren{\frac{2r}{1 + 4r^2}, \frac{4r^2}{1 + 4r^2}}.\]
\end{example}

\begin{example}
    When $r = \frac{1+ \sqrt{5}}{4}$, there is a two-cycle $C$ of $\flog{r}$ with 
    \[C = \paren{ \frac{1}{2}, \frac{1+ \sqrt{5}}{4}}.\]
\end{example}

\begin{example}
    When $r \in [\frac{1+ \sqrt{5}}{4}, 1]$, $\ftent{r}$ has a three-cycle $C$ of itinerary 123 with 
    \[C= \paren{\frac{2r}{1 + 8r^3}, \frac{4r^2}{1 + 8r^3}, \frac{8r^3}{1 + 8r^3}}.\] 
    Note that this and Example~\ref{ex:two-cycle} are consistent with Sharkovsky's Theorem; whenever there exists a three-cycle, there also exists a two-cycle.
\end{example}

\begin{example}
    When $r \in [\frac{1}{2}, 1]$, there also exists a four-cycle $C$ of itinerary $1324$ for $\ftent{r}$ with 
    \[C = \paren{ \frac{8r^3 - 4r^2 + 2r}{16r^2 + 1},\frac{16r^4 - 8r^3 + 4r^2}{16r^2 + 1},\frac{16r^4 - 8r^3 + 2r}{16r^2 + 1}, \frac{16r^4 - 4r^2 + 2r}{16r^2 + 1}}.\]
    Again, this reaffirms Sharkovsky's Theorem, since this cycle always exists when the above three-cycle exists.
\end{example}

\begin{example}
    However, when $r \in (0.9196\dots, 1]$, there also exists a four-cycle $C$ of itinerary $1234$ for $\ftent{r}$ with 
    \[C= \paren{\frac{2r}{16r^2 + 1}, \frac{4r^2}{16r^2 + 1}, \frac{8r^3 }{16r^2 + 1}, \frac{16r^4}{16r^2 + 1}}.\] 
    This demonstrates a relationship \textit{beyond} Sharkovsky's theorem: whenever a $1234$ four-cycle exists, a $123$ three-cycle also exists. This will be integral to the bounds we show.
\end{example}

\begin{example}
    The triangle map from~\cite{telgarsky16}, $\ftent{1}$ has an increasing $p$-cycle $C_p$ for every $p \in \N$ with
    \[C_p = \paren{\frac{2}{1 + 2^p}, \frac{2^2}{1 + 2^p}, \dots, \frac{2^p}{1 + 2^p}}.\]
    Thus Theorem~\ref{thm:general-lb-cls} and Fact~\ref{fact:root-lb} retrieve the fact used by Telgarsky that $M(\ftent{1}) = \Omega(2^k)$.
\end{example}

\subsection{Orderings of Itineraries}\label{assec:order}
As has been mentioned before, the existence of some cycles can be shown to imply the existence of other cycles. 
Sharkovsky's Theorem famously does this by showing that if $p \triangleright p'$, then the existence of a $p$-cycle implies the existence of a $p'$-cycle.
Proposition~\ref{prop:half-inc-cycle} can be used to imply that the existence of a chaotic $p$-cycle implies the existence of a chaotic $(p-1)$-cycle.
These pose a broader question: Is there a complete ordering on all cycle itineraries that can appear in unimodal mappings? 
And does this ordering coincide with the amount of ``chaos'' induced by a cycle?

Researchers of discrete dynamical systems have thoroughly investigated these questions; we refer interested readers to~\cite{mss73, alseda2000} for a more comprehensive survey.
We introduce the basics of this theory as it relates to our results.

\cite{mss73} present a partial ordering over cyclic itineraries present in unimodal mappings, which serves as a measurement of the complexity of the function.
That is, two itineraries $\mathbf{a}$ and $\mathbf{a}'$ may be related analogously to Sharkovsky's Theorem with $\mathbf{a} \triangleright \mathbf{a}'$, if $f$ having itinerary $\mathbf{a}$ implies that $f$ has itinerary $\mathbf{a}'$.
This ordering for all cycles of length at most 6 is illustrated in Table~\ref{table:itineraries}.
For instance, if a unimodal map has a cycle with itinerary 12435, then it also has a cycle with itinerary 135246.

\begin{table}[]
    \centering
        \caption{For any unimodal function $f$, let $f_r(x) := rf(x)$ for $r > 0$. As $r$ increases, any such family obtains new cycles in the same order, and those cycles are super-stable in the same order. This translates Table~1 of \cite{mss73} to our notation and shows at what values of $r$, $\flog{r}$ has various super-stable cycles of length at most 6.\\}
    \begin{tabular}{|l|l|l|l|l|} \hline
        Cycle length $p$ & Itinerary  & Regime & $r$ s.t. super-stable for $\flog{r}$ & Cycle Type \\\hline\hline
        2 & 12 & Doubling & $0.8090$ & Primary \\ \hline
        4 & 1324 & Doubling & $0.8671$ & Primary \\ \hline
        6 & 143526 & Chaotic & $0.9069$ &  Primary \\ \hline
        5 & 13425 & Chaotic & $0.9347$ & Stefan, Primary \\ \hline
        3 & 123 & Chaotic & $0.9580$ & Stefan, Increasing, Primary \\ \hline
        6 & 135246 & Chaotic & $0.9611$ & \\ \hline
        5 & 12435 & Chaotic & $0.9764$ & \\ \hline
        6 & 124536 & Chaotic & $0.9844$ & \\ \hline
        4 & 1234 & Chaotic & $0.9901$ & Increasing \\ \hline
        6 & 123546 & Chaotic & $0.9944$ & \\ \hline
        5 & 12345 & Chaotic & $0.9976$ & Increasing \\ \hline
        6 & 123456 & Chaotic & $0.9994$ & Increasing \\ \hline
    \end{tabular}
    \label{table:itineraries}
\end{table}

We make several observations about the table and make connections to the itineraries discussed elsewhere in the paper.
\begin{itemize}
    \item The table does not contradict Sharkovsky's Theorem. 
    Note that $3 \triangleright 5 \triangleright 6 \triangleright 4 \triangleright 2$, and order in which the first itinerary occurs of a  period is the same as the Sharkovsky ordering:
    \[12 \triangleleft 1324 \triangleleft 143526 \triangleleft 13425 \triangleleft 123 .\] 
    \item The last cycle to occur for a given period is its increasing cycle and it occurs as $p$ increases (not with the Sharkovsky ordering of $p$):
    \[12 \triangleleft 123 \triangleleft 1234 \triangleleft 12345 \triangleleft 123456.\]
    \item The first cycle to appear for every odd period is its \textit{Stefan cycle} (123, 13425). 
    This is proved by~\cite{alseda2000} and justifies why Theorem~\ref{thm:odd-linf} relies on the existence of a Stefan cycle whenever there is an odd period.
    \item There exist cycles of power-of-two length (e.g. 1234) that induce non-power-of-two cycles (e.g. 123).
\end{itemize}

Following the last bullet point, we distinguish between the $2^q$-cycles that only induce cycles of length $2^{i}$ for $i < q$ and those that induce non-power-of-two cycles.
To do so, we say that the itinerary of a $p$-cycle is \textit{primary} if it induces no other $p$-cycle with a different itinerary.

We say that an itinerary $\ba'=a_1'\dots a_{2p}'$ of a $2p$-cycle is a \textit{2-extension} of itinerary $\ba = a_1\dots a_p$ of a $p$-cycle if \[a_i = \ceil{\frac{a_i'}{2}} = \ceil{\frac{a_{i+p}'}{2}}\] for all $i$.
For instance, $12$ is a 2-extension of $1$, $1324$ is of $12$, $15472638$ is of $1324$, and $135246$ is of $123$. 

Theorem 2.11.1 of \cite{alseda2000} characterizes which itineraries are primary.
It critically shows that a power-of-two cycle is primary if and only if it is composed of iterated 2-extensions of the trivial fixed-point itinerary 1. 
As a result, 1324 is a primary itinerary and 1234 is not.
This sheds further light on the warmup example given in Section~\ref{ssec:warmup} and expanded upon in Appendix~\ref{asec:warmup}, where $f_{1324}^k$ has a polynomial number of oscillations, while $f_{1234}^k$ has an exponential number.

According to Theorem~2.12.4 of \cite{alseda2000}, the existence a non-primary itinerary of any period implies the existence of some cycle with period not a power of two. 
Hence, $f$ can \textit{only} be in the doubling regime (where all periods are powers of two) if all of those power-of-two periods are primary.
The existence of any non-primary power-of-two period (such as 1234 or 13726548) implies that the $f$ is in the chaotic regime.

This ordering can also be visualized using the bifurcation diagrams in Figure~\ref{fig:bifurcation}.
The diagram plots the convergent behavior of $f^k_r(x)$ for large $k$, where $r$ is some parameter and reflects the complexity of the unimodal function $f_r$. 
(When $r = 0$, $f_r = 0$; when $r = 1$, $\xmax = 1$, and $\osci{0,1}(f^k) = 2^k$.)
As $r$ increases, the number of oscillations of $f_r^k$ increases and with it, new cycles are introduced.
Each new cycle has a \textit{stable} region over parameters $r$ where $f_r^k(x)$ converges to the cycle, and the bifurcation diagram visualizes when each of these stable regions occurs.
While the three functions families $f_r$ have different underlying unimodal functions, they produce qualitatively identical bifurcation diagrams that feature the same ordering of itineraries.

\begin{figure}
    \centering
    \includegraphics[width=0.32\textwidth]{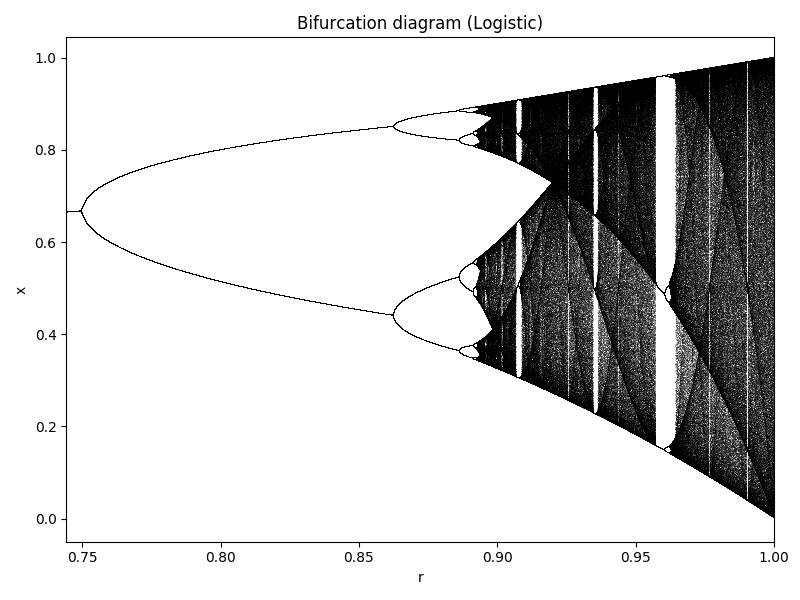}
    \includegraphics[width=0.32\textwidth]{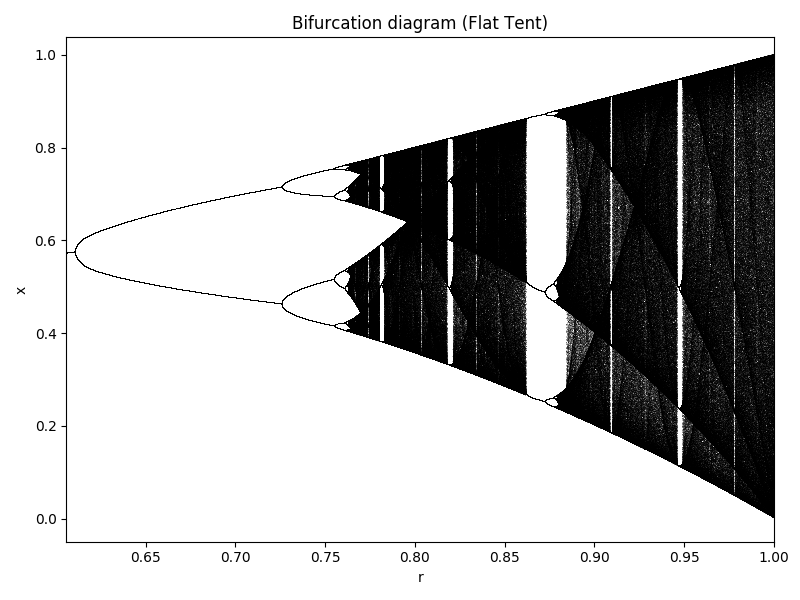}
    \includegraphics[width=0.32\textwidth]{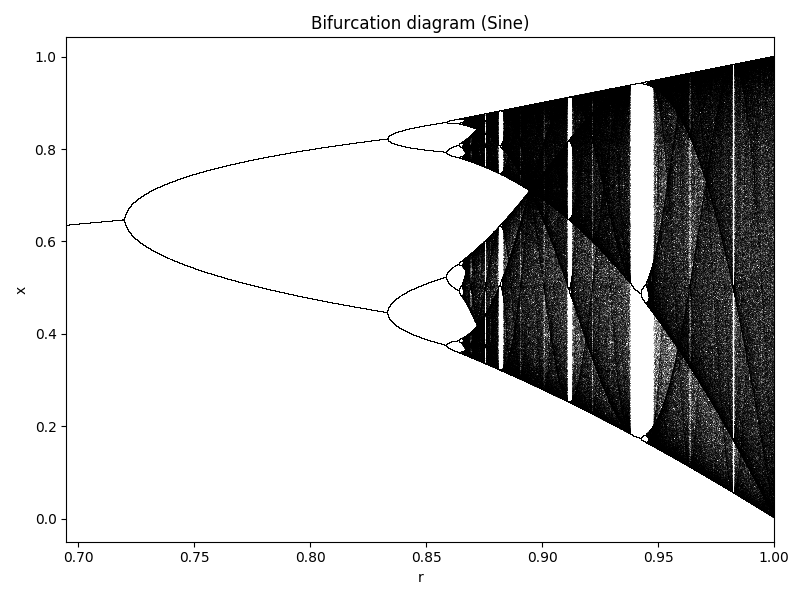}
    \caption{Bifurcation diagrams---which display the qualitative behavior of a family of functions $f_r$ as the parameter $r\in[0,1]$ changes---showing the convergence behavior for iterates $f_r^k(x)$ for large $k$. For fixed $r$ on the horizontal axis, the points plotted correspond to $f^k(x_0)$ for very large $k$. Regions of $r$ where a vertical slice contains $p$ discrete points indicates the existence of a \textit{stable} $p$-cycle, since $f^k(x_0)$ converges exclusively to those points. 
    Regions where the slice has a dispersed mass of points exhibit chaos. As $r$ increases, cycles of different itineraries appear and experience stability in the same order indicated by Table~\ref{table:itineraries}. In the first plot, $f_r$ is the logistic map $f_r(x) = \flog{r}(x) = 4rx(1-x)$. The second $f_r$ is the``flat tent map,'' $f_r(x) = \min\{\frac{5rx}{2}, r, \frac{5rx}{2}(1-x)\}$, and the third is the sine map, $f_r(x) = r \sin(\pi x)$.
    The three are qualitatively identical and exhibit self-similarity.}
    \label{fig:bifurcation}
\end{figure}

Our discussions of the \textit{doubling} and \textit{chaotic} regimes in Section~\ref{sec:phase} are inspired by these bifurcation diagrams.
Parameter values $r$ are naturally partitioned into two categories: those on the left side of the diagram where the plot is characterized by a branching of cycles (the doubling regime) and those on the right side where there are extended regions of chaos, interrupted by small stable regions (the chaotic regime).

\subsection{Identifying Increasing Cycles in Unimodal Maps}\label{assec:identifying}
It is straightforward to determine whether a symmetric and unimodal $f$ has an increasing $p$-cycle.
Algorithmically, one can do so by verifying that $f(\frac{1}{2}) > \frac{1}{2}$ and counting how many consecutive values of $k \geq 2$ satisfy $f^k(x_0) < \frac{1}{2}$.
\begin{proposition}\label{prop:half-inc-cycle}
    Consider some $p \geq 2$ and a symmetric unimodal mapping $f$.
    $f$ has an increasing $p$-cycle if \[f^2\paren{\frac{1}{2}} < \dots < f^p\paren{\frac{1}{2}} \leq \frac{1}{2} < f\paren{\frac12},\] then $f$ has an increasing $p$-cycle.
\end{proposition}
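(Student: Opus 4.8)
The plan is to realize the increasing $p$-cycle as a fixed point of $f^{p}$ lying in a carefully chosen interval, located by the intermediate value theorem together with the monotonicity of $f$ on the two sides of its critical point $c=\tfrac12$. Write $M=f(c)$ and $y_{k}=f^{k}(\tfrac12)$, so the hypothesis reads $y_{2}<\cdots<y_{p}\le c<y_{1}=M$, and let $h\colon[0,M]\to[0,c]$ denote the inverse of the increasing branch $f|_{[0,c]}$. If $y_{p}=c$, the orbit $c,M,y_{2},\dots,y_{p-1}$ is itself a $p$-cycle---of period exactly $p$ since $y_{1}=M>c$ and $y_{2},\dots,y_{p-1}<c$---and reading it in sorted order $y_{2}<\cdots<y_{p-1}<c<M$ shows $f$ acts as the cyclic shift, i.e.\ it is increasing; so assume $y_{p}<c$ henceforth.

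Set $\xi_{j}=h^{p-j}(c)$ for $j=2,\dots,p$, so that $\xi_{p}=c$, $f(\xi_{j})=\xi_{j+1}$, and $f^{p-2}(\xi_{2})=c$. A downward induction on $j$, using $f(y_{j})=y_{j+1}\le\xi_{j+1}=f(\xi_{j})$ and the strict monotonicity of $f$ on $[0,c]$, gives $y_{j}\le\xi_{j}$ for all $j$, with $y_{2}<\xi_{2}$ because $y_{p}<c=\xi_{p}$. A further induction then shows that $f^{i}$ sends $[y_{2},\xi_{2}]$ increasingly onto $[y_{i+2},\xi_{i+2}]\subseteq[0,c]$ for $i=0,1,\dots,p-2$; applying $f$ once more, $f^{p-1}$ sends $[y_{2},\xi_{2}]$ increasingly and bijectively onto $[y_{p+1},M]$, with $f^{p-1}(\xi_{2})=M$ and hence $f^{p}(\xi_{2})=f(M)=y_{2}$.

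Next I find a fixed point of $f^{p}=f\circ f^{p-1}$ in $[y_{2},\xi_{2}]$, distinguishing two cases according to the sign of $f^{p+1}(\tfrac12)-\tfrac12$, i.e.\ whether $y_{p+1}$ lies above or below $c$. If $y_{p+1}>c$, then $[y_{p+1},M]\subseteq[c,1]$, so $f^{p}$ is strictly decreasing on $[y_{2},\xi_{2}]$, with $f^{p}(\xi_{2})=y_{2}<\xi_{2}$ and $f^{p}(y_{2})=f(y_{p+1})=y_{p+2}\ge f(M)=y_{2}$ (as $y_{p+1}\in(c,M]$ and $f$ decreases on $[c,1]$); the intermediate value theorem yields $x^{*}\in[y_{2},\xi_{2})$ with $f^{p}(x^{*})=x^{*}$ and $f^{p-1}(x^{*})\ge y_{p+1}>c$. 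If $y_{p+1}\le c$, let $z\in[y_{2},\xi_{2})$ be the point with $f^{p-1}(z)=c$; on $[z,\xi_{2}]$ the map $f^{p-1}$ increases onto $[c,M]\subseteq[c,1]$, so $f^{p}$ decreases there from $f(c)=M>z$ to $f(M)=y_{2}<\xi_{2}$, and the intermediate value theorem yields $x^{*}\in(z,\xi_{2})$ with $f^{p}(x^{*})=x^{*}$ and $f^{p-1}(x^{*})>c$. Either way, $x^{*}\in[y_{2},\xi_{2})$ is fixed by $f^{p}$, satisfies $f^{p-1}(x^{*})>c$, and, since $x^{*}<\xi_{2}$ and each $f^{i}$ is increasing on $[y_{2},\xi_{2}]$, the iterates $x^{*}_{i}:=f^{i}(x^{*})$ obey $x^{*}_{i}\in[y_{i+2},\xi_{i+2})\subseteq[0,c)$ for $i=0,\dots,p-2$.

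Finally I verify that $x^{*}_{0},\dots,x^{*}_{p-1}$ is an increasing $p$-cycle. The period of $x^{*}$ cannot be a proper divisor $d$ of $p$: then $\{x^{*}_{0},\dots,x^{*}_{p-2}\}$ would already exhaust the orbit (as $p-2\ge d$), contradicting $x^{*}_{p-1}=f^{p-1}(x^{*})>c$ while $x^{*}_{0},\dots,x^{*}_{p-2}<c$. So the orbit consists of $p$ distinct points, $x^{*}_{p-1}$ is its unique point above $c$ and hence its maximum, and $f$ maps the $p-1$ remaining points (all in $[0,c]$) increasingly and bijectively onto $\{x^{*}_{1},\dots,x^{*}_{p-1}\}$. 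Comparing these two sorted sets and peeling off the largest element repeatedly---the largest image $x^{*}_{p-1}$ being $f(x^{*}_{p-2})$, so $x^{*}_{p-2}$ is the largest source, and so on by injectivity of $f$ on $[0,c]$---shows that the sub-$c$ points are already in order, $x^{*}_{0}<x^{*}_{1}<\cdots<x^{*}_{p-2}<x^{*}_{p-1}$, so $f$ acts as the cyclic shift on the sorted orbit, as required. The heart of the argument is the middle two paragraphs: recognizing that the right interval is $[y_{2},\xi_{2}]$, with $\xi_{2}$ a backward image of the critical point along the increasing branch, and checking that $f^{p}$ (restricted to $[z,\xi_{2}]$ in the second case) is monotone and overshoots its left endpoint---the case split on $f^{p+1}(\tfrac12)-\tfrac12$ being the only slightly delicate point.
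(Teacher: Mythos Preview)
Your proof is correct and shares the paper's core strategy: bracket a fixed point of $f^{p}$ between the forward orbit $y_{j}=f^{j}(\tfrac12)$ and a backward orbit $\xi_{j}$ of the critical point along the increasing branch, then verify the resulting cycle is increasing. The paper runs the intermediate value theorem on the \emph{right} of $c$, on $[x'',x']$ with $x'=M$ and $x''\in(c,1]$ the point with $f^{p-1}(x'')=c$; there $f^{p}(x')\le x'$ (since $x'$ is the global maximum of $f$) and $f^{p}(x'')=x'\ge x''$ hold unconditionally, so no case split on $y_{p+1}$ is needed. Your interval $[y_{2},\xi_{2}]$ is exactly $f([x'',x'])$, so the two constructions locate the same cycle. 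Where your argument is actually tighter is the ordering step: the paper asserts the interleaving $f^{j}(x'')<f^{j+1}(x')$ (equivalently $\xi_{j+1}<y_{j+2}$) without justification, and this inequality can fail for non-concave symmetric unimodal $f$; your ``peeling'' argument, using only that $f$ is strictly increasing on $[0,c]$ and that exactly one orbit point exceeds $c$, is the right way to finish and transfers verbatim to the paper's $x^{*}$ as well. One cosmetic slip: the parenthetical ``(as $p-2\ge d$)'' should read $d\le p-1$; as written it is off by one at $p=2$, $d=1$, though the contradiction is unaffected.
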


\begin{proof}
    
    Refer to Figure~\ref{fig:inc-cycle} for a visualization of the variables and inequalities defined.

    Let $x' = f(\frac{1}{2})$.
    By the unimodality of $f$ and the fact that $x' > \frac{1}{2}$, there exists some $x'' > \frac{1}{2}$ such that \[f(x'') < f^2(x'') < \dots < f^{p-1}(x'') = \frac{1}{2}.\]
    
    Because $f$ is monotonically increasing on $[0, \frac{1}{2}]$, the following string of inequalities hold.
    \begin{equation}\label{eq:interleave}
        f(x') \leq f(x'') < f^2(x') \leq f^2(x'') <
        \dots < f^{p-1}(x') \leq f^{p-1}(x'') = \frac{1}{2}
    \end{equation}
    It then must hold that $x' \geq x''$.
    
    Let $g(x) = f^p(x) - x$ and note that $g$ is continuous. 
    Because $\frac{1}{2}$ maximizes $f$, it must be the case that $f^p(x') \leq x'$ and $g(x') \leq 0$.
    Because $f^{p}(x'') = x'$ and $x'' \leq x'$, $g(x'') \geq 0$.
    Hence, there exists $x^* \in [x'', x']$ such that $g(x^*) = 0$ and $f^p(x^*) = x^*$.
    
    Since $x^* \in [x'', x']$, it must also be the case that $f^j(x^*) \in [f^j(x'), f^j(x'')]$ for $j \in [p-1]$.
    By Equation~\eqref{eq:interleave}, it follows that
    \[f(x^*) < f^2(x^*) < \dots < f^{p-1}(x^*) < f^p(x^*) = x^*.\]
    Hence, there exists an increasing $p$-cycle.
\end{proof}

\begin{figure}
    \centering
    \input{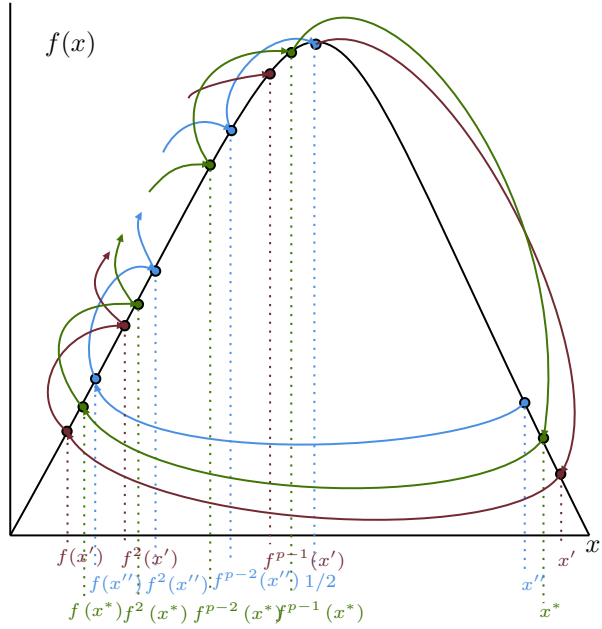}
    \caption{Visualizes the proof of Proposition~\ref{prop:half-inc-cycle}.}
    \label{fig:inc-cycle}
\end{figure}

\section{Comparison with Prior Works}\label{asec:compare}

Given the large number of results presented in this paper and the many axes of comparison one can draw between these results and their predecessors in \cite{telgarsky16, cnpw19, cnp20}, we provide Table~\ref{table:compare} to illuminate these comparisons.
It reinforces our key contributions, namely that (1) the presence of increasing cycles makes a function more difficult to approximate than a 3-cycle alone; (2) requiring that $f$ satisfy unimodality constraints gives lower-bounds to constant accuracy that cannot be made vacuous by adversarial choices of $f$; and (3) the key distinction between ``hard'' and ``easy'' functions is the existence of non-primary power-of-two cycles.

\begin{table}[]
{\renewcommand{\arraystretch}{1.25}
\scalebox{0.72}{
\begin{tabular}{l|l|l|l|l|l|l|l|l|l|p{3.8cm}|}
\cline{2-11}
                                  & \textbf{Condition}                        & \textbf{Approx.}    & \textbf{Unimodal?} & \textbf{Concave?} & \textbf{Symmetric?} & \textbf{$L\leq\rho$?} & \textbf{Acc.}          & \textbf{Exp.}               & \textbf{Hard?} & \textbf{Source}                                                                      \\ \hline
\multicolumn{1}{|l|}{\textbf{1}}  & \textbf{Maximal PO2}                      & \textbf{$L_\infty$} & \textbf{Yes}       & \textbf{No}       & \textbf{Yes}        & \textbf{No}        & \textbf{$\Omega(1)$}   & \textbf{Any}                & \textbf{No}    & \textbf{Thm~\ref{thm:properties-doubling}}                                           \\ \hline
\multicolumn{1}{|l|}{\textbf{2}}  & $\htop(f) \geq \rho$                      & $L_\infty$          & No                 & No                & No                  & No                 & $\epsilon(f)$          & $\rho$                      & Yes            &BZL~Thm~16                                                                  \\ \hline
\multicolumn{1}{|l|}{\textbf{3}}  & \textbf{Non-primary}                      & \textbf{Cls.}       & \textbf{No}        & \textbf{No}       & \textbf{No}         & \textbf{No}        & \textbf{$\frac{1}{4}$} & \textbf{$(1, \phi]$}        & \textbf{Yes}   & \textbf{CNPW~Thm~1.6, Remark~\ref{remark:non-primary}}                      \\ \hline
\multicolumn{1}{|l|}{\textbf{4}}  & \textbf{Non-primary}                      & \textbf{$L_\infty$} & \textbf{No}        & \textbf{No}       & \textbf{No}         & \textbf{No}        & \textbf{$\epsilon(f)$} & \textbf{$(1, \phi]$}        & \textbf{Yes}   & \textbf{CNPW~Thm~1.6, Remark~\ref{remark:non-primary}, BZL~Thm~16} \\ \hline
\multicolumn{1}{|l|}{\textbf{5}}  & Non-PO2                                   & Cls.                & No                 & No                & No                  & No                 & $\frac{1}{4}$          & $(1, \phi]$                 & Yes            & CNPW~Thm~1.6                                                                \\ \hline
\multicolumn{1}{|l|}{\textbf{6}}  & Non-PO2                                   & $L_\infty$          & No                 & No                & No                  & No                 & $\epsilon(f)$          & $(1, \phi]$                 & Yes            & CNPW~Thm~1.6, BZL~Thm~16                                           \\ \hline
\multicolumn{1}{|l|}{\textbf{7}}  & Odd cycle                                 & Cls.                & No                 & No                & No                  & No                 & $\frac{1}{4}$          & $(\sqrt{2}, \phi]$          & Yes            & CNP~Thm~1.1                                                                 \\ \hline
\multicolumn{1}{|l|}{\textbf{8}}  & Odd cycle                                 & $L_\infty$          & No                 & No                & No                  & No                 & $\epsilon(f)$          & $(\sqrt{2}, \phi]$          & Yes            & CNP~Thm~1.1, BZL~Thm~16                                            \\ \hline
\multicolumn{1}{|l|}{\textbf{9}}  & \textbf{Odd cycle}                        & \textbf{$L_\infty$} & \textbf{Yes}       & \textbf{Yes}      & \textbf{Yes}        & \textbf{No}        & \textbf{$\Omega(1)$}   & \textbf{$(\sqrt{1}, \phi]$} & \textbf{Yes}   & \textbf{Thm~\ref{thm:odd-linf}}                                                      \\ \hline
\multicolumn{1}{|l|}{\textbf{10}} & Odd cycle                                 & $L_1$               & No                 & No                & No                  & Yes                & $\epsilon(f)$          & $(\sqrt{2}, \phi]$          & Yes            & CNP~Thm~1.2                                                                 \\ \hline
\multicolumn{1}{|l|}{\textbf{11}} & {$\ftent{\rho_p/2}$} & {$L_1$}      & {Implied}   & {Implied}  & {Implied}    & {Implied}   & {$\Omega(1)$}   & {$(\sqrt{2}, \phi]$}        & {Yes}   & {CNP~Lemma~3.6}                                                       \\ \hline

\multicolumn{1}{|l|}{\textbf{12}} & \textbf{Odd cycle}                        & \textbf{$L_1$}      & \textbf{Yes}       & \textbf{Yes}      & \textbf{Yes}        & \textbf{Yes}       & \textbf{$\Omega(1)$}   & \textbf{$(\sqrt{2}, \phi]$} & \textbf{Yes}   & \textbf{Thm~\ref{thm:odd-l1}}                                                        \\ \hline
\multicolumn{1}{|l|}{\textbf{13}} & \textbf{Inc. Cycle}                       & \textbf{Cls.}       & \textbf{No}        & \textbf{No}       & \textbf{No}         & \textbf{No}        & \textbf{$\frac{1}{4}$} & \textbf{$[\phi,2)$}         & \textbf{Yes}   & \textbf{Thm~\ref{thm:main-linf}, Remark~\ref{remark:inc-close}}                      \\ \hline
\multicolumn{1}{|l|}{\textbf{14}} & \textbf{Inc. Cycle}                       & \textbf{$L_\infty$} & \textbf{No}        & \textbf{No}       & \textbf{No}         & \textbf{No}        & \textbf{$\epsilon(f)$} & \textbf{$[\phi,2)$}         & \textbf{Yes}   & \textbf{Thm~\ref{thm:main-linf}, Remark~\ref{remark:inc-close}}                      \\ \hline
\multicolumn{1}{|l|}{\textbf{15}} & \textbf{Inc. Cycle}                       & \textbf{$L_\infty$} & \textbf{Yes}       & \textbf{Yes}      & \textbf{No}         & \textbf{No}        & \textbf{$\Omega(1)$}   & \textbf{$[\phi,2)$}         & \textbf{No}    & \textbf{Prop~\ref{prop:need-symmetry}}                                               \\ \hline
\multicolumn{1}{|l|}{\textbf{16}} & \textbf{Inc. Cycle}                       & \textbf{$L_\infty$} & \textbf{Yes}       & \textbf{No}       & \textbf{Yes}        & \textbf{No}        & \textbf{$\Omega(1)$}   & \textbf{$[\phi,2)$}         & \textbf{No}    & \textbf{Prop~\ref{prop:need-concavity}}                                              \\ \hline
\multicolumn{1}{|l|}{\textbf{17}} & \textbf{Inc. Cycle}                       & \textbf{$L_\infty$} & \textbf{Yes}       & \textbf{Yes}      & \textbf{Yes}        & \textbf{No}        & \textbf{$\Omega(1)$}   & \textbf{$[\phi,2)$}         & \textbf{Yes}   & \textbf{Thm~\ref{thm:main-linf}}                                                     \\ \hline
\multicolumn{1}{|l|}{\textbf{18}} & \textbf{Inc. Cycle}                       & \textbf{$L_1$}      & \textbf{No}        & \textbf{No}       & \textbf{No}         & \textbf{Yes}       & \textbf{$\epsilon(f)$} & \textbf{$[\phi,2)$}         & \textbf{Yes}   & \textbf{Thm~\ref{thm:main-l1}, CNP~Thm~1.2}                                 \\ \hline
\multicolumn{1}{|l|}{\textbf{19}} & \textbf{Inc. Cycle}                       & \textbf{$L_1$}      & \textbf{Yes}       & \textbf{Yes}      & \textbf{Yes}        & \textbf{Yes}       & \textbf{$\Omega(1)$}   & \textbf{$[\phi,2)$}         & \textbf{Yes}   & \textbf{Thm~\ref{thm:main-l1}}                                                       \\ \hline
\multicolumn{1}{|l|}{\textbf{20}} & \textbf{$\ftent{\rho_p/2}$} & \textbf{$L_1$}      & \textbf{Implied}   & \textbf{Implied}  & \textbf{Implied}    & \textbf{Implied}   & \textbf{$\Omega(1)$}   & \textbf{$[\phi, 2)$}        & \textbf{Yes}   & \textbf{Cor~\ref{cor:main-l1}}                                                       \\ \hline
\multicolumn{1}{|l|}{\textbf{21}} & $\ftent{1}$                               & $L_1$               & Implied            & Implied           & Implied             & Implied            & $\Omega(1)$            & 2                           & Yes            & Telgarsky                                                                  \\ \hline
\end{tabular}}}
\caption{Compares the conditions and limitations of the theoretical results presented in this paper and its predecessors. New results are bolded.}
\label{table:compare}
\end{table}

We provide context for each column to clarify what its cells mean and how to compare their values.

\begin{itemize}
    \item ``\textbf{Condition}'' specifies what must be true of the complexity of $f$ in order for the relevant bounds to occur. All but the latter two conditions describe a very broad array of functions, while the last two focus only on a restricted subset of tent mappings.
    \begin{itemize}
        \item ``Maximal PO2'' means that the maximal cycle of $f$ is a primary\footnote{See Appendix~\ref{assec:order}.} $p$-cycle where $p$ is a power of two. This means that $f$ lies in the doubling regime described in Theorem~\ref{thm:properties-doubling}.
        \item ``$\htop(f) \geq \rho$'' considers any $f$ with a lower-bound on its topological entropy for some $\rho > 1$. Notably, all conditions other than ``Maximal PO2'' satisfy this for some $\rho$.
        \item ``Non-primary'' means that any non-primary cycle exists in $f$. That is, if $f$ is known to have a non-primary power-of-two cycle, then the results apply. 
        \item ``Non-PO2'' refers to any $f$ that has a $p$-cycle where $p$ is not a power of two.
        \item ``Odd cycle'' includes any $f$ that has a $p$-cycle where $p$ is odd.
        \item ``Inc. cycle'' means that $f$ has an increasing $p$-cycle for some $p$, i.e. a cycle with itinerary $12\dots p$.
        \item $\ftent{\rho_p/2}$ refers to families of tent maps scaled by $\rho_p$ solving the polynomials from \cite{cnp20} Lemma~3.6 (for odd periods) and Corollary~\ref{cor:main-l1} (for increasing cycles).
        \item The last row refers exclusively to the tent map of height 1 and slope 2.
    \end{itemize}

    \item ``\textbf{Approx.}'' refers to how difference between neural network $g$ and iterated map $f^k$ is measured. 
    The options are $L_1$, $L_\infty$, and classification error.
    It's easier to show that $g$ can $L_1$-approximate $f^k$ than it is to show that $g$ can $L_\infty$-approximate $f$; conversely, it's most impressive to show lower bound results with respect to the $L_1$ error than it is for the $L_\infty$ error.
    
    \cite{cnpw19, cnp20} consider classification error, \cite{bzl20} focus on $L_\infty$ approximation, and \cite{cnp20} also consider $L_1$ approximation. We routinely translate classification errors to $L_\infty$ errors using Corollary~\ref{cor:general-lb-linf}, which draws on Theorem~16 of \cite{bzl20}.
    
    \item ``\textbf{Unimodal?},'' ``\textbf{Concave?},'' and ``\textbf{Symmetric?},'' have ``Yes'' if and only if $f$ must meet the respective property for the proof to hold. 
    They have ``Implied'' if the value of ``Condition'' already ensures that the property is satisfied and the requirement need not be enforced.
    
    \item ``$L \leq \rho$?'' is ``Yes'' if the results only hold if $f$ is chosen with a Lipschitz constant less than the rate of growth of its oscillations. This is a very restrictive condition met by very few functions (including no logistic maps with cycles).
    
    \item ``\textbf{Acc.}'' specifies the desired accuracy of the hardness result. ``$\Omega(1)$'' means that there exists some constant $\epsilon$ such that for any choice of $f$ in the category, any neural network $g$ will be unable to approximate $f$ up to accuracy $\epsilon$.
    ``$\eps(f)$'' means that the degree of approximation may depend on the chosen function $f$ (and the period $p$) that belongs to the category; these bounds may be vacuous by an adversarial choice of $f$.
    As a result, hardness results with ``$\Omega(1)$'' are more impressive.
    
    \item ``\textbf{Exp.}'' refers to the base of the exponent of the lower-bound on the width necessary to approximate $f^k$ using a shallow network $g$.
    Larger values indicate stronger bounds.
    
    \item ``\textbf{Hard?}'' is ``Yes'' if for every $f$ satisfying the conditions to the left, $f$ cannot be approximated up to the specified accuracy by any neural network $g$.
    It is ``No'' if there exists some $f$ satisfying the conditions that can be approximated to a stronger degree of accuracy.
    
    \item ``\textbf{Source}'' denotes where to find the result. Some of the less interesting results are not given their own theorems and rather are immediate implications of several theorems across this body of literature. For the sake of space, we use ``CNPW'' to refer to \citep{cnpw19}; ``CNP'' for \citep{cnpw19}; ``BZL'' for \citep{bzl20}; and ``Telgarsky'' for \citep{telgarsky16}.
\end{itemize}
\section{Additional Proofs for Section~\ref{sec:lb}}

\subsection{Proof of Lemma~\ref{lemma:increasing-constant-osc}}\label{assec:inc-constant-osc-proof}
We restate and prove the lemma. This is the main technical lemma that we use to get the sharper depth-width tradeoffs and the improved notion of \textit{constant} approximation.

\lemmaincreasingconstantosc*
\begin{proof}
We first lower-bound the total number of oscillations that will appear an increasing $p$-cycle is present. 
Later, we show that the size of the oscillations is large as well.

Because we have an increasing cycle of itinerary $12\dots p$, we assume (wlog) that the cycle is $(x_1, \dots, x_p)$ with $x_1 < x_2 < \dots < x_p$. 
Define intervals $I_j \vcentcolon= [x_j, x_{j+1}]$ for $j \in \{1, \dots, p-1\}$. Because $f$ is continuous, we conclude that $I_{j+1} \subset f(I_{j})$ for all $j < p$ and $I_{j} \subset f(I_{p-1})$ for all $j$.
Figure~\ref{fig:inc-intervals} visualizes these relationships.

\begin{figure}
    \centering
    \input{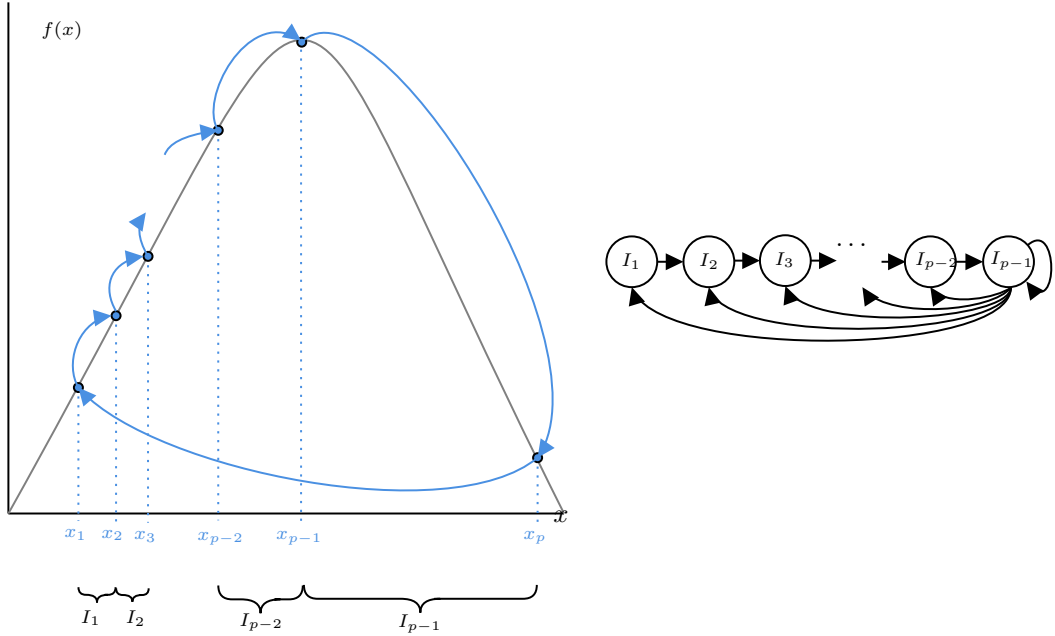}
    \caption{Visualizes the intervals $I_1,\dots, I_{p-1}$ defined in the proof of Lemma~\ref{lemma:increasing-constant-osc} and which intervals $f$ maps to one another when $f$ has an increasing $p$-cycle.}
    \label{fig:inc-intervals}
\end{figure}

Using the methods of \cite{cnpw19}, we define $y^{(k)} \in \N^{p-1}$ such that $y^{(k)}_j$ is a lower bound on the number of times $f^k$ passes through interval $I_j$, or \[\osci{x_{j}, x_{j+1}}(f^k) \geq y^{(k)}_j.\] We can then encode the interval relationships above with $y^{(k+1)} = A_p y^{(k)}$ where $y^{({0})}$ is a vector of all ones and and $A_p \in \bit^{(p-1) \times (p-1)}$ with $(A_p)_{i, j} = \indicator{j = p-1 \text{ or } i = j + 1}$. We get the following adjacency matrix for the intervals, capturing the mapping relationships (under $f$) between them:
\[A_p = 
\begin{bmatrix}
    0 & 0 & 0 & \cdots & 0 & 1 \\
    1 & 0 & 0 & \cdots & 0 & 1 \\
    0 & 1 & 0 & \cdots & 0 & 1 \\
    0 & 0 & 1 & \cdots & 0 & 1 \\
    \vdots & \vdots & \vdots & \ddots & \vdots & \vdots \\
    0 & 0 & 0 & \cdots & 1 & 1
\end{bmatrix}.\]

We find the characteristic polynomial of $A_p$ and lower-bound $y^{(k+1)}$ with the spectral radius of $A_p$. 
We show by induction on $p \geq 3$ that
\[\det(A_p - \lambda I) = (-1)^{p-1} \paren{\lambda^{p-1} - \sum_{i=0}^{p-2} \lambda^i}.\]

For the base case $p = 3$, we have:
\begin{align*}
    \det(A_3 - \lambda I)
    &= \begin{vmatrix}
        -\lambda & 1 \\
        1 & 1 -\lambda
    \end{vmatrix}
    = \lambda^2 - \lambda - 1,
\end{align*}
which satisfies the desired form.

Now, we show the inductive step by expanding the determinant of $A_p - \lambda I$.

{\small
\begin{align*}
    \det(A_p - \lambda I)
    &= 
    \begin{vmatrix}
        -\lambda & 0 & 0 & \cdots & 0 & 1 \\
        1 & -\lambda & 0 & \cdots & 0 & 1 \\
        0 & 1 & -\lambda & \cdots & 0 & 1 \\
        0 & 0 & 1 & \cdots & 0 & 1 \\
        \vdots & \vdots & \vdots & \ddots & \vdots & \vdots \\
        0 & 0 & 0 & \cdots & -\lambda & 1 \\
        0 & 0 & 0 & \cdots & 1 & 1 - \lambda
    \end{vmatrix}
    = -\lambda 
    \begin{vmatrix}
        -\lambda & 0 & \cdots & 0 & 1 \\
        1 & -\lambda & \cdots & 0 & 1 \\
        0 & 1 & \cdots & 0 & 1 \\
        \vdots & \vdots & \ddots & \vdots & \vdots \\
        0 & 0 & \cdots & -\lambda & 1 \\
        0 & 0 & \cdots & 1 & 1 - \lambda
    \end{vmatrix} - 
    \begin{vmatrix}
        0 & 0 & \cdots & 0 & 1 \\
        1 & -\lambda & \cdots & 0 & 1 \\
        0 & 1 & \cdots & 0 & 1 \\
        \vdots & \vdots & \ddots & \vdots & \vdots \\
        0 & 0 & \cdots & -\lambda & 1 \\
        0 & 0 & \cdots & 1 & 1 - \lambda
    \end{vmatrix}.
\end{align*}}

The left determinant exactly equals $\det(A_{p-1} - \lambda I)$, which we can expand using the inductive hypothesis.
The second equals $(-1)^{p-2}$, because $p-2$ row swaps (which are elementary row operations) can be used to move the first row to the bottom and make the matrix upper-triangular with diagonals of one. 
We conclude the inductive step below.

\begin{align*}
        \det(A_p - \lambda I)
        &= -\lambda \det(A_{p-1} - \lambda I) - (-1)^{p-2} \\ 
    &= -\lambda (-1)^{p-2} \left(\lambda^{p-2} - \sum_{i=0}^{p-3} \lambda^i \right) + (-1)^{p-1}
    = (-1)^{p-1} \left(\lambda^{p-1} - \sum_{i=0}^{p-2} \lambda^i\right).
\end{align*}

We find the eigenvalues of $A_p$ by finding the roots of the polynomial
\[P(x) = \lambda^{p-1} - \sum_{i=0}^{p-2} \lambda^i = 0.\] 
Observe that there must be a root greater than $1$ because $P(1) = 2 - p < 0$ and $P(2) = 1 > 0$. Equivalently, if $\lambda \neq 1$,
\[P(x) = \lambda^{p-1} - \frac{1 - \lambda^{p-1}}{1 - \lambda} = \frac{\lambda^p - 2\lambda^{p-1} + 1}{\lambda - 1} =  0.\]

Hence, finding the largest root of $P$ is equivalent to finding the largest root of $\lambda^p - 2\lambda^{p-1} + 1$, which is $\rhoinc{p}$ by definition.

This implies that the spectral radius of $A_p$, sp$(A_p)=\rhoinc{p}>1$, and hence, we also have sp$(A_p^k)=$ sp$(A_p)^k=\rhoinc{p}^k$. Since all the elements in $A_p$ and in $A_p^k$ are non-negative, then the infinity norm of $A_p^k$ is by definition the maximum among its row sums. Since the last column of $A_p$ is the all 1's vector, the largest row sum in $A_p^k$ appears at its last row:

\[
||A_p^k||_{\infty} = \sum_{j=1}^{p-1}(A_{p}^k)_{p-1,j}
\]

We can now use the fact that the infinity norm of a matrix is larger than its spectral norm:
\[
||A_p^k||_{\infty} \ge  \rhoinc{p}^k
\]
We conclude that there exists at least one interval $I_{j^*}$ (e.g., the interval $I_{p-1}$) which is crossed at least $\rhoinc{p}^k$ times by $f^k$, so $\osci{x_{j^*}, x_{j^*+1}}(f^k) \geq \rhoinc{p}^k$.

Thus, for some $a',b'$ we get $\osci{a', b'}(f^k) \geq \rhoinc{p}^k$. But can we find $a',b'$ with large difference $b'-a'$?

Now, we show that the intervals traversed are sufficiently large, in order to lower-bound $\osci{a,b}(f^k)$ with $b-a \geq \frac{1}{18}$.
By Lemma~\ref{lemma:interval-inc-cycle}, there exists some $j$ with $x_{j+1} - x_{j} \geq \frac{1}{18}$.
It suffices to show that $f^k$ traverses the interval $I_j$ sufficiently many times.

From earlier in the proof, there exists some $j^*$ such that $f$ crosses $I_{j^*}$ at least $N\vcentcolon=\rhoinc{p}^{k}$ times. 
We conclude by showing that every other interval is traversed at least half as often as this most popular interval, which suggests that $\osci{x_{j}, x_{j+1}}(f) \geq \frac{N}{2}$.

For $A \in \mathbb{R}^{(p-1)\times (p-1)}$ as defined earlier in the section and for $y^{(k)} \vcentcolon= A^{k} \vec{1}$, we argue inductively that the elements of $y^{(k)}$ are non-decreasing and that $y^{(k)}_{p-1} \leq 2 y^{(k)}_1$. For the base case, this is trivially true for $k = 0$.

Suppose it holds for $k$.
    By construction, we have $y^{(k+1)}_1 = y^{(k)}_{p-1}$ and $y^{(k+1)}_j = y^{(k)}_{j-1} + y^{(k)}_{p-1}$ for all $j > 1$.
    By the inductive hypotheses,
    \[y^{(k+1)}_1 \leq y^{(k+1)}_2 \leq \dots \leq y^{(k+1)}_{p-1} \leq 2y^{(k+1)}_1.\]
    
Therefore, $f^k$ crosses interval $I_{j}$ at least $\frac{N}{2}$ times, and $I_j$ has width at least $\frac{1}{18}$. The claim immediately follows.
\end{proof}
    
\begin{lemma}\label{lemma:interval-inc-cycle}
    For some $p \geq 3$, consider a symmetric concave unimodal function $f$ with an increasing $p$-cycle of $x_1 < \dots < x_p$. Then, there exists $j \in [p-1]$ such that $x_{j+1} - x_j \geq \frac{1}{18}$.
\end{lemma}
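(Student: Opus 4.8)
\emph{Proof plan.} The plan is to show that the largest cycle point $x_p$ lies a fixed distance above $\frac12$, and then argue that only $x_{p-1}$ and $x_p$ can exceed $\frac12$, so the two consecutive gaps $x_{p-1}-x_{p-2}$ and $x_p-x_{p-1}$ (reading $x_1$ for $x_{p-2}$ when $p=3$) together span $x_p - x_{p-2}\ge x_p-\frac12$, and hence one of them is at least $\frac12(x_p-\frac12)$.

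First I would record two structural facts. If every cycle point were $\le\frac12$, then $f$ would be strictly increasing on the finite ordered set $\{x_1<\dots<x_p\}$ and so act as the identity on it, contradicting $p\ge2$; hence $x_p>\frac12$, and therefore $s := f(\frac12) = \max f \ge f(x_{p-1}) = x_p > \frac12$, so there is a unique $q\in(0,\frac12)$ with $f(q)=\frac12$. Moreover two consecutive cycle points cannot both lie on the decreasing branch $(\frac12,1]$, since there $f$ reverses order while the cycle does not; thus among $x_1,\dots,x_{p-1}$ only $x_{p-1}$ may exceed $\frac12$, and in particular $x_{p-2}\le\frac12$ (and $x_1\le\frac12$ when $p=3$).

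Next I would use concavity to bound $s$. Concavity with $f(0)=0$ gives $f(x)\ge 2sx$ on $[0,\frac12]$, so $\frac12 = f(q)\ge 2sq$, i.e. $q\le\frac{1}{4s}$; concavity with $f(1)=0$ and $f(1-q)=\frac12$ gives $f(x)\ge\frac{1-x}{2q}$ on $[1-q,1]$. When $p\ge4$ (and likewise when $p=3$ with $x_2\le\frac12$) we have $f(x_1)=x_2\le x_{p-2}\le\frac12$, so $x_1\le q$, while $f(x_p)=x_1<\frac12$ together with $x_p>\frac12$ forces $x_p\ge 1-q$. Then $x_1 = f(x_p)\ge\frac{1-x_p}{2q}$ gives $1-x_p\le 2qx_1\le 2q^2$, hence $s\ge x_p\ge 1-2q^2\ge 1-\frac{1}{8s^2}$, i.e. $8s^3-8s^2+1\ge 0$; since $s>\frac12$ this yields $s\ge\frac{1+\sqrt5}{4}$, whence $q\le\frac{1}{4s}\le\frac{\sqrt5-1}{4}$ and $x_p\ge 1-2q^2\ge\frac{1+\sqrt5}{4}$. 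Writing $d_j = x_{j+1}-x_j$, we get $d_{p-2}+d_{p-1} = x_p-x_{p-2}\ge x_p-\frac12\ge\frac{\sqrt5-1}{4}$, so $\max(d_{p-2},d_{p-1})\ge\frac{\sqrt5-1}{8}>\frac1{18}$.

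The one remaining case, $p=3$ with $x_2>\frac12$ (orbit pattern $LRR$), is where I expect the real work: now $x_1>q$, the chain above degenerates, and the bare ``chord lies above secant'' inequalities turn out to be scale‑free enough that they do not by themselves separate $s$ from $\frac12$. Here I would instead use that $f(x)-x$ is concave on each branch: with $p_0\in(\frac12,1)$ the unique fixed point and $x_2<p_0<x_3$, one has $x_3-x_2 = f(x_2)-x_2\ge(s-\tfrac12)\frac{p_0-x_2}{p_0-\frac12}$ and $x_2-x_1 = f(x_1)-x_1\ge 2x_1(s-\tfrac12)$ with $x_1>q$ and $x_3>1-q$; combined with $q\le\frac1{4s}$ and the concavity estimate $x_2\le\frac{x_1}{2q}$, these should pin $x_3$ below by a constant (it suffices to reach $x_3\ge\frac{11}{18}$). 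Equivalently, the period‑$3$ orbit produces the horseshoe‑type covering relations $[x_1,x_2]\to[x_2,x_3]$ and $[x_2,x_3]\to[x_1,x_2]\cup[x_2,x_3]$, and a concave map realizing such a configuration cannot have its peak too close to $\frac12$. In this case $x_p = x_3\ge\frac{11}{18}$ still gives $d_1+d_2 = x_3-x_1\ge x_3-\frac12\ge\frac19$, hence $\max(d_1,d_2)\ge\frac1{18}$. Thus the main obstacle is precisely this degenerate $p=3$ subcase — showing that an increasing $3$-cycle whose middle point sits past the turning point still forces a sufficiently tall peak.
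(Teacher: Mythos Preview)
Your main line---for $p\ge4$ and for $p=3$ with $x_2\le\frac12$---is correct and is genuinely different from the paper's argument. You work directly with the endpoints of the $p$-cycle: from $x_1\le q$ and $x_p\ge 1-q$ you get $x_p\ge 1-2q^2$, feed this into $q\le\frac1{4s}\le\frac1{4x_p}$, and solve the resulting cubic to obtain $x_p\ge\frac{1+\sqrt5}{4}$. This is clean, uses only two of the last intervals, and in fact yields the sharper constant $\frac{\sqrt5-1}{8}\approx 0.154$ there. The paper instead first manufactures a $3$-cycle $y_1<y_2<y_3$ sitting inside $[x_{p-3},x_p]$ (for $p=3$ this is the cycle itself), proves $y_3-y_1\ge\frac16$ by a single concavity argument, and then pigeonholes over three intervals to get $\frac1{18}$. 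So the two routes trade off: yours is sharper on the generic case but forces a case split; the paper's is looser but uniform.

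The gap is the residual case $p=3$, $x_2>\frac12$. Your sketch there is not yet a proof: every inequality you list---$x_3-x_2\ge(s-\tfrac12)\frac{p_0-x_2}{p_0-1/2}$, $x_2-x_1\ge 2x_1(s-\tfrac12)$, $x_3\ge 1-q$, $q\le\frac1{4s}$---carries a factor of $s-\tfrac12$ or of $q$ without any independent lower bound on $s$ in this branch (your bound $s\ge\frac{1+\sqrt5}{4}$ relied on $x_1\le q$, which fails precisely here since $f(x_1)=x_2>\tfrac12$). So as written the estimates can degenerate as $s\downarrow\tfrac12$, and the ``these should pin $x_3$'' step is where the argument currently stops.

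The easiest fix is exactly what the paper does, and it requires no knowledge of where $x_2$ lies relative to $\tfrac12$. From concavity and $0<x_1<x_2<x_3<1$ one has
\[
\frac{x_2}{x_1}\ \ge\ \frac{x_3-x_2}{x_2-x_1}
\qquad\text{and}\qquad
\frac{x_3-x_1}{x_3-x_2}\ \le\ \frac{x_1}{1-x_3},
\]
together with $x_1<\tfrac12<x_3$. Splitting on whether $x_2-x_1\le\frac25(x_3-x_1)$ or not, the first inequality gives $x_1\le\frac{2}{3}x_2$ (hence $x_3-x_1\ge\frac{x_3}{3}\ge\frac16$), and the second gives $x_3\ge 1-\frac{3x_1}{5}$ (hence $x_3-x_1\ge 1-\frac{8x_1}{5}\ge\frac15$). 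Either way $x_3-x_1\ge\frac16$, so $\max(d_1,d_2)\ge\frac1{12}>\frac1{18}$, closing your remaining case without any appeal to $s$.
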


\begin{proof}
By the continuity of $f$, note that $[x_1, x_p] \subset f^3([x_{p-3}, x_{p-2}])$. 
There then exists some $y_1 \in [x_{p-3}, x_{p-2}]$ such that $f^3(y_1) = y_1$, $y_2\vcentcolon= f(y_1) \in [x_{p-2}, x_{p-1}]$, and $y_3\vcentcolon= f(y_2) \in [x_{p-1}, x_p]$.
Thus, if $f$ has a maximal $p$-cycle, then $f$ also has a 3-cycle corresponding to $x_{p-3} < y_1 < y_2 < y_3 < x_p$.

We now show that $y_3 - y_1$ must be sufficiently large by concavity.
For $f$ to be concave, the following inequality must hold:
\begin{align*}
    \frac{f(y_1) - f(0)}{y_1 - 0} \geq \frac{f(y_2) - f(y_1)}{y_2 - y_1} > 0  > \frac{f(y_3) - f(y_2)}{y_3 - y_2} \geq \frac{f(1) - f(y_3)}{1 - y_3},
\end{align*}
or equivalently,
 \[\frac{y_2}{y_1} \geq \frac{y_3 - y_2}{y_2 - y_1} > 0 > -\frac{y_3 - x_1}{y_3 - y_2} \geq -\frac{y_1}{1 - y_3}.\]
 
In addition, note that $y_1 < \frac{1}{2}$ and $y_3 > \frac{1}{2}$. 
If the former were false, then $f(y_2) \leq f(y_1)$ (by unimodality), which contradicts $y_3 > y_2$. If the latter were false, then $f(y_3) > f(y_2)$, which contradicts $y_1 < y_3$.

We consider two cases and show that either way, the interval must have width at least $\frac{1}{6}$.
\begin{itemize}
    \item If $y_2 - y_1 \leq \frac{2}{5}(y_3 - y_1)$, then $\frac{y_3 - y_2}{y_2 - y_1} \geq \frac{3}{2},$
    which mandates that $y_1 \leq \frac{2y_2}{3}$ to ensure concavity. Thus,
    \[y_3 - y_1 \geq y_3 - \frac{2y_2}{3} \geq \frac{y_3}{3} \geq \frac{1}{6}.\]
    \item If $y_2 - y_1 \geq \frac{2}{5}(y_3 - y_1)$, then $\frac{y_3 - y_1}{y_3 - y_2} \geq \frac{5}{3},$ and thus $y_1 \geq \frac{5}{3}(1 - y_3)$ and $y_3 \geq 1 - \frac{3y_1}{5}$. Then,
    \[y_3 - y_1 \geq 1 - \frac{3y_1}{5} - y_1 = 1 - \frac{8y_1}{5} \geq \frac{1}{5}.\]
\end{itemize}

Thus, we must have \[\max\{x_{p-2} - x_{p-3}, x_{p-2} - x_{p-1}, x_p - x_{p-1}\} \geq \frac{1}{18}.\qedhere\]

\end{proof}

\subsection{Proof of Fact~\ref{fact:root-lb}}\label{assec:fact}
\factrootlb*
\begin{proof}

Let $P_{\mathrm{inc}, p}(\lambda) = \lambda^{p} - 2\lambda^{p-1} + 1$.

First, observe that $\rhoinc{p} < 2$, because $P_{\mathrm{inc}, p}(\lambda) > 0$ whenever $\lambda \geq 2$. 
We lower-bound $\rhoinc{p}$ by finding some $\lambda$ for each $p$ such that $P_{\mathrm{inc}, p}(\lambda) \leq 0$ or equivalently $\lambda^{p-1}(2 - \lambda) \geq 1$ for all $p\geq3$, which bounds $\rhoinc{p}$ by the Intermediate Value Theorem. 

Consider $\lambda = 2 - \frac{4}{2^p}$. Then,
\begin{align*}
    \lambda^{p-1}(2 - \lambda)
    &= \paren{2 - \frac{4}{2^p}}^{p-1} \cdot \frac{4}{2^p}
    = 2 \paren{1 - \frac{2}{2^p}}^{p-1} \\
    &\geq 2 \paren{1 - \frac{2(p-1)}{2^p}}
    = 2 - 2 \cdot \frac{p-1}{2^{p-1}} \\
    &\geq 2 - 2 \cdot \frac{1}{2} = 1.\qedhere
\end{align*}
\end{proof}

\subsection{Previous Results about Hardness of Approximating Oscillatory Functions}

We rely on prior results from~\cite{cnpw19, cnp20} to show that an iterated function $f^k$ is inapproximable by neural networks.
These results hold if $f^k$ has sufficiently many crossings of some interval.
We apply these results later with improved bounds on both the number and the size of crossings.

\cite{cnpw19} show that the classification error of $f^k$ can be bounded if there are enough oscillations.

\begin{theorem}[\citep{cnpw19}, Section 4]\label{thm:general-lb-cls}
    Consider any continuous $f: [0,1] \to [0,1]$ and any $g \in \mathcal{N}(u,\ell)$.
    Suppose there exists $a < b$ such that $\osci{a, b}(f) = \Omega(\rho^t)$ and suppose $u \leq \frac{1}{8}\rho^{k/\ell}$.
    Then, for $t = \frac{a+b}{2}$, there exists $S$ with $\abs{S} =  \frac{1}{2} \floor{\rho^k}$ samples such that 
    \[\errcls{S,t}{f^k, g}  \geq \frac{1}{2} - \frac{(2u)^\ell}{n}.\]
\end{theorem}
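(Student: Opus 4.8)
The plan is to reduce the statement to a combinatorial comparison between the sign pattern of the highly oscillatory thresholded iterate $\thres{t}{f^k}$ and that of the relatively flat thresholded network $\thres{t}{g}$, with $t=\tfrac{a+b}{2}$. The content of the hypothesis (once the overloaded symbol is disambiguated) is that for every $k$ the iterate $f^k$ crosses the fixed interval $[a,b]$ at least $\Omega(\rho^k)$ times, i.e.\ $\osci{a,b}(f^k)=\Omega(\rho^k)$; this is exactly what Lemma~\ref{lemma:increasing-constant-osc} (and its odd-cycle analogue) supplies in the applications. The two ingredients are then: (i) a Telgarsky-style bound that any $g\in\mathcal{N}(u,\ell)$ is piecewise affine with at most $(2u)^\ell$ affine pieces, so $\thres{t}{g}$ is piecewise constant with at most $(2u)^\ell$ maximal constant runs on $[0,1]$; and (ii) a sampling construction that turns the $\Omega(\rho^k)$ crossings of $f^k$ into a sample $S$ on which $\thres{t}{f^k}$ is essentially alternating.

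For ingredient (i) I would argue by induction on the depth: a single ReLU layer of width $u$ is affine on at most $u+1\le 2u$ pieces, and composing a map with $N$ pieces with one further layer multiplies the piece count by at most $2u$, giving $(2u)^\ell$ after $\ell$ layers (this is the argument of \cite{telgarsky16}, and it carries over to semialgebraic activations). Thresholding at $t$ can only coarsen the induced partition, so $\thres{t}{g}$ changes value at most $(2u)^\ell-1$ times on $[0,1]$, hence at most that many times along any increasing list of points.

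For ingredient (ii): the hypothesis gives disjoint crossing intervals $J_1<\dots<J_c$ with $c=\Omega(\rho^k)$, on each of which $f^k$ maps into $[a,b]$ with endpoint values $\{a,b\}$; since $a<t<b$, the function $\thres{t}{f^k}$ takes both values $0$ and $1$ inside each $J_i$, so $\thres{t}{f^k}$ has at least $c$ sign changes on $[0,1]$. I would then pick $n=\tfrac12\floor{\rho^k}\le c$ points $z_1<\dots<z_n$ on which $\thres{t}{f^k}$ is perfectly alternating (possible since it has $\ge n-1$ sign changes) and set $S=\{z_1,\dots,z_n\}$. The counting step is now elementary: split $S$ into the $\le (2u)^\ell$ runs on which $\thres{t}{g}$ is constant; within a run of length $L$ the alternating sequence $\thres{t}{f^k}|_S$ matches the constant value of $\thres{t}{g}$ on at most $\lceil L/2\rceil$ of the $L$ points, so the total number of agreements is at most $\sum_i\lceil L_i/2\rceil\le \tfrac n2+\tfrac12(2u)^\ell$; hence the number of disagreements, and thus $n\cdot\errcls{S,t}{f^k,g}$, is at least $\tfrac n2-\tfrac12(2u)^\ell$, which gives the claimed bound $\errcls{S,t}{f^k,g}\ge\tfrac12-\tfrac{(2u)^\ell}{n}$. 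Under $u\le\tfrac18\rho^{k/\ell}$ one has $(2u)^\ell\le(\tfrac14)^\ell\rho^k\le\tfrac14\rho^k$, which is at most $\tfrac n2$ for the chosen $n$, so the right-hand side is bounded below by a positive constant.

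I expect the main obstacle to be the bookkeeping in ingredient (ii): guaranteeing that one can extract an alternating sub-sample of the prescribed size $\tfrac12\floor{\rho^k}$ regardless of the (possibly non-monotone) way that successive crossing intervals are traversed, and pinning down the constants so that the agreement count collapses to $\tfrac n2+O((2u)^\ell)$ rather than merely $\tfrac{3n}{4}$. Ingredient (i) is standard, but one should double-check that the exact constant $(2u)^\ell$ (rather than, say, $2(u+1)^\ell$) is the one against which the width threshold $\tfrac18\rho^{k/\ell}$ and the downstream constant-error conclusions were calibrated.
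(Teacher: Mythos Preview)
Your proposal is correct and follows essentially the same approach as the original argument. Note that the paper does not give its own proof of this statement: Theorem~\ref{thm:general-lb-cls} is quoted verbatim from \cite{cnpw19} (Section~4) and used as a black box, so there is no in-paper proof to compare against beyond the citation. The argument you sketch---bounding the number of affine pieces of $g\in\mathcal{N}(u,\ell)$ by $(2u)^\ell$ via the Telgarsky induction, extracting an alternating sample of size $n$ from the $\Omega(\rho^k)$ crossings of $f^k$, and then counting agreements run-by-run---is exactly the mechanism underlying the cited result (and, in turn, \cite{telgarsky16}).

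Two small remarks on the bookkeeping you flagged. First, the alternating extraction is cleaner than you fear: since each crossing interval $J_i$ has one endpoint with $f^k=a$ and one with $f^k=b$, you can pick from each $J_i$ whichever endpoint continues the alternation, yielding an alternating sequence of length $c$; the applications (e.g.\ Lemma~\ref{lemma:increasing-constant-osc}) supply $c\ge\tfrac12\rho^k$, so $n=\tfrac12\lfloor\rho^k\rfloor\le c$ holds with the stated constants. Second, your agreement bound $\sum_i\lceil L_i/2\rceil\le \tfrac n2+\tfrac12(2u)^\ell$ actually yields the slightly sharper $\errcls{S,t}{f^k,g}\ge\tfrac12-\tfrac{(2u)^\ell}{2n}$, which of course implies the stated $\tfrac12-\tfrac{(2u)^\ell}{n}$.
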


We adapt that claim to lower-bound the $L_\infty$ approximation of $f^k$ by $g$.

\begin{corollary}\label{cor:general-lb-linf}
    Consider any continuous $f: [0,1] \to [0,1]$ and any $g \in \mathcal{N}(u,\ell)$.
    Suppose there exists $a < b$ such that $\osci{a, b}(f) = \Omega(\rho^t)$ and suppose $u \leq \frac{1}{8}\rho^{k/\ell}$.
    Then, 
    \[\norm[\infty]{f^k - g} \geq \frac{b-a}{2}.\]
\end{corollary}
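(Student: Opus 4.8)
The plan is to prove the contrapositive by a counting argument. Suppose $g \in \mathcal{N}(u,\ell)$ satisfies $\norm[\infty]{f^k - g} < \frac{b-a}{2}$, and set $t := \frac{a+b}{2}$. The role of this threshold is that it lies exactly in the middle of $[a,b]$: whenever $f^k(x) = a$ we get $g(x) < a + \frac{b-a}{2} = t$, and whenever $f^k(x) = b$ we get $g(x) > b - \frac{b-a}{2} = t$. By definition of $\osci{a,b}(f^k)$ there are $c := \osci{a,b}(f^k)$ disjoint subintervals $[x_j, x_j'] \subseteq [0,1]$ on each of which $f^k$ attains the value $a$ at one endpoint and $b$ at the other. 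Hence on each $[x_j, x_j']$ the continuous function $g - t$ takes strictly opposite signs at the two endpoints, so it changes sign inside $[x_j,x_j']$; since these $c$ intervals are disjoint, $g - t$ changes sign at least $c$ times over $[0,1]$.

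A continuous piecewise-linear function with $m$ linear pieces changes sign at most $m$ times (each non-horizontal affine piece contributes at most one zero), so $g$ must have at least $c$ linear pieces. But every $g \in \mathcal{N}(u,\ell)$ is piecewise-linear with at most $(2u)^\ell$ pieces (Telgarsky's bound; see Theorem~16 of \cite{bzl20}), and the width hypothesis $u \le \frac18\rho^{k/\ell}$ gives $(2u)^\ell \le (\tfrac14\rho^{k/\ell})^\ell = 4^{-\ell}\rho^k \le \tfrac14\rho^k$. Since $c = \osci{a,b}(f^k) = \Omega(\rho^k)$ — and in particular $c \ge \tfrac12\rho^k$ for the intervals produced by Lemmas~\ref{lemma:increasing-constant-osc} and~\ref{lemma:odd-constant-osc} — we get $c > (2u)^\ell$, a contradiction. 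Therefore $\norm[\infty]{f^k - g} \ge \frac{b-a}{2}$.

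Equivalently, one can route this through Theorem~\ref{thm:general-lb-cls} directly (matching the word ``adapt'' in the text): the sample set $S$ produced there consists of crossing endpoints where $f^k \in \{a,b\}$, so $\norm[\infty]{f^k - g} < \frac{b-a}{2}$ would force $\thres{t}{g}$ to agree with $\thres{t}{f^k}$ on all of $S$, i.e.\ $\errcls{S,t}{f^k, g} = 0$, contradicting $\errcls{S,t}{f^k, g} \ge \tfrac12 - \tfrac{(2u)^\ell}{\abs{S}} > 0$. Either way the proof is essentially immediate; the only mildly delicate point — the ``main obstacle,'' such as it is — is lining up the quantitative bounds, namely that $(2u)^\ell \le 4^{-\ell}\rho^k$ must be strictly dominated by the crossing count, which is precisely why the oscillation lemmas track an explicit constant ($\tfrac12$) rather than a bare $\Omega(\cdot)$. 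The harmless degenerate case in which $g$ coincides with the level $t$ on a whole subinterval is handled by perturbing $t$ by an arbitrarily small amount.
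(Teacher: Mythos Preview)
Your proposal is correct, and your second paragraph is precisely the paper's argument: invoke Theorem~\ref{thm:general-lb-cls} to obtain a misclassified sample point $x$ where $f^k(x)\in\{a,b\}$ but $g(x)$ lies on the other side of $t=\frac{a+b}{2}$, giving $\abs{f^k(x)-g(x)}\ge \frac{b-a}{2}$ immediately. Your first paragraph simply unwraps the counting argument underlying Theorem~\ref{thm:general-lb-cls} (piecewise-linear $g$ with at most $(2u)^\ell$ pieces cannot track $c$ sign changes), so both routes are essentially the same idea at different levels of black-boxing.
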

\begin{proof}
    By Theorem~\ref{thm:general-lb-cls}, there exists some $x \in [0,1]$ such that (wlog) $f^k(x) \leq a$ and $g(x) \geq \frac{a + b}{2}$.
    The conclusion for the $L_\infty$ error is immediate by definition.
\end{proof}

\cite{cnp20} give a lower-bound on the ability of a neural network $g$ to $L_1$-approximate $f^k$, provided a correspondence between the Lipschitz constant of $f$ and the rate of oscillations $\rho$.

\begin{theorem}[\cite{cnp20} Theorem 3.2]\label{thm:general-lb-l1}
    Consider any $L$-Lipschitz $f: [0,1] \to [0,1]$ and any $g \in \mathcal{N}(u,\ell)$.
    Suppose there exists $a < b$ such that $\osci{a,b}(f) = \Omega(\rho^t)$. If $L \leq \rho$ and $u \leq \frac{1}{16} \rho^{k/\ell}$, then 
    \[\norm[1]{f^k - g} = \Omega((b-a)^2).\]
\end{theorem}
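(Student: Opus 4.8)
The plan is to push a Telgarsky-type ``capacity versus oscillation'' argument through in the $L_1$ metric, with the hypothesis $L \le \rho$ doing the essential quantitative work. Write $N \vcentcolon= \osci{a,b}(f^k)$, so that $N = \Omega(\rho^k)$ by assumption. Two structural inputs drive everything. First, any $g \in \mathcal{N}(u,\ell)$ is piecewise affine with at most $(2u)^\ell$ affine pieces (the same bound underlying Theorem~\ref{thm:general-lb-cls}), and the choice $u \le \tfrac{1}{16}\rho^{k/\ell}$ forces $(2u)^\ell \le 8^{-\ell}\rho^k \le \tfrac18 \rho^k$, i.e. $g$ has far fewer pieces than $f^k$ has crossings. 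Second, since $f$ is $L$-Lipschitz with $L \le \rho$, the iterate $f^k$ is $\rho^k$-Lipschitz; hence every subinterval on which $f^k$ crosses $[a,b]$ has width at least $(b-a)/\rho^k$, and around any point $z$ with $f^k(z) \in \{a,b\}$ the function $f^k$ stays within $\tfrac{b-a}{4}$ of $f^k(z)$ on an interval of width $\tfrac{b-a}{4\rho^k}$.

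First I would extract the oscillation skeleton. Fixing a crossing configuration realizing $N = \osci{a,b}(f^k)$, consecutive crossings must alternate in direction (otherwise the continuous excursion of $f^k$ from $b$ back down to $a$, or vice versa, between two same-direction crossings would itself contain a further crossing of $[a,b]$, contradicting maximality). Grouping the $N$ crossings into $\lfloor N/2\rfloor$ consecutive pairs therefore produces that many ``peaks'' (where $f^k$ runs $a \to b \to a$) or ``valleys'' (where it runs $b \to a \to b$), each occupying an interval $T = [z_0, z_2]$ with $f^k(z_1)$ the extreme value, and with the $T$'s having pairwise disjoint interiors. Since $g$ has at most $(2u)^\ell$ affine pieces, it is affine on all but at most $(2u)^\ell$ of these intervals; call the remaining $\lfloor N/2\rfloor - (2u)^\ell = \Omega(\rho^k)$ intervals \emph{good}.

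The core of the proof is a per-interval lower bound: on every good interval $T$, $\int_T \abs{f^k - g}\,dx \ge \tfrac{(b-a)^2}{16\rho^k}$. Consider a good peak $T = [z_0,z_2]$ with $f^k(z_0) = f^k(z_2) = a$ and $f^k(z_1) = b$ (valleys, and the opposite orientation of $g$, are symmetric). On $T$ the network $g$ is affine, hence monotone; say $g$ is nondecreasing. If $g(z_1) \ge \tfrac{a+b}{2}$, then monotonicity gives $g \ge \tfrac{a+b}{2}$ throughout $[z_1,z_2]$, while $f^k(z_2) = a$ together with the Lipschitz bound give $f^k \le a + \tfrac{b-a}{4}$ on $[z_2 - \tfrac{b-a}{4\rho^k},\, z_2] \subseteq [z_1,z_2]$; there $\abs{f^k - g} \ge \tfrac{b-a}{4}$. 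If instead $g(z_1) < \tfrac{a+b}{2}$, then $g < \tfrac{a+b}{2}$ throughout $[z_0,z_1]$, while $f^k(z_1) = b$ gives $f^k \ge b - \tfrac{b-a}{4}$ on $[z_1 - \tfrac{b-a}{4\rho^k},\, z_1] \subseteq [z_0,z_1]$ (which fits, since $[z_0,z_1]$ is an up-crossing interval of width $\ge (b-a)/\rho^k$); again $\abs{f^k - g} \ge \tfrac{b-a}{4}$ there. Either way there is a subinterval of width $\tfrac{b-a}{4\rho^k}$ on which $\abs{f^k - g} \ge \tfrac{b-a}{4}$, giving the claimed $\tfrac{(b-a)^2}{16\rho^k}$.

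Finally I would sum over the $\Omega(\rho^k)$ good intervals, whose contributing subintervals lie in disjoint interiors and are therefore pairwise disjoint:
\[
\norm[1]{f^k - g} \;\ge\; \Omega(\rho^k) \cdot \frac{(b-a)^2}{16\rho^k} \;=\; \Omega\!\left((b-a)^2\right),
\]
as desired. The main obstacle is precisely the per-interval step: one \emph{cannot} appeal to the Lipschitz constant of the ReLU network $g$ (it may be arbitrarily large), so the pointwise gap between $f^k$ and $g$ must be localized at a point where $f^k$ — not $g$ — attains an extreme value, and then spread out using monotonicity of $g$ on one side and Lipschitzness of $f^k$ on the other. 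This is exactly why $L \le \rho$ is indispensable: it makes each of the $\Omega(\rho^k)$ oscillations ``fat,'' of width at least $(b-a)/\rho^k$, so that $\rho^k$ oscillations, each costing at least $\tfrac{(b-a)^2}{16\rho^k}$ in $L_1$, accumulate to a quantity that does not decay with $k$.
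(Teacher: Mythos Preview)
This theorem is not proved in the paper; it is quoted from \cite{cnp20} (their Theorem~3.2) and used as a black box in the proofs of Theorems~\ref{thm:main-l1} and~\ref{thm:odd-l1}. Your argument is correct and is essentially the proof given there: pair the $\Omega(\rho^k)$ alternating crossings into peak/valley intervals with disjoint interiors, discard the at most $(2u)^\ell \le \tfrac18\rho^k$ of them containing a breakpoint of the piecewise-affine $g$, and on each remaining interval use monotonicity of the affine $g$ together with the $\rho^k$-Lipschitzness of $f^k$ (this is precisely where $L\le\rho$ enters) to isolate a subinterval of width $\Theta((b-a)/\rho^k)$ on which $|f^k-g|=\Omega(b-a)$.

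One cosmetic wobble: you first describe $z_1$ as the location of the extreme value of $f^k$ on $T$, but then use $f^k(z_1)=b$ and that $[z_0,z_1]$ is an up-crossing of width $\ge (b-a)/\rho^k$. These are consistent only if you take $z_1$ to be the \emph{endpoint of the up-crossing} (so $f^k(z_1)=b$ exactly and $[z_0,z_1]$ is literally the up-crossing interval), not the global argmax on $T$, which could lie strictly between the two crossings with $f^k(z_1)>b$. With that choice the argument goes through verbatim. The only other soft spot is the constant-matching between the implicit constant in $\osci{a,b}(f^k)=\Omega(\rho^k)$ and the $\tfrac{1}{16}$ in the width bound, but that is absorbed into the final $\Omega((b-a)^2)$.
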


The Lipschitzness assumption is extremely strict, especially because they show in their Lemma~3.1 that $L \geq \rho$ whenever $f$ has a period of odd length.

\subsection{Proof of Corollary~\ref{cor:main-l1}}\label{assec:cor-main-proof}
\cormainl*
\begin{proof}
    This theorem follows from Theorem~\ref{thm:main-l1} and Lemma~\ref{lemma:increasing-constant-osc}.
    Because $\ftent{\rho_p/2}$ is $\rho_p$-Lipschitz, it remains only to prove that there exists an increasing $p$-cycle.
    We show that 
    \[\frac{1}{2}, f\paren{\frac12},\dots, f^{p-1}\paren{\frac12}\]
    is such a cycle.
    
    By definition of the tent map, $f(\frac12) = \frac{\rhoinc{p}}2$ and $f^2(\frac12) = \rhoinc{p} (1 - \frac{\rhoinc{p}}{2})$.
    If we assume for now that $f^j(\frac12) \leq \frac12$ for all $j \in \{2, \dots, p-1\}$,
    then
    \[f^p\paren{\frac12} = \rhoinc{p}^{p-1}\paren{1 - \frac{\rhoinc{p}}2} = -\frac{1}{2}\paren{\rhoinc{p}^p - 2\rhoinc{p}^{p-1} + 1} + \frac{1}{2} = 0 + \frac{1}{2}.\]
    
    Because $f^p(\frac12) = \frac12$ and we assumed that $f^{j+1}(\frac12) = \rhoinc{p} f^{j}(\frac12)$ for $j \geq 2$ and $\rho > 1$, it must be the case that $f^j(\frac12) \leq \frac12$ for all $j \in \{2, \dots, p-1\}$.
    
    Lemma~\ref{lemma:increasing-constant-osc} thus implies that $f^k$ has $\Omega(\rhoinc{p}^k)$ crossings, which enables us to complete the proof by invoking Theorem~\ref{thm:main-l1}, since the Lipschitzness condition is met.
\end{proof}

\subsection{Proof of Lemma~\ref{lemma:odd-constant-osc}}\label{assec:lemma-odd-proof}

\lemmaoddconstantosc*

\begin{proof}
    By Theorems~2.94 and 3.11.1 of~\cite{alseda2000}, there exists a $p$-cycle of the form \[x_p < x_{p-2} < \dots < x_3 < x_1 < x_2 < x_4 < \dots < x_{p-1},\]
    which is known as a \textit{Stefan cycle}.
    The analysis of Section~3.2 of~\cite{cnp20} shows that $\osci{[x_1, x_2]}(f^{k}) \geq \rhoodd{p}^{k}$.
    Their exploitation of the relationships between intervals is visualized in Figure~\ref{fig:stefan-intervals}.
    By the continuity of $f$, applying $f$ an additional $p-1$ times gives $\osci{[x_p, x_1]}(f^{k+p-1}) \geq \rhoodd{p}^{k}$.
    Because $[x_{p-2}, x_1] \subset [x_p, x_1]$, applying $f$ one more time gives $\osci{[x_2, x_{p-1}]}(f^{k+p}) \geq \rhoodd{p}^k$.

    \begin{figure}
        \centering
        \input{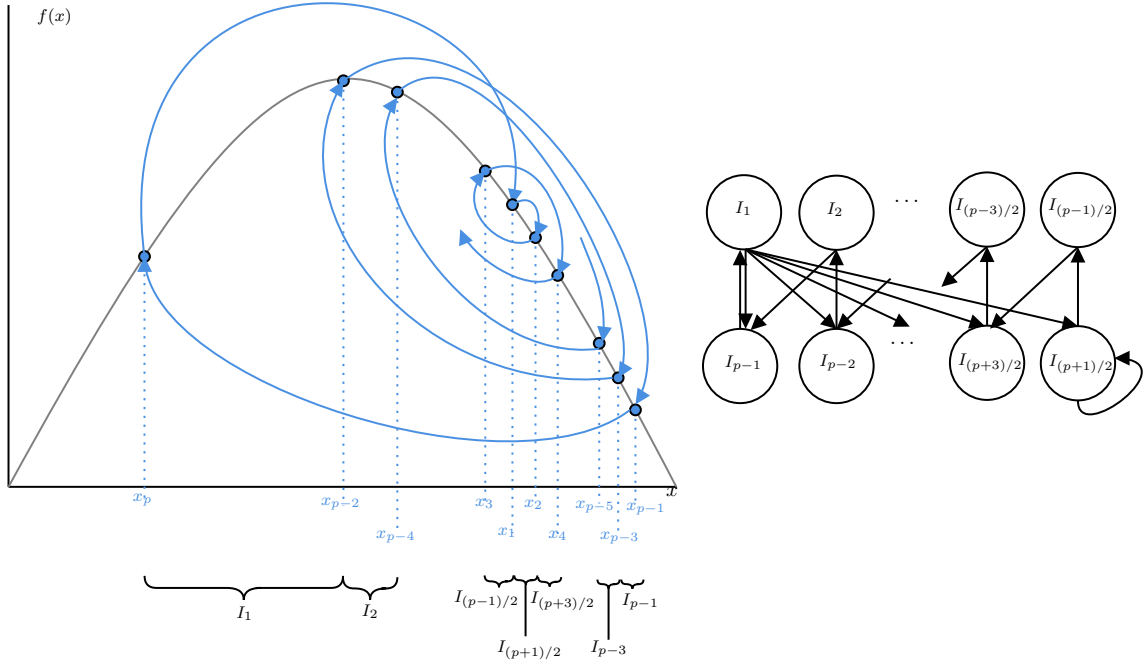}
        \caption{Gives an example of a Stefan $p$-cycle (which is relied upon in Lemma~\ref{lemma:odd-constant-osc} and demonstrates the interval relationships). Analogous to Figure~\ref{fig:inc-intervals}.}
        \label{fig:stefan-intervals}
    \end{figure}
    
    Hence, by redefining $k$, we have 
    \[\max\{ \osci{[x_1, x_2]}(f^{k}), \osci{[x_2, x_{p-1}]}(f^{k}), \osci{[x_p, x_1]}(f^{k})\} \geq \rhoodd{p}^{k-p}.\]
    
    Since $[x_p, x_{p-1}]$ is the disjoint union of $[x_1, x_2]$, $[x_2, x_{p-1}]$, and $[x_p, x_1]$, there exists $[a, b] \subset [x_p, x_{p-1}]$ with $b- a \geq \frac{1}{3}(x_{p-1} - x_p)$ such that $\osci{[a,b]}(f^{k}) \geq \\rhoodd{p}^{k-p}$.
    
    The problem reduces to placing a lower bound on $x_{p-1} - x_p$.
    To do so, we derive contradictions on the concavity and symmetry of $f$. 
    Let $r = f(\frac12) \in (x_p, 1)$ be the the largest outcome of $f$, and
    let \[a= \sup_{x, x' \in [1-r, r]} \abs{\frac{f(x) - f(x')}{x - x'}}\] be the maximum absolute slope of $f$ on $[1-r,r]$. $a$ must be finite by the concavity and continuity of $f$, and if $f$ is differentiable, $a = f'(1-r) = -f'(r)$.
    Thus, $f$ is $a$-Lipschitz on that interval.
    
    Because $f([x_p, x_{p-1}]) \subseteq [x_p, r] \subset [1-r, r]$, it follows that $\abs{f^2(x) - f^2(x')} \leq a^2 \abs{x - x'}$.
    Thus, $x_2 - x_p \leq a^2(x_{p-2} - x_{p})$ and $x_2 - x_p \leq x_4 - x_p \leq a^2(x_2 - x_{p-2})$.
    Averaging the two together, we have $x_2 - x_p \leq \frac{a^2}{2}(x_2 - x_{p})$, which means $a \geq \sqrt{2}$.
    
    To satisfy concavity, the following must be true:
    \[\frac{f(1-r) - f(0)}{1-r - 0} = \frac{f(r)}{1-r} \geq a \geq \sqrt{2}.\]
    
    We rearrange the inequality and apply properties of monotonicity to lower-bound $r$ away from $\frac12$:
    \[r 
    \geq 1 - \frac{f(r)}{\sqrt{2}}  
    \geq 1 - \frac{f(x_{p-1})}{\sqrt{2}}
    = 1 - \frac{x_p}{\sqrt{2}}
    > 1 - \frac1{2\sqrt2}.\]
    
    It also must be the case for any $x \in [\frac12, 1]$, that:
    \[\abs{\frac{f(x) - f\paren{\frac12}}{x - \frac12}} \leq 2.\]
    Otherwise, the concavity of $f$ would force $f(\frac12) > 1$.
    
    We finally assemble the pieces to lower-bound the gap between $x_{p-1}$ and $x_p$:
    \begin{align*}
        x_{p-1} - x_p
        &\geq x_{p-1} - \frac12 
        \geq -\frac{1}{2} \paren{f(x_{p-1}) - f\paren{\frac12}}
        = \frac{r}{2} - \frac{x_p}{2} \\
        &> \frac{1}{2} - \frac1{4\sqrt{2}} - \frac14 
        = \frac14 - \frac1{4\sqrt{2}}
        > 0.07.\qedhere
    \end{align*}

    
    
    
    
\end{proof}

\subsection{Necessity of Symmetry and Concavity Assumptions in Theorems~\ref{thm:main-linf} and \ref{thm:main-l1}}\label{assec:sym-conc}

We demonstrate the weakness of the bounds promised by \cite{cnpw19, cnp20, bzl20} and argue that our assumptions of symmetry and concavity are necessary in order to avoid such non-vacuous bounds.
To do so, we exhibit two families of functions in Propositions~\ref{prop:need-symmetry} and ~\ref{prop:need-concavity} which contain functions with increasing $p$-cycles for every $p$ that produce large numbers of oscillations, yet are trivial to approximate because their oscillations can be made arbitrarily small.
The functions considered in both cases are unimodal and lack symmetry and concavity respectively.

These expose a fundamental shortcoming of other approaches to the hardness of neural network approximation in the aforementioned works because they all rely on showing that for every mapping $f$ meeting some condition (e.g. odd period, positive topological entropy), there exists some $[a,b] \in [0,1]$ where $\osci{a,b}$ is exponentially large, and hence no poly-size shallow neural network $g$ can obtain $L_{\infty}(f^k, g) \leq P(b-a)$ for some polynomial $P$.
However, because $[a,b]$ depends on $f$, their difference can potentially be arbitrarily small. 
The propositions show that this concern is significant and that $[a,b]$ indeed becomes arbitrarily narrow for simple 3-periodic functions. 
While \cite{cnpw19} avoid addressing this issue head-on by focusing on classification error over $L_\infty $ error, their classification lower-bounds rely on misclassification of points whose actual distance can be shrinking (see for example Figure~\ref{fig:asymmetric}).

The implications of these propositions contrast with the more robust hardness results we present in Theorems~\ref{thm:main-linf}, \ref{thm:main-l1}, \ref{thm:odd-linf}, and \ref{thm:odd-l1}, which leverage unimodality, symmetry, and concavity to ensure that the accuracy of approximation can be no better than some constant (independent on $f,p$) when the neural network $g$ is too small.
We show here that those assumptions are necessary by exhibiting functions that satisfy all but one, and become easy to $L_\infty$-approximate with small depth-2 ReLU networks.






\begin{proposition}\label{prop:need-symmetry}
For $p \geq 3$ and for sufficiently small $\eps > 0$, there exists a concave unimodal mapping $f$ with a chaotic $p$-cycle such that for any $k$, there exists $g \in \mathcal{N}(3,2)$ with
\[L_\infty(f^k, g) \leq \eps.\]
\end{proposition}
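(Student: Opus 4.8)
The plan is to realize $f$ as an \emph{asymmetric tent map}: piecewise linear, rising from $(0,0)$ to $(x',b)$ with slope $b/x'$ and falling from $(x',b)$ to $(1,0)$, where $x'\neq\tfrac12$ (so $f$ is not symmetric), $x'<b$, and $b$ is chosen slightly larger than $(p-1)/p$. Such an $f$ is automatically concave and unimodal. First I would verify it has an increasing $p$-cycle: writing $x'=b-\delta$ for small $\delta>0$ and tracking the linear left branch, the candidate cycle points $x_j=(b/x')^{\,j-1}x_1$, with $x_1$ pinned by $f(x_p)=x_1$, satisfy $x_j\approx b-\delta(pb-j+1)$; the inequalities $x_1<\dots<x_{p-1}<x'<x_p<b$ that make the cycle \emph{increasing} all reduce to $pb>p-1$, which holds by choice of $b$, and they force the whole cycle into a band of width $O(\delta)$ around $(p-1)/p$.

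The crucial step is to tune the parameters so that the forward orbit of the critical value $b=f(x')$ is \emph{eventually periodic} and tightly confined. For the asymmetric tent, $f^{\,j}(b)$ and the fixed point $x^\ast=b/(1-x'+b)$ are explicit (affine-in-$\delta$ near $\delta=0$) functions of the parameters; I would compute $f^{\,j}(b)-x^\ast$ for a small fixed $j$ — for $p=3$, $j=3$ works and $f^3(b)-x^\ast$ changes sign near $b=\sqrt3-1>\tfrac23$ — and invoke the intermediate value theorem to land on a parameter, still with $b>(p-1)/p$, at which $f^{\,j}(b)=x^\ast$. Then the orbit of $b$ is the finite set $\{b,f(b),\dots,f^{j-1}(b),x^\ast\}$, and choosing $\delta$ small enough puts it inside $[\,b-\tfrac\eps2,\,b\,]$. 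This ``Misiurewicz-type'' genericity argument, and making it uniform over $p$, is the main obstacle.

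Granting that, the rest is bookkeeping. For any $k$, the critical points of $f^k$ are exactly the preimages of $x'$ under $f^0,\dots,f^{k-1}$, and at such a point $f^k$ equals $f^{k-i}(x')\in\{b,f(b),f^2(b),\dots\}\subseteq[b-\tfrac\eps2,b]$, while $f^k(0)=f^k(1)=0$; hence every interior local extremum of $f^k$ lies in the band $[b-\tfrac\eps2,b]$. I would identify the smallest critical point $c_1=(x')^k/b^{\,k-1}$ and the largest $c_2=1-(x')^{k-1}(1-x')/b^{\,k-1}$, and check — using that $f$ is linear on each branch and that the forward iterates of $[0,c_1]$ (resp.\ of $[c_2,1]$ after one step) stay inside the linear piece $[0,x']$ — that $f^k$ restricted to $[0,c_1]$ is exactly the segment from $(0,0)$ to $(c_1,b)$ and restricted to $[c_2,1]$ is exactly the segment from $(c_2,b)$ to $(1,0)$; on $[c_1,c_2]$, $f^k$ is monotone between consecutive critical points with both endpoint values in $[b-\tfrac\eps2,b]$, hence stays there.

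Finally I would take $g$ to be the ``ramp--plateau--ramp'' function: the segment $(0,0)$--$(c_1,b)$ on $[0,c_1]$, the constant $b$ on $[c_1,c_2]$, and the segment $(c_2,b)$--$(1,0)$ on $[c_2,1]$. This has two breakpoints, so $g\in\mathcal N(2,2)\subseteq\mathcal N(3,2)$, and $f^k=g$ on $[0,c_1]\cup[c_2,1]$ while $|f^k-g|\le\tfrac\eps2$ on $[c_1,c_2]$, giving $L_\infty(f^k,g)\le\eps$ for all $k$. I expect the delicate points to be the parameter tuning of the second paragraph and the verification, in the third, that the two extreme monotone pieces of $f^k$ really are straight segments uniformly in $k$.
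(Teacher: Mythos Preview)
Your high-level strategy---trap every critical value of $f^k$ in a narrow band and then approximate by a ramp--plateau--ramp---is exactly the paper's. The construction of $f$ differs: the paper simply writes down a piecewise-linear $f$ with breakpoints at explicit cycle points $x_j = 1-\tfrac{p-j+1}{p}\eps$, so that $f(x_j)=x_{j+1}$, $f(x_p)=x_1$, and the invariant band $[x_1,x_p]\subset[1-\eps,1]$ is built in by fiat; the induction on $k$ is then a two-line check. Your asymmetric tent is a legitimate alternative, and your identification of $c_1,c_2$ and the linearity of $f^k$ on the outer pieces are correct.

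Where you over-complicate is the ``Misiurewicz tuning'': it is unnecessary. For \emph{any} asymmetric tent with peak $(x',b)$ and $b>\tfrac12$, the interval $[f(b),b]$ is already forward-invariant. Indeed $f(b)<x'$ (this is $b(1-b)<x'(1-x')$, which holds once $x'>\tfrac12$), so on $[f(b),x']$ the left branch sends $f(b)\mapsto (b/x')f(b)>f(b)$ and $x'\mapsto b$, while on $[x',b]$ the right branch maps onto $[f(b),b]$. Hence every forward iterate of $b$---and therefore every critical value of every $f^k$---lies in $[f(b),b]$, whose width is
\[
b-f(b)=\frac{b\delta}{1-b+\delta}=O(\delta).
\]
Choosing $\delta$ small (no equation to solve, no intermediate value theorem, no uniformity-in-$p$ issue) already gives the $\eps/2$ confinement you want; the existence of the increasing $p$-cycle is the open condition $x'>(p-1)/p$, which is compatible with $\delta$ arbitrarily small. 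With this one observation in place of your second paragraph, your proof is complete and about as short as the paper's; the paper's version just hard-codes the invariant band rather than deriving it.
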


\begin{proof}
    For all $j \in [p]$, let $x_j = 1 - \frac{p-j+1}{p} \eps$.
    Define $f$ to be a piecewise-linear function with $p+1$ pieces chosen with boundaries that satisfy 
    \[f(0) = 0, f(x_1) = x_2, f(x_2) = x_3, \dots, f(x_{p-1}) = x_p, f(x_p) = x_1, f(1) = 0.\]
    
    We visualize $f$ for $p=3$ in Figure~\ref{fig:asymmetric}. $f$ is unimodal because it increases on $[0, x_{p-1}]$ and decreases on $[x_{p-1}, 1]$.
    It is concave because $f'(x)$ does not increase as $x$ grows, since 
    \[f'(x) = \begin{cases}
        \frac{1-\frac{p-1}{p}\eps}{1 - \eps} > 1 & x \in [0, x_1) \\
        1 & x \in (x_1, x_{p-1}) \\
        - p + 1 & x \in (x_{p-1}, x_p) \\
        - \frac{1 - \eps}{\eps} & x \in (x_p, 1],
    \end{cases}\]
    as long as $\frac{1 - \eps}{\eps} > p - 1$.
    
    \begin{figure}
        \centering
        \includegraphics[width=0.45\textwidth]{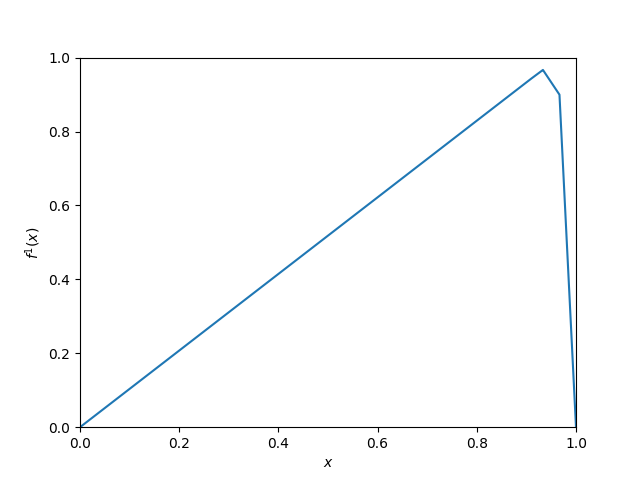}
        \includegraphics[width=0.45\textwidth]{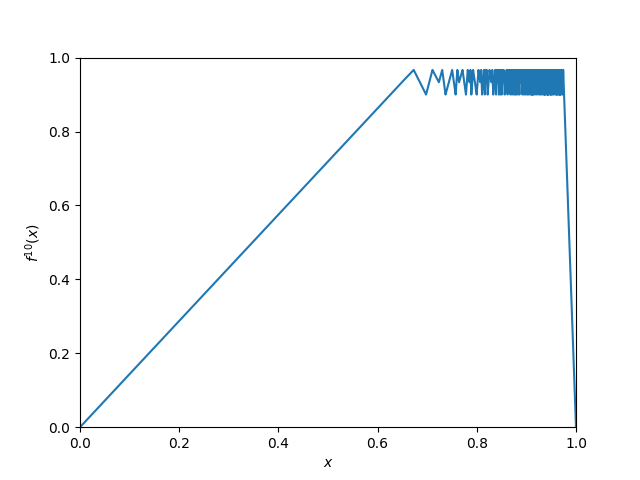}
        \caption{Plots the asymmetric function with a $p$-cycle referenced in Proposition~\ref{prop:need-symmetry} for $p=3$ and $\eps = 0.1$. While $f$ oscillates frequently, $f$ can be trivially $0.1$-approximated by three ReLUs. As $\eps \to 0$, the $L_\infty$ approximation hardness guarantees implied by \cite{cnpw19} become vacuous because the oscillations, even though they are exponentially many, they shrink in size.}
        \label{fig:asymmetric}
    \end{figure}

    We show inductively that for all $k$, there exists $a_k < b_k$ such that $f^k(a_k) = f^k(b_k) = 1 - \eps$, $f^k([a_k, b_k]) \in [1 - \eps, 1]$, and $f^k$ has exactly one linear piece for each of the intervals $[0, a_k]$ and $[b_k, 1]$.
    
    These are true for the base case $k = 1$ for $a_1 \in (0, x_1)$ and $b_1 = x_p$.
    
    If the claim holds for $k$, then there is some $a_{k+1} \in (0, a_k)$ and $b_{k+1}\in (b_k, 1)$ such that $f(a_{k+1}) = f(b_{k+1}) = a_k$.
    Then, $f^{k+1}(a_{k+1}) = f^{k+1}(b_{k+1}) = 1-\eps$ and $f^{k+1}([0, a_{k+1}]) = f^{k+1}([b_{k+1},1]) = [0, 1 - \eps]$.
    For all $x \in [0, a_{k+1}]$, $f^j(x) \leq 1 - \eps$ for all $j \leq k+1$. Hence, $f^{k+1}$ is linear on $[0, a_{k+1}]$ (and also $[b_{k+1}, 1]$.
    Because $f([x_1, x_p]) = [x_1, x_p]$, $f^{k+1}([a_{k+1}, b_{k+1}]) \subseteq [x_1, x_p] \subseteq [1 - \eps, 1]$.
    The claim then holds for $k+1$.
    
    Thus, the piecewise linear mapping $g$ with boundaries $g(0)=0$, $g(a_k) = 1-\eps$, $g(b_k) = 1-\eps$, and $g(1) = 0$ is an $\eps$-approximation of $f$.
    Because $g$ has three pieces and contains the origin, it can be exactly represented by a linear combination of four ReLUs, and hence as a depth-2 neural network of width 3.
\end{proof}

\begin{proposition}\label{prop:need-concavity}
For $p \geq 3$ and for sufficiently small $\eps > 0$, there exists a symmetric unimodal mapping $f$ with a chaotic $p$-cycle such that for any $k$, there exists $g \in \mathcal{N}(3,2)$ with
\[L_\infty(f^k, g) \leq \eps.\]
\end{proposition}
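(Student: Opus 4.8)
The plan is to take $f$ to be the ``flat tent map'' $x\mapsto\min(x,1-x)$ perturbed only inside a tiny window $W:=[\tfrac12-\eps,\tfrac12+\eps]$ around $1/2$, with the perturbation engineered so that Proposition~\ref{prop:half-inc-cycle} yields an increasing $p$-cycle. Fix a small $\eps>0$ and pick reals $q_1,\dots,q_p$ with $\tfrac12<q_1<\tfrac12+\eps$, with $\tfrac12-\eps<q_2<q_3<\dots<q_p<\tfrac12$, and with $1-q_1<q_2$ (all easy to arrange). Set $f(x)=\min(x,1-x)$ for $x\notin W$; on $[\tfrac12-\eps,\tfrac12]$ let $f$ be the piecewise-linear interpolant through the nodes $(\tfrac12-\eps,\tfrac12-\eps),(1-q_1,q_2),(q_2,q_3),\dots,(q_{p-1},q_p),(\tfrac12,q_1)$; and extend to $[\tfrac12,1]$ by $f(x)=f(1-x)$. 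Since the listed abscissae are strictly increasing and so are the corresponding ordinates, this interpolant is strictly increasing, so $f$ is continuous, piecewise-linear, symmetric, and unimodal with maximizer $\tfrac12$ and maximum $h:=q_1<\tfrac12+\eps$. By construction $f^j(\tfrac12)=q_j$ for $1\le j\le p$ (using symmetry at the one step where the orbit exceeds $\tfrac12$), hence $f^2(\tfrac12)<\dots<f^p(\tfrac12)\le\tfrac12<f(\tfrac12)$, and Proposition~\ref{prop:half-inc-cycle} gives $f$ an increasing $p$-cycle, which in fact lies in $W$.

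Next I would record the (essentially trivial) global dynamics: $f$ is the identity on $[0,\tfrac12-\eps]$, so $f^k(x)=x$ there for every $k$; for $x\in[\tfrac12+\eps,1]$ we have $f(x)=1-x\in[0,\tfrac12-\eps]$, so $f^k(x)=1-x$ for all $k\ge1$; and $f(W)=[\tfrac12-\eps,h]\subseteq W$, so $W$ is forward invariant and $f^k(W)\subseteq W$ for every $k$. Consequently, every iterate $f^k$ coincides on $[0,\tfrac12-\eps]\cup[\tfrac12+\eps,1]$ with the fixed trapezoidal map $g(x):=\mathrm{ReLU}(x)-\mathrm{ReLU}(x-\tfrac12+\eps)-\mathrm{ReLU}(x-\tfrac12-\eps)$, which on $[0,1]$ equals $x$, then the constant $\tfrac12-\eps$ on $W$, then $1-x$; on $W$ both $f^k$ and $g$ take values in $W$, an interval of length $2\eps$. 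Since $g$ is exactly a depth-$2$, width-$3$ ReLU network we have $g\in\mathcal N(3,2)$ and $L_\infty(f^k,g)\le2\eps$; running the construction with $\eps/2$ in place of $\eps$ gives $L_\infty(f^k,g)\le\eps$, with a single $g$ serving all $k$.

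The only step that requires any care is producing a valid symmetric unimodal $f$ whose forward orbit of $1/2$ satisfies the hypothesis of Proposition~\ref{prop:half-inc-cycle}; but this reduces to the short explicit check above that the prescribed interpolation nodes have strictly increasing ordinates (so the interpolant is strictly monotone and matches $\min(x,1-x)$ at $\tfrac12-\eps$). Everything else---unimodality, symmetry, continuity, the global dynamics, and the approximation bound---is immediate once $W$ is forward invariant. Conceptually, this $f$ still carries a chaotic $p$-cycle, so the topological (matrix) argument inside the proof of Lemma~\ref{lemma:increasing-constant-osc} still forces $f^k$ to have $\Omega(\rhoinc{p}^k)$ monotone pieces, yet all of those oscillations are trapped in the height-$2\eps$ window $W$. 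This is precisely the failure mode that the concavity hypothesis of Theorems~\ref{thm:main-linf} and~\ref{thm:main-l1} rules out, exactly mirroring the role the symmetry hypothesis plays in Proposition~\ref{prop:need-symmetry}.
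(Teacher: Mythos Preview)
Your proof is correct and follows essentially the same approach as the paper: build a symmetric (non-concave) unimodal map whose chaotic $p$-cycle is confined to a tiny window around $\tfrac12$, so that all oscillations of $f^k$ have height $O(\eps)$ and a single three-piece linear $g$ suffices. Your variant is in fact slightly cleaner than the paper's, since by taking $f$ to equal $\min(x,1-x)$ outside the window (rather than a slope slightly larger than $1$, as the paper does) you get trivial outer dynamics and one fixed $g$ for all $k$, and you get the cycle via Proposition~\ref{prop:half-inc-cycle} instead of writing it down explicitly.
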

\begin{proof}
    Let $x_j = \frac{1}{2} - \frac{p-1-j}{2(p-1)}\eps$ for all $j \in [p-1]$ and $x_p = \frac{1}{2} + \frac{\eps}{2}$.
    Let $f$ be a piecewise-linear function with boundaries
    \begin{align*}
        f(0) = 0, \
        f\paren{\frac{1}{2} - \frac{\eps}{2}} = \frac{1}{2}-\frac{p-2}{p-1} \cdot \frac{\eps}{2}, \
        f\paren{\frac{1}{2} - \frac{\eps}{2(p-1)}} = \frac{1}{2}, \ f\paren{\frac{1}{2}} = \frac{1}{2} + \frac{\eps}{2}, \\
        f\paren{\frac{1}{2} + \frac{\eps}{2(p-1)}} = \frac{1}{2},  \ 
        f\paren{\frac{1}{2} + \frac{\eps}{2}} = \frac{1}{2}-\frac{p-2}{p-1} \cdot \frac{\eps}{2},\
        f(1) = 0.
    \end{align*}
    
    We visualize $f$ for $p=3$ in Figure~\ref{fig:non-concave}. Note that $f$ is symmetric and unimodal and has an increasing $p$-cycle $x_1 < \dots < x_p$. 
    It is \textit{not} concave because $f'(x) = 1$ for $x \in [x_1, x_{p-2}]$ and $f'(x) = 2(p-1)$ for $x \in [x_{p-2}, x_{p-1}]$.
    
    Using a very similar argument to argument from the proof of Proposition~\ref{prop:need-symmetry}, for all $k$, there exists $a_k < b_k$ such that $f^k$ is linear on $[0, a_k]$ and $[b_k, 1]$ and $f^k([a_k, b_k]) \in [\frac{1}{2} - \eps, \frac{1}{2} + \eps]$.
    As before, there exists a piecewise linear function with three pieces (which can be thought of as a depth-2 neural network of width 3) that $\eps$-approximates $f$.
\end{proof}
\section{Additional Proofs for Section~\ref{sec:phase}}\label{asec:phase}

\subsection{Preliminaries}

Before reintroducing and proving the theorems about the doubling and chaotic regime, we introduce topological entropy and define VC-dimension.

\subsubsection{Topological Entropy}
Topological entropy is a well-known measure of function complexity in dynamical systems that measures the ``bumpiness'' of a mapping. 
Like we do with chaotic itineraries, \cite{bzl20}~draw analogies between the neural network approximability of $f^k$ and the topological entropy of $f$.
We do not give a rigorous definition of topological entropy, but we include a well known result connecting topological entropy to the number of monotone pieces (not constant-sized crossings), which is stated as Lemma~3 of the aforementioned work.

\begin{lemma}\label{lemma:htop}[\cite{mw80, young81}]
    If $f:[0,1]\to[0,1]$ is continuous and piece-wise monotone, then the topological entropy of $f$ satisfies the following:
    \[\htop(f) = \lim_{k \to \infty} \frac{1}{k} \log \oscm(f^k).\]
\end{lemma}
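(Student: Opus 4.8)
This is the classical lap-number formula for the topological entropy of interval maps (Misiurewicz--Szlenk; see also \cite{young81}), and since the paper only invokes it I will sketch the argument rather than carry out the full proof. Write $c(f^n) := \oscm(f^n)$ for the lap number. The plan is, first, to show the limit on the right-hand side exists: if $f^n$ is monotone on a lap $J$ then $f^n(J)$ is an interval on which $f^m$ has at most $c(f^m)$ laps, so $c(f^{m+n}) \le c(f^m)\,c(f^n)$; subadditivity of $\log c(f^n)$ then gives $\lim_n \tfrac1n \log c(f^n) = \inf_n \tfrac1n \log c(f^n) =: s \ge 0$. It remains to prove $\htop(f) = s$, which I would do through Bowen's definition via $(n,\varepsilon)$-spanning and $(n,\varepsilon)$-separated sets, establishing the two inequalities separately and then letting $\varepsilon \to 0$.

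For $\htop(f) \le s$: the key structural fact I would use is that every turning point of $f^n$ lies in $\bigcup_{i=0}^{n-1} f^{-i}(C)$, where $C$ is the finite set of turning points of $f$; hence on any lap $J$ of $f^n$ and any $i < n$, the restriction $f^i|_J$ is monotone, so $f^i(J)$ is an interval of length at most $1$. I would then build an $(n,\varepsilon)$-spanning set by subdividing each lap $J$ greedily: sweeping along $J$, start a new subinterval whenever the orbit $(x, f(x), \dots, f^{n-1}(x))$ has moved by $\varepsilon$ in at least one of its $n$ coordinates. Because the variations of $f^i|_J$ over any partition of $J$ sum to $|f^i(J)| \le 1$ for each fixed $i$, the total variation over all $n$ coordinates is at most $n$, so $J$ is cut into at most $n/\varepsilon$ pieces. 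This gives a spanning set of size at most $(n/\varepsilon)\,c(f^n)$, hence $\limsup_n \tfrac1n \log s_n(\varepsilon) \le s$ for every $\varepsilon > 0$.

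For the reverse inequality $\htop(f) \ge s$ I would produce exponentially many $(n,\varepsilon)$-separated points for some fixed $\varepsilon$. The idea is to discard the laps of $f^n$ whose image is small while losing only a subexponential fraction: a lap of $f^n$ crossing many laps of $f$ has a large image, and more carefully one shows there is a fixed nondegenerate interval $U$ such that the number of laps $J$ of $f^n$ with $f^n(J) \supseteq U$ is at least $c(f^n)/\mathrm{poly}(n)$ (i.e., up to a bounded power, the growth rate is unchanged if one counts only ``full'' laps). On each such lap $f^n$ is a monotone surjection onto an interval containing $U$, so picking the preimage of the midpoint of $U$ in each yields a set that is $(n,\tfrac12|U|)$-separated already in the $n$-th coordinate; thus $r_n(\tfrac12|U|) \ge c(f^n)/\mathrm{poly}(n)$ and $\htop(f) \ge s$. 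Combined with the upper bound, $\htop(f) = s = \lim_n \tfrac1n \log \oscm(f^n)$.

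The hard part will be exactly the ``enough laps have a large image'' claim in the lower bound: a priori the oscillations of $f^n$ can shrink geometrically (precisely the pathology used in Propositions~\ref{prop:need-symmetry} and~\ref{prop:need-concavity}), so one must argue that shrinking turning values do not destroy the exponential count of well-separated orbits. The resolution is that the relevant horseshoe structure lives in how the large-image laps are nested under $f$, not at the tips of the small oscillations, and making the loss merely polynomial in $n$ requires a careful combinatorial/covering argument (alternatively one can invoke Misiurewicz's characterization of $\htop$ via $n$-horseshoes, or the variational principle together with invariant measures of entropy arbitrarily close to $s$). I would also need the standard facts that the spanning-set and open-cover definitions of $\htop$ agree and that $\htop$ is a conjugacy invariant, both routine.
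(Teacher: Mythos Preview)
The paper does not prove this lemma at all; it explicitly presents it as a classical result from the dynamical systems literature (citing \cite{mw80,young81}) and uses it as a black box to relate topological entropy to the growth rate of $\oscm(f^k)$. So there is nothing in the paper to compare your argument against.

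That said, your sketch is a faithful outline of the classical Misiurewicz--Szlenk proof: the submultiplicativity of lap numbers giving existence of the limit, the spanning-set upper bound via bounded total variation on laps, and the separated-set lower bound via a polynomial-loss ``full laps'' count (equivalently, Misiurewicz's horseshoe lemma). You have also correctly identified where the real work lies, namely showing that enough laps of $f^n$ have images containing a fixed nondegenerate interval. One minor quibble: the claim ``the spanning-set and open-cover definitions of $\htop$ agree'' is standard on compact metric spaces, but your remark that $\htop$ is a conjugacy invariant is not actually needed for this statement. Overall, had the paper included a proof, yours would be essentially the expected one.
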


\subsubsection{VC-Dimension}
We capture the complexity of the mappings produced by repeated application of $f$, by measuring the capability of a family of iterates to fit arbitrarily-labeled samples with the VC-dimension. 
For some threshold parameter $t \in (0, 1)$, we first define a hypothesis class  that we use to cast this family of iterated functions as Boolean-valued.

\begin{definition}
    For some unimodal $f: [0,1]\to[0,1]$ and threshold $t \in (0,1)$, let \[\hyp{f,t} := \{\thres{t}{f^k}: k \in \mathbb{N}\}\]
    be the Boolean-valued hypothesis class of classifiers of composed functions.
\end{definition}

The following is the standard definition of the VC-dimension:

\begin{definition}[\cite{vc13}]
    For some hypothesis class $\mathcal{H}$ containing functions $[0,1] \to \{0,1\}$, we say that $\mathcal{H}$ \emph{shatters} samples $x_1, \dots, x_d \in [0,1]$ if for every labeling of the samples $\sigma_1, \dots, \sigma_d \in \bit$, there exists some $h\in \mathcal{H}$ such that $h(x_i) = \sigma_i$ for all $i \in [d]$. 
    The \emph{VC-dimension} of $\mathcal{H}$, $\vc(\mathcal{H})$ is the maximum $d$ such that there exists $x_1, \dots, x_d \in [0,1]$ that $\mathcal{H}$ shatters.
\end{definition}

$\vc(\hyp{f,t})$ will be a useful measurement of complexity of the mapping $f$, which as we show is tighly connected with the notion of periodicity and oscillations. 
Notably, this is a measurement of the complexity of iterated maps and is \textit{not} a typical formulation of VC-dimension for neural networks, since those typically would consider a fixed depth and a fixed width, but variable values for the weights, rather than fixed $f$ and variable $k$.



\subsection{Proof for Theorem~\ref{thm:properties-doubling} and \ref{thm:properties-chaotic}}

\thmpropertiesdoubling*
\begin{proof}
    Claim 1 follows from a somewhat involved argument in Appendix~\ref{assec:doubling-monotone} that uses an inductive argument to compare the behavior of a mapping with a maximal $p$-cycle to one with a maximal $\frac{p}{2}$-cycle.
    By categorizing intervals of $[0,1]$ based on how $f^k$ behaves on that interval, we analyze how $f^{k+1}$ in turn behaves, which leads to a bound on the monotone pieces $\oscm(f^k)$.
    
    Claim 2 is a simple consequence of Claim 1, by using the fact that a ReLU network can piecewise approximate each monotone piece of $f^k$.
    This argument appears in Appendix~\ref{assec:doubling-nn}.
    
    Claim 3 follows easily from Claim 1 and Lemma~\ref{lemma:htop}. We note that this derivation about the topological entropy and the periodicity of $f$ is a known fact in the dynamical systems community.
    
    Claim 4 relies on another recursive argument that frames VC-dimension in terms of the possible trajectories of $f^k(x)$ for fixed $x$ and changing $k$. 
    We characterize these trajectories by making use of Regular Expressions and by bounding the corresponding VC dimension in Appendix~\ref{assec:doubling-vc}.
\end{proof}

\thmpropertieschaotic*
\begin{proof}
    Claims 1 and 2 are immediate implications Theorems~1.5 and 1.6 of~\cite{cnpw19}.
    Claim 3 follows by applying Lemma~\ref{lemma:htop} to Claim 1 (again this derivation about the topological entropy is basic in the literature on dynamical systems).
    
    The most interesting part of the theorem is the last claim. We prove Claim 4 in Appendix~\ref{asec:chaotic} by showing that the VC-dimension of the class is at least $d$ for all $d \in \N$.
    The argument relies on the existence of an infinite number of cycles of other lengths, as guaranteed by Sharkovsky's Theorem.
\end{proof}

\subsection{Proof of Theorem~\ref{thm:properties-doubling}, Claim 1}\label{assec:doubling-monotone}

We restate Claim~1 of the theorem as the following proposition and prove it.

\begin{proposition}[Claim 1 of Theorem~\ref{thm:properties-doubling}]\label{prop:properties-doubling-osc}
    Suppose $f$ is a symmetric unimodal mapping whose maximal cycle is of length $p = 2^q$. Then, for any $k \in \N$, $M(f^k) = O((4k)^{q+1})$.
\end{proposition}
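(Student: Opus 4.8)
First I would reduce the count of monotone pieces to a count of preimages of the critical point. Write $c=\tfrac12$ for the unique turning point of $f$. Since $(f^k)'=\prod_{i<k} f'\circ f^i$, a sign change of $(f^k)'$ can occur only where some factor $f'(f^i(x))$ changes sign, i.e.\ where $f^i(x)=c$; hence the turning points of $f^k$ lie in $\bigcup_{i=0}^{k-1} f^{-i}(c)$ (at a superstable parameter where $c$ is itself periodic one merely over‑counts, which is harmless for an upper bound). Therefore
\[\oscm(f^k)\ \le\ 1+\sum_{i=0}^{k-1}\bigl\lvert f^{-i}(c)\bigr\rvert ,\]
so it suffices to prove the uniform bound $\lvert f^{-i}(c)\rvert=O\bigl((4i)^{q}\bigr)$; summing over $i<k$ then gives $\oscm(f^k)=O((4k)^{q+1})$. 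The case $f(\tfrac12)\le\tfrac12$ is trivial, since then $f([0,1])\subseteq[0,\tfrac12]$, on which $f$ is monotone, so $\oscm(f^k)\le 2$; assume henceforth $f(\tfrac12)>\tfrac12$.

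I would prove $\lvert f^{-i}(c)\rvert=O((4i)^{q})$ by induction on $q$, phrasing it for all (not necessarily symmetric) unimodal maps whose maximal period divides $2^q$ and invoking symmetry only where needed. For the base case $q=0$ there is no $2$‑cycle, and here the point is that preimages of $c$ cannot proliferate: of the at most two preimages of any point lying below the maximal value $f(c)$, only a bounded number can again lie below $f(c)$, the others falling outside the range $[0,f(c)]$ of $f$ and hence having no preimage at all. Making this precise (using the absence of a $2$‑cycle to control the orbit of the critical value $f(c)$, equivalently the kneading sequence) yields $\lvert f^{-i}(c)\rvert=O(1)$, hence $\oscm(f^k)=O(k)$.

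For the inductive step ($q\ge 1$) I would renormalize. In the doubling regime with maximal period $2^q$ there is a period‑$2$ point and, around it, a pair of disjoint intervals $U_0\ni c$ and $U_1$ with $f(U_0)\subseteq U_1$ and $f(U_1)\subseteq U_0$, such that the first‑return map $g:=(f^2)|_{U_0}$, rescaled to $[0,1]$, is again unimodal and has maximal period $2^{q-1}$. Given $x\in f^{-i}(c)$, let $j\le i$ be the first time its orbit enters $U_0\cup U_1$; thereafter the orbit alternates between $U_0$ and $U_1$, so (up to shifting $j$ by one for parity) $f^{j}(x)\in U_0$ and $f^{2m}(f^{j}(x))=c$ with $m=(i-j)/2$, i.e.\ $f^{j}(x)\in g^{-m}(\mathrm{crit}(g))$ — at most $O((4m)^{q-1})$ choices by the inductive hypothesis. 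Moreover $x$ is one of boundedly many $f^{-j}$‑preimages of that point whose orbit stays in the transient region $R=[0,1]\setminus(U_0\cup U_1)$ until time $j$. Summing over the entry time $j$,
\[\bigl\lvert f^{-i}(c)\bigr\rvert\ \le\ O(1)\cdot\sum_{j=0}^{i} O\!\left(\bigl(4\lfloor (i-j)/2\rfloor\bigr)^{q-1}\right)\ =\ O\!\left(\sum_{m=0}^{\lfloor i/2\rfloor}(4m)^{q-1}\right)\ =\ O\bigl((4i)^{q}\bigr),\]
which closes the induction; the extra factor of $i$ from the sum over entry times is exactly what lifts the exponent from $q-1$ to $q$.

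The main obstacle is the inductive step, and within it two points. First, one must set up the renormalization carefully: existence of the window $U_0,U_1$ for a doubling‑regime unimodal map, unimodality of $(f^2)|_{U_0}$ (which holds because $U_0$ contains $c$ but no preimage of $c$, since $f(U_0)\subseteq U_1$), and — crucially — that the renormalized map has maximal period \emph{exactly} $2^{q-1}$; this is where the kneading‑sequence structure theory in the cited dynamical‑systems references is used, and it is also what underlies the hypothesis that the maximal cycle be a \emph{primary} $2^q$‑cycle. Second, one must bound by $O(1)$ the number of preimages of a point of $U_0\cup U_1$ whose orbit stays in $R$ for all preceding times: the enabling observation is that $R$ contains no turning point of $f$ (namely $c\in U_0$), so $f$ is injective on each of the at most three components of $R$ and the transient backward dynamics is folding‑free, which prevents exponential proliferation; converting this into a clean uniform bound, rather than a naive $3^{j}$, is the delicate part of the bookkeeping. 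The base case, the passage from the preimage bound to $\oscm$, and the degenerate case are routine.
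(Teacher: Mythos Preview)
Your plan is correct and would go through. Both you and the paper induct on $q$ via the same renormalization (restricting $f^2$ to an interval around $c$ bounded by the fixed point $x^*>\tfrac12$ and its other preimage, then rescaling to a unimodal map $h$ with maximal period $2^{q-1}$), but the bookkeeping is organized quite differently.

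The paper never isolates $|f^{-i}(c)|$. Instead it decomposes $[0,1]$ into labelled pieces of $f^k$ (``approach'', ``departure'', ``$i$-left/right valley'', ``$i$-left/right peak''), where an $i$-valley on an interval $J$ means $f^k|_J$ is an affine reparametrization of $h^i$ and an $i$-peak is one application of $f$ away from an $i$-valley. A combinatorial lemma pins down exactly which sequence of pieces appears in $f^k$ (roughly $\mathsf{App}\,\mathsf{LV}_0\mathsf{LP}_1\mathsf{LV}_1\cdots\mathsf{RV}_0\,\mathsf{Dep}$), and then the maxima/minima count for each piece is read off from that of $h^i$. Summing over pieces gives the recurrence $\oscm(\mathcal{D}_q^k)\lesssim\sum_{i\le k}\oscm(\mathcal{D}_{q-1}^i)$ directly.

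Your preimage-count route is more classical and arguably cleaner: the reduction $\oscm(f^k)\le 1+\sum_{i<k}|f^{-i}(c)|$ plus an entry-time decomposition into the renormalization window replaces the piece taxonomy entirely. The step you flag as ``delicate'' is in fact short once you note that with $U_0=[1{-}x^*,x^*]$ and $U_1=[x^*,x_{\max}]$ the transient region $R$ has only two components $R_1=[0,1{-}x^*)$ and $R_2=(x_{\max},1]$; since $R_2$ lies above $x_{\max}$ it has empty $f$-preimage, while $f(R_1),f(R_2)\subset R_1\cup U_0$, so any $R$-constrained backward orbit is of the form $R_1^{\,j}$ or $R_2R_1^{\,j-1}$ --- at most two in total, not three. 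The paper's approach, by contrast, buys a more explicit structural picture of $f^k$ (where each oscillation sits and how tall it is), which it reuses for the constant-gap $L_\infty$ bounds elsewhere; your approach gives only the count, but that is all the proposition asks for.
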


In order to bound the number of times $f$ oscillates based on its power-of-two periods, we categorize $f$ by its cyclic behavior and the bound the number of local maxima and minima $f$ has based on its characterization.

\begin{definition}[Category]
    For $q \geq 0$ and $z \in \bit$, let $\category{q,z}$ contain the set of all symmetric unimodal functions $f$ such that (1) $f$ has a $2^q$-cycle, (2) $f$ does not have a $2^{q+1}$-cycle, and (3) $\thres{1/2}{f^{2^q}(\frac{1}{2})} = z$.
\end{definition}


We abuse notation to let $\oscm(\category{q,z}^k) = \max_{f \in \category{q,z}} \oscm(f^k)$.
Thus, for $f$ given in the theorem statement with a $2^q$-cycle, but not a $2^{q+1}$-cycle, our final bound is obtained by 
\[\oscm(f^m) \leq \max\{\oscm(\category{q,0}^m), \oscm(\category{q,1}^m)\}.\]
We let $\oscm(f, a, b)$ represent the number of monotone pieces of $f$ on the sub-interval $[a, b] \subset [0, 1]$. 

We build a large-scale inductive argument by first bounding base cases $\oscm(\category{0,0}^k)$ and $\oscm(\category{0, 1}^k)$. 
Then, we relate $\oscm(\category{q, z}^k)$ to $\oscm(\category{q-1, 1-z}^k)$ to get the desired outcome. 

Before beginning the proof, we state a slight refinement of the part of the theorem, which takes into account the newly-introduced categories, from which the claim follows.
\begin{proposition}\label{prop:osc-category-bound}
For any $k \in \N$, $q\geq 0$, and $z \in \bit$, 
\[\oscm(\category{q, z}^k) \leq \begin{cases}
   2(3q)^k & \text{$q$ is even, $z = 0$, or $q$ is odd, $z = 1$}\\
   2(3q)^{k+1} & \text{$q$ is even, $z = 1$, or $q$ is odd, $z = 0$}.
\end{cases}\]
\end{proposition}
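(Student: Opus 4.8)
Proposition~\ref{prop:osc-category-bound} is the refinement of Claim~1 of Theorem~\ref{thm:properties-doubling} through the categories $\category{q,z}$, so it suffices to establish the stated piece-count bound for every $\category{q,z}$. The plan is a nested induction: the \emph{outer} induction is on $q$ and uses the period-doubling (renormalization) structure to compare a map with a maximal $2^q$-cycle to one with a maximal $2^{q-1}$-cycle on a suitable first-return interval; the \emph{inner} argument is a bookkeeping on a constant-size list of interval types that describes how $\oscm(f^{k})$ is built up as $k$ grows. Each level of the outer induction will be shown to cost only a factor linear in $k$ (which I will record as the concrete constant $3k$), and there are exactly $q$ levels, which is what produces the base $3k$ and the exponent $q$ (or $q+1$). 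Throughout I assume $k\ge 1$; the case $k=0$ is immediate.

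\textbf{Base cases }$q=0$. If $f\in\category{0,0}$ then $\thres{1/2}{f(\tfrac12)}=0$, i.e.\ the maximum value $f(\tfrac12)<\tfrac12$, so $f([0,1])\subseteq[0,\tfrac12)$, and $f$ is strictly increasing on $[0,\tfrac12)$. Composing a two-piece map with a monotone one keeps two pieces, so a trivial induction on $k$ gives $\oscm(f^{k})=2=2(3k)^{0}$. If $f\in\category{0,1}$ then $f(\tfrac12)\ge\tfrac12$ but $f$ has no $2$-cycle; its unique interior fixed point $p^{*}\in[\tfrac12,1]$ is non-repelling from the relevant side, and the orbit of the critical point is eventually monotone, so each extra composition can create only a bounded number (at most $6$) of new turning points — those arising from the critical orbit crossing the two preimages of $[p^{**},p^{*}]$ — giving $\oscm(f^{k})\le 6k=2(3k)^{1}$. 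These match the two lines of the proposition for $q=0$.

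\textbf{Recursive step }$q\ge1$. Let $f\in\category{q,z}$. By Sharkovsky's theorem (Theorem~\ref{thm:sharkovsky}) $f$ has a $2^{q-1}$-cycle, and the doubling picture supplies a closed interval $J=[\tfrac12-\delta,\tfrac12+\delta]$ (symmetric about $\tfrac12$, using that $f$ is symmetric) with $f^{2}(J)\subseteq J$ and $J\cap f(J)$ contained in a single point; the first-return map $f^{2}|_{J}$, affinely rescaled to $[0,1]$ with a reflection of the output coordinate (needed because $\tfrac12$ is a local \emph{minimum} of $f^{2}$), is again a symmetric unimodal map $g$. Its maximal cycle has length $2^{q-1}$: a $2^{q-1}$-cycle of $g$ comes from the portion of $f$'s $2^{q}$-cycle lying in $J$, and a $2^{q}$-cycle of $g$ would yield a $2^{q+1}$-cycle of $f$, which $\category{q,z}$ forbids; and because the output reflection sends values above $\tfrac12$ to values below $\tfrac12$, $\thres{1/2}{g^{2^{q-1}}(\tfrac12)}=1-z$, so $g\in\category{q-1,1-z}$. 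Writing $J=[a,b]$, I would then bound $\oscm(f^{k},a,b)$ by $2\,\oscm\!\big(\category{q-1,1-z}^{\lceil k/2\rceil}\big)$ since $f^{2m}|_{J}=(f^{2}|_{J})^{m}$, and bound the contribution of $f^{k}$ on $[0,1]\setminus J$ by grouping points according to the first time their orbit enters $J$ and tracking the finitely many interval types (inside $J$; mapped onto $J$; transient toward $J$; each half of the immediate basin of $J$) under one application of $f$. This yields the recursion
\[\oscm\!\big(\category{q,z}^{k}\big)\ \le\ 3k\cdot\oscm\!\big(\category{q-1,1-z}^{k}\big).\]

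\textbf{Unwinding and the main obstacle.} Iterating the recursion $q$ times reaches $\category{0,\,z\oplus q}$; here $z\oplus q=0$ precisely in the two cases on the first line of the proposition, so the base bound $\le 2$ gives $\oscm(\category{q,z}^{k})\le 2(3k)^{q}$, while $z\oplus q=1$ in the other two cases, so the base bound $\le 6k=2(3k)^{1}$ gives $\oscm(\category{q,z}^{k})\le 2(3k)^{q+1}$. Taking the maximum over $z$ proves Proposition~\ref{prop:properties-doubling-osc} (hence Claim~1 of Theorem~\ref{thm:properties-doubling}), since $2(3k)^{q+1}=O((4k)^{q+1})$. The main obstacle is exactly the linear-in-$k$ (as opposed to exponential-in-$k$) control in the recursive step: one must show that, although the raw preimage tree of $J$ branches, in the doubling regime only the ``transient toward $J$'' interval type can proliferate and does so only linearly in $k$, so that the number of distinct lap sequences of $f^{k}$ outside $J$ is $O(k\cdot\oscm(\category{q-1,1-z}^{k}))$ rather than $2^{\Theta(k)}$. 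This is the ``categorize the intervals of $[0,1]$ by how $f^{k}$ acts on them and read off how $f^{k+1}$ acts'' bookkeeping the text alludes to; once it is in place, the renormalization set-up, the telescoping, and the parity bookkeeping for $z\oplus q$ are routine.
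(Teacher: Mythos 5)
Your strategy matches the paper's: both proceed by renormalization via a first-return map $f^2$ on the symmetric interval around $\tfrac12$ to drop from $\category{q,z}$ to $\category{q-1,1-z}$, then telescope over $q$ levels and track the parity $z\oplus q$ to land in the right base case. The key linear-in-$k$ recursion you write down is also the right shape, and your two base cases ($\oscm(\category{0,0}^k)=2$ and $\oscm(\category{0,1}^k)=O(k)$) agree with the paper's.

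However, you explicitly flag as ``the main obstacle'' the very claim that makes the recursion work — that the contribution outside the first-return interval grows only linearly in $k$ — and then assert it rather than prove it. That is precisely the content the paper supplies via its explicit piece taxonomy and Lemma~\ref{lemma-piece-decomposition}: there $f^k$ is shown to decompose into exactly $2k+3$ typed pieces ($\app$, $\dep$, $\lv_j$, $\lp_j$, $\rv_j$, $\rp_j$ for $j$ up to roughly $k/2$), and the number of local extrema in each piece of index $j$ is controlled by $\maxima(\category{q-1,1-z}^j)$ or $\minima(\category{q-1,1-z}^j)$; summing the $O(k)$ pieces then yields the linear factor. Without some version of that transition-rule argument ($\app \to \app\lv_0$, $\lv_j \to \lp_{j+1}$, $\lp_j \to \lv_j$, etc.), the preimage tree of $J$ could in principle branch exponentially, and nothing in your sketch rules that out. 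So the outline is an accurate map of the paper's proof, but the central lemma is identified rather than established.
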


Thus, proving Proposition~\ref{prop:osc-category-bound} is sufficient to prove Proposition~\ref{prop:properties-doubling-osc}.
The remainder of the section proves Proposition~\ref{prop:osc-category-bound}.

\subsubsection{Special Case Proof for $q =1$}\label{sec:ub-special-case}
We show that $\oscm(\category{0,0}^k) = 2$ and $\oscm(\category{0,1}^k) = 2k$.

For $f_r$ as defined above, we characterize the number of oscillations that are added by increasing $r$ past $\frac{1}{2}$, where super-stability of a fixed point exists.
Figure \ref{fig:ub-specific} illustrates those results.

\begin{figure}
    \centering
    \includegraphics[width=0.49\textwidth]{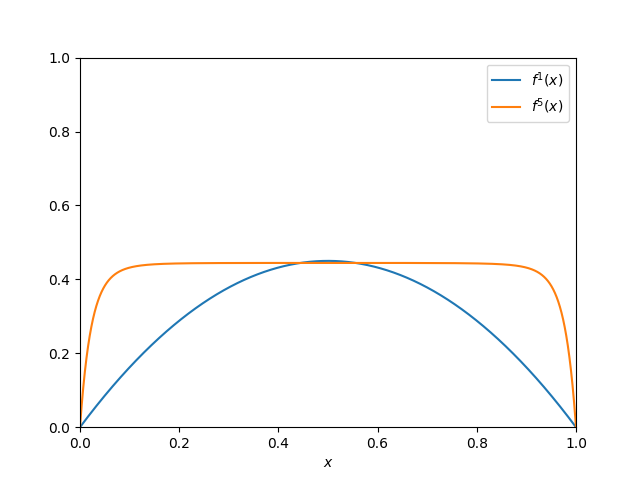}
    \includegraphics[width=0.49\textwidth]{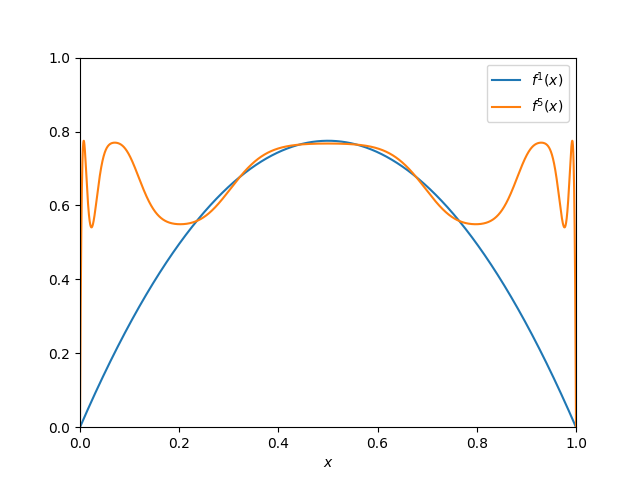}
    \caption{The base case results of  Proposition~\ref{prop:osc-category-bound} demonstrate the number of oscillations of $f^k$ increases when $f$ moves from $\category{0, 0}$ to $\category{0, 1}$.
    The plots show $f$ and $f^5$ for $f \in \category{0, 0}$ ($f = \flog{0.45}$) on the left and $f \in \category{0,1}$ ($f =\flog{0.775}$) on the right.}
    \label{fig:ub-specific}
\end{figure}

To analyze the oscillation patterns of $f^k$, we define several ``building blocks,'' which  represent disjoint pieces of $f^k$.
That is, the interval $[0,1]$ can be partitioned into several sub-intervals, each of which has $f^k$ follow certain simple behavior that we categorize.
We argue that any iterate can be decomposed into those pieces and then show how applying $f$ to $f^{k}$ modifies the pieces in order to analyze $f^{k+1}$. 
Here are the function pieces that we analyze, which map interval $[a, b] \subseteq [0, 1]$ to $[0, 1]$:
\begin{definition}
    For any $f: [0, 1] \rightarrow [0, 1]$ and for any $[a, b] \subseteq [0, 1]$, $f$ is referred to on interval $[a, b]$ as:
    \begin{itemize}
        \item a \textbf{increasing crossing piece} $\ic$ if $f$ is strictly increasing on $[a, b]$ and has $f(a) = 0$, $f(b) > \frac{1}{2}$, and $f'(b) > 0$;
        \item a \textbf{decreasing crossing piece} $\dc$ if $f$ is strictly decreasing on $[a,b]$ and has $f(a) > \frac{1}{2}$, $f(b) = 0$, and $f(a) < 0$;
        \item a \textbf{up peak} $\up$ if there exists some $c \in (a, b)$ that maximizes $f$ on $[a,b]$, $f$ is strictly increasing on $[a, c)$, $f$ is strictly decreasing on $(c, b]$, and $f(x) > \frac{1}{2}$ for all $x \in [a, b]$;
        \item a \textbf{up valley} $\uv$ if there exists some $c \in (a, b)$ that minimizes $f$ on $[a,b]$, $f$ is strictly decreasing on $[a, c)$, $f$ is strictly increasing on $(c, b]$, and $f(x) > \frac{1}{2}$ for all $x \in [a, b]$; and
        \item a \textbf{down peak} $\dpk$ if there exists some $c \in (a, b)$ that maximizes $f$ on $[a,b]$, $f$ is strictly increasing on $[a, c)$, $f$ is strictly decreasing on $(c, b]$, and $f(x) \leq \frac{1}{2}$ for all $x \in [a, b]$.
    \end{itemize}
    If there exists a sequence of intervals $J_1, \dots, J_m$ such that $f$ is piece $\eta_i$ on $J_i$, then we represented $f$ with the string $\eta_1 \dots \eta_m$.
\end{definition}

\begin{figure}
    \centering
    \includegraphics[width=0.6\textwidth]{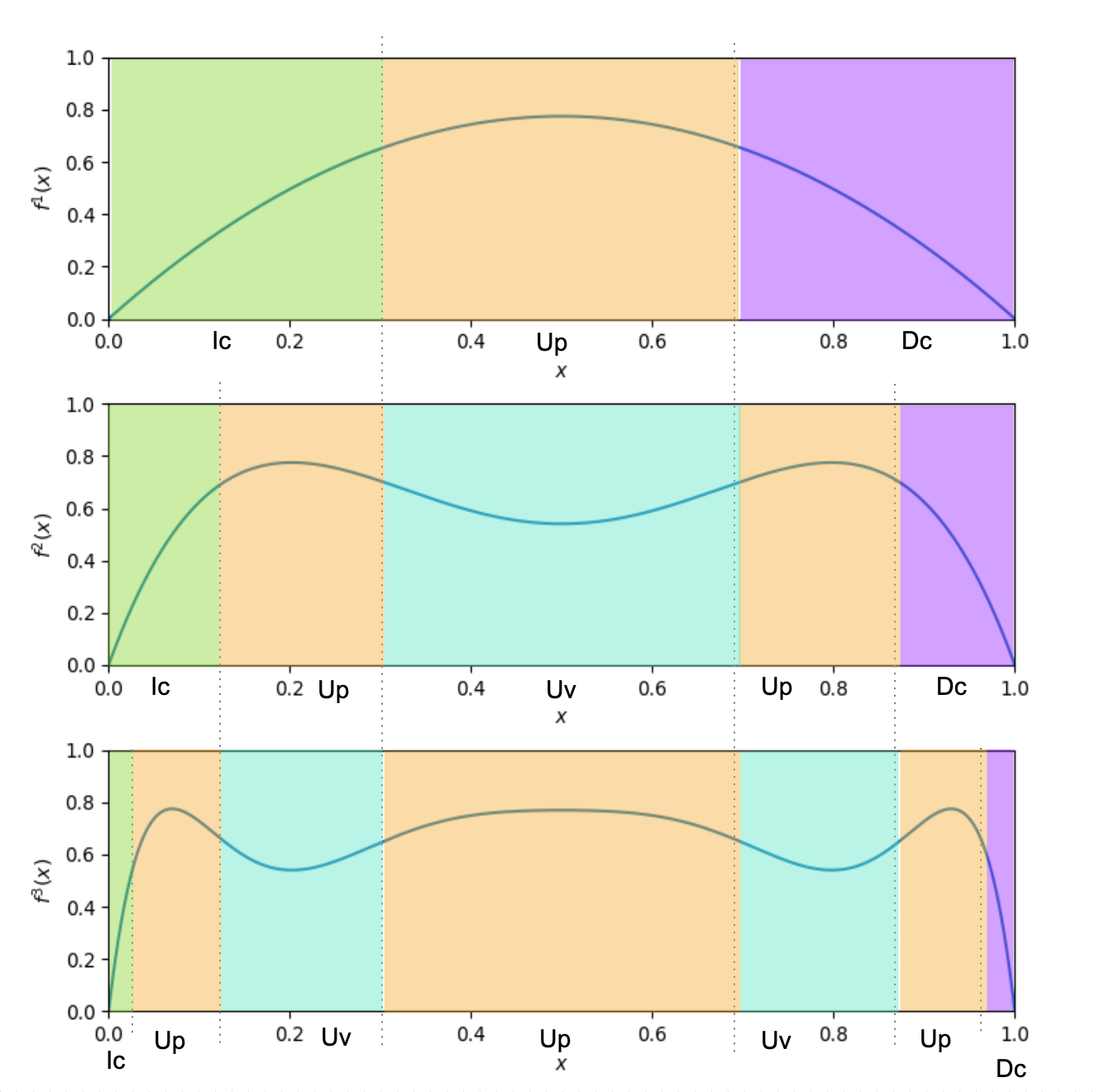}
    \caption{For $f\in\category{0,1}$ ($f = \flog{0.775}$), visualizes the decomposition of $f$, $f^2$, and $f^3$ into $\ic\up\dc$, $\ic\up\uv\up\dc$, and $\ic(\up\uv)^2\up\dc$ respectively.}
    \label{fig:ub-specific-classification}
\end{figure}

We specify an invariant for each part of the theorem, such that proving the invariant is sufficient to prove the proposition:
\begin{enumerate}
    \item If $f \in \category{0, 0}$, then $f^k$ is a down peak on $[0,1]$ for all $k$, and $f^k$ has two monotone pieces.
    \item If $f \in \category{0, 1}$, $f$ is represented by $\ic(\up\uv)^{k-1}\up\dc$. That is, $[0, 1]$ can be partitioned into $2k + 1$ subsequent intervals $J_1, \dots J_{2k+1}$ such that $f^k$ is an increasing crossing piece on $J_1$, a decreasing crossing piece on $J_{2k+1}$ (if $k \neq 0$), an up peak on $J_{2j}$ for $j \in \{1, \dots, k\}$, and a up valley on $J_{2j + 1}$ for $j \in \{1, \dots, k-1\}$.
    Hence, $f^k$ has $k$ distinct maxima and $2k$ monotone pieces.
    Figure \ref{fig:ub-specific-classification} illustrates this invariant.
\end{enumerate}


\textbf{Base Case:}
\begin{enumerate}
    \item For $f\in\category{0,0}$, $f^1 = f$ is trivially a down peak on $[0, 1]$ by the definition of $\category{0,0}$, since $\frac{1}{2}$ maximizes $f$.
    \item For $f\in\category{0,1}$, $f$ can be represented by $\ic\up\dc$. That is, $[0, 1]$ can be decomposed into intervals $I_1$, $I_2$, and $I_3$, on which $f_r$ is an increasing crossing piece, an up peak, and a decreasing crossing piece respectively.
\end{enumerate}

\textbf{Inductive Step:}

We examine what happens to each function piece when $f$ is applied to it. We can use the following analysis, along with the inductive hypothesis to show that $f^{k+1}$ can be decomposed as we expect it to be.
\begin{enumerate}
    \item Examining the \textbf{down peak} proves first invariant for the case when $f\in \category{0,0}$. Because $f$ strictly increases on $[0, \frac{1}{2}]$ and because $f([0, 1]) \subseteq [0, \frac{1}{2}]$ if $f^k$ is a down peak, $f \circ f^k$ also supports a down peak on $[0, 1]$.
    
    Because we inductively assume that $f^k$ is a low peak on $[0, 1]$, it then follows that $f^{k+1}$ is also a down peak on $[0, 1]$.

    \item 
    We first prove a claim, which implies that $f$ has no down peaks for $f \in \category{0,1}$. Let $\xmax = f(\frac{1}{2})$,
    \begin{claim}
    If $f \in \category{0,1}$, then $f((\frac{1}{2}, \xmax]) \subseteq (\frac{1}{2}, \xmax]$.
    \end{claim}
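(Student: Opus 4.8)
The plan is to prove the claim directly by a monotonicity argument exploiting that $f$ is unimodal with maximizer at $\tfrac12$ and that membership in $\category{0,1}$ forces $f(\tfrac12) > \tfrac12$. First I would set $\xmax = f(\tfrac12)$ and observe that, since $f$ is symmetric and unimodal with its unique maximum at $\tfrac12$, the range of $f$ is exactly $[0,\xmax]$; in particular $f(x) \le \xmax$ for every $x$, so it suffices to show the lower bound $f(x) > \tfrac12$ for all $x \in (\tfrac12, \xmax]$. The interval $(\tfrac12,\xmax]$ lies to the right of the maximizer, so $f$ is strictly decreasing there; hence for $x \in (\tfrac12,\xmax]$ we have $f(x) \ge f(\xmax)$, and it remains to prove $f(\xmax) > \tfrac12$.

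To prove $f(\xmax) > \tfrac12$, I would argue by contradiction using the definition of $\category{0,1}$: if $f \in \category{0,1}$ then $\thres{1/2}{f^{2^0}(\tfrac12)} = \thres{1/2}{f(\tfrac12)} = 1$, i.e. $f(\tfrac12) = \xmax \ge \tfrac12$, and moreover (by the strictness built into unimodality and the fact that $f(0)=f(1)=0$) one in fact has $\xmax > \tfrac12$. Now suppose for contradiction that $f(\xmax) \le \tfrac12$. Then consider the map restricted to the interval $[\tfrac12, \xmax]$: $f$ maps $\tfrac12 \mapsto \xmax$ and $\xmax \mapsto f(\xmax) \le \tfrac12$, and $f$ is continuous and strictly decreasing on $[\tfrac12,\xmax]$. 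By the Intermediate Value Theorem applied to $g(x) = f(x) - x$ on $[\tfrac12, \xmax]$ — we have $g(\tfrac12) = \xmax - \tfrac12 > 0$ and $g(\xmax) = f(\xmax) - \xmax \le \tfrac12 - \xmax < 0$ — there is a fixed point $x^* \in (\tfrac12,\xmax)$ with $f(x^*) = x^*$. That alone is not yet a contradiction, so the sharper move is to exhibit a genuine $2$-cycle: I would track the forward orbit of $\tfrac12$ and use the decreasing-on-the-right structure to show that if $f(\xmax) \le \tfrac12$ then $\tfrac12$ and $\xmax$ are ``interleaved'' by $f^2$ in a way that produces a point of period exactly $2$, contradicting the assumption that $f$ has no $2^{1}$-cycle (since $f \in \category{0,1} \subset \category{q,z}$ with $q=0$, so $f$ has no $2$-cycle). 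Concretely, $f^2$ fixes $\tfrac12$? No — rather, $f^2(\tfrac12) = f(\xmax) \le \tfrac12 < \xmax = f(\tfrac12)$, so $f^2(\tfrac12) \ne \tfrac12$ in general; combined with the fixed point $x^*$ found above lying strictly between, a standard argument (e.g.\ applying the IVT to $f^2(x) - x$ on $[\tfrac12, x^*]$ after checking sign changes, using that $f$ is decreasing hence $f^2$ is increasing on the relevant subinterval) yields a point $y$ with $f^2(y) = y$ but $f(y) \ne y$, i.e.\ a $2$-cycle. This contradicts $f \in \category{0,1}$.

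The main obstacle is making the ``interleaving forces a $2$-cycle'' step fully rigorous while only invoking the hypotheses actually available: continuity, symmetry, unimodality with max at $\tfrac12$, $\xmax > \tfrac12$, and the non-existence of a $2$-cycle. The cleanest route is probably to avoid fixed-point bookkeeping and instead note that $f^2$ restricted to $[\tfrac12,\xmax]$ is \emph{increasing} (composition of two decreasing maps, since $f$ maps $[\tfrac12,\xmax]$ into $[f(\xmax), \xmax] \subseteq [0,\xmax]$ and $[\tfrac12,\xmax]$ into itself-ish), with $f^2(\tfrac12) = f(\xmax) \le \tfrac12$ and $f^2(\xmax) = f(f(\xmax))$; if $f(\xmax)\le \tfrac12$ then $f^2(\xmax) = f(f(\xmax)) \ge f(\tfrac12) = \xmax$ because $f$ is increasing on $[0,\tfrac12] \ni f(\xmax)$ — wait, that gives $f^2(\xmax) \ge \xmax$, so $h(x) := f^2(x) - x$ satisfies $h(\tfrac12) \le 0$ and $h(\xmax) \ge 0$, giving a fixed point of $f^2$ in $[\tfrac12,\xmax]$; since that point $y$ has $f(y) \in [f(\xmax),\xmax]$ and we can check $f(y) \ne y$ by comparing with the unique fixed point of $f$ itself (which, by $\xmax > \tfrac12$ and concavity/unimodality considerations already used elsewhere in the paper, does not lie in $(\tfrac12,\xmax]$), we obtain the forbidden $2$-cycle. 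I would write this out carefully, and if the fixed-point-of-$f$ exclusion is delicate, fall back on the observation that the whole orbit segment $\tfrac12 \to \xmax \to f(\xmax) \to \dots$ staying in the ``wrong'' region contradicts $z=1$ directly. Once the claim is established, the containment $f((\tfrac12,\xmax]) \subseteq (\tfrac12,\xmax]$ follows as described, and this feeds the inductive step for the $\ic(\up\uv)^{k-1}\up\dc$ decomposition of $f^k$.
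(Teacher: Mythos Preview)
Your overall strategy matches the paper's: reduce to showing $f(\xmax) > \tfrac12$, assume the contrary, and produce a $2$-cycle contradicting $f \in \category{0,1}$. But the execution on $[\tfrac12,\xmax]$ fails at two concrete points. First, the inequality ``$f^2(\xmax) \ge \xmax$'' is backwards: from $f(\xmax) \le \tfrac12$ and $f$ increasing on $[0,\tfrac12]$ you get $f(f(\xmax)) \le f(\tfrac12) = \xmax$, so in fact $h(\xmax) \le 0$; since also $h(\tfrac12) = f(\xmax) - \tfrac12 \le 0$, there is no sign change of $h(x) = f^2(x) - x$ on $[\tfrac12,\xmax]$. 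Second, your assertion that the fixed point of $f$ ``does not lie in $(\tfrac12,\xmax]$'' is false: precisely because $f(\tfrac12) > \tfrac12$ and $f(\xmax) < \xmax$, the fixed point $x^*$ of $f$ sits squarely in $(\tfrac12,\xmax)$, and this is exactly what prevents any zero of $h$ you might find on that interval from being a genuine $2$-cycle rather than $x^*$ itself. (Concavity is not among the hypotheses in this section, so it cannot be invoked either.)

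The paper sidesteps this by running the IVT on the \emph{left} of $\tfrac12$: take $x' < \tfrac12$ with $f(x') = \tfrac12$, so $f^2(x') = \xmax$; then $g(x) = f^2(x)-x$ has $g(x') = \xmax - x' > 0$ and $g(\tfrac12) = f(\xmax) - \tfrac12 < 0$, yielding $x'' \in [x',\tfrac12)$ with $f^2(x'') = x''$. The virtue of this interval is that every $x'' \in [x',\tfrac12)$ satisfies $f(x'') \ge f(x') = \tfrac12 > x''$, so $x''$ is automatically \emph{not} a fixed point of $f$ and must lie on a $2$-cycle. If you prefer to stay on the right, the analogous clean interval is $(x^\dagger,\xmax)$ where $x^\dagger > \tfrac12$ satisfies $f(x^\dagger) = \tfrac12$: there $h(x^\dagger) = \xmax - x^\dagger > 0$ and $h(\xmax) < 0$, and every point has $f$-image below $\tfrac12$, so again no fixed point of $f$ can interfere.
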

    \begin{proof}
    Because $\frac{1}{2}$ maximizes $f$, $f(x) \leq \xmax$ for all $x \in [\frac{1}{2}, \xmax]$.
    Since $f$ monotonically decreases, on $[\frac{1}{2}, \xmax]$, the claim can only be false if $f(\xmax) < \frac{1}{2}$.
    We show by contradiction that this is impossible.
    
    Because $f$ is continuous and monotonically increases on $[0, \frac{1}{2}]$ and ranges from $0$ to $\xmax \geq \frac{1}{2}$, there exists some $x' \leq \frac{1}{2}$ such that $f(x') = \frac{1}{2}$ and $f^2(x') = \xmax$.
    
    Let $g(x) = f^2(x) - x$.
    By assumption, $g(\frac{1}{2}) = f(\xmax) - \frac{1}{2} < 0$. 
    By definition of $x'$, $g(x') = \frac{1}{2} - x' \geq 0$.
    Because $g$ is continuous, the Intermediate Value Theorem implies the existence of $x'' \in [x', \frac{1}{2})$ such that $g(x'') = 0$ and $f^2(x'') = x''$.
    Since $f$ has no two-cycles, it must be the cause that $f(x'') = x''$ and $x'' = \frac{1}{2}$.
    However, this contradicts our finding that $x'' < \frac{1}{2}$, which means that $f(\xmax) \geq \frac{1}{2}$ and the claim holds.
    \end{proof}
    
    Now, we proceed with analyzing each of the function pieces on some interval $[a, b] \subseteq [0, 1]$ when $f \in \category{0,1}$. The transformations are visualized in Figure \ref{fig:ub-specific-classification}. 
    
    \begin{itemize}
        \item \textbf{Increasing crossing piece}: If $f^k$ has an $\ic$ on $[a,b]$, then $f^{k+1}$ can be represented by $\ic\up$ on $[a,b]$. 
        
        There exist $c$ and $d$ such that $a < d < c < \frac{1}{2} < b$, $f^k(c) = \frac{1}{2}$, and $f^k(d) = c $. Then, $[a, \frac{1}{2}(c + d)]$ supports an increasing crossing piece on $f \circ f^k$---because $f(f^k(a)) = 0$, $f(f^k(\frac{1}{2}(c + d))) > \frac{1}{2}$, and $f \circ f^k$ is strictly increasing on that interval since $f$ is increasing before reaching $\frac{1}{2}$. $[0.5(c + d), b]$ supports a high peak---because $c$ is a local maxima on $f \circ f^k$, and $f \circ f^k$ is strictly increasing before $c$ and strictly decreasing after $c$.
        \item \textbf{Decreasing crossing piece}: For the same arguments, $f^{k+1}$ can represented by $\up\dc$ on $[a,b]$ if $f^k$ is represented by $\dc$ on $[a,b]$.
        \item \textbf{Up peak}: Because $f$ strictly decreases for $x > \frac{1}{2}$ and because $f^k([a, b]) \subseteq (\frac{1}{2}, \xmax]$ if $\up$ represents $f^k$ on $[a, b]$, $c$ becomes a local minimum for $f \circ f^k$, and $f^{k+1}$ is a high valley $\uv$ on $[a,b]$.
        \item \textbf{Up valley}:  Because $f$ strictly decreases for $x > \frac{1}{2}$ and because $f^k([a, b]) \subseteq (\frac{1}{2}, \xmax]$ if $\uv$ represents $f^k$ on $[a, b]$, $c$ becomes a local maximum for $f \circ f^k$, and $f^{k+1}$ is a high peak $\up$ on $[a,b]$.
    \end{itemize}
    
    Now, consider the inductive hypothesis. Because $f^k$ can be represented by $\ic(\up\uv)^{k-1}\up\dc$, applying the above transformations to each piece implies that $f^{k+1}$ can be represented by $\ic(\up\uv)^{k}\up\dc$. Hence, the inductive argument goes through.
\end{enumerate}

\subsubsection{General Case Proof}\label{sec:ub-general-case}
The argument proceeds inductively.
We show that if we have some $f \in \category{q,k}$, then we can find some other function $h \in \category{q-1,1-k}$ and characterize the behavior of $f$ in terms of the behavior of $h$.

Since we assume that $q \geq 1$, there will always exist some $x^* > \frac{1}{2}$ that is a fixed point of $f$.\footnote{Sharkovsky's Theorem yields this by showing that the existence of a $2^q$-cycle implies the existence of any $2^j$-cycle, for all $j \in \{0, \dots, q-1\}$. $x^* > \frac12$ by our assumption that a 2-cycle $x_1 < x_2$ exists. It must be true that $x_2 > \frac{1}{2}$; otherwise, $f(x_2) > x_2 > x_1$, which breaks the cycle. Because $f(\frac{1}{2}) > \frac{1}{2}$ and $f(x_2) < x_2$, there exists $x^* \in (\frac{1}{2}, x_2)$ such that $f(x^*) = x^*$ by the Intermediate Value Theorem.}
By symmetry, $f(1- x^*) = x^*$.
Let $\phi:[0,1] \to [1-x^*, x^*]$ be a decreasing isomorphism with $\phi(x) = x^* - x(2x^* - 1)$, and let
\[h = \phi^{-1} \circ f^2 \circ \phi.\]
$h$ is a useful construct, because its behavior resembles simpler versions of $f$, with fewer cycles and oscillations. We use properties of $h$ to relate pieces of $f^k$ to those of $h^{k/2}$. 
We illustrate this recursive and fractal-like behavior in Figure \ref{fig:ub-general-hr}.

Note that $h^k = \phi^{-1} \circ f^{2k} \circ \phi$.

\begin{figure}
    \centering
    \includegraphics[width=0.8\textwidth]{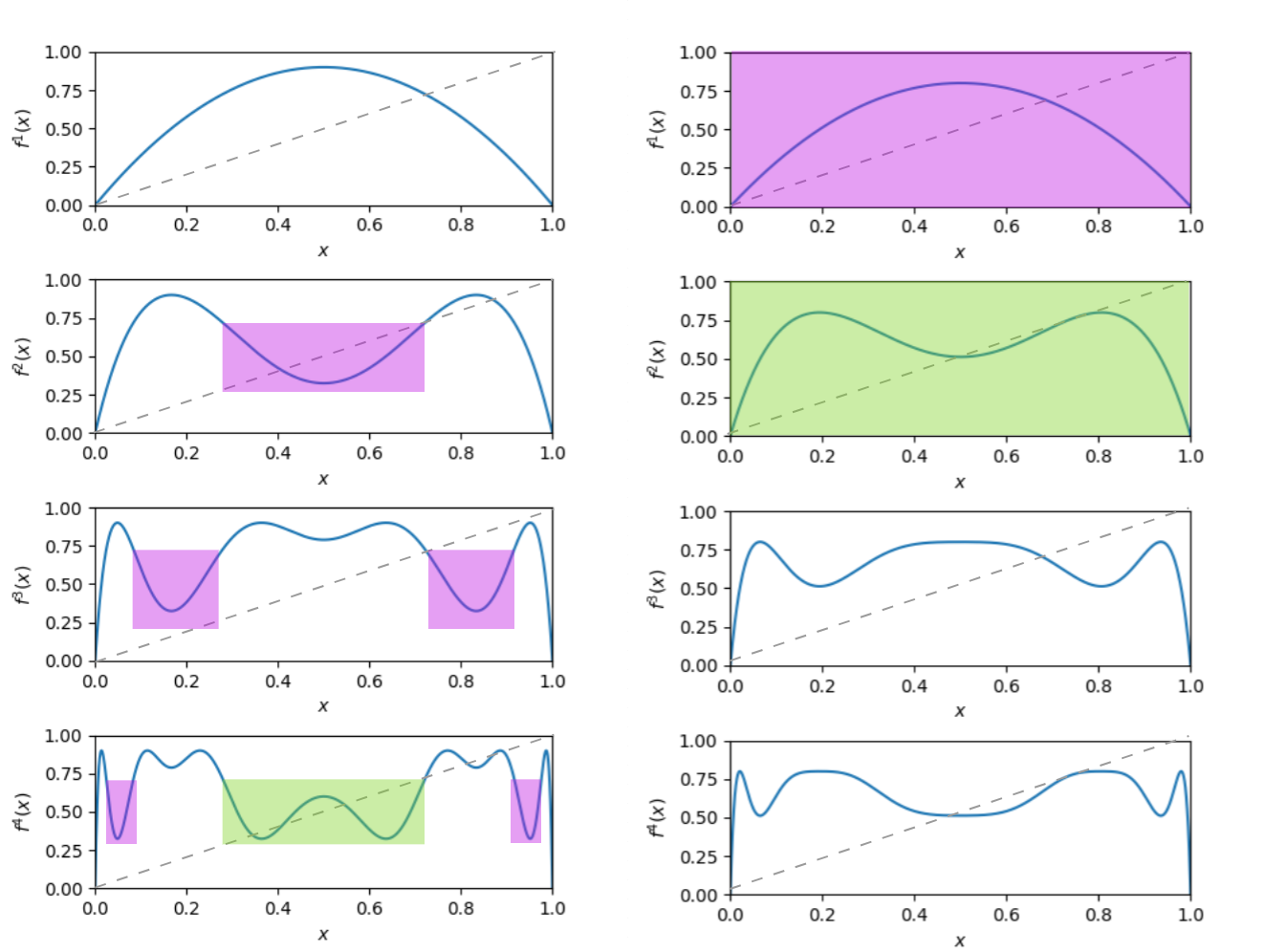}
    \caption{Visualizes the analogy between mappings in $\category{q, z}$ and $\category{q-1,1-z}$. The left plots the first 4 iterates of $f= \flog{0.9} \in \category{4, 1}$ (has a maximal 4-cycle with $f^4(\frac{1}{2}) > \frac{1}{2}$), while the right plots those of $f = \flog{0.85} \in \category{2,0}$ (has a maximal 2-cycle with $f^2(\frac{1}{2}) < \frac{1}{2}$. The purple highlighted regions on the left behave qualitatively similar to $\flog{0.85}$, while the green regions are similar to $\flog{0.85}^2$.}
    \label{fig:ub-general-hr}
\end{figure}

\begin{lemma}\label{lemma:doubling-h}
    $h$ is a symmetric unimodal mapping with $h \in \category{q-1, 1-z}$.
\end{lemma}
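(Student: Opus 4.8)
The plan is to obtain each defining condition of $\category{q-1,1-z}$ for $h$ by transporting the corresponding feature of $f$ across the affine conjugacy $\phi$. Throughout write $J=[1-x^*,x^*]=\phi([0,1])$ and $\xmax=f(\tfrac12)$, and record the elementary facts that $\phi$ is the decreasing affine bijection $[0,1]\to J$ with $\phi(0)=x^*$, $\phi(1)=1-x^*$, $\phi(\tfrac12)=\tfrac12$, $\phi(1-t)=1-\phi(t)$, and $\phi^{-1}(y)=\frac{x^*-y}{2x^*-1}$ (so $\phi^{-1}$ is also decreasing and fixes $\tfrac12$). Since $f$ is symmetric unimodal, $f$ is strictly increasing on $[1-x^*,\tfrac12]$ and strictly decreasing on $[\tfrac12,x^*]$, with $f(1-x^*)=f(x^*)=x^*$ and $f(\tfrac12)=\xmax$; hence $f(J)=[x^*,\xmax]\subseteq(\tfrac12,1]$, on which $f$ is strictly decreasing. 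Composing, $f^2|_J$ is strictly decreasing on $[1-x^*,\tfrac12]$, strictly increasing on $[\tfrac12,x^*]$, equals $x^*$ at both endpoints, and attains its minimum $f^2(\tfrac12)=f(\xmax)$ at $\tfrac12$ --- a ``V'' whose vertex sits over the critical point.

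The main obstacle is to show $J$ is forward-invariant under $f^2$, i.e. $f^2(\tfrac12)\ge 1-x^*$ (the companion bound $f^2(\tfrac12)\le x^*$ is automatic, since $f(\xmax)\le f(x^*)=x^*$). I would argue by contradiction: if $f^2(\tfrac12)=f(\xmax)<1-x^*$, the ``V'' of $f^2|_J$ overshoots $J$ to the left, and I would combine this overshoot with the $2$-cycle $x_1<x_2$ that the $2^q$-cycle guarantees by Sharkovsky's theorem to produce a period-three point of $f$ via an interval-covering (Li--Yorke-type) argument; a period-three point contradicts the maximality of the $2^q$-cycle, again by Sharkovsky's theorem. (Equivalently, one may invoke the standard renormalization picture for unimodal maps in the period-doubling regime, in which $[1-x^*,x^*]$ is precisely the restrictive central interval.) Once $f^2(J)\subseteq J$ is in hand, $h=\phi^{-1}\circ f^2\circ\phi$ is a bona fide self-map of $[0,1]$, and by induction using $\phi\circ\phi^{-1}=\mathrm{id}_J$ and $f^{2j}(J)\subseteq J$ one gets $h^k=\phi^{-1}\circ f^{2k}\circ\phi$ for all $k$.

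With forward-invariance established, unimodality and symmetry of $h$ are routine. The map $h$ is continuous and piecewise differentiable (a composition of such with an affine map); $h(0)=\phi^{-1}(f^2(x^*))=\phi^{-1}(x^*)=0$ and likewise $h(1)=0$; $h(x)>0$ on $(0,1)$ because $f^2(y)<x^*$ for $y$ in the interior of $J$ (if $f^2(y)=x^*$ then $f(y)\in f^{-1}(x^*)=\{1-x^*,x^*\}$, impossible in the open interval, using $f(J)=[x^*,\xmax]$); and since $\phi,\phi^{-1}$ are decreasing bijections fixing $\tfrac12$, conjugating the ``V'' $f^2|_J$ turns it into a peak, so $h$ is strictly increasing on $[0,\tfrac12]$, strictly decreasing on $[\tfrac12,1]$, with unique maximizer $\tfrac12$. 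Symmetry $h(1-x)=h(x)$ follows from $\phi(1-x)=1-\phi(x)$ and $f(1-u)=f(u)$, which give $f^2(\phi(1-x))=f^2(\phi(x))$. Thus $h$ is a symmetric unimodal mapping.

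It remains to identify the cyclic data. Let $C$ be the $2^q$-cycle of $f$. First, $C\cap J\ne\emptyset$: otherwise, since $f$ maps $(x^*,1]$ into $[0,x^*)$ and $[0,1-x^*)$ into $[0,x^*)$, the cycle would satisfy $C=f(C)\subseteq[0,1-x^*)$, a region on which $f$ is monotone --- impossible for a cycle of period $\ge2$. Picking $y\in C\cap J$, its $f^2$-orbit lies in $J$ (forward-invariance) and has size $2^{q-1}$ (the square of a $2^q$-cycle, $q\ge1$, splits into two $2^{q-1}$-orbits), while the complementary $f^2$-orbit of $C$ is its $f$-image, lying in $f(J)=[x^*,\xmax]$ and disjoint from $J$ (as $x^*\notin C$); hence $C\cap J$ equals that $2^{q-1}$-orbit and $\phi^{-1}(C\cap J)$ is a $2^{q-1}$-cycle of $h$. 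Next, $h$ has no $2^q$-cycle: such a cycle would be an $f^2$-orbit of period $2^q$ inside $J$, which as an $f$-orbit has period $2^q$ or $2^{q+1}$; the former is impossible since a square of a $2^q$-cycle never has an orbit of size $2^q$ when $q\ge1$, and the latter contradicts $f\in\category{q,z}$. Finally, $h^{2^{q-1}}(\tfrac12)=\phi^{-1}\!\big(f^{2^q}(\phi(\tfrac12))\big)=\phi^{-1}\!\big(f^{2^q}(\tfrac12)\big)$, and since $\phi^{-1}$ is decreasing and fixes $\tfrac12$, $h^{2^{q-1}}(\tfrac12)\ge\tfrac12\iff f^{2^q}(\tfrac12)\le\tfrac12$, so $\thres{1/2}{h^{2^{q-1}}(\tfrac12)}=1-z$ (the degenerate super-stable case $f^{2^q}(\tfrac12)=\tfrac12$, where the critical point is periodic, is treated separately and is immaterial for the monotone-piece bound this lemma feeds into). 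Combining these three points yields $h\in\category{q-1,1-z}$.
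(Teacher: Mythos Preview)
Your approach mirrors the paper's: establish unimodality and symmetry of $h$ directly from the conjugacy, prove well-definedness (i.e., $f^2(J)\subseteq J$) by contradiction via a forbidden non-power-of-two cycle and Sharkovsky, and then transport the cycle and threshold data across $\phi$. For the cycle correspondence the paper packages things as a separate lemma (``$h$ has a $p$-cycle iff $f$ has a $2p$-cycle''), whereas you argue existence of a $2^{q-1}$-cycle and nonexistence of a $2^q$-cycle directly; your version is in fact a bit more careful about why the relevant $f^2$-orbit actually sits inside $J$.

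The one place your proposal is thinner is the forward-invariance step. You assert that if $f(\xmax)<1-x^*$ a Li--Yorke interval-covering argument produces a period-three point of $f$, but you do not carry it out, and the most natural covering graph on $[1-x^*,\tfrac12]$, $[\tfrac12,x^*]$, $[x^*,\xmax]$ is bipartite (each of the first two covers the third, and the third covers each of them), so it yields only even-length loops. The paper's route here is more concrete: working with the affinely extended $h$, it finds $x'\le\tfrac12$ with $h(x')=1$ and a fixed point $x^{**}\in[1-x',1]$, then uses the intermediate value theorem on $h^3(x)-x$ over $[1-x^{**},x']$ to locate $x''$ with $h^3(x'')=x''$ but $h(x'')\ge x^{**}>x''$, hence a genuine $3$-cycle of $h$; conjugating back gives an $f$-cycle of period $3$ or $6$, and by Sharkovsky either forces a $2^{q+1}$-cycle, the desired contradiction. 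Filling in your sketch with this construction (or, as you note parenthetically, invoking the standard renormalization/restrictive-interval theory) closes the gap.
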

\begin{proof}
We verify the conditions for $f$ to be unimodal mapping.
\begin{enumerate}
    \item $h$ is continuous and piece-wise differentiable on $[0, 1]$ because $f^2$ is, and $h$ is merely a linear transformation of $f^2$.
    
    \item $h(0) = h(1) = 0$. $h((0, 1))$ is strictly positive because $f((1- x^*, x^*)) = (x^*, \xmax)$, $f^2((1- x^*, x^*)) = (f(\xmax), x^*)$, and $f(\xmax) < f(x^*) = x^*$ by $f$ being decreasing on $[\frac{1}{2}, 1]$.
    
    \item $h$ is uniquely maximized by $\frac{1}{2}$ because $\frac{1}{2}$ minimizes $f^2$ on the interval $[1 - x^*, x^*]$. 
    $f$ maps both $[1-x^*, \frac{1}{2}]$ and $[\frac{1}{2}, x^*]$ onto $[x^*, \xmax]$ and is increasing and decreasing on the respective intervals.
    Because $f$ maps $[x^*, \xmax]$ onto $[f(\xmax), x^*]$ and $f(\xmax) < x^*$ and is decreasing on $[x^*, \xmax]$, $f^2$ is increasing on $[1-x^*, \frac{1}{2}]$ and decreasing on $[\frac{1}{2}, x^*]$.
    
    Thus, $h$ is maximized by $\frac{1}{2}$, increases before $\frac{1}{2}$, and decreases after $\frac{1}{2}$.
        
    \item We must also show that $h$ is well-defined, which entails proving that $h(x) \leq 1$ for all $x \in [0, 1]$. 
    Suppose that were not the case. 
    Then, $h(\frac{1}{2}) > 1$, and there exists some $x' \leq \frac{1}{2}$ with $h(x') = 1$.
    There also exists some $x^{**} \in [1-x', 1]$ with $h(x^{**}) = x^{**}$ by the Intermediate Value Theorem.

    Let $g(x) = h^3(x) - x$ and note that $g$ is continuous on $[0, x']$. 
    Observe that $g(1-x^{**}) = 2x^{**} - 1 > 0$ and $g(x') = -x' < 0$.
    Thus, there exists $x'' \in [1 - x^{**}, x']$ with $g(x'') = 0$.
    Because $h$ is increasing on $[0, x']$ and $x' > 1 - x^{**}$, it must be the case that $h(x') > x^{**} > x'$.
    Thus, $x^{**}$ is not a fixed point and must be on a 3-cycle in $h$.
    
    However, if $x^{**}$ is on a 3-cycle in $h$, then $\phi(x^{**})$ must be part of a 6-cycle in $f$.
    This contradicts the assumption that $f$ cannot have a $2^{q+1}$-cycle, because Sharkovsky's Theorem states that a 6-cycle implies a $2^{q+1}$-cycle.

\end{enumerate}

We show that $h$ is symmetric.
    \begin{align*}
        h(x)
        &= \phi^{-1}(f^2(\phi(x))) 
        = \phi^{-1}( f^2(1 - \phi(x)))
        = \phi^{-1}(f^2(1 - x^* + x(2x^* - 1))) \\
        &= \phi^{-1}(f^2(x^* - (1-x)(2x^* - 1)))
        = \phi^{-1}(f^2(\phi(1-x))) = h(1-x).
    \end{align*}
    
If $f^{2^{q}}(\frac{1}{2}) \geq \frac{1}{2}$, then $h^{2^{q-1}}(\frac{1}{2}) \leq \frac{1}{2}$, and if $f^{2^{q}}(\frac{1}{2}) \leq \frac{1}{2}$, then $h^{2^{q-1}}(\frac{1}{2}) \geq \frac{1}{2}$. Thus, $\thres{1/2}{h^{2^{q-1}}(\frac{1}{2})} = \thres{1/2}{f^{2^q}(\frac{1}{2})}$.
By Lemma~\ref{lemma:hr-super-stable}, $h$ has a $2^{q-1}$-cycle and does not have a $2^{q}$-cycle. Thus, $h \in \category{q-1, 1-z}$.
\end{proof}

\begin{lemma}\label{lemma:hr-super-stable}
    For $p \in \mathbb{Z}_{+}$, $h$ has a $p$-cycle if and only if $f$ has a $2p$-cycle.
\end{lemma}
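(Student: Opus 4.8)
The plan is to reduce everything, via the semiconjugacy $h^m=\phi^{-1}\circ f^{2m}\circ\phi$, to a clean statement about periodic points of $f^2$ inside the restrictive interval $J\vcentcolon=[1-x^*,x^*]$, and then to invoke two elementary facts about symmetric unimodal $f$: (a) $f$ maps $J$ onto $J'\vcentcolon=[x^*,\xmax]$ (where $\xmax=f(\tfrac12)$) and $J\cap J'=\{x^*\}$; and (b) $f$ is strictly increasing on $[0,1-x^*]\subseteq[0,\tfrac12]$, hence has no periodic orbit of period $\geq 2$ contained in $[0,1-x^*)$. First I would prove by induction on $m$ that $h^m=\phi^{-1}\circ f^{2m}\circ\phi$; this uses $f^2(J)\subseteq J$ (equivalently, that $h$ is a well-defined unimodal map, established in Lemma~\ref{lemma:doubling-h} without reference to this lemma), so that every intermediate composition lands in $J$ and $\phi\circ\phi^{-1}$ acts as the identity where applied. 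Since $\phi$ is a bijection $[0,1]\to J$, this yields a \emph{transfer principle}: $y$ has $h$-period exactly $m$ if and only if $\phi(y)\in J$ has $f^2$-period exactly $m$; consequently $h$ has an $m$-cycle iff some point of $J$ has $f^2$-period exactly $m$. The lemma thus reduces to: $J$ contains a point of $f^2$-period $p$ iff $f$ has a point of $f$-period $2p$.

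Next I would prove a \textbf{parity observation}: if $z\in J$ is $f$-periodic and $z\neq x^*$, then its $f$-period is even. Indeed, a periodic $z$ cannot equal the pre-fixed endpoint $1-x^*$ either, so $z\in(1-x^*,x^*)$; then $f(z)\in(x^*,\xmax]$, and since $f(J')=f^2(J)\subseteq J$ and the orbit avoids $x^*$, $f^2(z)\in(1-x^*,x^*)$; inductively the orbit of $z$ alternates between the disjoint sets $(1-x^*,x^*)$ and $(x^*,\xmax]$, so $z$ has even $f$-period. For the direction $(\Rightarrow)$: if $h$ has a $p$-cycle, the transfer principle supplies $z\in J$ of $f^2$-period exactly $p$; if $z\neq x^*$, its $f$-period $d$ divides $2p$, is even, and satisfies $d/2=p$, so $d=2p$ and $f$ has a $2p$-cycle. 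The case $z=x^*$ forces $p=1$, and then $f$ has a $2$-cycle because, in the setting of Theorem~\ref{thm:properties-doubling}, $f$ has a $2^q$-cycle with $q\geq 1$, which implies a $2$-cycle by Sharkovsky's Theorem.

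For $(\Leftarrow)$: let $O$ be a $2p$-cycle of $f$, so $|O|=2p\geq 2$. I would first argue $O\cap J\neq\emptyset$. Otherwise $O\subseteq[0,1-x^*)\cup(x^*,1]$; writing $M=\max O=f(u)$ for some $u\in O$, if $M>x^*$ then $u\in f^{-1}((x^*,\xmax])=(1-x^*,x^*)\subseteq J$, a contradiction, so $M\leq x^*$, which forces $M<1-x^*$ and hence $O\subseteq[0,1-x^*)$; but then $O$ would be a periodic orbit of period $2p\geq 2$ of the strictly increasing map $f|_{[0,1-x^*]}$, which is impossible. Pick $z\in O\cap J$; it has $f$-period exactly $2p$, so $(f^2)^p(z)=z$ while $f^{2j}(z)\neq z$ for $j<p$, i.e.\ $z$ has $f^2$-period exactly $p$, and the transfer principle produces a $p$-cycle of $h$.

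The main obstacle is the bookkeeping in the parity argument: one must ensure the $f$-orbit of a periodic point of $J$ genuinely alternates between $(1-x^*,x^*)$ and $(x^*,\xmax]$ rather than leaving $J$ through its left endpoint or landing on $x^*$, which is exactly where the inclusion $f^2(J)\subseteq J$ (well-definedness of $h$) and the non-periodicity of $\{1-x^*,x^*\}$ enter. Keeping track of ``exact'' (minimal) periods versus mere fixed-point conditions throughout, and isolating the harmless $p=1$ corner case, are the only other points requiring care; everything else is routine.
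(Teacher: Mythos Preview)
Your proof is correct and shares the paper's core idea---transport cycles through the semiconjugacy $h^m=\phi^{-1}\circ f^{2m}\circ\phi$---but is substantially more rigorous. The paper's argument is essentially two lines: push a $p$-cycle of $h$ through $\phi$ to a $p$-cycle of $f^2$ in $J$, then interleave with $f$ to claim a $2p$-cycle of $f$; conversely, take every other point of a $2p$-cycle of $f$ and apply $\phi^{-1}$. It never verifies that the $2p$ interleaved points are actually distinct (your parity observation---that a non-fixed periodic point of $J$ alternates under $f$ between $(1-x^*,x^*)$ and $(x^*,\xmax]$---is precisely what closes this), nor that a $2p$-cycle of $f$ meets $J$ so that $\phi^{-1}$ is even defined on the relevant half of the orbit (your $O\cap J\neq\emptyset$ argument via the maximum of $O$, together with $f^2(J)\subseteq J$, handles this). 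Your explicit treatment of the $p=1$ corner case using the ambient hypothesis $q\geq 1$ is likewise absent from the paper. Same route, but you have filled the gaps the paper left open.
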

\begin{proof}
    Suppose $x_1, \dots, x_p$ is a $p$-cycle for $h$. Then,
    $\phi(x_1), \dots ,\phi(x_p)$
    is a $p$-cycle for $f^2$. 
    If $x_1, \dots, x_p$ are distinct, then so must be $\phi(x_1), \dots ,\phi(x_p)$, since $\phi$ is an isomorphism.
    Thus, 
    \[\phi(x_1), f(\phi(x_1)), \dots, \phi(x_p), f(\phi(x_p))\]
    is a $2p$-cycle for $f$.
    
    Conversely, if $x_1, \dots, x_{2p}$ is a $2p$-cycle for $f$, then $x_1, x_3, \dots, x_{2p-1}$ is a $p$-cycle for $f^2$ and 
    \[\phi^{-1}(x_1), \dots, \phi^{-1}(x_{2p})\]
    is a $p$-cycle for $h$.
\end{proof}

We proceed with a proof similar in structure to the one in the last section, where we divide each $f^{k}$ into intervals and monitor the evolution of each as $k$ increases. We define the classes of the pieces of some 1-dimensional map $f^{k}$ on interval $[a, b]$ below. We visualize these classes in Figure \ref{fig:ub-general-classification}.
\begin{itemize}
    \item $f^{k}$ is an \textbf{approach} $\app$ on $[a, b]$ if $f$ is strictly increasing, $f^{k}(a) = 0$, and $f^{k}(b) = 1-x^*$.
    \item Similarly, $f^{k}$ is a \textbf{departure} $\dep$ on $[a, b]$ if $f^k$ is strictly decreasing, $f^k(a) = 1-x^*$, and $f^k(b) = 0$.
    \item $f^{k}$ is an \textbf{$i$-Left Valley} $\lv_i$ on $[a, b]$ if $f^{k}:[a, b] \rightarrow [f(\xmax), x^*]$ and if there exists some strictly increasing and bijective $\sigma: [a, b] \rightarrow [1 - x^*, x^*]$ such that $f^{k} = \phi \circ h^i \circ \phi^{-1} \circ \sigma$ on $[a, b]$. Note that $f^{k}(a) = f^{k}(b) = x^*$---unless $i = 0$, in which case $f^k(a) = 1-x^*$ and $f_r^k(b) = x^*$.
    \item $f^{k}$ is analogously a \textbf{$i$-Right Valley} $\rv_i$ if the same condition holds, except that $\sigma$ is strictly decreasing.
    \item $f^{k}$ is an \textbf{$i$-Left Peak} $\lp_i$ on $[a, b]$ if $f^{k-1}$ is $\lv_{i-1}$ on $[a, b]$. It follows that $f^{k}:[a, b] \rightarrow [x^*, \xmax]$, that there exists some $c \in [a, b]$ such that $f^{k}(c) = \xmax$ (because $\frac{1}{2} \in [f(\xmax), x^*]$), and that $f^{k}(a) = f^{k}(b) = x^*$.
    \item $f^{k}$ is an \textbf{$i$-Right Peak} $\rp_i$ on $[a, b]$ if $f^{k-1}$ is $\rv_{i-1}$ on $[a, b]$. The same claims hold as $\lp_i$.
 \end{itemize}
 
 \begin{figure}
     \centering
     \includegraphics[width=0.6\textwidth]{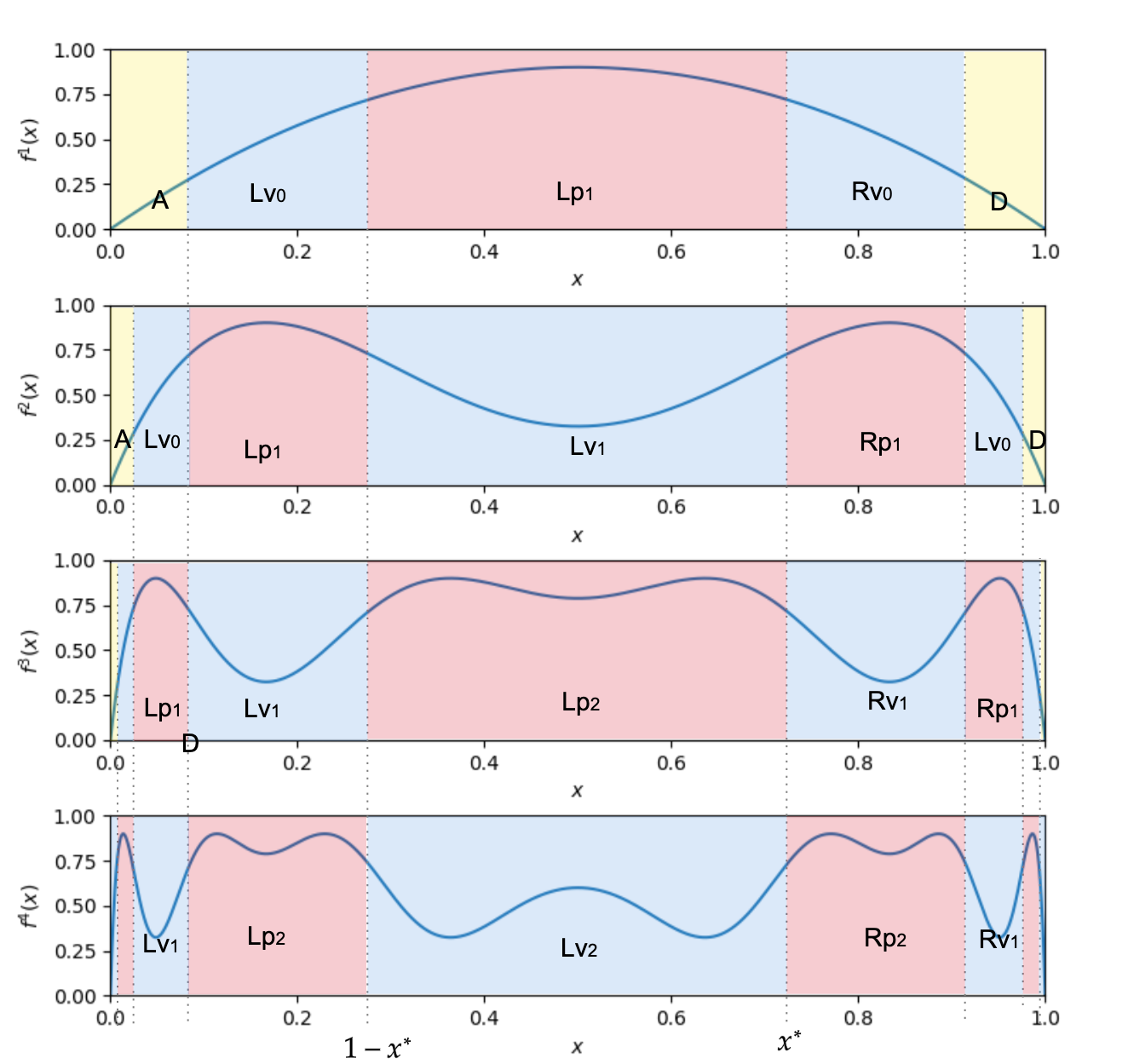}
     \caption{Similar to Figure~\ref{fig:ub-specific-classification}, visualizes the classifications of $f, f^2, f^3, f^4$ for $f=\flog{0.9} \in \category{4,1}$, and demonstrates that the decompositions are $\app\lv_0\lp_0\rv_0\dep$, $\app\lv_0\lp_1\lv_1\rp_1\rv_0\dep$, $\app\lv_0\lp_1\lv_1\lp_2\rv_1\rp_1\rv_0\dep$, and $\app\lv_0\lp_1\lv_1\lp_2\lv_2\rp_2\rv_1\rp_1\rv_0\dep$ respectively.}
     \label{fig:ub-general-classification}
 \end{figure}
 
 Now, the proof of the number of oscillations proceeds in two steps. (1) We analyze how each of the above pieces evolves with each application of $f$. (2) We show how many maxima and minima each translates to.
 
\begin{lemma}\label{lemma-piece-decomposition}
     When $f \in \category{q,z}$ for $q \geq 1$ and for all $k \in \mathbb{Z}_{+}$, $f^k$ can be decomposed into $2k + 3$ pieces $\eta_1, \dots, \eta_{2k + 3}$ such that
     \[\text{$\eta_i$ is }
     \begin{cases}
        \text{$\app$ if $i = 1$} \\
        \text{$\lv_j$ if $i = 2j + 2$ for $j \in \{0, 1, \dots, \floor{k / 2}\}$} \\
        \text{$\lp_j$ if $i = 2j + 1$ for $j \in \{1, \dots, \floor{(k+1)/2}\}$} \\
        \text{$\rv_j$ if $i = 2k - 2j + 2$ for $j \in \{0, 1, \dots, \floor{(k-1)/ 2}\}$} \\
        \text{$\rp_j$ if $i = 2k - 2j + 3$ for $j \in \{1, \dots, \floor{k/2}\}$} \\
        \text{$\dep$ if $i = 2k + 3$} \\
     \end{cases}\]
     
     That is, if $k$ is even, then $f$ can be represented by \[\app \lv_0 \lp_1 \lv_1 \dots \lv_{k/2 - 1} \lp_{k/2} \lv_{k/2} \rp_{k/2} \rv_{k/2 - 1} \dots \rv_1 \rp_2 \rv_0\dep.\]
     If $k$ is odd, then $f$ is represented by
     \[\app \lv_0 \lp_1 \lv_1 \dots \lp_{(k-1)/2} \lv_{(k-1)/2}\lp_{(k+1)/2} \rv_{(k-1)/2} \rp_{(k - 1)/2} \dots \rv_1 \rp_2 \rv_0\dep.\]
\end{lemma}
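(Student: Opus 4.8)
The plan is an induction on $k$ that parallels the $q=1$ analysis of Section~\ref{sec:ub-special-case}, except that we now track the six piece types $\app,\lv_i,\lp_i,\rv_i,\rp_i,\dep$ in place of $\ic,\up,\uv,\dc,\dpk$. Everything rests on the conjugacy $f^2\circ\phi=\phi\circ h$ built around Lemma~\ref{lemma:doubling-h} (equivalently $f^{2m}\circ\phi=\phi\circ h^m$ for all $m$), together with three elementary facts established in the setup of the section: $x^*>\tfrac12$ is a fixed point of $f$ (obtained from the $2^q$-cycle via Sharkovsky), $f(1-x^*)=x^*$ by symmetry, and $\tfrac12$ maximizes $f$ with $f(\tfrac12)=\xmax>x^*$. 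By Lemma~\ref{lemma:doubling-h}, $h$ is itself a symmetric unimodal self-map of $[0,1]$, so every $h^m$ is well defined; this is precisely what keeps the valley and peak pieces (which are defined through $h^i$) legitimate.

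For the base case $k=1$, let $a_1\in(0,1-x^*)$ be the unique point with $f(a_1)=1-x^*$; it exists and satisfies $a_1<1-x^*$ because $f$ is strictly increasing on $[0,\tfrac12]$ with $f(0)=0$ and $f(1-x^*)=x^*>1-x^*$. Then $[0,a_1]$ is an $\app$ piece, $[a_1,1-x^*]$ is an $\lv_0$ piece (with $\sigma=f|_{[a_1,1-x^*]}$ and $h^0=\mathrm{id}$), $[1-x^*,x^*]$ is an $\lp_1$ piece (on it $f$ rises from $x^*$ to $\xmax$ at $\tfrac12$ and falls back to $x^*$, which is exactly ``$f^0$ is $\lv_0$ there, then apply $f$''), and $[x^*,1-a_1]$, $[1-a_1,1]$ are $\rv_0$ and $\dep$ by the symmetry of $f$. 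Hence $f=\app\,\lv_0\,\lp_1\,\rv_0\,\dep$, which is the $k=1$ instance of the asserted pattern.

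The core of the argument is a set of one-step transformation rules recording, for a piece occupying an interval $[a,b]$ in the decomposition of $f^k$, what $f^{k+1}=f\circ f^k$ looks like on $[a,b]$. The rule $\app\mapsto\app\,\lv_0$ holds because $f^k([a,b])=[0,1-x^*]\subseteq[0,\tfrac12]$ and $f$ maps this interval increasingly onto $[0,x^*]$, so subdividing $[a,b]$ at the preimage of $1-x^*$ produces an $\app$ followed by an $\lv_0$; the rule $\dep\mapsto\rv_0\,\dep$ is its mirror image. The rules $\lv_j\mapsto\lp_{j+1}$ and $\rv_j\mapsto\rp_{j+1}$ are immediate from the \emph{definitions} of $\lp_{j+1}$ and $\rp_{j+1}$, which only require $f^{(k+1)-1}=f^k$ to be $\lv_j$ (resp.\ $\rv_j$) on the interval; the ``it follows that'' consequences in those definitions (image inside $[x^*,\xmax]$, an interior point hitting $\xmax$, equal endpoint values $x^*$) are checked once using that an $\lv_j$ piece maps into $[f(\xmax),x^*]$, an interval containing $\tfrac12$. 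Finally, for $\lp_j\mapsto\lv_j$ and $\rp_j\mapsto\rv_j$: if $f^k$ is $\lp_j$ on $[a,b]$ then $f^{k-1}=\phi\circ h^{j-1}\circ\phi^{-1}\circ\sigma$ there, whence $f^{k+1}=f^2\circ f^{k-1}=\phi\circ h^{j}\circ\phi^{-1}\circ\sigma$, which is an $\lv_j$ piece (the orientation of $\sigma$ is preserved), and symmetrically on the right. Each rule replaces a piece by at most two pieces and preserves their left-to-right order, so the breakpoints of $f^{k+1}$ are obtained by refining those of $f^k$.

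The induction then essentially writes itself: given the decomposition of $f^k$, apply the rules in order; the left chain $\app\,\lv_0\,\lp_1\,\lv_1\,\cdots$ lengthens by one $\lp/\lv$ pair at the center, the right chain lengthens symmetrically, and a short computation with the position formulas in the statement verifies that the re-indexed result is exactly the claimed decomposition of $f^{k+1}$ (the two parities of $k$ are handled separately, since the innermost piece is either a single central peak or flanked by valleys). The step I expect to be the real obstacle is not this index bookkeeping but showing rigorously that the pieces genuinely tile $[0,1]$ in the stated order at every stage: that the breakpoints are correctly interleaved and that no piece ever ``overflows'' $[1-x^*,x^*]$ or fails to contain $\tfrac12$ where a peak is required. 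These nondegeneracy facts should follow, as in the $q=1$ case, from $h$ being a bona fide unimodal self-map of $[0,1]$ (Lemma~\ref{lemma:doubling-h}) and from the $\category{q,z}$ hypothesis that $f$ has no $2^{q+1}$-cycle, which rules out the pathological configurations (e.g.\ a would-be $3$-cycle of $h$, forbidden by Sharkovsky) that would break the recursion.
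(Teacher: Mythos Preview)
Your proposal is correct and follows essentially the same approach as the paper: an induction on $k$ with base case $f=\app\,\lv_0\,\lp_1\,\rv_0\,\dep$ and the identical six transition rules $\app\to\app\,\lv_0$, $\dep\to\rv_0\,\dep$, $\lv_j\to\lp_{j+1}$, $\rv_j\to\rp_{j+1}$, $\lp_j\to\lv_j$, $\rp_j\to\rv_j$, with the last two justified via the conjugacy $f^{k+1}=f^2\circ f^{k-1}=\phi\circ h^{j}\circ\phi^{-1}\circ\sigma$. The paper's own proof is in fact somewhat terser on the nondegeneracy and tiling concerns you flag at the end, so your presentation is if anything more careful on those points.
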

\begin{proof}
    This lemma is proved inductively. $f$ can be decomposed into the pieces $\app\lv_0 \lp_1\rv_0 \dep$.
    \begin{itemize}
        \item 
        By unimodality and symmetry, $f$ is strictly increasing on $[0, \frac{1}{2})$ and strictly decreasing on $(\frac{1}{2}, 1]$. 
        There exists some $x_1$ such that $[0, x_1]$ is strictly increasing and $f(x_1) = 1 - x^*$ (because $1 - x^* < x^* < \xmax$). Thus, $f$ is $\app$ on $[0, x_1]$.
        Similarly, $[1-x_1, 1]$ is strictly decreasing and $f(1- x_1) = 1 - x^*$, which implies that $f$ is $\dep$ on $[1-x_1, 1]$.
        
        \item Note that $x_1 < 1 - x^* < x^* < 1-x_1$, and $f$ is increasing on $[x_1, 1 - x^*]$ and decreasing on $[x^*, 1-x_1]$. 
        
        Because $[x_1, 1 - x^*]$ is monotone, there exists continuous and increasing $\sigma: [x_1, 1 - x^*] \rightarrow [1 - x^*, x^*]$ such that $f(x) = \sigma(x)$. Since $h^0$ is the identity map, it trivially also holds that $f(x) = \phi(h^0(\phi^{-1}(\sigma(x))))$. Because $f(x_1) = 1 - x^*$ and $f(x^*) = x^*$, it follows that $f$ is $\lv_0$ on $[x_1, 1- x^*]$.
        
        By a similar argument, $f$ is $\rv_0$ on $[x^*, 1-x_1]$, with the only difference being that $\sigma$ needs to be strictly decreasing for it to hold.
        \item $[1-x^*, x^*]$ is $\lp_1$ because $[1-x^*, x^*]$ is $\lv_0$ on the identity map $f^0$. This trivially holds using the identity $\sigma$ map.
    \end{itemize}
    
    Now, we prove the inductive step, which can be summed up by the following line:
    \[\app \rightarrow \app \lv_0; \dep \rightarrow \rv_0\dep; \lv_j \rightarrow \lp_{j+1}; \lp_j \rightarrow \lv_j; \rv_j \rightarrow \rp_{j+1}; \rp_j \rightarrow \rv_j.\]
    We show each part of the relationship as follows:
    \begin{itemize}
        \item If $f^k$ is $\app$ on $[0, b]$, then there exists some $c \in (0, b)$ such that $f^{k+1}(c) = 1 - x^*$ because $f^k$ is an isomorphism between $[0, b]$ and $[0, x^*]$.
        
        It follows that $f^{k+1}$ is $\app$ on $[0, c]$ because $f^{k+1}$ is strictly increasing on the interval from $0$ to $1 - x^*$.
        
        $[c, b]$ is $\lv_0$ because there must exist some increasing $\sigma$ such that $f^{k+1}(x) = \sigma(x)$ on that interval. Thus, it follows that $f^{k+1} = \phi\circ h^0 \circ \phi^{-1}\circ \sigma$ on $[0,b]$.
        
        \item The same argument holds for $\dep$. If $f^k$ is $\dep$ on $[a, 1]$, then there exists $c \in (a, 1)$ such that $[a, c]$ is $\rv_0$ and $[c, 1]$ is $\dep$.
        
        \item If $f^k$ is $\lv_j$ on $[a, b]$, then $f^{k+1}$ is $\lp_{j+1}$ on the same interval by the definition of $\lp_{j+1}$.
        
        \item Similarly, if $f^k$ is $\rv_j$ on $[a, b]$, then $f^{k+1}$ is $\rp_{j+1}$ on the same interval by the definition of $\rp_{j+1}$.
        
        \item If $f^k$ is $\lp_j$ on $[a, b]$, then $f^{k-1}$ is $\lv_{j-1}$ and hence $f^{k-1}$ maps to $[f(\xmax), x^*]$ on the interval. Therefore, there exists $\sigma$ such that $f^{k-1} = \phi \circ h^{j-1} \circ \phi^{-1} \circ\sigma$ on the interval. We use the properties of $h$ to show that $f^{k+1}$ is $\lv_{j}$ on $[a, b]$. Note that $f^2 = \phi \circ h \circ \phi^{-1}$ on $[f(\xmax), x^*]$.
        \begin{align*}
            f^{k+1}
            &= f^2 \circ f^{k-1}
            = \phi \circ h\circ \phi^{-1} \circ \phi \circ h^{j-1}\circ\phi^{-1}\circ\sigma
            = \phi\circ h^{j}\circ\phi^{-1}\circ\sigma
        \end{align*}
        Thus, $f^{k+1}$ satisfies the condition to be $\lv_j$.
        \item By a identical argument, if $f^k$ is $\rp_j$ on $[a, b]$, then $f^{k+1}$ is $\rv_{j}$.
    \end{itemize}
    The remainder of this argument follows by applying the above transition rules for each piece to the inductive hypothesis about the ordering of pieces in $f^k$ to obtain the ordering for $f^{k+1}$.
\end{proof}

Now, we determine how many local maxima and minima are contained in each type of piece. Let $\maxima(f)$ and $\minima(f)$ represent the number of local maxima and minima respectively on mapping $f$ on interval $[0,1]$. We bound the total number of monotone pieces with these bounds by using $\oscm(f) = 2 \maxima(f)$. We similarly abuse notation to bound the number of maxima and minima in a category with $\maxima(\category{q,z}^k)$ and $\minima(\category{q,z}^k)$, and in the interval $[a,b]$ with $\maxima(f, a,b)$ and $\minima(f, a,b)$. 

By the base case in the previous section $\maxima(\category{0,0}^k) = 1$, $\minima(\category{0,0}^k) = 2$, $\maxima(\category{0,1}^k) = k,$ and $\minima(\category{0,1}^k) = k+1$. We obtain recurrences to represent $\maxima(\category{q,z}^k)$ and $\minima(\category{q,z}^k)$.

For each part, we rely on the following facts: If $\sigma$ is a strictly increasing bijection, then $\maxima(f \circ \sigma, a, b) = \maxima(f,a ,b)$. If $\sigma$ is strictly decreasing, then $\minima(f \circ \sigma, a, b) = \maxima(f,a ,b)$. (The reverse are true for minima of $f$.)

We analyze each type of piece individually, considering what happens when $f$ has some kind of piece on interval $[a, b]$.

\begin{itemize}
    \item Because $\app$ and $\dep$ segments are strictly increasing or decreasing, $\maxima(f, a, b) = 0$ when $f$ has either piece on $[a,b]$. $\minima(f, a,b) = 1$ because segments that support $\app$ contain $0$ and segments with $\dep$ have $1$, each of which $f$ maps to 0.
    \item Because each $\lv_i$ segment of $f^k$ on $[a,b]$ can be represented as $\phi \circ h^{i} \circ \phi^{-1} \circ \sigma$, and because $\phi$ is strictly decreasing, $\maxima(f_r^k, a, b) = \minima(h^i)$ and $\minima(f_r^k, a, b) = \maxima(h^i)$.
    By Lemma~\ref{lemma:doubling-h}, $h \in \category{q-1, 1-z}$, $\maxima(f_r^k, a, b) \leq \minima(\category{q-1, 1-z}^i)$ and $\minima(f_r^k, a, b) \leq \maxima(\category{q-1, 1-z}^i)$.
    
    The same analysis holds for each $\rv_i$ segment. 
    
    \item 
    Consider an $\lp_i$ segment of $f^k$ on $[a, b]$, which has output spanning the interval $[x^*, \xmax]$. Because $x^* > \frac{1}{2}$, $f$ is strictly decreasing on the domain $[x^*, \xmax]$. Thus, $f^{k+1}$ must satisfy $\maxima(f_r^{k+1},a, b) = \minima(f_r^{k},a, b)$ and $\minima(f_r^{k+1},a, b) = \maxima(f_r^{k},a, b)$.
    
    Note by the definition of $\lp_i$ that $[a,b]$ must also support an $\lv_{i-1}$ segment on $f^{k-1}$ and an $\lv_{i}$ segment on $f^{k+1}$. 
    From the previous bullet, the $\lv_i$ segment must have at most $\minima(\category{q-1, 1-z}^i)$ maxima and $\maxima(\category{q-1, 1-z}^i)$ minima. 
    Because there must be a one-to-one correspondence between minima of $f^{k+1}$ and maxima of $f^{k}$ on the interval and vice versa, the $\lp_i$ segment has $\maxima(f_r^k, a, b) \leq \maxima(\category{q-1, 1-z}^i)$ and $\minima(f_r^k, a, b) \leq \minima(\category{q-1, 1-z}^i)$.
    
    The same analysis hold for each $\rp_i$ segment.
\end{itemize}

Therefore, we can construct a recurrence relationship for the number of maxima and minima for $f_r^k$ based on the sequences found in Lemma \ref{lemma-piece-decomposition}.

\begin{align*}
   \maxima(\category{q,z}^k)
    &\leq \underbrace{\sum_{i=0}^{\floor{k/2}} \minima(\category{q-1, 1-z}^k)}_{\lv_i} 
    + \underbrace{\sum_{i=0}^{\floor{(k-1)/2}} \minima(\category{q-1, 1-z}^k)}_{\rv_i} 
    \\&\quad+ \underbrace{\sum_{i=1}^{\floor{(k+1)/2}} \maxima(\category{q-1, 1-z}^k)}_{\lp_i} 
    + \underbrace{\sum_{i=1}^{\floor{k/2}} \maxima(\category{q-1, 1-z}^k)}_{\rp_i} \\
    \minima(\category{q,z}^k)
    &= \underbrace{2}_{\app \& \dep} 
    + \underbrace{\sum_{i=0}^{\floor{k/2}} \maxima(\category{q-1, 1-z}^k)}_{\lv_i} 
    + \underbrace{\sum_{i=0}^{\floor{(k-1)/2}} \maxima(\category{q-1, 1-z}^k)}_{\rv_i} 
    \\&\quad+ \underbrace{\sum_{i=1}^{\floor{(k+1)/2}} \minima(\category{q-1, 1-z}^k)}_{\lp_i} 
    + \underbrace{\sum_{i=1}^{\floor{k/2}} \minima(\category{q-1, 1-z}^k)}_{\rp_i} 
\end{align*}

We bound $\maxima(\category{q,z}^k)$ and $\minima(\category{q,z}^k)$ by induction to prove Proposition~\ref{prop:osc-category-bound}.
We use the following inductive assumption over all $k$, $q$, and $z$, which suffices to prove the claim:
\[\maxima(\category{q, z}^k), \minima(\category{q, z}^k) \leq \begin{cases}
   (4q)^k & \text{$q$ is even, $z = 0$, or $q$ is odd, $z = 1$}\\
   (4q)^{k+1} & \text{$q$ is even, $z = 1$, or $q$ is odd, $z = 0$}.
\end{cases}\]

By the previous section, the claim holds for $q = 0$ and all $k$ and $z$, which gives the base case.

Moving forward, we assume that the claim holds for all values of $q'$ with $q' \leq q$ and any $k$ and $z$. 
We prove that it holds for $q+1$ with any choices of $k$ and $z$.

We show that the bound holds for $\minima(\category{q+1, z}^k)$ when $q+1$ is even and $z = 1$, or $q+1$ is odd and $z = 0$. The other cases are nearly identical. Since the bounds are trivial for $k = 1$, we prove them below for $k \geq 2$.

\begin{align*}
    \minima(\category{q+1, z}^k)
    &\leq 2 + \sum_{i=0}^{\floor{k/2}} (4i)^{q+1} + \sum_{i=0}^{\floor{(k-1)/2}} (4i)^{q+1} + \sum_{i=0}^{\floor{(k+1)/2}} (4i)^{q+1} + \sum_{i=0}^{\floor{k/2}} (4i)^{q+1} \\
    &\leq 4 \cdot \frac{k}{2} \cdot (2k)^{q+1} + (2(k+1))^{q+1} \leq (2k)^{q+2} + (3k)^{q+1} \leq (4k)^{q+2}.
\end{align*}

\subsection{Proof of Theorem~\ref{thm:properties-doubling}, Claim 2}\label{assec:doubling-nn}

We restate the claim:
\begin{proposition}[Claim 2 of Theorem~\ref{thm:properties-doubling}]\label{prop:properties-doubling-approx}
    Suppose $f$ is a symmetric unimodal mapping whose maximal cycle is of length $p = 2^q$. 
    For any $k \in \N$, there exists $g\in\mathcal{N}(u, 2)$ with width $u = O((4k)^{q+1}/\eps)$ such that $L_\infty{f^k,g} \leq \epsilon$. Moreover, if $f = \ftent{r}$, then there exists $g$ of width $O((4k)^{q+1})$ with $g = f^k$.
\end{proposition}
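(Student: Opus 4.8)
The plan is to derive Claim~2 directly from Claim~1 of Theorem~\ref{thm:properties-doubling} (established above as Proposition~\ref{prop:properties-doubling-osc}), using the elementary fact that a map with few monotone pieces has a small piecewise-linear approximant and that any continuous piecewise-linear function on $[0,1]$ is exactly a depth-$2$ ReLU network whose width equals its number of affine pieces (up to an additive constant). No new dynamical-systems input is needed beyond Claim~1.

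First I would invoke Claim~1: $\oscm(f^k) = O((4k)^{q+1})$, so there is a partition $0 = z_0 < z_1 < \dots < z_m = 1$ with $m = O((4k)^{q+1})$ on each cell of which $f^k$ is monotone. On a cell $[z_{i-1}, z_i]$, say with $f^k$ increasing from $a$ to $b$ (the decreasing case is symmetric), I would insert breakpoints at the preimages within the cell of the values $a, a+\eps, a+2\eps, \dots$; there are at most $\lceil 1/\eps\rceil$ of these, and on each resulting subcell the total variation of $f^k$ is at most $\eps$. Doing this on all $m$ cells refines the partition into $N := O((4k)^{q+1}/\eps)$ subcells (for $\eps \le 1$; for $\eps>1$ the claim is trivial). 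Let $g$ be the function agreeing with $f^k$ at all $N+1$ breakpoints and affine in between. On a subcell with endpoint values $v,v'$ one has $|v-v'|\le\eps$, and both $f^k(x)$ (by monotonicity) and $g(x)$ (by linearity) lie between $v$ and $v'$, so $|f^k(x)-g(x)|\le\eps$ for all $x$; hence $\norm[\infty]{f^k - g} \le \eps$.

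Next I would realize $g$ exactly as a network in $\mathcal{N}(N,2)$. Writing the breakpoints as $0 < t_1 < \dots < t_{N-1} < 1$ and the slopes of the $N$ affine pieces as $s_0,\dots,s_{N-1}$, we have $g(x) = g(0) + s_0\,\mathrm{ReLU}(x) + \sum_{i=1}^{N-1}(s_i - s_{i-1})\,\mathrm{ReLU}(x - t_i)$, using that $x = \mathrm{ReLU}(x)$ on $[0,1]$ and that $g(0)$ is an output bias. This is a single hidden layer of $N$ ReLU units, so $g \in \mathcal{N}(N, 2)$ with $N = O((4k)^{q+1}/\eps)$, giving the first part. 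For the tent-map case, take $g = f^k$ directly: $f = \ftent{r}$ is piecewise linear, hence so is $f^k$, and — exactly as in \cite{telgarsky16} — each breakpoint of $f^k$ is a point whose orbit reaches $\tfrac12$ before time $k$, and applying $\ftent{r}$ multiplies local slopes by $\pm 2r$ while inserting a sign change precisely at such points, so adjacent affine pieces never share a slope; thus the breakpoints of $f^k$ are exactly its turning points and $f^k$ has exactly $\oscm(f^k) = O((4k)^{q+1})$ affine pieces. The representation above with $g = f^k$ then puts $g$ in $\mathcal{N}(O((4k)^{q+1}),2)$.

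The substantive content is entirely in Claim~1, which is assumed; the remaining steps are routine. The only place requiring a little care is the tent-map count: one must check that iterating $\ftent{r}$ does not accidentally produce two consecutive affine pieces of equal slope, so that the number of affine pieces really equals $\oscm(f^k)$ and not something larger. This is exactly the slope-sign-change argument indicated above, which I expect to be the mildest of obstacles.
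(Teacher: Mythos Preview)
Your proposal is correct and follows essentially the same route as the paper: invoke Claim~1 to bound $\oscm(f^k)$, approximate each monotone piece by $O(1/\eps)$ linear segments, and realize the resulting piecewise-linear function as a width-$O(m/\eps)$ depth-2 ReLU network. The only minor difference is in the tent-map case: the paper observes directly that $\abs{\frac{d}{dx}\ftent{r}^k(x)} = (2r)^k$ wherever the derivative exists, so each monotone piece is automatically a single affine piece, whereas you reach the same conclusion through the slope-sign-change argument; both are fine and equally routine.
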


\begin{proof}
    
This part follows the bound on monotone pieces of $f^k$ given in Proposition~\ref{prop:properties-doubling-osc} and a simple neural network approximation bound.

\begin{lemma}
    Consider some continuous $f: [0,1] \to [0,1]$ with $\oscm(f) \leq m$.
    For any $\eps \in (0, 1)$, there exists $g \in \mathcal{N}(u,2)$ of width $u =O(\frac{m}{\eps})$ such that $L_\infty(f, g) \leq \eps.$
\end{lemma}

\begin{proof}
    A monotone function mapping to $[0,1]$ can be $\eps$-approximated by a piecewise-linear function with $O(\frac{1}{\eps})$ pieces, and hence, a 2-layer ReLU network of width $O(\frac{1}{\eps})$.
    
    Every monotone piece can be approximated as such, which means that $g$ has width $O(\frac{m}{\eps})$.
\end{proof}

For the case where $f = \ftent{r}$ for some $r$, it is always true that $\abs{\frac{d}{dx} \ftent{r}^k(x)} = (2r)^k$, except when $x$ is a local maximum or minimum. 
Thus, every monotone piece of $f^k$ is linear, and $f$ can be exactly expressed with a piecewise linear function with $O((4q)^{k+1})$ pieces, and also a ReLU neural network of width $((4q)^{k+1})$.
\end{proof}

\subsection{Proof of Theorem~\ref{thm:properties-doubling}, Claim 4}\label{assec:doubling-vc}

Recall that for unimodal $f: [0,1]\to[0,1]$ and threshold $t \in (0,1)$, \[\hyp{f,t} := \{\thres{t}{ f^k}: k \in \mathbb{N}\}\]
is the hypothesis class under consideration.

\begin{proposition}[Claim 4 of Theorem~\ref{thm:properties-doubling}]\label{prop:properties-doubling-vc}
    Suppose $f$ is a symmetric unimodal mapping whose maximal cycle is of length $p = 2^q$. 
    For any $t \in (0, 1)$, $\vc(\hyp{f, t}) \leq 18p^2$.
\end{proposition}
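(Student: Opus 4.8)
The plan is to reduce the statement to a question about membership of forward orbits of $f$ in a single fixed interval, and then exploit the period‑doubling hierarchy already extracted in the proof of Claim~1 (Proposition~\ref{prop:properties-doubling-osc}). Since $f$ is symmetric unimodal with maximum value $\xmax=f(\tfrac12)$, if $t>\xmax$ then $\thres{t}{f^k}\equiv 0$ for all $k\ge 1$ and there is nothing to prove; so assume $t\le\xmax$ and set $J:=f^{-1}([t,\xmax])$, a closed interval $[\alpha,1-\alpha]$ symmetric about $\tfrac12$ with $0<\alpha\le\tfrac12$. Then $\thres{t}{f^k}(x)=\indicator{f^{k-1}(x)\in J}$ for $k\ge 1$, so it suffices to bound $\vc(\mathcal G)$ for $\mathcal G=\{x\mapsto\indicator{f^k(x)\in J}:k\ge 0\}$ (the identity iterate adds at most $1$). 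For $y\in[0,1]$ let its \emph{$J$-itinerary} be $b_y\in\{0,1\}^{\N}$, $b_y(k)=\indicator{f^k(y)\in J}$. Then $x_1<\dots<x_d$ is shattered iff $k\mapsto c(k):=(b_{x_1}(k),\dots,b_{x_d}(k))$ attains all of $\{0,1\}^d$, so the goal is to bound $|\{c(k):k\in\N\}|$ by a function of $p$ alone.

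The second ingredient is the hierarchy from the proof of Proposition~\ref{prop:properties-doubling-osc} — the $q$-fold iteration of the conjugacy $h=\phi^{-1}\circ f^2\circ\phi\in\category{q-1,\cdot}$ of Lemmas~\ref{lemma:doubling-h} and~\ref{lemma:hr-super-stable}. It provides, for each $j\in\{0,\dots,q\}$, a family $\mathcal B_j$ of $2^j$ pairwise disjoint closed intervals, cyclically permuted by $f$ with period $2^j$, nested two‑to‑one inside $\mathcal B_{j-1}$, with $\tfrac12$ lying in some $B_j\in\mathcal B_j$, and with $f^{2^j}$ restricted to any $I\in\mathcal B_j$ affinely conjugate to a map in $\category{q-j,\cdot}$. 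I will use two features. First (\emph{orbits only descend}): $f(\mathcal B_j)=\mathcal B_j$, so the level $\ell_y(k):=\max\{j:f^k(y)\in\mathcal B_j\}$ is nondecreasing in $k$; and since a $\category{m,\cdot}$-map with $m\ge 1$ sends every orbit eventually into its sub‑core, while a $\category{0,\cdot}$-map has, by Sharkovsky's Theorem~\ref{thm:sharkovsky}, no cycle of period $>1$ and hence all orbits converging to fixed points, every orbit eventually settles onto a periodic orbit of period $\le 2^q=p$; in particular each $b_y$ is eventually periodic with period dividing $p$. Second (\emph{monotone drift}): while the orbit stays at a fixed level $j$, inside each $I\in\mathcal B_j$ the portion of $I$ it occupies is traversed monotonically — the monotone escape toward the sub‑core when $q-j\ge 1$, or the at‑worst‑two‑monotone convergence of a $\category{0,\cdot}$-map when $j=q$ — so along each residue class modulo $2^j$ the iterate $f^k(y)$ drifts monotonically inside its current interval and crosses the two endpoints of $J$ only $O(1)$ times.

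For the count: monotonicity of the level yields a partition of $\N$ into at most $1+dq$ consecutive blocks on which the whole profile $(\ell_{x_1}(k),\dots,\ell_{x_d}(k))$ is constant. On a block with profile $(\ell_1,\dots,\ell_d)$, put $L=\mathrm{lcm}(2^{\ell_1},\dots,2^{\ell_d})=2^{\max_i\ell_i}\le p$; along each of the $\le p$ residue classes modulo $L$, each coordinate of $c$ is a fixed two‑sided threshold ($z\mapsto\indicator{z\in[\alpha,1-\alpha]}$) applied to a monotone (or two‑monotone) drift, hence flips $O(1)$ times, so $c$ takes $O(d)$ values along that class and $O(dp)$ values on the block. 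Summing over blocks, $|\{c(k):k\in\N\}|=O(d^2pq)=O(d^2p\log p)$, so shattering forces $2^d\le O(d^2p\log p)$, i.e.\ $d=O(\log p)$ — comfortably below $18p^2$ for every $p=2^q$. (The bound $18p^2$ is extremely generous; it is stated that way only to avoid optimizing constants and to absorb the edge cases $t=\xmax$, the finitely many $x_i$ eventually mapped onto $0$, the $k=0$ term, and small $q$.)

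The hard part will be the monotone‑drift claim. Periodicity and Sharkovsky's Theorem alone do not prevent an orbit from dwelling at some level for an unbounded time while $\indicator{f^k(x_i)\in J}$ flips arbitrarily often; ruling this out is exactly what the $\category{m,\cdot}$ recursion buys, since it forces the dwell to be a genuinely monotone passage through the level's interval. Making this precise means carrying the affine‑conjugacy bookkeeping of the Claim~1 proof one step further — tracking \emph{where} the endpoints $\alpha,1-\alpha$ of $J$ fall relative to the hierarchy $\{\mathcal B_j\}$, not merely how many monotone pieces $f^k$ has — which is where the ``regular expression'' description of the admissible itineraries becomes the natural language.
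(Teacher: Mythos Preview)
Your route is genuinely different from the paper's, and if completed it would give a much stronger bound, $\vc(\hyp{f,t})=O(\log p)$ rather than $18p^2$. The paper does not count orbit–itinerary vectors directly; it introduces a notion of \emph{weak VC-dimension} for subsets of $\{0,1\}^{\mathbb N}$, proves a small calculus of regex bounds (Lemma~\ref{lemma-regex}), shows that $\Seq{\hyp{f,t}}$ is contained in an explicit regex, and then establishes the recursion $\vcw(\hyp{f,t})\le 4\max_{g\in\category{q-1},\,t'}\vcw(\hyp{g,t'})+10$ via the conjugacy $h\in\category{q-1,1-z}$ of Lemma~\ref{lemma:doubling-h}. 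Unrolling gives $18\cdot 4^q=18p^2$. Your level-profile block decomposition attacks the same hierarchy but counts distinct itinerary vectors additively, avoiding the multiplicative factor of $4$ at every level of the recursion --- that is exactly where your quantitative improvement comes from.

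The gap is that the monotone-drift assertion carries the whole argument and you do not prove it. It is in fact true, but verifying it requires precisely the interval bookkeeping the paper carries out in its Claim~4 proof: at a level $j<q$ one partitions the level-$j$ interval (for $g_j=f^{2^j}$) into $I_0,\dots,I_m,J_{m+1},I'_m,\dots,I'_0$, checks that $g_j$ sends each $I'_k$ into the left half in one step and is monotone increasing on each $I_k$, so the dwell outside the sub-core is ``one jump then monotone''; at level $q$ with $g_q\in\category{0,1}$ one must pass to $g_q^2$ --- hence residue classes modulo $2p$, not $p$ --- to straighten out the oscillation around the fixed point $x^*>\tfrac12$. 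Two small corrections: orbits \emph{converge to} periodic orbits rather than ``settle onto'' them (they need not hit the cycle, though monotonicity makes this harmless for counting threshold crossings), and the statement that a $\category{0,\cdot}$-map has no period $>1$ is definitional rather than a consequence of Sharkovsky. With these patches your counting goes through; but you have not bypassed the paper's hard step, only recast it from the regex (spatial) language into orbit (temporal) language, trading the paper's loose multiplicative recursion for a tight additive one.
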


This proof is involved and requires some setup and new definitions. 

\subsubsection{Notation}
Let $\boolseq$ represent all countable infinite sequences of Boolean values, and let $\boolstr$ represent all finite sequences (including the empty sequence).

For $y \in \boolseq$, let $y_{i:j} = (y_i, \dots, y_{j}) \in \bit^{j-i+1}$ and $y_{i:} = (y_i, y_{i+1}, \dots) \in \boolseq$.
For $w \in \bit^n, w' \in \bit^{n'}$, let $ww' = w \circ w' \in \bit^{n+n'}$ be their concatenation.
Let $w^j = w \circ w \circ \dots \circ w \in \bit^{jn}$.

\subsubsection{Iterated Boolean-Valued Functions, Regular Expressions, and VC-Dimension}
Before we give the main result, we give a way to upper-bound the VC-dimension of countably infinite hypothesis classes $\hc = \{h_1, h_2, \dots, \} \subseteq ([0,1] \to \{0, 1\})$. 
For some $x \in \mathcal{X}$, define $\seq{\hc}: [0,1] \to \boolseq$ as $\seq{\hc}(x) = (h_i(x))_{i \in \N}$. We by $\hc$ over all choices of $x \in [0,1]$:
\[\Seq{\hc} = \{\seq{\hc}(x): \ x \in [0,1]\} \subset \boolseq.\]

With this notation, $\hc$ shatters $d$ points if and only if there exist $\ys1,\dots,\ys{d} \in \Seq{\hc}$ such that $\absl{\{(\ys1_j, \dots, \ys{n}_j): j \in \N\}} = 2^d$.
We equivalently say that $\ys1,\dots,\ys{d}$ are shattered.

Here's where the idea of Regular Expressions (Regexes) comes in. If we can show all elements in $\Seq{\hc}$ are represented by some infinite-length Regex, then we can upper-bound the number of points $\hc$ can shatter, which is necessary to bound the expressive capacity of unimodal functions with recursive properties. 

To that end, we first introduce a different notion of shattering. Then, we'll give an upper-bound for the VC-dimension of $\hc$ when we have a Regex for $\Seq{\mathcal{H}}$. 

\begin{definition}
    We say that $\hc$ (or $\Seq{\hc}$) \textbf{weakly shatters} $d$ points if there exist $\ws{1}\circ \ys1, \dots , \ws{d}\circ \ys{d} \in \Seq{\mathcal{H}}$ for $\ws1,\dots, \ws{d} \in \boolstr$ such  that $\ys1,\dots, \ys{d}$ are shattered.
    Let the \textbf{weak VC-dimension} of $\hc$ represent the maximum number of points $\hc$ can weakly shatter and denote it $\vcw(\hc) = \vcw(\Seq{\hc})$.
\end{definition}

Using this notation, we can extend our notion of weak VC-dimension to any subset of $\boolseq$, whether or not it corresponds to a hypothesis class. If $\hc \subset S \subset \boolseq$, then $\vcw(\hc) \leq \vcw(S)$.

Note that if $\hc$ shatters $d$ points, then it also trivially weakly shatters $d$ points. We can get this by taking $w_1 = \dots, w_d$ to be the empty strings. Thus, the $\vc(\hc) \leq \vcw(\hc)$.

A Regex is a recursively defined subset of $\boolseq$ that can be represented by a string. We describe how a Regex $R \subseteq \boolseq$ can be defined below.

\begin{itemize}
    \item One way to define a Regex is with a repeating sequence $w^{\infty}$ for $w \in \bit^n$. That is, \[w^{\infty} = \{y \in \boolseq: y_{in+1: (i+1)n} = w, \forall i \in \N\}.\] For instance, $(011)^{\infty} = \{(0,1,1,0,1,1,0,1,1,\dots)\}$.
    \item For $w \in \bit^n$, if $R$ is a Regex, then $w R$ is also a Regex. This means satisfying sequences must start with $w$ and then the remainder of the bits must satisfy $R$. 
    \[wR = \{y \in \boolseq: y_{1:n} = w, y_{n+1:} \in R\}.\]
    \item $w^* R$ is also a Regex, where $w^*$ represents any number of recurrences of the finite sequence $s$. That is,
    \[w^* R =\cup_{j=0}^\infty w^j R.\]
    \item If $R'$ is also a Regex, then so is $R \cup R'$.
    \item If $R'$ is also a Regex, then so is $R \oplus R'$, where the odd entries of sequences in $R \oplus R'$ concatenated together must be in $R$ and the even entries must be in $R'$.
    \[R \oplus R' = \{y \in \boolseq: y_{1,3,5, \dots} \in R, y_{2,4,6,\dots} \in R'\}.\]
\end{itemize}

Now, we can create a recursive upper-bound on the number of points $\hc$ can weakly shatter. 
To do so, we assume that $\hc \subseteq R$ for some Regex $R$ and bound the weak VC dimension of $R$.

\begin{lemma}\label{lemma-regex}
 Consider infinite-length Regexes $R, R', R''$ and $w \in \bit^n$.
 \begin{enumerate}
     \item If $R = w^{\infty}$, then $\vcw(R) \leq \log_2 n$.
     \item If $R = wR'$, then $\vcw(R) \leq \vcw(R') + \log_2 n + 1$.
     \item If $R = w^* R'$, then $\vcw(R) \leq \vcw(R') + \log_2 n + 1$.
     \item If $R = R' \cup R''$, then $\vcw(R) \leq \vcw(R') + \vcw(R'')$.
     \item If $R = R' \oplus R''$, then $\vcw(R) \leq 4 \max(\vcw(R'), \vcw(R''))+2$.
 \end{enumerate}
\end{lemma}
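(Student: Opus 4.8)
The plan is to prove the five parts more or less independently, built on a small toolkit about a $d$-row Boolean matrix $B$ whose columns realize all of $\bit^d$ — which is exactly what it means for $B$ to shatter its $d$ rows. The two facts I will use repeatedly are: (i) if we split the columns of $B$ into two groups $B=[B_1\mid B_2]$, then the distinct-column-sets of $B_1$ and $B_2$ together cover $\bit^d$, so $\lvert\mathrm{cols}(B_1)\rvert+\lvert\mathrm{cols}(B_2)\rvert\ge 2^d$; and (ii) a ``dual Sauer--Shelah'': if each row of $B$ is a suffix of a sequence in a Regex $S$ (so that $B$ is a \emph{candidate} weak-shattering witness for $S$), then the set system $\{\{i:B_{ij}=1\}:j\}$ has VC-dimension at most $\vcw(S)$ — a shattered row-subset would itself be a weak-shattering witness for $S$ — hence $B$ has at most $\sum_{i=0}^{\vcw(S)}\binom{d}{i}$ distinct columns. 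I will also use the ``missing-pattern'' estimate (if $B$ has at least $2^d-x$ distinct columns, every row-subset $T$ with $2^{d-\lvert T\rvert}>x$ is shattered, since otherwise a whole $2^{d-\lvert T\rvert}$-slab is avoided) and the binomial tail bound $\sum_{i\le m}\binom{N}{i}\le 2^{N-1}$ for $m<N/2$, so that $2^{N}\le 2\sum_{i\le m}\binom{N}{i}$ forces $N\le 2m+1$.

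For part~1, $w^\infty=\{y\}$ is a single $n$-periodic sequence; every $\ys i$ in a witness is a suffix of $y$, and $\ys i_j$ depends only on $(o_i+j)\bmod n$, hence on $j\bmod n$, so the sequence matrix has at most $n$ distinct columns and $2^d\le n$. For part~2, I would strip the first $n$ columns: each row of a witness for $wR'$ equals $w$ followed by a suffix of some $z^{(i)}\in R'$, so $B_1$ (first $n$ columns) has at most $n$ distinct columns while $B_2$ is a candidate witness for $R'$ on the same $d$ rows; by fact~(i), $B_2$ has at least $2^d-n$ distinct columns, so by the missing-pattern estimate it shatters some row-subset of size $\ge d-\log_2 n-1$, giving $\vcw(R')\ge d-\log_2 n-1$. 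Part~3 has the same shape — split off the periodic prefix, reduce the tail to a candidate witness for $R'$ — but the number of repeated blocks $w$ in a witness sequence is unbounded. A useful first move is the ``equalization'' observation: replacing a witness sequence $w^{m_i}z^{(i)}$ at offset $o_i$ by $w^{M}z^{(i)}$ at offset $o_i+n(M-m_i)$ with $M=\max_i m_i$ leaves the sequence matrix unchanged, so one may assume all rows use exactly $M$ blocks; then one cuts the columns at the end of all periodic prefixes, using that on rows still in ``periodic mode'' a column is determined by $j\bmod n$ (only $n$ options) while on rows already past their switch it is a column of a candidate $R'$-matrix, and combines this across the at most $d$ ``mode blocks'' and the final all-tail block to force the $R'$-tail to realize a $(d-\log_2 n-1)$-dimensional subcube on that many rows.

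For part~4, partition $[d]$ into $D'=\{i:s^{(i)}\in R'\}$ and $D''=[d]\setminus D'$, where $s^{(i)}$ is the witness sequence of row $i$; the restriction of the shattering matrix to the rows of $D'$ still realizes all $2^{\lvert D'\rvert}$ patterns, and it is a weak-shattering witness for $R'$, so $\lvert D'\rvert\le\vcw(R')$, likewise $\lvert D''\rvert\le\vcw(R'')$, and $d=\lvert D'\rvert+\lvert D''\rvert$. For part~5, partition $[d]$ by the parity of the offset $o_i$ into $E$ and $O$; for $i\in E$ the odd-indexed entries of $\ys i$ are entries of the $R'$-component and the even-indexed ones are entries of the $R''$-component (the roles swap for $i\in O$). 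Restricting the matrix to rows $E$ realizes all $2^{\lvert E\rvert}$ patterns; splitting those columns into odd and even, the odd-column submatrix is a candidate $R'$-witness and the even-column one a candidate $R''$-witness, so by dual Sauer--Shelah they have at most $\sum_{i\le\vcw(R')}\binom{\lvert E\rvert}{i}$ and $\sum_{i\le\vcw(R'')}\binom{\lvert E\rvert}{i}$ distinct columns; by fact~(i) their pattern-sets cover $\bit^{E}$, so $2^{\lvert E\rvert}\le 2\sum_{i\le m}\binom{\lvert E\rvert}{i}$ with $m=\max(\vcw(R'),\vcw(R''))$, forcing $\lvert E\rvert\le 2m+1$; symmetrically $\lvert O\rvert\le 2m+1$, hence $d\le 4m+2$.

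The main obstacle is part~3. For $wR'$ the periodic prefix has length exactly $n$, so ``split off the prefix columns'' loses only a $\log_2 n$-dimensional worth of patterns; for $w^{*}R'$ a witness row can have an arbitrarily long periodic prefix, and the naive split is vacuous, because when many rows have short prefixes and are already in tail mode the prefix columns can carry up to $\sim n\cdot 2^{d}$ distinct patterns. The delicate point is to combine the $n$-periodicity of the prefix (it can only ``mask'' a $\log_2 n$-dimensional family of column patterns) with the equalization trick and a careful block-by-block accounting, tight enough to recover the additive constant $\log_2 n+1$ rather than the weaker polynomial-in-$d$ bound that dual Sauer--Shelah yields directly.
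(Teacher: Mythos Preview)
Your arguments for Parts 1, 2, 4, and 5 are correct and match the paper's proofs (your Part 5 is arguably tidier: you bound each parity class separately and add, whereas the paper singles out the larger class and pays a crude $2^{2d+1}$ factor for the remaining rows; both reach $4m+2$). The genuine gap is Part 3, exactly where you flagged it.

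The paper avoids any block-by-block accounting. Given witnesses $\ws i\circ\ys i\in w^*R'$, let $s_i$ be the ``switch column'' of row $i$ --- the last column of $\ys i$ still inside the periodic $w^*$-prefix. Sort the rows so that $s_1\ge s_2\ge\dots\ge s_d$, and take a \emph{single} cutoff $\ell^*:=s_{1+\lfloor\log_2 n\rfloor}$. For every column $j\le\ell^*$, the first $1+\lfloor\log_2 n\rfloor$ rows are still in periodic mode, so the column pattern restricted to those rows depends only on $j\bmod n$ --- at most $n$ possibilities. Hence some $v\in\bit^{1+\lfloor\log_2 n\rfloor}$ never appears in those top coordinates among columns $j\le\ell^*$; since the full matrix realizes all of $\bit^d$, every $v\circ\sigma$ must occur at some column $j>\ell^*$. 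Therefore the last $d-1-\lfloor\log_2 n\rfloor$ rows, restricted to columns $j>\ell^*$, realize all $2^{d-1-\lfloor\log_2 n\rfloor}$ patterns; and those rows have $s_i\le\ell^*$, so past $\ell^*$ they are already suffixes of elements of $R'$ --- a valid weak-shattering witness for $R'$. This gives $\vcw(R')\ge d-\log_2 n-1$ directly.

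The idea you were missing is that you do not need to track all $d$ switch points or combine across mode blocks: one cutoff, chosen at the $(1+\lfloor\log_2 n\rfloor)$-th latest switch, simultaneously (a) keeps enough rows periodic before the cutoff to cap the pre-cutoff top-row patterns at $n$, and (b) pushes enough rows into $R'$-mode after the cutoff to witness the residual shattering. Your equalization move is harmless but unnecessary --- the sort-and-cut works directly on the original witnesses.
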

\begin{proof}
\begin{enumerate}
    \item If $R = w^{\infty}$, then the set $Y = \{y: w\circ y \in w^\infty, w \in \boolstr\}$ contains at most $n$ elements. 
    Hence, 
    \[\abs{\{(\ys1_j,\dots,\ys{d}_j): j\in\N\}} \leq n\]
    for any fixed $\ys1,\dots,\ys{d}\in Y$, and
    no more than $d= \log_2 n$ points can be weakly shattered.
    
    \item Suppose $R$ weakly shatters $d$ points, so $\ys1,\dots, \ys{d}$ are shattered for some $\ws1\circ\ys1,\dots,\ws{d}\circ \ys{d} \in R$.
    If $Y = \{(\ys1_j, \dots, \ys{d}_j): j \in \N\}$ and $Y_n = \{(\ys1_j, \dots, \ys{d}_j): j \leq n\}$, then $\absl{Y} = 2^d$ and $\absl{Y_n} \leq n$.
    There exists some $v \in \bit^{1+ \log_2 n}$ such that $v \circ \sigma \in Y \setminus Y_n$ for all $\sigma \in \bit^{d -1- \log_2 n}$. 
    Therefore, \[\absl{\{(\ys{2+\log_2 n}_j, \dots, \ys{d}_j): j > n\}} = 2^{d - 1 - \log_2n},\] and there exist $d -1- \log_2 n$ points that can be weakly shattered by $R'$, since none of the labelings with $w$ are necessary.
    
    
    \item Once again, suppose $R$ weakly shatters $d$ points, $\ys1,\dots, \ys{d}$ for $\ws1\circ\ys1,\dots,\ws{d}\circ \ys{d} \in R$.
    Because each $\ws{i} \circ \ys{i} \in w^*R'$, there exists an index $\ell_i$ such that $(\ws{i} \circ \ys{i})_{1:\ell_i} = w^{\ell_i / n}$ and $(\ws{i} \circ \ys{i})_{\ell_i+1:} \in R'$. 
    Without loss of generality, assume $\ys1,\dots, \ys{d}$ are ordered such that $\ell_i - \abs{\ws{i}}$ decreases.
    That is, the first $1+\log_2 n$ sequences are the ones that ``leave $w^*$ last.''
    Let $\ell^* := \ell_{1+\log_2 n} - \abs{\ws{1+\log_2 n}}$.
    Define $Y$ and $Y_{\ell^*}$ analogously to the previous part and note that $\absl{Y} = 2^d$. Because $Y_{\ell^*}$ corresponds only to labelings where the first $1 + \log_2 n$ elements come from subsets of $w^\infty$, there exists some $v \in \bit^{1+ \log_2 n}$ such that $v \circ \sigma \in Y \setminus Y_{\ell^*}$ for all $\sigma \in \bit^{d - 1  -\log_2 n}$.
    As before, there exist $d -1- \log_2 n$ points that can be weakly shattered by $R'$
    
    \item There is no set of $\vcw(R')+1$ and $\vcw(R'') +1$ points that can be weakly shattered by $R'$ and $R''$ respectively. 
    Any $\vcw(R')+\vcw(R'')+1$ points in $R$ must have at either $\vcw(R')+1$ points in $R'$ or $\vcw(R'')+1$ points in $R''$.
    Thus, at least one subset cannot be shattered.
    
    
    \item 
    Suppose without loss of generality that $d := \vcw(R') \geq \vcw(R'')$.
    Consider any $\ws{1}\circ\ys1,\dots, \ws{d}\circ\ys{4d+3} \in R$.
    WLOG, assume that $\abs{\ws{1}}, \dots, \abs{\ws{2d+2}}$ are even, which implies that $\ws{1}_{\text{odd}}\circ \ys{1}_{\text{odd}}, \dots \ws{2d+2}_{\text{odd}}\circ \ys{2d+2}_{\text{odd}} \in R'$ and $\ws{1}_{\text{even}}\circ \ys{1}_{\text{even}}, \dots \ws{2d+2}_{\text{even}}\circ \ys{2d+2}_{\text{even}} \in R''$. 
    Therefore,
    \begin{align*}
        \abs{\{(\ys1_j, \dots, \ys{4d+3}_j): j \in \N\}} 
        &\leq 2^{2d+1}\abs{\{(\ys1_j, \dots, \ys{2d+2}_j): j \in \N\}} \\
        &\leq 2^{2d+1}\paren{\abs{\{(\ys1_j, \dots, \ys{2d+2}_j): j \in \N_{\text{odd}}\}} + \abs{\{(\ys1_j, \dots, \ys{2d+1}_j): j \in \N_{\text{even}}\}}} \\
        &\leq 2^{2d+1}\cdot 2 \sum_{i=0}^{d} {2d+2 \choose i} 
        < 2^{2d+2}\cdot 2^{2d+1} 
        = 2^{4d+3}.
    \end{align*}
    The last line follows by the Sauer Lemma. 
    Thus, $R$ cannot shatter $4d+3$ points if $R'$ and $R''$ cannot shatter $d$ points.
    
\end{enumerate}
\end{proof}

Here's an example of how to apply our regex rules:
\begin{align*}
    \vcw(1^* 0(01)^{\infty} \cup 10^{\infty})
    &\leq \vcw(1^* 0(01)^{\infty}) + \vcw(10^{\infty}) \\
    &\leq 1 + \vcw(0(01)^{\infty}) + 1 + \vcw(0^\infty) \\
    &\leq 2 + 1 + \vcw((01)^\infty) \\
    &\leq 3 + 1 = 4.
\end{align*}

\subsubsection{Proof of the Proposition~\ref{prop:properties-doubling-vc}}
Recall that we consider the hypothesis class \[\hc_{f,t} := \{\thres{t}{  f^k}: k \in \N\}\]
for symmetric unimodal $f$ and $t \in (0,1)$.

To build up the argument, we first bound the VC-dimension for two simple cases. 
\begin{itemize}
    \item 
    First, we consider the case when $f$ has no fixed point. 
    Thus, for all $x \in (0, 1]$, $f(x) < x$, which means that the sequence $f(x), f^2(x), \dots$ is decreasing. 

    If the threshold $t$ is $0$ or is greater than $f(\frac{1}{2})$, then the sequence will be all 0's or 1's, which will imply that $\vc(\hc_{f, t}) = 0$. Thus, the only interesting thresholds are $t \in (0, f(\frac{1}{2})]$. Because the sequence is decreasing, $\Seq{\hc_{f,t}} = 1^* 0^{\infty}$. From Lemma \ref{lemma-regex}, $\vc(\hc_{f,t}) \leq \vcw(\hc_{f, t})  \leq 1$.
    
    \item 
    Let $x_1 < \dots < x_m$ be all the fixed points of $f$. Suppose $x_m \leq \frac{1}{2}$. By symmetry, for all $j \in [m]$, $f(1-x_j) = x_j$.
    
    To analyze this function, we partition $[0, 1]$ into $2m + 2$ intervals: $I_0 = [0, x_1), I_0' = (1-x_1, 1]$, $I_m = [x_m, \frac12]$, $I_m' = (\frac12, 1-x_m]$, $I_j = [x_j, x_{j+1})$, and $I_j' = (1-x_{j+1},1- x_j]$ for all $j \in \{1, \dots, m-1\}$ (visualized in Figure \ref{fig:simple-fixed-point}).
    
    \begin{figure}
        \centering
        \input{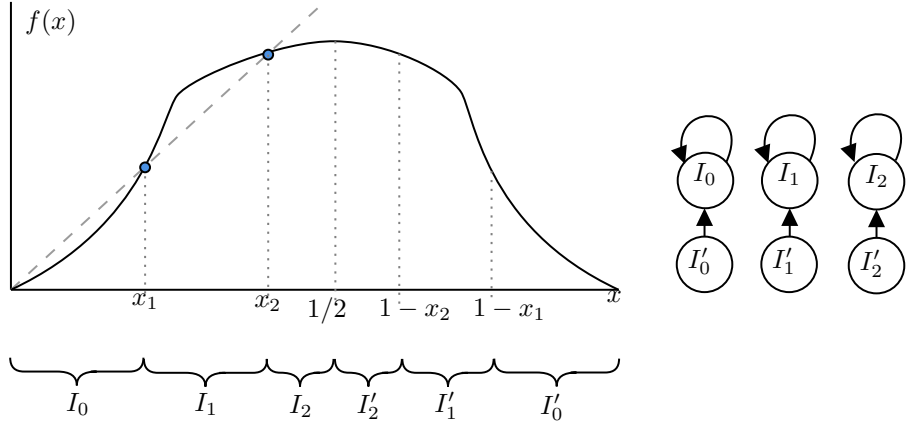}
        \caption{A plot of $f$ with fixed point $m=2$ fixed points---both less than $\frac{1}{2}$---subdivided into intervals. The relationships of which intervals $f$ maps onto one another are also visualized.}
        \label{fig:simple-fixed-point}
    \end{figure}
    Because $f$ is unimodal and because the edges of all intervals map to fixed points, for all $j \in \{0, \dots, m\}$, $f(I_j') = f(I_j) = I_j$. In this case, it must be the case that $q = 0$ because $f$ cannot have a 2-cycle. Such a cycle is impossible because it would have to be contained entirely in some $I_j$. In those intervals, it must be the case that either $\forall x \in I_j$, $f(x) \geq x$, or $\forall x \in I_j$, $f(x) \leq x$ (if this were not the case, then this would imply the existence of a fixed point other than $x_j$ in $I_j$). Thus, cyclic behavior within an interval is impossible.
    
    Thus, we can construct a Regex to represent the itinerary of any $x \in [0, 1]$: $ \bigcup_{j=0}^{m} I_j^{\infty}$.\footnote{
    This is a massive abuse of notation, but we use the same Regex notation to denote the intervals that are traversed as we use to denote the values of Boolean sequence.} 
    Now, we consider all possible locations of threshold $t$:
    \begin{itemize}
        \item If $t \in I_j$, such that $f(x) \geq x$ for $x \in I_j$, then $\Seq{\hc_{f,t}} \subseteq 0^* 1^{\infty} \cup 0^{\infty} \cup 1^{\infty}$. By Lemma \ref{lemma-regex}, $\vcw(\hc_{f,t}) \leq 1$.
        \item If $t \in I_j$, such that $f(x) \leq x$ for $x \in I_j$, then $\Seq{\hc_{f,t}} \subseteq 1^* 0^{\infty} \cup 0^{\infty} \cup 1^{\infty}$. By Lemma~\ref{lemma-regex}, $\vcw(\hc_{f,t}) \leq 1$.
        \item If $t \in \bigcup_{j=0}^m I_j'$, then $\Seq{\hc_{f,t}} = 0^{\infty}$, and $\vcw(\hc_{f,t}) = 0$.
    \end{itemize}

\end{itemize}

Now, we give a lemma, which relates the VC-dimension of complex functions to that of simpler ones.
Let $\category{q}$ refer to the family of symmetric unimodal functions that have a $2^q$-cycle but not a $2^{q+1}$-cycle.

\begin{lemma}\label{lemma:vc-recursive}
    For any $f \in \category{q}$ with fixed point $x^* > \frac12$ and any $t \in [0,1]$, \[\vcw(\hc_{f,t}) \leq 4 \max_{g \in \category{q-1}, t'\in [0,1]}\vcw(\hc_{g, t'})+10.\]
\end{lemma}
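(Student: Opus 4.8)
The plan is to reduce the dynamics of $f$ to those of $h=\phi^{-1}\circ f^2\circ\phi$, which lies in $\category{q-1}$ by Lemma~\ref{lemma:doubling-h}, and then to exhibit an infinite Regex containing $\Seq{\hc_{f,t}}$ whose weak VC-dimension follows from Lemma~\ref{lemma-regex}. The backbone is the \emph{core} $K=[1-x^*,x^*]$: since $x^*>\tfrac12$ is a fixed point and $f$ is symmetric and unimodal, $f$ maps $[1-x^*,\tfrac12]$ and $[\tfrac12,x^*]$ onto $[x^*,\xmax]$ and maps $[x^*,\xmax]$ back into $K$, so $f^2(K)\subseteq K$ and $f^2|_K=\phi\circ h\circ\phi^{-1}$ (as in the proof of Lemma~\ref{lemma:doubling-h}). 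Thus once an orbit reaches $K$ it stays in $K\cup[x^*,\xmax]$ forever, alternating sides, the $K$-subsequence of its iterates being a $\phi$-image of an $h$-orbit. Since $\hc_{f,t}$ iterates $f$ at least once, an itinerary depends only on $f(x)\in[0,\xmax]$, so it suffices to describe $(\thres{t}{f^j(y)})_{j\ge0}$ for $y\in[0,\xmax]$.

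Next I would sort out orbits outside $K$. On $[0,1-x^*)$ the map $f$ is increasing, so an orbit confined there is monotone, hence convergent, with an eventually-constant itinerary living in the bounded-complexity Regex $0^*1^\infty\cup1^*0^\infty\cup0^\infty\cup1^\infty$. Otherwise such an orbit eventually exceeds $1-x^*$ and enters $K$, with a prefix of the form $0^*1^*$ (a monotone orbit crossing the threshold once) before the core takes over; points already in $K\cup[x^*,\xmax]$ enter the core immediately. This gives an inclusion of the shape $\Seq{\hc_{f,t}}\subseteq R_{\mathrm{conv}}\cup R_{\mathrm{pre}}\!\cdot\! R_{\mathrm{core}}$, where $R_{\mathrm{conv}}$ has $\vcw=O(1)$ and $R_{\mathrm{pre}}$ uses only length-$1$ words.

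The heart is $R_{\mathrm{core}}$. For $w\in K$ set $z=\phi^{-1}(w)$. Since $\phi$ is the affine decreasing bijection $[0,1]\to K$, the even iterates give $\thres{t}{\phi(h^jz)}=\indicator{h^jz\le t'}$ for an explicit $t'=t'(t,x^*)$. For the odd iterates I would exploit symmetry: $f(x)=f(1-x)$ makes $f\circ\phi$ unimodal and symmetric about $\tfrac12$, so $\{u:(f\circ\phi)(u)\ge t\}$ is a symmetric band $[c_t,1-c_t]$; and because $h$ too is unimodal and symmetric about $\tfrac12$, $\{u:h(u)\ge h(c_t)\}=[c_t,1-c_t]$ as well, whence $\thres{t}{f(\phi(h^jz))}=\indicator{h^jz\in[c_t,1-c_t]}=\thres{h(c_t)}{h^{j+1}z}$. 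So a core itinerary is an interleaving ($\oplus$) of two subsequences, each of which is --- up to bitwise complement, an extra $f^0$ term, or degeneration to a constant string (one of the two is automatically constant in every regime of $t$, since $t$ lies in the interior of at most one of $K$ and $[x^*,\xmax]$) --- a restriction of some $\Seq{\hc_{h,t''}}$ with $t''\in(0,1)$; none of these modifications changes $\vcw$ by more than an additive constant. Since $h\in\category{q-1}$, applying the interleaving rule (Lemma~\ref{lemma-regex}, part~5) once gives $\vcw(R_{\mathrm{core}})\le 4\max_{g\in\category{q-1},\,t'}\vcw(\hc_{g,t'})+O(1)$.

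Assembling with the union rule (for $R_{\mathrm{conv}}\cup R_{\mathrm{pre}}R_{\mathrm{core}}$), the prefix rules (for $R_{\mathrm{pre}}$ and for the at-most-one-bit re-alignments needed to line up orbits that enter the core at an $[x^*,\xmax]$-position rather than a $K$-position), and the core bound, one collects the additive pieces into the claimed $\vcw(\hc_{f,t})\le 4\max_{g\in\category{q-1},\,t'}\vcw(\hc_{g,t'})+10$. The main obstacle is exactly this constant-chasing: $\vcw$ of a union of Regexes is additive and $\vcw$ of a Regex with a \emph{free} leading bit is multiplicative, so a careless decomposition would produce a leading coefficient larger than $4$, and $4$ is precisely what is needed for the downstream conclusion $\vc(\hyp{f,t})=O(p^2)$. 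Achieving the clean $4(\cdot)+10$ forces the case split (on whether $t$ is below $1-x^*$, inside $K$, in $(x^*,\xmax)$, or above $\xmax$, and on whether and with what parity an orbit enters $K$) to be arranged so that only a single $\oplus$ ever appears and every leading bit that must be peeled off is a forced value rather than a free one; the conceptually load-bearing steps are the $f^2$-conjugacy onto $\category{q-1}$ and the symmetric-band identity $\thres{t}{f(\phi(h^jz))}=\thres{h(c_t)}{h^{j+1}z}$, the rest being bookkeeping with Lemma~\ref{lemma-regex}.
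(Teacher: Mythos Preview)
Your approach shares the paper's two load-bearing ideas --- conjugating $f^2$ on the core $[1-x^*,x^*]$ to some $h\in\category{q-1}$, and pushing itineraries through the Regex calculus of Lemma~\ref{lemma-regex} with a single use of $\oplus$ --- but the \emph{order} in which you deploy them is reversed, and that is precisely where your constant-chasing becomes unreliable. The paper first bounds $\vcw(\hc_{f^2,t})$ directly: it partitions $[0,1]$ into intervals $I_0,\dots,I_{m+1},I_0',\dots,I_{m+1}'$ around \emph{all} fixed points of $f$ (not just $x^*$), tracks the two $f^2$-invariant pieces $I_m'$ and $J_{m+1}=[1-x^*,x^*]$ via auxiliary unimodal maps $h_m,h_{m+1}\in\category{q-1}$, and case-splits on which interval contains $t$, obtaining $\vcw(\hc_{f^2,t})\le 2+\max_{g\in\category{q-1},t'}\vcw(\hc_{g,t'})$ in every case. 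Only \emph{then} is $\oplus$ applied, once, at the outermost level: since odd iterates of $f$ are even iterates started one step later, $\Seq{\hc_{f,t}}=\Seq{\hc_{f^2,t}'}\oplus\Seq{\hc_{f^2,t}}$ with $\Seq{\hc_{f^2,t}'}\subseteq\Seq{\hc_{f^2,t}}$, and rule~5 gives $4(2+\max)+2=4\max+10$ on the nose.

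Your organization applies $\oplus$ \emph{inside} the core and then patches on prefixes and unions, and the obstacle you flag is real and not disposed of. Orbits can enter the attractor $K\cup[x^*,\xmax]$ at either parity (first landing in $K$ or first in $[x^*,\xmax]$), so your $R_{\mathrm{core}}$ is the union of two parity-flipped interleavings; the union rule of Lemma~\ref{lemma-regex} is additive, so this threatens to double the leading coefficient from $4$ to $8$, and your ``one-bit re-alignment'' does not neutralize that (prepending a forced bit to one summand still leaves you with a union of two sets each of $\vcw\approx 4\max$). The paper's outermost-$\oplus$ sidesteps the parity issue entirely, absorbing it into the containment $\Seq{\hc_{f^2,t}'}\subseteq\Seq{\hc_{f^2,t}}$. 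Your symmetric-band identity $\thres{t}{f(\phi(h^jz))}=\thres{h(c_t)}{h^{j+1}z}$ is correct and pleasant --- it lets a single $h$ do the work of the paper's two auxiliary maps --- but it is orthogonal to the constant problem, since both $h_m$ and $h_{m+1}$ already live in $\category{q-1}$ and only the supremum over that class enters the bound.
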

\begin{proof}
    
Consider some such $f$. Let $x_1< \dots< x_m$ be the fixed points of $f$ where $ x_m=x^* > \frac12$. Because $\frac12$ maximizes $f$, $f(\frac12) \geq x_m > \frac12$. This fixed point must the only fixed point no smaller than $\frac12$; the existence of another such fixed point would contradict the fact that $f$ is decreasing on $(\frac12, 1]$. Thus, $x_1, \dots, x_{m-1} < \frac12$.

We build a recursive relationship by considering $f^2$ and relating some its output on some segments of $[0, 1]$ to other maps with smaller $q$. For now, we instead attempt to upper-bound the VC-dimension of $\mathcal{H}_{f^2, t}$.

For all $j \in [m]$, unimodality implies that $x_j$ and $1-x_j$ are the only points that map to $x_j$ and that the following ordering holds.
\[0 < x_1 < \dots, < x_{m-1} < 1-x_m < \frac12 < x_m < 1-x_{m-1} < \dots, 1-x_1 < 1.\]

By the Intermediate Value Theorem, there exists some $x_m' \in (x_m, 1-x_{m-1})$ such that $f(x_m') = f(1-x_m') = 1-x_m$ and $f^2(x_m') =f^2(1-x_m')= x_m$.

We define intervals as follows:
\begin{itemize}
    \item $I_0 = [0, x_1)$ and $I_0' = (1-x_1, 1]$.
    \item For all $j \in [m-2]$, $I_j = [x_j, x_{j+1})$ and $I_{j}' = (1-x_{j+1}, 1-x_j]$.
    \item $I_{m-1} = [x_{m-1}, 1-x_{m}')$ and $I_{m-1}' = (x_{m}', 1-x_{m-1}]$.
    \item $I_{m} = [1-x_m', 1-x_m)$ and $I_m' = (x_m, x_m']$.
    \item $I_{m+1} = [1-x_m, \frac12)$, and $I_{m+1}' = [\frac12, x_{m}]$.
\end{itemize}

For any $j \in \{0, \dots, m+1\}$, $f$ is increasing on all intervals $I_j$ and decreasing on $I_j'$. 
By symmetry, $f(I_j) = f(I_j')$.
For all $j \in \{0, \dots, m-2\}$, $f(I_j) = I_j$. $f(I_{m-1}) = I_{m-1} \cup I_m$, $f(I_m) = I_{m+1} \cup I_{m+1}'$, and $f(I_{m+1}) \subseteq I_{m}'$, because $f(\frac12) \in [x_m, x_m')$.\footnote{
This must be the case for the assumptions to be met. If $f(\frac12) < x_m$, then $x_m$ cannot be a fixed point because $\frac12$ maximizes $f$. If $f(\frac12) > x_m'$, then there exists a 3-cycle with points in $I_{m+1}, I_{m-1}', I_m$, which contradicts the assumption that we only have power-of-two cycles.}

From there, we obtain additional properties for $f^2$: $f^2(I_{m-1}) = I_{m-1} \cup I_{m} \cup I_{m+1} \cup I_{m+1}'$, $f^2(I_m) \subseteq I_{m}'$, and $f^2(I_{m+1}) \subset I_{m+1} \cup I_{m+1}'$. This suggests that there is recurrent structure that we can take advantage of to count all of the patterns.

Let $J_{m+1} := I_{m+1} \cup I_{m+1}'$. We create a Regex to track the behavior of iterates $f^2$, which we visualize in Figure~\ref{fig:inductive-fixed-point}:
\[\bigcup_{j=0}^{m-2} I_j^{\infty}\ \cup\  I_{m-1}^* I_m I_m^{\prime \infty}\ \cup\ I_m^{\prime \infty} \ \cup\  I_{m-1}^* J_{m+1}^{\infty}.\]

When an iterate of $f^2$ gets ``stuck'' in one of $I_0, I_1, \dots, I_{m-1}$, it must either be at a fixed point, be strictly increasing, or be strictly decreasing. To suggest otherwise would imply the existence of another fixed point in those intervals, because $f^2$ is monotonically increasing or decreasing in all of those and either all $x$ yield $f^2(x) \geq x$ or $f^2(x) \leq x$.

For the remaining intervals, one might notice in Figure \ref{fig:inductive-fixed-point} that zooming in on the intervals $I_{m}$, $J_{m+1}$, and $I_{m}'$ for $f^2$ gives what looks like unimodal maps.\footnote{We use similar techniques here to those used in Section \ref{sec:ub-general-case}.} We take advantage of that structure to bound the complexity of the 0/1 Regexes for those intervals. We can formalize this by defining symmetric unimodal mappings $h_m$ and $h_{m+1}$ and bijective monotonic mappings $\phi_m: I_{m}' \to (0,1]$ (increasing) and $\phi_{m+1}: J_{m+1} \to [0,1]$ (decreasing) such that:
\begin{itemize}
    \item For $x \in I_m$, $f^2(x) = \phi_m^{-1} \circ h_m \circ \phi_m(1-x).$
    \item For $x \in I_m'$, $f^2(x) = \phi_m^{-1} \circ h_m \circ \phi_m(x).$
    \item For $x \in J_{m+1}$, $f^2(x) = \phi_{m+1}^{-1} \circ h_{m+1} \circ \phi_{m+1}(x).$
\end{itemize}

\begin{figure}
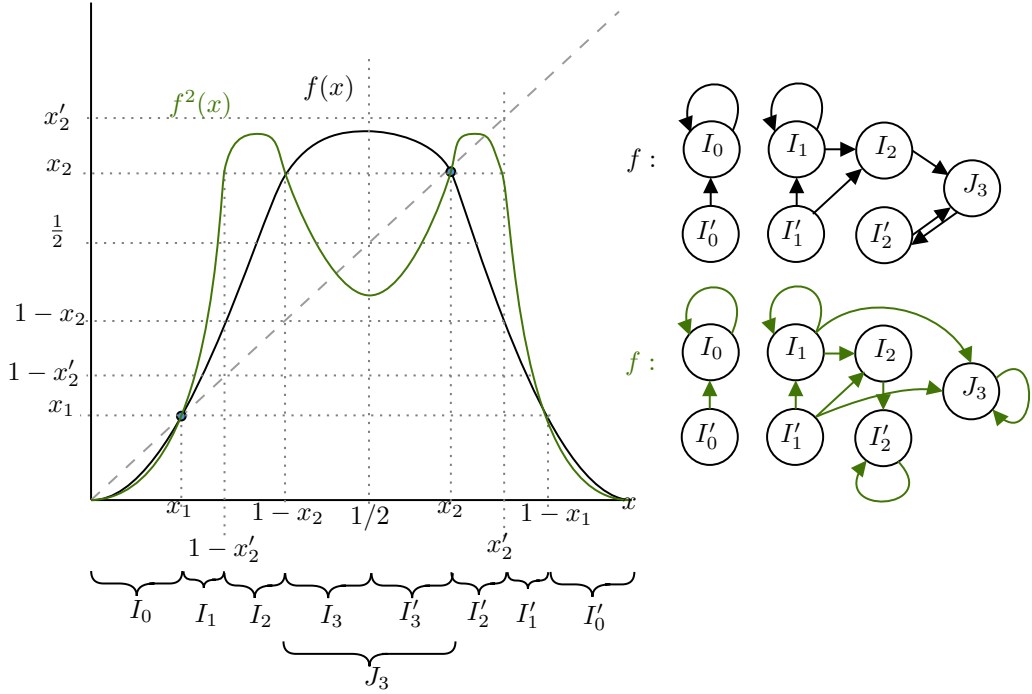

    \centering
    \include{fig/fig-inductive-fixed-point}
    \caption{Like Figure~\ref{fig:simple-fixed-point}, plot of $f$ and $f^2$ with $m = 3$ fixed points with $x_m > \frac{1}{2}$ and visualizes the mappings between intervals.}
    \label{fig:inductive-fixed-point}
\end{figure}

Because $f$ cannot have a cycle of length $2^{q+1}$, $h_m$ and $h_{m+1}$ may not have cycles of length $2^q$. 
Thus, we can reason inductively about how iterates behave when they're trapped in those intervals.

We do another case analysis of the 0/1 Regexes induced by different choices of $t$.
\begin{itemize}
    \item If $t \in I_j$ for $j \in \{0, \dots, m-1\}$, then $\Seq{\mathcal{H}_{f^2, t}} \subseteq 0^\infty \cup 1^{\infty} \cup 0^* 1^{\infty} \cup 1^* 0^{\infty}$ because a sequence of iterates only crosses $t$ if it enters the correct interval $I_j$, where the iterate then will be stuck and must monotonically increase or decrease. By Lemma \ref{lemma-regex}, $\vcw(\hc_{f^2, t}) \leq 2$.
    
    \item If $t \in I_j'$ for $j \in \{0, \dots, m-1\}$, then $\Seq{\mathcal{H}_{f^2, t}} = 0^{\infty}$, and $\vcw(\hc_{f^2, t}) = 0$.
    
    \item If $t \in I_{m}$, then $\Seq{\mathcal{H}_{f^2, t}} = 0^{\infty} \cup 1^{\infty} \cup 0^* 1^{\infty}$. Then, $\vcw(\hc_{f^2, t}) \leq 1$.
    
    \item If $t \in J_{m+1}$, then $\Seq{\hc_{f^2, t}} = 0^{\infty} \cup 0^* 1^{\infty} \cup 0^* J_{m+1}^{\infty}$. Because $h_{m+1}$ has at most a cycle of length $2^{q-1}$, we have that \[\vcw(\hc_{f^2, t}) \leq 2 + \max_{t'}\vcw(\hc_{h_{m+1}, t'}).\]
    
    \item If $t \in I_{m}'$, then $\Seq{{\hc}_{f^2, t}} = 0^{\infty} \cup 0^* I_m^{\prime \infty}$. 
    This gives us that \[\vcw(\hc_{f^2, t}) \leq 1 + \max_{t'} \vcw(\hc_{h_m, t'}).\]
\end{itemize}

To get $\vcw(\hc_{f, t})$, notice that $\Seq{\hc_{f, t}} = \Seq{\hc_{f^2, t}} \oplus \Seq{\hc_{f^2, t}'}$, where $\hc_{f^2, t}'$ refers to the outcome of all odd iterates of $f$. 
We show that $\Seq{\hc_{f^2, t}'} \subseteq \Seq{\hc_{f^2, t}}$ because the latter could induce all sequences produced by the former by starting with some $x'$ such that $f^2(x') = f(x)$. 
Thus, by Lemma \ref{lemma-regex}, 
\begin{align*}
    \vcw(\hc_{f, t}) 
    & \leq  4\max_{t'}\vcw(\hc_{f^2,t'}) + 2  \\
    & \leq 4 \max(2 + \max_{t'}\vcw(\hc_{h_{m+1}, t'}),1 + \max_{t'} \vcw(\hc_{h_m, t'})) + 2 \\
    &\leq 4\max_{g \in \category{q-1}, t'} \vcw(\hc_{g,t'}) + 10.\qedhere
\end{align*}
\end{proof}

Now, we prove a bound on the VC-dimension for arbitrary $q$ by induction with Lemma \ref{lemma:vc-recursive} to show that for $\vc(\mathcal{H}_{f, t}) \leq 18 \cdot 4^q$.

This holds when $q = 0$. There are two possible cases for the fixed point of such an $f$. If the the largest fixed point is smaller than $\frac12$, then, by the simple cases explored at the beginning, $\vcw(\mathcal{H}_{f, t}) \leq 1$. 
Otherwise, we apply Lemma \ref{lemma:vc-recursive} along with the the other simple case---which tells us what happens when there are no fixed point---to get that $\vcw(\hc_{f, t}) \leq 4 (1) + 10 = 14$. This trivially satisfies the proposition.

For the inductive step for arbitrary $q$, we iteratively apply Lemma~\ref{lemma:vc-recursive} to obtain the final bound.
\begin{align*}
    \vc(\mathcal{H}_{f, t})
    &\leq 4\max_{g \in \category{q-1}, t'} \vcw(\hc_{g,t'}) + 10 \\
    &\leq 4^q \max_{g \in \category{0}, t'} \vcw(\hc_{g,t'}) + 10 \sum_{i=0}^{q-1}4^i \\
    &\leq 14 \cdot 4^q + \frac{10}{3} 4^q \leq 18 \cdot 4^q.
\end{align*}

\subsection{Proof of Theorem~\ref{thm:properties-chaotic}, Claim 4}\label{asec:chaotic}

\begin{proposition}\label{proposition:odd-vc-dim}
    Suppose $f$ is a symmetric unimodal function with a $2^q m$-cycle for odd $m$. 
    Then for 
    \[K = \exp\paren{O\paren{q + d\log(d+m)}},\]
    $\vc(\hc_{f,K})\geq d$ for
    $\hc_{f,K} = \left\{\thres{1/2}{ f^k}: k \in [K]\right\}.$
\end{proposition}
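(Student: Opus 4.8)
Throughout I would assume $m\ge 3$ (if $m=1$ the cycle has power-of-two length, so unless $f$ also has a genuinely non-power-of-two cycle the conclusion is false; the interesting case is $2^qm$ not a power of two). The proof splits into two parts: a \emph{reduction} that strips off the factor $2^q$, and a \emph{base case} $q=0$ in which the shattered set is built from periodic points of pairwise coprime (prime) periods, using the Chinese Remainder Theorem to realize every labeling by a single iterate $f^k$. The $2^{O(q)}$ in $K$ will come from the reduction, and the $\exp(O(d\log(d+m)))$ factor from the base case.

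\textbf{Step 1: reduce to $q=0$.} Since $f$ has a $2^qm$-cycle, $f$ is $q$-times renormalizable: the standard renormalization theory of unimodal maps yields a symmetric unimodal $\tilde f:[0,1]\to[0,1]$ with an $m$-cycle, an interval $I\ni\tfrac12$ symmetric about $\tfrac12$ with $f^{2^q}(I)\subseteq I$, and an affine bijection $\psi:I\to[0,1]$ with $\psi(\tfrac12)=\tfrac12$, such that $\tilde f=\psi\circ f^{2^q}\circ\psi^{-1}$ (possibly up to an orientation flip of $\psi$), hence $\tilde f^{\,\ell}=\psi\circ f^{2^q\ell}\circ\psi^{-1}$ for all $\ell$. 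Because $\psi(\tfrac12)=\tfrac12$, $\thres{1/2}{\tilde f^{\,\ell}(y)}$ equals $\thres{1/2}{f^{2^q\ell}(\psi^{-1}(y))}$ (or its negation, which merely flips all hypotheses and preserves shattering). So if $\{\thres{1/2}{\tilde f^{\,\ell}}:\ell\in[L]\}$ shatters $d$ points, then $\{\thres{1/2}{f^k}:k\in[2^qL]\}$ shatters their $\psi^{-1}$-images using only iterates $k=2^q\ell$. It therefore suffices to prove the $q=0$ case with $L_0=\exp(O(d\log(d+m)))$, which gives $K=2^qL_0=\exp(O(q+d\log(d+m)))$.

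\textbf{Step 2 (base case): building the points, then shattering by CRT.} Now $f$ is symmetric unimodal with an odd $m$-cycle, $m\ge3$. I would pick $d$ distinct primes $p_1<\dots<p_d$ with $p_1\ge\max(m,3)$; by the prime number theorem these exist with $p_d=O((d+m)\log(d+m))$, so $P:=\prod_ip_i\le p_d^{\,d}=\exp(O(d\log(d+m)))$. Since $m\triangleright p_i$ in Sharkovsky's order for every odd $p_i\ge m$, Theorem~\ref{thm:sharkovsky} gives a $p_i$-cycle $C_i$ of $f$. Each $C_i$ \emph{straddles} $\tfrac12$: it cannot lie in $[0,\tfrac12]$ (where $f$ is monotone, so periodic orbits have period $1$) nor in $(\tfrac12,1]$ (where $f$ is monotone, so periodic orbits have period $\le2$), and $p_i\ge3$. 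Fix $x_i\in C_i$ and residues $r_i,r_i'\in\{0,\dots,p_i-1\}$ with $f^{r_i}(x_i)>\tfrac12$ and $f^{r_i'}(x_i)\le\tfrac12$; the $x_i$ are pairwise distinct since orbits of distinct prime periods $\ge3$ are disjoint. Given any target $\sigma\in\{0,1\}^d$, using $f^k(x_i)=f^{\,k\bmod p_i}(x_i)$, any $k$ with $k\equiv r_i\pmod{p_i}$ when $\sigma_i=1$ and $k\equiv r_i'\pmod{p_i}$ when $\sigma_i=0$ forces $\thres{1/2}{f^k(x_i)}=\sigma_i$ for all $i$. As the $p_i$ are pairwise coprime, the Chinese Remainder Theorem produces such a $k\in\{1,\dots,P\}$ (shift by $P$ if the residue-$0$ solution appears). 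Hence $\{\thres{1/2}{f^k}:k\in[P]\}$ realizes all $2^d$ labelings, so $\vc(\hc_{f,P})\ge d$; combined with Step~1 this yields the stated $K$.

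\textbf{Main obstacle.} The one delicate ingredient is Step~1: making the $q$-fold renormalization rigorous, i.e.\ extracting from a $2^qm$-cycle a genuinely symmetric unimodal map on $[0,1]$ whose iterates coincide, up to an affine change of coordinates fixing $\tfrac12$, with the $2^q$-fold iterates of $f$ on a symmetric window (so that the threshold $\tfrac12$ transfers correctly). Note one cannot simply take $h=\psi^{-1}\circ f^2\circ\psi$ on $[1-x^*,x^*]$ as in the doubling-regime proof (Lemma~\ref{lemma:doubling-h}, Lemma~\ref{lemma:hr-super-stable}): in the chaotic regime $f(\xmax)$ may fall below $1-x^*$, so $h$ leaves $[0,1]$; the correct restrictive interval is smaller and requires the standard unimodal renormalization machinery. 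Everything else --- the Sharkovsky consequences, the straddling claim via monotonicity of $f$ on the two halves, the prime-counting estimate, and the CRT bookkeeping --- is routine.
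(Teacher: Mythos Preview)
Your Step~2 (primes above $m$, Sharkovsky, straddling, CRT) is essentially the paper's argument, with only cosmetic differences: the paper fixes the residues to $0$ and $1$ by selecting the specific cycle point $x^{(j)}<\tfrac12$ with $f^{2^q}(x^{(j)})\ge\tfrac12$ (this is exactly its Lemma~\ref{lemma:odd-cycle-left}), whereas you allow arbitrary $r_i,r_i'$.

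The genuine gap is in Step~1. Your claim ``since $f$ has a $2^qm$-cycle, $f$ is $q$-times renormalizable'' is false: the full tent map $\ftent{1}$ has cycles of every period (hence $2^qm$-cycles for all $q$) yet admits no proper interval $I\ni\tfrac12$ with $f^n(I)\subseteq I$ for any $n\ge2$, since $f^n(\tfrac12)=0$ for $n\ge2$ forces $0\in f^n(I)$. So the obstacle you flag is not a technicality for ``standard machinery'' to dispatch --- the map $\tilde f$ you want need not exist at all. The paper avoids this entirely by never renormalizing: it works directly with $f^{2^q}$, a continuous self-map of $[0,1]$ with an $m$-cycle, to which Sharkovsky's theorem (which requires only continuity, not unimodality) gives $p_j$-cycles for every odd $p_j\ge m$; the shattering then uses the hypotheses $\thres{1/2}{(f^{2^q})^k}=\thres{1/2}{f^{2^qk}}$ for $k\le K/2^q$, which already lie in $\hc_{f,K}$. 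Unimodality of $f$ is invoked only once, to argue that an odd $p_j$-cycle of $f^{2^q}$ has a point below $\tfrac12$ whose $f^{2^q}$-image lies above --- this is where the paper appeals to Lemma~\ref{lemma:odd-cycle-left}.
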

The claim holds by this proposition, since the VC-dimension of $\hc_f$ is larger than every $d$ and hence must be infinite.
\begin{proof}
    The proof of this claim relies on the existence of a lemma that describes a characteristic of odd-period cycles of unimodal functions.
    
    \begin{lemma}\label{lemma:odd-cycle-left}
         Let $f$ be a symmetric unimodal function with some odd cycle $ x_1, x_2, \dots, x_{m}$ of length $m > 1$ such that $f(x_i) = x_{i+1}$ and $f(x_{m}) = x_1$. Then, there exists some $i$ such that $x_i < \frac{1}{2}$ and $f(x_i) \geq \frac{1}{2}$.
    \end{lemma}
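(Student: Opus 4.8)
The plan is to prove Lemma~\ref{lemma:odd-cycle-left} by contradiction: suppose that for every point $x_i$ of the cycle, we do \emph{not} have both $x_i < \frac12$ and $f(x_i) \geq \frac12$. I will argue that this forces the cycle to ``live on one side'' of $\frac12$ in a way that is incompatible with $f$ being unimodal and symmetric. First I would set up notation: partition the cycle points into the ``left'' set $L = \{x_i : x_i < \frac12\}$ and the ``right'' set $R = \{x_i : x_i \geq \frac12\}$ (both nonempty, since a cycle of length $m > 1$ cannot be a single fixed point, and by symmetry/unimodality a genuine odd cycle cannot sit entirely on the increasing branch $[0,\frac12]$ — on that branch $f$ is increasing, so any cycle contained there would be forced to be a fixed point). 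Since $f$ is increasing on $[0,\frac12]$ and decreasing on $[\frac12,1]$, a point of $L$ maps into $[0, f(\frac12))$ and a point of $R$ also maps into $[0, f(\frac12)]$, but the key structural fact is how points cross between $L$ and $R$ under $f$.

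The contradiction hypothesis says: for every $i$, if $x_i \in L$ then $f(x_i) < \frac12$, i.e. $f(x_i) \in L$. In other words, $f(L) \subseteq L$. But $L$ is a finite set and $f$ restricted to the cycle is a bijection (it's a cyclic permutation of the $m$ points), so $f(L) \subseteq L$ together with $|f(L)| = |L|$ forces $f(L) = L$, meaning $f$ permutes $L$ among itself. Then $L$ is a union of orbits of the cyclic permutation, but the permutation has a single orbit of size $m$, so either $L = \emptyset$ or $L$ is the whole cycle — both contradicting that $L$ and $R$ are each nonempty. Hence there must exist some $x_i \in L$ with $f(x_i) \geq \frac12$, which is exactly the claim.

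The main step to get right — and the one I expect to be the genuine obstacle — is justifying that $L$ and $R$ are both nonempty, i.e. that an odd cycle of length $m>1$ cannot be contained entirely in $[0,\frac12]$ nor entirely in $[\frac12,1]$. For the left branch this is easy: $f$ is strictly increasing there, so it preserves order, and a strictly increasing map of a finite ordered set to itself is the identity, contradicting $m > 1$ (there are no nontrivial cycles). For the right branch $[\frac12,1]$, $f$ is strictly decreasing, so $f$ reverses order; a strictly decreasing map restricted to a cycle can only have cycles of length $1$ or $2$ (an order-reversing permutation of a finite ordered set has order dividing $2$), contradicting that $m$ is odd and greater than $1$. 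So both $L$ and $R$ are nonempty, and the permutation argument above goes through. I would also double-check the boundary convention (whether $\frac12$ itself is ``left'' or ``right''): since $f(\frac12)$ is the unique maximum and is the largest value attained, if $\frac12$ were a cycle point it would have $f(\frac12) \geq \frac12$, so it is harmless to place it in $R$; the argument is insensitive to this choice as long as we are consistent.

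One technical caveat worth flagging: in the last paragraph I use that the permutation induced by $f$ on the $m$ cycle points is a single $m$-cycle, which is exactly the definition of a point of period $m$ (Definition of $p$-cycle in the excerpt), so this is free. The only place real care is needed is the order-reversing case on $[\frac12,1]$: an order-reversing bijection $\tau$ of a finite totally ordered set satisfies $\tau^2 = \mathrm{id}$, hence every orbit has size $1$ or $2$, so a single orbit of odd size $m > 1$ is impossible — I would state this as a one-line sublemma. With these pieces, the contradiction is complete and the lemma follows.
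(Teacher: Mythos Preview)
Your proof is correct and structurally parallel to the paper's: both reduce to showing the cycle cannot lie entirely in $[0,\tfrac12)$ or entirely in $[\tfrac12,1]$, and your invariant-set argument ($f(L)\subseteq L$ plus bijectivity forces $L=\emptyset$ or $L=\{x_1,\dots,x_m\}$) makes explicit a step the paper leaves as a one-line ``it suffices.'' The left-branch case is handled the same way in spirit (strict monotonicity on $[0,\tfrac12]$ forbids a nontrivial cycle). The right-branch case is where you genuinely diverge: the paper produces a fixed point $x^*\in(\tfrac12,1)$, shows that $f$ swaps $[\tfrac12,x^*)$ and $(x^*,1]$, and derives a parity contradiction from the alternation; you instead note that $f$ restricted to the cycle points in $[\tfrac12,1]$ is an order-reversing bijection of a finite totally ordered set, hence an involution, so every orbit has length $1$ or $2$. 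Your route is shorter and avoids constructing $x^*$; the paper's route gives a more explicit dynamical picture but needs the extra intermediate-value step. Both arguments are equally valid for the lemma as stated.
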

    \begin{proof}
        To prove the claim, it suffices to show that the following two cases are impossible: (1) $x_1, \dots, x_{m} < \frac{1}{2}$ and (2) $x_1, \dots, x_{m} \geq \frac{1}{2}$.
        
        \begin{enumerate}
            \item Suppose $x_1, \dots, x_{m} < \frac{1}{2}$. 
            By unimodality $x_j < x_{j'}$ implies that $f(x_j) < f(x_{j'})$.
            If $x_1$ is the smallest element of the cycle, then $f(x_1) > x_1$.
            For any other $x_j$, $f(x_j) > x_1$, which means that $x_1$ cannot be part of a cycle, which contradicts the odd cycle.
            
            \item Suppose instead that $x_1, \dots, x_{m} \geq \frac{1}{2}$. 
            
            For this to be the case, $f(\frac12) > \frac12$ by unimodality. This fact paired with $f(1) < 1$ implies the existence of some $x^* \in (\frac12, 1)$ with $f(x^*) = x^*$. Because $f$ is decreasing on $[1/2, 1]$, $f([1/2, x^*)) \subseteq (x^*, 1]$ and $f((x^*, 1]) = [0, x^*)$. 
            
            If $x_1 \in [\frac12, x^*)$, then $x_2 \in (x^*, 1]$, and $x_3 \in [\frac12, x^*)$. 
            If apply this fact repeatedly, the oddness of $m$ implies that $x_{m} \in [\frac12, x^*)$ and $x_1 \in (x^*, 1]$, a contradiction.\qedhere
        \end{enumerate}
    \end{proof}
    
    We show that $\vc(\hc_{f^{2^q}, K/2^q}) > d$.
    If $f$ has a cycle of length $2^q \cdot m$, then $f^{2^q}$ has a cycle of length $m$. By Sharkovskii's Theorem, for all odd $m' > m$, $f^{2^q}$ also has a cycle of length $m'$. Let $p_1 < \dots < p_d$ be the smallest prime numbers greater than $m$. According to Lemma \ref{lemma:primes-in-interval}, $p_d \leq \left(\frac{K}{2^q}\right)^{1/d}$ for 
    \[K =2^q \paren{O(\max(d\log d, m)}^d = \exp\paren{O\paren{q + d\log(d+m)}}.\]
    
    For $j \in [m]$, let $x^{(j)}$ be the point guaranteed by Lemma~\ref{lemma:odd-cycle-left} with $f^{2^q \cdot p_j}(x^{(j)}) = x^{(j)}$, $x^{(j)} < \frac{1}{2}$, and $f^{2^q}(x^{(j)}) \geq \frac{1}{2}$. Therefore, it follows that $f^{2^q \cdot \ell p_j}(x^{(j)}) < \frac{1}{2}$ and $f^{2^q (\ell p_j + 1)}(x^{(j)}) \geq \frac{1}{2}$ for all $\ell \in \mathbb{Z}_{\geq 0}$.
    
    To show that $\hc_{f^{2^q}}$ shatters $x^{(1)}, \dots, x^{(d)}$, we show that for any labeling $\sigma \in \bit^d$, there exists $h \in \hc_{f^{2^q},K/2^q}$ such that $h(x^{(j)}) = \sigma_j$.
    \begin{itemize}
        \item If $\sigma = (0, \dots, 0)$, then consider $f^{2^q \cdot k}$, where $k = \prod_{j=1}^n p_j$. 
        Then, for all $j$, $f^{2^q \cdot k}(x^{(j)}) < \frac{1}{2}$. 
        Because $k \leq p_d^d \leq \frac{K}{2^q}$, there exists some $h \in \hc_{f^{2^q}, K/2^q}$ that assigns zero to every $x^{(j)}$.
        
        \item Similarly, if $\sigma = (1, \dots, 1)$, we instead consider $f^{2^q \cdot k}$ for $k = 1 + \prod_{j=1}^n p_j$. 
        Now, for all $j$, $f^{2^q \cdot k}(x^{(j)}) \geq \frac{1}{2}$, and $k \leq p_d^d \leq \frac{K}{2^q}$, which means there exists satisfactory $h \in \hc_{f^{2^q}, K/2^q}$.
        
        \item Otherwise, assume WLOG that $(\sigma_1, \dots, \sigma_{\ell}) = (0, \dots, 0)$ and $(\sigma_{\ell+1}, \dots, \sigma_d) = (1, \dots, 1)$ for $\ell \in (1, d)$. 
        We satisfy the claim for $f^{2^q \cdot k}$ if we choose some $k$ with $k = q_1 \prod_{i=1}^{\ell} p_i = 1 + q_2 \prod_{i=\ell+1}^d p_i$, for some $q_1, q_2 \in \mathbb{Z}_+$. 
        
        We find $q_1 \in [\prod_{i=\ell+1}^d p_i]$ and $q_2 \in [ \prod_{i=1}^{\ell} p_i ]$ by choosing them such that:
        \begin{align*}
            q_1 \prod_{i=1}^{\ell} p_i &\equiv 1 \pmod{\prod_{i=\ell+1}^d p_i} \\
            q_2  \prod_{i=\ell+1}^d p_i &\equiv -1 \pmod{\prod_{i=1}^{\ell} p_i} .
        \end{align*}
        
        This is possible because $p_1, \dots, p_d$ are prime, and $\gcd\left(\prod_{i=1}^{\ell} p_i, \prod_{i=\ell+1}^d p_i\right) = 1$.
        
        Because $k \leq \prod_{i=1}^d p_i \leq p_d^d \leq \frac{K}{2^q}$, there must exist some satisfactory $h \in \mathcal{H}_{f^{2^q}, K/2^q}$.\qedhere
    \end{itemize}
\end{proof}

\begin{lemma}\label{lemma:primes-in-interval}
    For $m \geq 3$ and any $d \geq 0$, there exist $d$ primes such that $m \leq p_1 < \dots < p_d$ for \[p_d = O(\max(d\log d, m)).\]
\end{lemma}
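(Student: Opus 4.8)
The plan is to take for $p_1,\dots,p_d$ simply the $d$ smallest primes that are at least $m$. Concretely, let $k_0 = \pi(m-1)$ denote the number of primes strictly below $m$ (where $\pi$ is the prime-counting function) and, writing $p_n$ for the $n$-th prime, set $p_i := p_{k_0+i}$ for $i \in [d]$. By construction $m \leq p_1 < p_2 < \dots < p_d$, so the only thing to prove is the bound $p_d = p_{k_0+d} = O(\max(d\log d, m))$.

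For this I would invoke two standard facts of analytic number theory, each with absolute constants: (i) Chebyshev's bounds $c_1 \tfrac{x}{\log x} \leq \pi(x) \leq c_2 \tfrac{x}{\log x}$ for all $x \geq 2$, with $0 < c_1 \leq 1 \leq c_2$; and (ii) Rosser's bound on the $n$-th prime, $p_n \leq C\, n \log n$ for all $n \geq 2$. (For $d \leq 2$, or for $m$ below a fixed constant, one can instead quote Bertrand's postulate directly to produce the required primes in an interval of length $O(m)$; these finitely many edge cases are swallowed by the $O(\cdot)$, so I would dispatch them first.) From (i) we get $k_0 = \pi(m-1) \leq \pi(m) \leq c_2 \tfrac{m}{\log m}$, and then from (ii), $p_d = p_{k_0+d} \leq C\,(k_0+d)\log(k_0+d) \leq C\bigl(c_2\tfrac{m}{\log m}+d\bigr)\log\bigl(c_2\tfrac{m}{\log m}+d\bigr)$.

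The remaining step is a two-line regime split according to which of $d$ and $\tfrac{m}{\log m}$ is larger. If $\tfrac{m}{\log m} \leq d$, then $k_0+d \leq (c_2+1)d$, hence $p_d \leq C(c_2+1)\,d\log\bigl((c_2+1)d\bigr) = O(d\log d)$. If instead $\tfrac{m}{\log m} \geq d$, then $k_0+d \leq 2c_2\tfrac{m}{\log m}$, and using $\log\bigl(2c_2\tfrac{m}{\log m}\bigr) \leq \log(2c_2)+\log m$ together with $\tfrac{m}{\log m}\cdot\log m = m$ we get $p_d \leq 2Cc_2\tfrac{m}{\log m}\bigl(\log(2c_2)+\log m\bigr) = O(m)$. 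In both cases $p_d = O(\max(d\log d, m))$, since each bound is dominated by that maximum; this is exactly the claim, and it plugs into the proof of Proposition~\ref{proposition:odd-vc-dim} via $p_d^d \leq \bigl(O(\max(d\log d,m))\bigr)^d$.

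The argument is entirely routine, so there is no genuine obstacle; the only thing requiring care is bookkeeping the absolute constants uniformly across the two regimes and across the small-$m$, small-$d$ corner cases, where crude estimates (Bertrand's postulate, or direct verification for finitely many values) are enough.
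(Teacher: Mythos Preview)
Your argument is correct and is essentially the same as the paper's: both rely on Chebyshev/Rosser-type estimates for the prime-counting function to show that the interval $[m,\,O(\max(d\log d,m))]$ contains at least $d$ primes. The paper phrases this as a lower bound on $\pi(m')-\pi(m)$ for a suitably chosen $m'$, whereas you use the dual formulation via the $n$-th prime bound $p_n\le Cn\log n$ together with an explicit two-regime split; the content is the same and your version is arguably tidier.
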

\begin{proof}
    Let $\pi(x) = \abs{\{y \in [x]: y \text{ is prime}\}}$ be the number of primes no larger than $x$.
    By the Prime Number Theorem, 
    \[\frac{x}{\log(x) + 2} \leq \pi(x) \leq \frac{x}{\log(x) - 4},\]
    for all $x \geq 55$ \citep{rosser41}.
    Thus, for some $m' = O(\max(d\log d, m))$,
    the number of prime numbers smaller than $m'$ is
    \[\Omega\paren{\frac{d\log d}{\log(d\log d)} + \frac{m}{\log m}} = \Omega\paren{d +\frac{m}{\log m}}, \]
    and the number between $m$ and $m'$ is $\Omega(d).$
    Thus, $p_d \leq m'$.
\end{proof}

\input{supp_chaotic}

\vfill

\end{document}